\newcommand{\R}{\mathbb{R}} 
\newcommand{\cD}{{\cal D}}
\newcommand{\cO}{{\cal O}}
\newcommand{\mH}{{\bf H}}
\newcommand{\mI}{{\bf I}}
\newcommand{\mM}{{\bf M}}
\newcommand{\mN}{{\bf N}}
\newcommand{\mP}{{\bf P}}
\newcommand{\eqdef}{\coloneqq}
\newcommand{\dotprod}[1]{\left< #1\right>} 
\newcommand{\norm}[1]{ \left\| #1 \right\|}      
\newcommand{\E}[1]{\mathbb{E}\left[#1\right] }
\DeclareMathOperator{\Null}{Null}  
\DeclareMathOperator{\Range}{Range}     
\DeclareMathOperator{\argmin}{argmin}        
\newcommand{\negquad}{\mkern-18mu}
\definecolor{shadecolor}{gray}{0.90}
\declaretheoremstyle[
headfont=\normalfont\bfseries,
notefont=\mdseries, notebraces={(}{)},
bodyfont=\normalfont,
postheadspace=0.5em,
spaceabove=1pt,
mdframed={
  roundcorner=30pt,
  skipabove=8pt,
  skipbelow=8pt,
  hidealllines=true,
  backgroundcolor={shadecolor},
  innerleftmargin=4pt,
  innerrightmargin=4pt}
]{shaded}
\declaretheorem[style=shaded,within=section]{definition}
\declaretheorem[style=shaded,sibling=definition]{theorem}
\declaretheorem[style=shaded,sibling=definition]{proposition}
\declaretheorem[style=shaded,sibling=definition]{assumption}
\declaretheorem[style=shaded,sibling=definition]{corollary}
\declaretheorem[style=shaded,sibling=definition]{lemma}
\title{\texttt{SP2} : A Second Order Stochastic Polyak Method}
\author{%
   Shuang Li \\
   UCLA \\
   Los Angeles, USA \\
   \texttt{shuangli@math.ucla.edu} \\
   \And
   William J. Swartworth \\
   UCLA \\
   Los Angeles, USA \\
   \texttt{wswartworth@gmail.com} \\
   \And  
   Martin Tak\'a\v{c}\\
    MBZUAI \\ Abu Dhabi,
UAE \\ 
   \texttt{Takac.MT@gmail.com} \\
   \And
   Deanna Needell \\
   UCLA
   \\
   Los Angeles, USA \\
   \texttt{deanna@math.ucla.edu} \\
   \And
   Robert M. Gower \\
   Center for Computational Mathematics\\ Flatiron Institute, Simons Foundation \\ New York, NY
10010 USA \\
   \texttt{rgower@flatironinstitute.org} \\
}
\begin{document}

\setlength{\abovedisplayskip}{3pt}
\setlength{\belowdisplayskip}{3pt}

\maketitle

\begin{abstract}
  Recently the \texttt{SP} (Stochastic Polyak step size) method has emerged as a competitive adaptive method for setting the step sizes of SGD.  \texttt{SP} can be interpreted as a method specialized to interpolated models, since it solves the \emph{interpolation equations}. \texttt{SP} solves these equation by using local linearizations of the model.  We take a step further and develop a method for solving the interpolation equations that uses the local second-order approximation of the model. Our resulting method \texttt{SP2} uses Hessian-vector products to speed-up the convergence of \texttt{SP}. Furthermore, and rather uniquely among second-order methods, the design of SP2 in no way relies on positive definite Hessian matrices or convexity of the objective function. We show \texttt{SP2} is very competitive on matrix completion, non-convex test problems and logistic regression. We also provide a convergence theory on sums-of-quadratics. 
  
\end{abstract}

\section{Introduction}
\vspace{-2mm}

Consider the problem
\begin{equation} \label{eq:main}
w^* \in \argmin_{w \in \R^d} \left\{f(w):=\frac{1}{n} \sum_{i=1}^n f_i(w) \right\},
\end{equation}
where $f$ is twice continuously differentiable, and the set of minimizers 
is nonempty.
Let the optimal value of~\eqref{eq:main} be $f^* \in \R$, and $w^0$ be a given initial point. Here each $f_i(w)$ is the loss of a model parametrized in $w\in \R^d$ over an $i$-th data point. Our discussion, and forth coming results, also hold for a loss given as an expectation $f(w) = \mathbf{E}_{\xi \sim \cD}\left[ f_{\xi}(w)\right]$, where $\xi \sim \cD$ is the data generating process and $f_{\xi}(w)$ the loss over this sampled data point. But for simplicity we use the $f_i(w)$ notation.

Contrary to classic statistical modeling, there is now a growing trend of using overparametrized models that are able to interpolate the data~\cite{MaBB18}; that is, models for which the loss is minimized over every data point as described in the following assumption.
\begin{assumption}\label{ass:interpolate}
We say that the interpolation condition holds when the loss is nonnegative, $f_i(w) \geq 0$, and 
\begin{equation}\label{eq:interpolatei}
\exists~ w^* \in \R^d \quad \mbox{such that} \quad  f(w^*) =0.
\end{equation}
Consequently, 
%
 $f_i(w^*) =0$ for $i=1, \ldots, n$.
\end{assumption}
Overparameterized deep neural networks are the most notorious example of models that satisfy Assumption~\ref{ass:interpolate}.
Indeed, with sufficiently more parameters than data points, we are able to simultaneously minimize the loss over all data points.



If we admit that our model can interpolate the data, then we have that our optimization problem~\eqref{eq:main} is equivalent to solving the system of nonlinear equations
\begin{equation}\label{eq:fizero}
f_i(w) \;= \;0, \quad \mbox{for }i=1,\ldots, n.
\end{equation}
Since we assume $f_i(w) \geq 0$ any solution to the above is a solution to our original problem.

Recently, it was shown in~\cite{TASPS} that the Stochastic Polyak step size (\texttt{SP}) method~\cite{ALI-G,SPS,poliak1987introduction} directly solves the interpolation equations. Indeed, at each iteration \texttt{SP} samples a single $i$-th equation from~\eqref{eq:fizero}, then  projects the current iterate $w^t$ onto the linearization of this constraint, that is \\[-0.3cm]
\begin{align}
w^{t+1} =  \argmin_{w\in\R^d}
\norm{w-w^t}^2 \quad
\mbox{ s.t. }f_i(w^t) + \dotprod{\nabla f_i(w^t), w-w^t} =0. \label{eq:SPSproj}
\end{align}
The closed form solution to~\eqref{eq:SPSproj} is given by
\begin{eqnarray}\label{eq:SP}
w^{t+1} =w^t - \tfrac{f_i(w^t)}{\norm{\nabla f_i(w^t)}^2}\nabla f_i(w^t).
\end{eqnarray}
Here we take one step further, and instead of projecting onto the linearization of $f_i(w)$ we use the local quadratic expansion. That is,   as a proxy of setting $f_i(w)=0$ we set the quadratic expansion  of $f_i(w)$ around  $w^t$  to zero
\begin{align} \label{eq:quadeqzero}
 f_i(w^t) + \dotprod{\nabla f_i(w^t), w-w^t}  +  \tfrac{1}{2} \dotprod{\nabla^2 f_i(w^t) (w-w^t), w-w^t}=0.
\end{align}
The above quadratic constraint could have infinite solutions, a unique solution or no solution at all\footnote{Or even two solutions in the 1d case.}.
Indeed, for example if $\nabla^2 f_i(w^t)$ is positive definite, there may exist no solution, which occurs when $f_i$ is convex, and is the most studied setting for second order methods. But if the loss is positive $f_i$ and the Hessian has at least one negative eigenvalue, then~\eqref{eq:quadeqzero} always has a solution. 

If~\eqref{eq:quadeqzero} has solutions, then analogously to the \texttt{SP} method, we can choose one using a projection step
\footnote{Note that there could be more than one solution to this projection and we can choose one either with least norm or arbitrarily.}
\begin{align}
w^{t+1} \; \in \;  &\argmin_{w\in\R^d}\tfrac{1}{2} \norm{w-w^t}^2 \nonumber \\
&\mbox{ s.t. }f_i(w^t) + \dotprod{\nabla f_i(w^t), w-w^t}  + \tfrac{1}{2} \dotprod{\nabla^2 f_i(w^t) (w-w^t), w-w^t}=0. \label{eq:SP2proj}
\end{align}
We refer to~\eqref{eq:SP2proj} as the \texttt{SP2} method.
Using a quadratic expansion has several advantages. First, quadratic expansions are more accurate than linearizations, which will allow us to take larger steps. Furthermore, using the quadratic expansion will lead to convergence rates which are \emph{independent} on how well conditioned the Hessian matrices are, as we show later in Proposition~\ref{prop:convergence}.

Our \texttt{SP2} method occupies a unique position in the literature of stochastic second order method since it is incremental and in no way relies on convexity or positive semi-definite Hessian matrices. Indeed, as we will show in our non-convex experiments in~\ref{sec:nonconvextoy} and matrix completition~\ref{sec:matcom}, the \texttt{SP2} excels at minimizing non-convex problems that satisfy interpolation. In contrast, Newton based methods often converge to stationary points other than the global minima.

We also
relax the interpolation assumption, and develop analogous quadratic methods for finding $w$ and the smallest possible $s\in \R$ such that 
\begin{align}\label{eq:slack}
    f_i(w) \; \leq \; s, \quad \mbox{for } i=1,\ldots, n.
\end{align}
We refer to this as the \emph{slack interpolation} equations, which were introduced in~\cite{Crammer06} for linear models.
If the interpolation assumption holds then $s =0$ and the above is equivalent to solving~\eqref{eq:fizero}. When interpolation does not hold, then~\eqref{eq:slack} is still a upper approximation of~\eqref{eq:main}, as detailed in~\cite{polyakslack}.

The rest of this paper is organized as follows. We introduce some related work in Section~\ref{sec:related_work}. We present the proposed \texttt{SP2} methods in Section~\ref{sec:sp2} and corresponding convergence analysis in Section~\ref{sec:convergence}. In Section~\ref{sec:slack}, we relax the interpolation condition and develop a variety of quadratic methods to solve the slack version of this problem. We test the proposed methods with a series of experiments in Section~\ref{sec:simulations}. Finally, we conclude our work and discuss future directions in Section~\ref{sec:conclusion}.

\section{Related Work}
\label{sec:related_work}
\vspace{-2mm}

Since it became clear that Stochastic Gradient Descent (SGD), with appropriate step size tuning, was an efficient method for solving the training problem~\eqref{eq:main}, there has been a search for an efficient second order counter part. The hope being, and our objective here, is to find a second order stochastic method that is \emph{incremental}; that is, it can work with mini-batches,  requires little to \emph{no tuning} since it would 
depend less on how well scaled or conditioned the data is, and finally, would also apply to
\emph{non-convex} problems. 
To date there is a vast literature on stochastic second order methods, yet none that achieve all of the above.

 The subsampled Newton methods such as~\cite{Roosta-Khorasani2016,Bollapragada2018,Newton-MR,Erdogdu2015nips,KohlerL17,jahani2017distributed} 
require large batch sizes in order to guarantee that the subsampled Newton direction is close to the full Newton direction in high probability. As such are not incremental.
Other examples of large sampled based methods include the  Stochastic quasi-Newton methods~\cite{Byrd2011,Mokhtari2014,moritz2016linearly,GowerGold2016,wang2017stochastic,berahas2016multi}, stochastic cubic Newton~\cite{Tripuraneni-stoch-cubic-2018},   
SDNA~\cite{Qu2015}, Newton sketch~\cite{Pilanci2015a} and Lissa~\cite{Lissa}, since these require a large mini-batch or full gradient evaluations.

The only incremental second order methods we aware of are  \emph{IQN} (Incremental Quasi-Newton)~\cite{mokhtari2018iqn}, \emph{SNM} (Stochastic Newton Method)~\cite{SNM,pmlr-v48-rodomanov16} and very recently \emph{SAN} (Stochastic Average Newton)~\cite{SAN}.
IQN and SNM enjoy a fast local convergence, but 
their computational and memory costs per iteration, is of $\cO(d^2)$ making them prohibitive in large dimensions.  


Handling non-convexity in second order methods is particularly challenging because most second order methods rely on convexity in their design. For instance, the classic Newton iteration is the minima of the local quadratic approximation if this approximation is convex. If it is not convex, the Newton step can be meaningless, or worse, a step uphill. Quasi-Newton methods maintain positive definite approximation of the Hessian matrix, and thus are also problematic when applied to non-convex problems~\cite{wang2017stochastic}  for which the Hessian is typically indefinite.  Furthermore the incremental Newton methods IQN, SNM and SAN methods 
rely on the convexity of $f_i$ in their design. Indeed,  without convexity, the iterates of IQN, SNM and SAN are not well defined.

In contrast, our approach of finding roots of the local quadratic approximation~\eqref{eq:SP2proj} in no way relies on convexity, and relies solely on the fact that the local quadratic approximation around $w^t$ is good if we are not far from $w^t.$ 
But our approach does introduce a new problem: the need to solve a system of quadratic equations. We propose a series of methods to solve this in Sections~\ref{sec:sp2} and~\ref{sec:slack}. 

Solving quadratic equations has been heavily studied. 
%
There are even dedicated methods for solving
\begin{align}
    w^\star=\operatorname*{argmin}_{w\in\R^d} \tfrac{1}{2}\|w-\bar{w}\|^2~\mbox{ s.t. }Q(w)=0, \text{ where } Q({w}) = \tfrac{1}{2} {w}^\top \mH {w} + b^\top {w} + c
    \label{problem:qp}
\end{align}
for a given $\bar{w}$, where
$ \mH $ is a
 nonzero symmetric ({\em not necessarily PSD}) matrix,
and the level set $\{w:Q(w)=0\}$ is nonempty. Note that since $Q(w)$ is a quadratic function, the problem \eqref{problem:qp} is nonconvex. 
Yet despite this non-convexity, so long as there exists a feasible point, the projection~\eqref{problem:qp} can be solved in polynomial time by re-writing the projection as a semi-definite program, or by using the S-procedure, which involves computing the eigenvalue decomposition of $\mH$ and using a line search as proposed in ~\cite{park2017general}, and detailed here in Section~\ref{sec:projquad}. But this approach is too costly when the dimension $d$ is large.


An alternative iterative method is proposed in~\cite{sosa2020algorithm}, but only asymptotic convergence is guaranteed.
In~\cite{dai2006fast}, 
the authors consider a similar problem by projecting a point onto a general ellipsoid, which is again a problem of solving quadratic equations. However, they require the matrix $\mH$ to be a positive definite matrix.

The problem~\eqref{eq:SP2proj} and~\eqref{problem:qp} are  also an instance of a quadratic constrained quadratic program (QCQP).
Although the QCQP in~\eqref{eq:SP2proj} has no closed form solution in general, we show in the next section that there is a closed form solution for Generalized linear models (GLMs), that holds for convex and non-convex GLMs alike. For general non-linear models we propose in Section~\ref{sec:SPS+} an approximate solution to~\eqref{eq:SP2proj} by iteratively linearizing the quadratic constraint and projecting onto the linearization.

\section{The SP2 Method }
\label{sec:sp2}
\vspace{-2mm}

Next we give a closed form solution to~\eqref{eq:SP2proj} for GLMs.
We then provide an approximate solution to~\eqref{eq:SP2proj} for more general models.

\subsection{\texttt{SP2}$^+$ - Generalized Linear Models}
\label{sec:GLMs}
\vspace{-1mm}


Consider when $f_i$ is the loss over a linear model with
\begin{equation}\label{eq:GLMs}
    f_i(w) =  \phi_i(x_i^\top w - y_i),
\end{equation} 
where $\phi_i: \R \rightarrow \R$ is a loss function, and $(x_i,y_i)\in \R^{d+1}$ is an input-output pair.
Consequently
\begin{align}
    \nabla f_i(w) = \phi_i'(x_i^\top w - y_i) x_i \eqdef a_i x_i, \qquad
    \nabla^2 f_i(w) = \phi_i''(x_i^\top w - y_i) x_ix_i^\top \eqdef h_i x_i x_i^\top.\label{eq:GLMsgradhess}
\end{align} 
The quadratic constraint problem~\eqref{eq:SP2proj} can be solved exactly for GLMs~\eqref{eq:GLMs} as we show next.

\begin{lemma}(SP2)\label{lma:GLM}
Assume 
$f_i(w)$ is the loss of a generalized linear model~\eqref{eq:GLMs} and is non-negative.
Let $f_i=f_i(w^t)$ for short. Let $a_i \eqdef \phi_i'(x_i^\top w - y_i)$ and $h_i\eqdef \phi_i''(x_i^\top w - y_i)$.
If \begin{equation}\label{eq:conditionGLM} 
    a_i^2 - 2h_i f_i \geq 0
\end{equation} then the optimal solution of  
\eqref{eq:SP2proj} 
is as follows
\begin{align}\label{eq:GLMsol}
w^{t+1} &= w^t -  \frac{a_i}{h_i} \left( 1-\frac{\sqrt{a_i^2 - 2 h_i f_i }}{|a_i|}  \right) \frac{x_i}{\|x_i\|^2}. 
\end{align}
Alternatively if~\eqref{eq:conditionGLM} does not hold, since $f_i \geq 0 $ we have necessarily that  $h_i >0$, and consequently a Newton step will give the minima of the local quadratic, that is
\begin{align}
\label{eq:newtonStep}
w^{t+1} &= w^t -\frac{a_i}{h_i } \frac{x_i}{\norm{x_i}^2}.
\end{align}
\end{lemma}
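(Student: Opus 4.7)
My plan is to reduce the projection problem to a one-dimensional quadratic by exploiting the rank-one structure of the GLM gradient and Hessian, then pick the correct root and verify the closed form.

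First I would substitute the GLM formulas \eqref{eq:GLMsgradhess} into the constraint of \eqref{eq:SP2proj}. Writing $z \eqdef w - w^t$ and introducing the scalar $u \eqdef x_i^\top z$, the Hessian term collapses because $\dotprod{\nabla^2 f_i(w^t) z, z} = h_i (x_i^\top z)^2 = h_i u^2$. So the quadratic constraint becomes the scalar equation $\tfrac{1}{2} h_i u^2 + a_i u + f_i = 0$. For any fixed admissible $u$, the minimum of $\tfrac{1}{2}\|z\|^2$ subject to $x_i^\top z = u$ is attained at $z = u \, x_i/\|x_i\|^2$ (projection onto a hyperplane), with value $u^2 / (2\|x_i\|^2)$. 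So the projection collapses to: choose the root $u^\star$ of the scalar quadratic with smallest $|u|$, and set $w^{t+1} = w^t + u^\star x_i/\|x_i\|^2$.

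Next I would handle the feasible case $a_i^2 - 2 h_i f_i \geq 0$. The two roots are $u_\pm = (-a_i \pm \sqrt{D})/h_i$ with $D \eqdef a_i^2 - 2 h_i f_i$, and rationalizing gives $u_\pm = 2 f_i /(-a_i \mp \sqrt{D})$, which makes it immediate that the smaller-magnitude root is the one whose denominator has the larger absolute value, namely $u^\star = (-a_i + \sign(a_i)\sqrt{D})/h_i$. (I would verify this by a short sign check splitting on $\sign(a_i)$ and $\sign(h_i)$, using $f_i \geq 0$.) Factoring out $-a_i/h_i$ rewrites $u^\star$ as $-\tfrac{a_i}{h_i}\bigl(1 - \tfrac{\sqrt{D}}{|a_i|}\bigr)$, which produces exactly the formula \eqref{eq:GLMsol}. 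As a sanity check I would plug $u^\star$ back into $\tfrac{1}{2}h_i u^2 + a_i u + f_i$ and confirm it equals $0$ via the identity $(-a_i + \sign(a_i)\sqrt{D})(a_i + \sign(a_i)\sqrt{D}) = D - a_i^2 = -2 h_i f_i$.

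Finally, for the infeasible case $D < 0$, I would first observe that $f_i \geq 0$ and $a_i^2 < 2 h_i f_i$ force $h_i f_i > 0$, hence $h_i > 0$ and $f_i > 0$; in particular the local quadratic $Q(u) = \tfrac{1}{2} h_i u^2 + a_i u + f_i$ is strictly convex and has no real root. The natural fallback is to replace \eqref{eq:SP2proj} by minimizing $Q$, i.e.\ the 1D Newton step $u = -a_i/h_i$, which gives $w^{t+1} - w^t = -\tfrac{a_i}{h_i} \tfrac{x_i}{\|x_i\|^2}$ as in \eqref{eq:newtonStep}. The only real subtlety I expect is the sign bookkeeping in picking the smaller-magnitude root; everything else is direct substitution.
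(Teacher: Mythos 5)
Your proposal is correct and follows essentially the same route as the paper's proof: reduce the constraint to the scalar quadratic $f_i + a_i\tau + \tfrac12 h_i\tau^2 = 0$ via $\tau = x_i^\top(w-w^t)$, pick the root of smallest magnitude (your rationalization $u_\pm = 2f_i/(-a_i\mp\sqrt{D})$ is a slightly slicker way to do the sign check the paper does by cases on $\sign(a_i)$), and fall back to minimizing the strictly convex 1D quadratic when the discriminant is negative. No gaps.
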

The proof to the above lemma, and all subsequent missing proofs can be found in the appendix.
Lemma~\eqref{lma:GLM} establishes a condition~\eqref{eq:conditionGLM} under which we should not take a full Newton step. Interestingly, this condition~\eqref{eq:conditionGLM}  holds when the square root of the loss function has negative curvature, as we show in the next lemma.


\begin{minipage}[l]{0.65\textwidth}
\begin{lemma}\label{lem:phi2}
Let $\phi$ be a non-negative function which is twice differentiable at all $t$ with $\phi(t)\neq 0$.  The condition~\eqref{eq:conditionGLM}, in other words \[\phi'(t)^2 \geq 2 \phi(t) \phi''(t)\] holds  when $\sqrt{\phi(t)}$ is concave away from its roots, i.e. when $\frac{d^2}{dt^2} \sqrt{\phi(t)} \leq 0$ for all $t$ with $\phi(t) \neq 0.$
\end{lemma}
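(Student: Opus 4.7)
The plan is to verify the claim by a direct two-step computation of the second derivative of $g(t) := \sqrt{\phi(t)}$ at points where $\phi(t) \neq 0$, and then read off the sign. Since $\phi$ is non-negative and twice differentiable, wherever $\phi(t) > 0$ the composition $g(t) = \phi(t)^{1/2}$ is twice differentiable, so the derivatives below all make sense.

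First, by the chain rule, I would write
\[
g'(t) \;=\; \frac{\phi'(t)}{2\sqrt{\phi(t)}}.
\]
Differentiating once more using the quotient (or product) rule gives, after collecting terms over a common denominator,
\[
g''(t) \;=\; \frac{2\,\phi(t)\,\phi''(t) \,-\, \phi'(t)^2}{4\,\phi(t)^{3/2}}.
\]
Here the denominator $4\,\phi(t)^{3/2}$ is strictly positive at every $t$ where $\phi(t) \neq 0$ (using non-negativity of $\phi$), so the sign of $g''(t)$ equals the sign of the numerator $2\phi(t)\phi''(t) - \phi'(t)^2$.

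Finally, the condition $\frac{d^2}{dt^2}\sqrt{\phi(t)} \leq 0$ is therefore equivalent (not just sufficient) to $2\phi(t)\phi''(t) - \phi'(t)^2 \leq 0$, which is the desired inequality $\phi'(t)^2 \geq 2\,\phi(t)\,\phi''(t)$, i.e.\ condition~\eqref{eq:conditionGLM}. There is no real obstacle in this proof: the only thing to be careful about is the tacit assumption that $\phi(t) > 0$ so that $\sqrt{\phi(t)}$ is smooth and the denominator in $g''$ is nonzero, which is exactly the hypothesis stated in the lemma.
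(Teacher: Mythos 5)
Your proof is correct and follows essentially the same route as the paper: both compute $\frac{d^2}{dt^2}\sqrt{\phi(t)} = \tfrac14\phi(t)^{-3/2}\bigl(2\phi(t)\phi''(t)-\phi'(t)^2\bigr)$ and read off the sign from the numerator, since the denominator is positive wherever $\phi(t)\neq 0$. The only (trivial) omission is the case $\phi(t)=0$, where the paper notes that the inequality $\phi'(t)^2 \geq 2\phi(t)\phi''(t) = 0$ holds automatically.
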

\end{minipage}
\begin{minipage}[r]{0.35\textwidth}
\vspace{-0.5cm}
\centering
\includegraphics[width=0.8\textwidth]{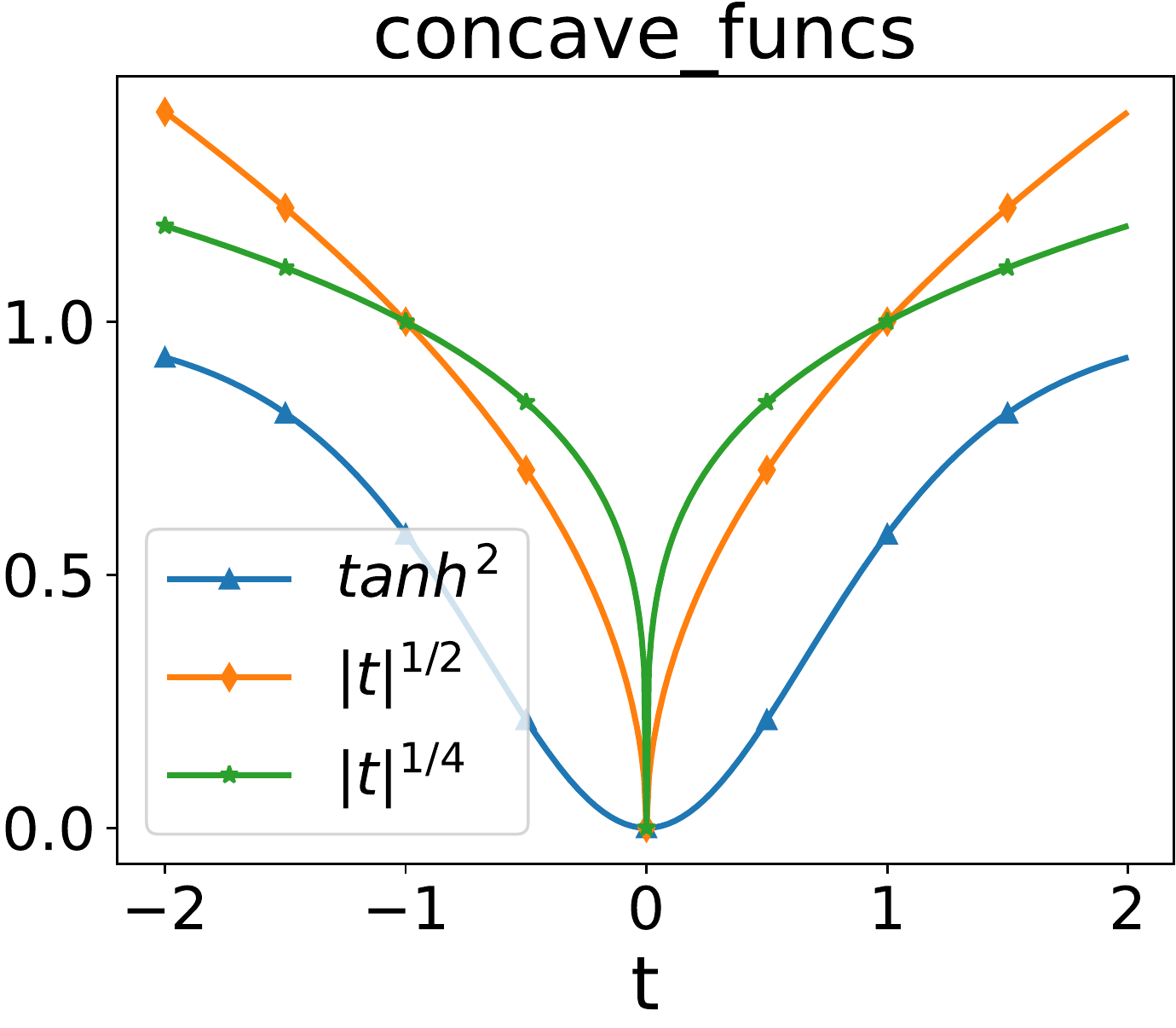}\\
{\footnotesize{{\bf Figure (a)} Non-convex loss func. for which condition~\eqref{eq:conditionGLM} holds.} }
\end{minipage}
Examples of loss functions include $\phi(t)=\tanh^2{(t)}$ 
and $\phi(t)= t^{p}$ with $0 \leq p \leq 2$ (see Figure~{\bf (a)}).

In conclusion to this section, \texttt{SP2} has a closed form solution for GLMs, and this closed form solution includes many non-convex loss functions.

\subsection{\texttt{SP2}$^+$ - Linearizing and Projecting}
\label{sec:SPS+}
\vspace{-1mm}

In general, there is no closed form solution to~\eqref{eq:SP2proj}. Indeed, there may not even exist a solution.
Inspired by the fact that computing a Hessian-vector product can be done with a single backpropagation at the same cost as computing a gradient~\cite{Christianson:1992}, we will make use of the cheap Hessian-vector product to derive an approximate solution to~\eqref{eq:SP2proj}.

Instead of solving~\eqref{eq:SP2proj} exactly, here we propose to 
take two steps towards solving~\eqref{eq:SP2proj} by projecting onto the linearized constraints.
To describe this method let
\begin{align} \label{eq:quadeqzero2}
 q(w) \eqdef f_i(w^t) + \dotprod{\nabla f_i(w^t), w-w^t}  + \tfrac{1}{2} \dotprod{\nabla^2 f_i(w^t) (w-w^t), w-w^t}.
\end{align}
In the first step we linearize the quadratic  constraint~\eqref{eq:quadeqzero2} around $w^t$ and project onto this linearization:  
\begin{align}
w^{t+1/2}  =  &\argmin_{w\in\R^d}\tfrac{1}{2} \norm{w-w^t}^2  \quad
\mbox{ s.t. }f_i(w^t) + \dotprod{\nabla f_i(w^t), w-w^t}  =0. \label{eq:polyakproj}
\end{align}
The closed form update of this first step is given by
\begin{eqnarray}
w^{t+1/2} 
& = & w^t - \frac{f_i(w^t)}{\norm{\nabla f_i(w^t)}^2}\nabla f_i(w^t),\label{eq:polyakup}
\end{eqnarray}
which is a Stochastic Polyak step~\eqref{eq:SP}. For the second step, we once again linearize the quadratic  constraint~\eqref{eq:quadeqzero2}, but this time around the point $w^{t+1/2}$ and set this linearization to zero, that is
\begin{align}
w^{t+1} =  &\argmin_{w\in\R^d}\tfrac{1}{2}\! \norm{w-w^{t+1/2}}^2  ~~
\mbox{s.t. }q(w^{t+1/2}) + \dotprod{\nabla q(w^{t+1/2}), w-w^{t+1/2}}  =  0.
\label{eq:polyakproj2}
\end{align}
The closed form update of this second step is given by
\begin{eqnarray}
w^{t+1} =  w^{t+1/2} - \frac{q(w^{t+1/2})}{\norm{\nabla q(w^{t+1/2}))}^2}\nabla q(w^{t+1/2}).\label{eq:polyakup2}
\end{eqnarray}
We refer to the resulting proposed method as the SP2$^+$ method, summarized in the following.

 \begin{lemma}{(SP2$^+$)}\label{lem:SPS2plus} 
Let $g_t \equiv \nabla f_i(w^t)$ and $\mH_t \equiv \nabla^2 f_i(w^t)$.  Update~\eqref{eq:polyakup} and~\eqref{eq:polyakup2} is given by
\begin{align}
w^{t+1}  \!=  w^t \!- \frac{f_i(w^t)}{\norm{g_t}^2}g_t  \!- \frac{1}{2}\frac{f_i(w^t)^2}{\norm{g_t}^4}\frac{ \dotprod{\mH_t g_t, g_t} }{\norm{v^{t+1}}^2}
v^{t+1}, \; \mbox{where} \;v^{t+1} 
\;= \left(\mI - \mH_t\frac{f_i(w^t)}{\norm{g_t}^2}\right)g_t,
\label{eq:SP2plus}
\end{align}

\end{lemma}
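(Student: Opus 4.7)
The plan is a direct verification by substituting the intermediate point $w^{t+1/2}$ into the definitions of $q$ and $\nabla q$, and then plugging the result into the closed-form projection~\eqref{eq:polyakup2}. No new ideas are required beyond careful algebra, but the computation naturally splits into two observations that make the final expression compact.

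First I would compute $q(w^{t+1/2})$. Writing $\delta \eqdef w^{t+1/2} - w^t = -\tfrac{f_i(w^t)}{\|g_t\|^2} g_t$, the definition~\eqref{eq:quadeqzero2} gives
\begin{equation*}
q(w^{t+1/2}) \;=\; f_i(w^t) + \langle g_t, \delta\rangle + \tfrac12 \langle \mH_t \delta, \delta\rangle.
\end{equation*}
The key observation is that $w^{t+1/2}$ was constructed in~\eqref{eq:polyakproj} precisely to zero out the affine part, so $f_i(w^t) + \langle g_t, \delta\rangle = 0$. Only the quadratic term survives, yielding
\begin{equation*}
q(w^{t+1/2}) \;=\; \tfrac12 \langle \mH_t \delta, \delta\rangle \;=\; \frac{f_i(w^t)^2}{2\|g_t\|^4}\,\langle \mH_t g_t, g_t\rangle.
\end{equation*}

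Next I would compute the gradient $\nabla q(w) = g_t + \mH_t(w - w^t)$ at $w = w^{t+1/2}$. Substituting $\delta$ gives
\begin{equation*}
\nabla q(w^{t+1/2}) \;=\; g_t - \frac{f_i(w^t)}{\|g_t\|^2}\mH_t g_t \;=\; \left(\mI - \mH_t \frac{f_i(w^t)}{\|g_t\|^2}\right) g_t \;=\; v^{t+1},
\end{equation*}
which is exactly the vector $v^{t+1}$ announced in the lemma. Finally, plugging these two expressions into~\eqref{eq:polyakup2} and expanding $w^{t+1/2} = w^t - \tfrac{f_i(w^t)}{\|g_t\|^2} g_t$ gives the claimed formula~\eqref{eq:SP2plus} directly.

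There is no real obstacle here: the whole argument is a two-line substitution, whose only subtlety is recognizing that the first Polyak step was designed to annihilate the affine part of $q$, which is why the formula for $q(w^{t+1/2})$ ends up containing only the Hessian-vector term $\langle \mH_t g_t, g_t\rangle$. The only care needed is to track signs and the factor of $\tfrac12$ when passing from the quadratic constraint to the projection step.
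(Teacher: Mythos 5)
Your proposal is correct and follows essentially the same route as the paper's proof: compute $q(w^{t+1/2})$ (noting the affine part vanishes by construction of the first Polyak step), compute $\nabla q(w^{t+1/2}) = v^{t+1}$, and substitute into~\eqref{eq:polyakup2}. No differences worth noting.
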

In~\eqref{eq:SP2plus} we  can see that \texttt{SP2}$^+$ applies a second order correction to the \texttt{SP} step. 

\texttt{SP2}$^+$  is equivalent to two steps of a Newton Raphson method applied to finding a root of $q(w)$. If we apply multiple steps of the Newton Raphson method, as opposed to two, the resulting method converges to the root of $q$, see Theorem~\ref{theo:qconvNR} in the appendix.
 Theorem~\ref{theo:qconvNR} shows that this multi-step version of \texttt{SP2}$^+$ converges when $q$ belongs to a large class of non-convex functions known as the star-convex functions. Star-convexity, which is a generalization of convexity,  includes several non-convex loss functions~\cite{hinder2019near}.

\section{Convergence Theory}
\label{sec:convergence}
\vspace{-2mm}

Here we provide a convergence theory for SP2 and SP2$^{+}$ for when $f(w)$ is an average of  quadratic functions. Let $w^*\in \argmin_{w\in\R^d} f(w)$ and let the loss over the $i$-th data point be given by
\begin{eqnarray}\label{eq:quadexe}
f_i(w) =  
\dotprod{\mH_i (w-w^*), w-w^*}, 
\end{eqnarray}
where  $\mH_i\in \R^{d\times d}$ is a symmetric positive semi-definite matrix for $i=1,\ldots, n.$ Consequently $ f_i(w^*) = 0 =f(w^*) = \min_{w\in \R^d} f(w), $
thus the interpolation condition holds. 

\begin{proposition}\label{prop:convergence}
Consider the loss functions given in~\eqref{eq:quadexe}. The \texttt{SP2} method~\eqref{eq:SP2proj}  converges  linearly\begin{equation}\label{eq:rateSP2}
\E{\norm{w^{t+1}-w^*}^2} \leq \rho\; \E{\norm{w^{t}-w^*}^2},
\quad
\text{where}
\;
   \rho = \lambda_{\max}\left( \mI -\frac{1}{n}\sum_{i=1}^n \mH_i \mH_i^+\right) <1.
\end{equation}   
\end{proposition}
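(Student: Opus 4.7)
The plan is to exploit two simplifications that occur for purely quadratic losses. First, since each $f_i$ is exactly quadratic, its second-order Taylor expansion around $w^t$ coincides with $f_i$ itself, so the SP2 constraint in \eqref{eq:SP2proj} reduces to the exact equation $f_i(w) = 0$. Second, because $\mH_i \succeq 0$, the quadratic form $\langle \mH_i z, z\rangle$ vanishes iff $\mH_i z = 0$, so the feasible set is the affine subspace $w^* + \mathrm{null}(\mH_i)$. Therefore the SP2 update is nothing more than the orthogonal projection of $w^t$ onto $w^* + \mathrm{null}(\mH_i)$.

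Using the identity $\mI - \mH_i^+ \mH_i = P_{\mathrm{null}(\mH_i)}$ valid for symmetric PSD $\mH_i$ (equivalently $\mH_i \mH_i^+ = \mH_i^+ \mH_i = P_{\mathrm{range}(\mH_i)}$), I would write the update in the clean form
\[
w^{t+1} - w^* \;=\; (\mI - \mH_i^+ \mH_i)(w^t - w^*).
\]
Since $\mI - \mH_i^+ \mH_i$ is symmetric and idempotent, squaring norms yields
\[
\|w^{t+1} - w^*\|^2 \;=\; (w^t-w^*)^\top (\mI - \mH_i \mH_i^+)(w^t-w^*).
\]
Taking the conditional expectation over a uniformly sampled index $i$ gives
\[
\mathbb{E}\!\left[\|w^{t+1}-w^*\|^2 \,\middle|\, w^t\right] \;=\; (w^t-w^*)^\top \Bigl(\mI - \tfrac{1}{n}\sum_{i=1}^n \mH_i \mH_i^+\Bigr)(w^t-w^*),
\]
which I would bound by $\rho \|w^t-w^*\|^2$ via the Rayleigh quotient. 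A total expectation and induction on $t$ finish the linear-rate inequality.

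The delicate point is justifying $\rho < 1$. The matrix $\tfrac{1}{n}\sum_i \mH_i \mH_i^+$ is a convex combination of orthogonal projectors, so it is PSD with spectrum in $[0,1]$, and its kernel is precisely $\bigcap_i \mathrm{null}(\mH_i)$. Vectors in this intersection correspond exactly to directions along which the minimizer is non-unique (i.e.\ $w^* + v$ is also a global minimum). To handle this, I would observe that each SP2 step satisfies $w^{t+1} - w^t \in \mathrm{range}(\mH_i)$, so by induction $w^t - w^0 \in \bigl(\bigcap_j \mathrm{null}(\mH_j)\bigr)^\perp$; hence, choosing $w^*$ to be the projection of $w^0$ onto the set of minimizers forces $w^t - w^*$ to lie in $\bigl(\bigcap_j \mathrm{null}(\mH_j)\bigr)^\perp$ for all $t$. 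On this invariant subspace, $\tfrac{1}{n}\sum_i \mH_i \mH_i^+$ is strictly positive definite, so the effective contraction factor is strictly less than one. Everything else is a routine calculation once the exactness of the quadratic expansion reduces SP2 to an orthogonal projection.
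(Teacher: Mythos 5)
Your argument for the main inequality is essentially the paper's own proof: both of you observe that for an exact quadratic the SP2 constraint set is the affine subspace $w^* + \Null(\mH_i)$, that the update is therefore the orthogonal projection $w^{t+1}-w^* = \mP_i(w^t-w^*)$ with $\mP_i = \mI - \mH_i\mH_i^+$, and that idempotence plus the conditional expectation and a Rayleigh-quotient bound give the recursion with $\rho=\lambda_{\max}\bigl(\mI - \tfrac1n\sum_i \mH_i\mH_i^+\bigr)$. (The paper reaches the projector via a null-space basis $\mN_i$ and the least-norm solution in $\alpha$, but that is the same object.)

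Where you genuinely diverge is the claim $\rho<1$, and here your treatment is the more careful one. The paper argues via Jensen that $0<\lambda_{\max}\bigl(\E{\mH_i\mH_i^+}\bigr)\le 1$ and then concludes $\lambda_{\max}\bigl(\mI-\E{\mH_i\mH_i^+}\bigr)<1$; but $1-\lambda_{\max}(\mA)$ is $\lambda_{\min}(\mI-\mA)$, not $\lambda_{\max}(\mI-\mA)$, so that chain only controls the wrong end of the spectrum. As you correctly point out, $\lambda_{\max}\bigl(\mI-\tfrac1n\sum_i\mH_i\mH_i^+\bigr)=1$ whenever $\bigcap_i\Null(\mH_i)\neq\{0\}$, i.e.\ whenever the minimizer is non-unique, so the strict inequality as stated needs either the assumption $\bigcap_i\Null(\mH_i)=\{0\}$ or your repair. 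Your repair --- noting $w^{t+1}-w^t\in\Range(\mH_i)$, so the trajectory stays in $w^0+\bigl(\bigcap_j\Null(\mH_j)\bigr)^{\perp}$, and choosing $w^*$ as the projection of $w^0$ onto the solution set --- is sound, but be explicit that it yields contraction with the smallest \emph{nonzero} eigenvalue of $\tfrac1n\sum_i\mH_i\mH_i^+$, i.e.\ a slightly different constant than the $\rho$ displayed in the proposition; it proves a corrected version of the statement rather than the literal one. Everything else in your proposal matches the paper's computation.
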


The rate of convergence of \texttt{SP2} in~\eqref{eq:rateSP2} can be orders of magnitude better  than SGD. Indeed, since~\eqref{eq:quadexe} is convex, smooth and interpolation holds, we have from~\cite{SGDstruct} that SGD converges at a rate of 
\begin{equation}\label{eq:SGDrate}
    \rho_{SGD} = 1-\frac{1}{2n}\frac{ \lambda_{\min}(\sum_{i=1}^n\mH_i) }{\max_{i=1,\ldots, n} \lambda_{\max}(\mH_i)}.
\end{equation}
To compare~\eqref{eq:SGDrate} to $\rho$ rate in Proposition~\ref{prop:convergence}, consider the case where all $\mH_i$ are invertible. In this case $\mH_i \mH_i^{+} = \mI$ and thus $\rho = 0$ and \texttt{SP2} converges in one step. Indeed, even if a single $\mH_i$ is invertible, after sampling $i$ the \texttt{SP2} will converge. In contrast, the SGD method is still at the mercy of the spectra of the $\mH_i$ matrices and depend on how well conditioned these matrices are. Even in the extreme case where all $\mH_i$ are well conditioned, for example $\mH_i = i \times \mI$, the rate of convergence of SGD can be very slow, for instance in this case we have  $\rho_{SGD}~=~1~-~\frac{1}{2n^2}.$

\begin{proposition}\label{prop:convergenceSPplus}
Consider the loss functions in~\eqref{eq:quadexe}.  The \texttt{SP2}$^+$ method~\eqref{eq:SP2plus}  converges linearly\begin{equation}\label{eq:rhoSP2+}
\E{\norm{w^{t+1}-w^*}^2} \leq \rho_{SP2^+}^2\; \E{\norm{w^{t}-w^*}^2},\;
\text{where}\;
\rho_{SP2+} = 1 - \frac{1}{2n} \sum_{i=1}^n\frac{\lambda_{\min}(\mH_i)}{\lambda_{\max}(\mH_i)}.  
\end{equation}  
\end{proposition}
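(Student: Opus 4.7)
My plan leverages the fact that for the quadratic $f_i(w) = \langle \mH_i(w-w^*), w-w^*\rangle$ the Hessian $2\mH_i$ is constant in $w$, so the second-order Taylor expansion $q$ appearing in the definition of \texttt{SP2}$^+$ equals $f_i$ identically. Consequently both half-steps of \texttt{SP2}$^+$ collapse to ordinary Stochastic Polyak projection steps $w_{\mathrm{new}} = w - (f_i(w)/\|\nabla f_i(w)\|^2)\nabla f_i(w)$ applied to the same sampled $f_i$, and the update $w^t \mapsto w^{t+1}$ is a composition of two consecutive Polyak steps on this $f_i$.

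The key deterministic step is a uniform-in-$w$ per-$i$ one-step contraction for a Polyak step. Writing $u = w - w^*$, the identity $\nabla f_i(w) = 2\mH_i u$ and the symmetry of $\mH_i$ give $\langle \nabla f_i(w), w-w^*\rangle = 2 f_i(w)$. Substituting this into the expansion of $\|w_{\mathrm{new}} - w^*\|^2$ yields the clean equation
\[
\|w_{\mathrm{new}} - w^*\|^2 \;=\; \|w-w^*\|^2 \;-\; 3\,\frac{f_i(w)^2}{\|\nabla f_i(w)\|^2}.
\]
I would then lower-bound the progress term $\tfrac{f_i(w)^2}{\|\nabla f_i(w)\|^2} = \tfrac{(u^\top \mH_i u)^2}{4\,u^\top \mH_i^2 u}$ by combining the two PSD spectral inequalities $u^\top \mH_i^2 u \le \lambda_{\max}(\mH_i)\,u^\top \mH_i u$ and $u^\top \mH_i u \ge \lambda_{\min}(\mH_i)\|u\|^2$, concluding it is at least $\tfrac{\lambda_{\min}(\mH_i)}{4\lambda_{\max}(\mH_i)}\|u\|^2$. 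Absorbing the $3/4$ constant as $1/2$ gives the deterministic per-$i$ contraction
\[
\|w_{\mathrm{new}} - w^*\|^2 \;\le\; \Bigl(1 - \tfrac{\lambda_{\min}(\mH_i)}{2\lambda_{\max}(\mH_i)}\Bigr)\|w-w^*\|^2,
\]
which is uniform in the starting point $w$ and depends on $i$ only through the spectrum of $\mH_i$.

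To finish, I would apply this single-step bound to each half-step of \texttt{SP2}$^+$ and then take expectation over the sampled index $i$. The main obstacle is precisely this averaging: since both half-steps share the same $i$, naively chaining the per-sample contraction gives $\mathbb{E}_i[(1-c_i)^2]\|w^t-w^*\|^2$ with $c_i = \tfrac{\lambda_{\min}(\mH_i)}{2\lambda_{\max}(\mH_i)}$, and the convexity of $x \mapsto (1-x)^2$ makes Jensen's inequality point in the wrong direction for a direct comparison with $\rho_{SP2+}^2 = (1-\mathbb{E}_i[c_i])^2$. The way I would resolve this is to avoid squaring inside the expectation: first average the single-step inequality to obtain $\mathbb{E}_i\|w^{t+1/2}-w^*\|^2 \le \rho_{SP2+}\|w^t-w^*\|^2$, then apply the same averaged one-step Polyak bound to the second half-step by invoking the tower property of conditional expectation with $w^{t+1/2}$ playing the role of the starting iterate. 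Composing the two factors of $\rho_{SP2+}$ delivers the advertised $\rho_{SP2+}^2$ rate.
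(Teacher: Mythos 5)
Your reduction to two Stochastic Polyak steps is exactly the paper's starting point, and your self-contained derivation of the per-sample contraction is correct and in fact more explicit than what the paper provides: the paper simply observes that a quadratic equals its own second-order model, so \texttt{SP2}$^+$ is two \texttt{SP} steps, and then cites an external convergence result (Corollary 5.7.I of the \texttt{SP} theory in~\cite{TASPS}) and squares the rate. Your identity $\norm{w_{\mathrm{new}}-w^*}^2 = \norm{w-w^*}^2 - 3\,f_i(w)^2/\norm{\nabla f_i(w)}^2$, the spectral bounds, and the weakening of $3/4$ to $1/2$ to reach the per-sample factor $1-\lambda_{\min}(\mH_i)/(2\lambda_{\max}(\mH_i))$ all check out.

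The gap is in the final averaging step, and you identified the obstacle yourself before stepping over it. The two half-steps of \texttt{SP2}$^+$ use the \emph{same} sampled index $i$; the second half-step is not a fresh \texttt{SP} iteration. After conditioning on $w^{t+1/2}$ you therefore cannot ``apply the averaged one-step Polyak bound'' to the second half-step: the index appearing there is already determined by the first half-step rather than being a new uniform draw independent of $w^{t+1/2}$. Writing $c_i = \lambda_{\min}(\mH_i)/(2\lambda_{\max}(\mH_i))$, the tower property only yields
\begin{equation*}
\E{\norm{w^{t+1}-w^*}^2 \mid w^t} \;\le\; \frac{1}{n}\sum_{i=1}^n (1-c_i)^2\, \norm{w^t-w^*}^2 ,
\end{equation*}
and the two random factors $(1-c_i)$ and $\norm{w^{t+1/2}-w^*}^2$ are positively correlated (a badly conditioned $\mH_i$ makes less progress in both half-steps), so the expectation of their product cannot be factored into $\rho_{SP2+}\cdot\E{\norm{w^{t+1/2}-w^*}^2}$. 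As you yourself noted, Jensen gives $\frac{1}{n}\sum_{i}(1-c_i)^2 \ge \rho_{SP2+}^2$, so what your argument actually establishes is the rate $\frac{1}{n}\sum_i(1-c_i)^2$ (or, more crudely, the unsquared $\rho_{SP2+}$, since $(1-c_i)^2\le 1-c_i$), not the advertised $\rho_{SP2+}^2$. Obtaining exactly $\rho_{SP2+}^2$ would require either independent resampling of $i$ between the two half-steps or a different argument; the paper's own one-line proof inherits the same subtlety by implicitly treating the two half-steps as two independent iterations of \texttt{SP}.
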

The rate of convergence of \texttt{SP2}$^+$ now depends on the average condition number of the $\mH_i$ matrix. Yet still, the rate of convergence  in~\eqref{eq:rhoSP2+} is always better than that of SGD. Indeed, this follows because of the maximum over the index $i$ in~\eqref{eq:SGDrate} and since 
\[ \frac{1}{2n} \sum_{i=1}^n\frac{\lambda_{\min}(\mH_i)}{\lambda_{\max}(\mH_i)}  \geq  \frac{1}{2n} \sum_{i=1}^n\frac{\lambda_{\min}(\mH_i)}{\max_{i=1,\ldots, n}\lambda_{\max}(\mH_i)} . \]

Note also that the rate of \texttt{SP2}$^+$ appears squared in~\eqref{eq:rhoSP2+} and the rate $\rho_{SGD}$ of SGD is not squared. But this difference accounts for the fact that each step of \texttt{SP2}$^+$ is at least twice the cost of SGD, since each step of \texttt{SP2}$^+$  is comprised of two gradient steps, see~\eqref{eq:polyakup} and~\eqref{eq:polyakup2}. Thus we can neglect the apparent advantage of the rate $\rho_{SP2+}$ being squared.

\section{Quadratic with Slack}
\label{sec:slack}
\vspace{-2mm}

Here we depart from the interpolation Assumption~\ref{ass:interpolate} and design a variant of \texttt{SP2}$^+$ that can be applied to models that are \emph{close} to interpolation.
Instead of trying to set all the losses to zero, we now will try to find the smallest \emph{slack variable} $s>0$ for which
\[ f_i(w)  \leq s, \quad \mbox{ for } i=1,\ldots, n.\]
If interpolation holds, then $s =0$ is a solution. Outside of interpolation, $s$ may be non-zero. 

\subsection{L2 slack formulation}
\vspace{-1mm}

To make  $s$ as small as possible,  we can solve the following problem
\begin{align}\label{eq:slackL2}
    \min_{s \in \R, w \in \R^s} \frac{1}{2}s^2  \mbox{ subject to } f_i(w)  \leq s, \mbox{ for } i=1,\ldots, n, 
\end{align}
which is called the {\em L2 slack formulation}.
This type of slack problem was introduced in~\cite{Crammer06} to derive  variants of the passive-aggressive method that could be applied to linear models on non-separable data, in other words, that do not interpolate the data.

To solve~\eqref{eq:slackL2} we will again project onto a local quadratic approximation of the constraint. 
Let 
\begin{align}
q_{i,t}(w) \eqdef f_i(w^t) + \dotprod{\nabla f_i(w^t), w-w^t} 
+ \tfrac{1}{2} \dotprod{\nabla^2 f_i(w^t) (w-w^t), w-w^t}.
\end{align}
and let $\Delta_{t} = \norm{w-w^t}^2 + (s-s^t)^2.$ 
Consider the iterative  method given by
\begin{align}
w^{t+1} , s^{t+1}  =  &\underset{s \geq 0, \; w\in\R^d}{\argmin}\frac{1-\lambda}{2}\Delta_{t}
 + \frac{\lambda}{2} s^2 \quad
\mbox{ s.t. }q_{i,t}(w)
\leq s, \label{eq:newtraph2ndslack}
\end{align}
where $\lambda \in [0, \;1]$ is a regularization parameter that trades off between having a small $s$, and using the previous iterates as a regularizor.  The resulting projection problem in~\eqref{eq:newtraph2ndslack} has a quadratic inequality, and thus in most cases has no closed form solution, despite always being feasible\footnote{For instance $w =w^t$ and $s=f_i(w^t)$ is feasible}.

So instead of solving~\eqref{eq:newtraph2ndslack} exactly, we propose an approximate solution by iteratively linearizing and projecting onto the constraints. Our approximate solution has
two steps, the first step being
\begin{align}
w^{t+1/2},s^{t+1/2}  =  &\underset{s \geq 0, \; w\in\R^d}{\argmin}\frac{1-\lambda}{2}\Delta_{t} + \frac{\lambda}{2} s^2 \quad
\mbox{ s.t. }q_{i,t}(w^t) +\dotprod{\nabla q_{i,t}(w^t), w-w^t} 
\leq s. \label{eq:newtraph2ndslackhalf}
\end{align}

The second step is given by projecting $w^{t+1/2}$ onto the linearization around $w^{t+1/2}$
as follows
\begin{align}
w^{t+1}, s^{t+1} \!= \!\! &\underset{s \geq 0, \; w\in\R^d}{\argmin}\!\!\frac{1\!-\!\lambda}{2}\Delta_{t+\frac{1}{2}} \!\!+\! \frac{\lambda}{2} s^2 ~~\mbox{s.t. }q_{i,t}(w^{t+1/2})\!+\! \dotprod{\!\nabla q_{i,t}(w^{t+1/2}), w\!-\!w^{t+1/2}\!} 
\!\leq \!s. \label{eq:newtraph2ndslackhalfhalf}
\end{align}

The closed form solution to our two step method is given in Lemma~\ref{lem:quadslackproj} of Appendix~\ref{proof_lma:quadslackproj}. We refer to this method as \texttt{SP2L2}$^+$.

\subsection{L1 slack formulation}
\vspace{-1mm}

To make  $s$ as small as possible,  we can also solve the following {\em L1 slack formulation}
\begin{align}
    \min_{s \geq 0, w \in \R^d} s \quad  \mbox{ s.t. } f_i(w)  \leq s, \mbox{ for } i=1,\ldots, n. \nonumber
\end{align}
Similarly, we can again project onto a local quadratic approximation of the constraint and consider the iterative method given by
\begin{align}
w^{t+1} , s^{t+1}  =  &\underset{s \geq 0, \; w\in\R^d}{\argmin}\frac{1-\lambda}{2}\Delta_{t}
 + \frac{\lambda}{2} s \quad
\mbox{ s.t. }q_{i,t}(w)
\leq s, \label{eq:newtraph2ndslack_l1}
\end{align}
where $\lambda \in [0, \;1]$ is a regularization parameter that trades off between having a small $s$, and using the previous iterates as a regularizor.



To approximately solve~\eqref{eq:newtraph2ndslack_l1}, we again propose an approximate two step method similar to~\eqref{eq:newtraph2ndslackhalf} and~\eqref{eq:newtraph2ndslackhalfhalf}. 
The closed form solution to the  two step method is given in Lemma~\ref{lem:quadslackproj_l1} of Appendix~\ref{proof_lma:quadslackproj_l1}. We refer to this method as \texttt{SP2L1}$^+$.


\subsection{Dropping the Slack Regularization}
\vspace{-1mm}

Note that the objective function in~\eqref{eq:newtraph2ndslack_l1} contains a regularization term $(s-s^t)^2$, which forces $s$ to be close to $s^t$. If we allow $s$ to be far from $s^t$, we can instead solve the following unregularized problem
\begin{align}
w^{t+1} , s^{t+1}  =  &\underset{s \geq 0, \; w\in\R^d}{\argmin}\frac{1-\lambda}{2}\norm{w-w^t}^2
 + \frac{\lambda}{2} s \quad
\mbox{ s.t. }q_{i,t}(w)
\leq s, \label{eq:newtraph2ndslack_l1_unreg}
\end{align}
where $\lambda \in [0, \;1]$ is again a regularization parameter that trades off between having a small $s$, and using the previous iterates as a regularizor.
We call the resulting method in~\eqref{eq:newtraph2ndslack_l1_unreg} the \texttt{SP2max} method since it is a second order variant of the \texttt{SPmax} method~\cite{SPS,polyakslack}.
The advantage of \texttt{SP2max} is that it has a closed form solution for GLMs~\eqref{eq:GLMs} as shown in the following lemma
which is proved in Appendix~\ref{proof_lma:GLM_L1_unreg}.

\begin{lemma}(\texttt{SP2max})\label{lma:GLM_L1_unreg}
Consider the GLM model given in~\eqref{eq:GLMs} and~\eqref{eq:GLMsgradhess}.
If the loss $f_i=f_i(w^t)$ is non-negative, then 
the iterates of~\eqref{eq:newtraph2ndslack_l1_unreg} have a closed form solution given by
\begin{align*}
    w^{t+1} = w^t + c^\star x_i , \quad \quad
    s^{t+1} = \max\left\{ \widetilde{s},~0 \right\},
\end{align*}
and where $\widetilde{s} = f_i  -\frac{\widetilde{\lambda} a_i^2 \ell}{1+ \widetilde{\lambda} h_i \ell  }  +  \frac{ h_i\widetilde{\lambda}^2 a_i^2 \ell^2}{2(1+ \widetilde{\lambda} h_i \ell )^2 }$, $\ell = \|x_i\|^2$, $\widetilde{\lambda} = \frac{\lambda}{2(1-\lambda)}$, and
\begin{align*}
    c^\star = \begin{cases}
    0, \quad &\text{if } f_i = 0\\
    -\frac{\widetilde{\lambda} a_i }{1+ \widetilde{\lambda} h_i \ell  }, &\text{if } f_i>0 \text{ and }  \widetilde{s}\geq 0,\\
\frac{-a_i + \sqrt{a_i^2  - 2h_i  f_i}  }{h_i\ell}, & \text{otherwise.}
    \end{cases}
\end{align*}
\end{lemma}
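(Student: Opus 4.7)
My plan is to exploit the one-dimensional structure of a GLM to reduce the $(d+1)$-dimensional problem~\eqref{eq:newtraph2ndslack_l1_unreg} to a two-variable optimization in $(c,s)\in\R^2$. Since $\nabla f_i(w^t) = a_i x_i$ and $\nabla^2 f_i(w^t) = h_i x_i x_i^\top$ are both supported on $\linspan(x_i)$, the quadratic constraint $q_{i,t}(w)\leq s$ depends on $w-w^t$ only through the scalar $x_i^\top(w-w^t)$. Decomposing $w-w^t = c x_i + u$ with $u\perp x_i$, the constraint is unaffected by $u$ while $\|w-w^t\|^2 = c^2 \ell + \|u\|^2$ with $\ell=\|x_i\|^2$, so $u=0$ at the optimum. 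Thus~\eqref{eq:newtraph2ndslack_l1_unreg} collapses to
\[
\min_{c\in\R,\; s\geq 0}\; \tfrac{1-\lambda}{2} c^2\ell + \tfrac{\lambda}{2} s \quad \text{s.t.}\quad f_i + a_i\ell c + \tfrac{h_i\ell^2}{2} c^2 \leq s.
\]

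Next I would carry out a case analysis. If $f_i=0$, then $(c,s)=(0,0)$ is feasible with zero (globally minimal) objective, giving $c^\star=0$ and $s^{t+1}=0=\max\{\widetilde s,0\}$. When $f_i>0$, at any optimum the quadratic inequality must be active, since otherwise $s$ can be decreased while preserving feasibility and strictly lowering the objective. Substituting $s = f_i + a_i\ell c + \tfrac{h_i\ell^2}{2} c^2$ reduces the problem to optimization in $c$ alone under the sign constraint $s\geq 0$. Temporarily dropping the sign constraint and imposing first-order stationarity in $c$ gives $(1-\lambda) c\ell + \tfrac{\lambda}{2}(a_i\ell + h_i\ell^2 c) = 0$, which after normalizing by $(1-\lambda)$ yields $c^\star = -\widetilde\lambda a_i/(1+\widetilde\lambda h_i\ell)$ with $\widetilde\lambda = \lambda/(2(1-\lambda))$. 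Plugging this $c^\star$ back into the expression for $s$ recovers exactly the closed form for $\widetilde s$ stated in the lemma, and this branch is optimal whenever $\widetilde s\geq 0$.

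When $\widetilde s<0$ the unconstrained minimizer violates $s\geq 0$, so the optimum moves to the boundary $s=0$ and $c$ must satisfy $\tfrac{h_i\ell^2}{2} c^2 + a_i\ell c + f_i \leq 0$. Feasibility with $f_i>0$ forces $h_i>0$ and $a_i^2 - 2 h_i f_i \geq 0$, and I would minimize $c^2$ over the closed interval between the two real roots of this quadratic. Because $f_i>0$ the point $c=0$ lies strictly outside this interval, so the minimizer is the root of smallest modulus, which gives $c^\star = (-a_i + \sqrt{a_i^2 - 2 h_i f_i})/(h_i\ell)$ after a short case split on $\sign(a_i)$. Collecting the three branches then yields $s^{t+1}=\widetilde s$ when $\widetilde s\geq 0$ and $s^{t+1}=0$ otherwise, i.e.\ $s^{t+1}=\max\{\widetilde s,0\}$, as claimed. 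The main obstacle will be the bookkeeping in this last sub-case: verifying that $\widetilde s<0$ together with $f_i>0$ indeed forces $h_i>0$ and a nonnegative discriminant, and selecting the correct root via the sign-of-$a_i$ argument. The remaining algebra—simplifying $q_{i,t}(w^t + c^\star x_i)$ into the closed form for $\widetilde s$ and assembling the cases—is routine KKT manipulation.
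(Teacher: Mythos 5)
Your reduction to a two--variable problem in $(c,s)$ and the subsequent three--way case analysis is essentially the paper's own proof: the paper likewise argues that $w-w^t$ must be a multiple of $x_i$, rewrites the objective as $\tfrac12 c^2\ell + \widetilde{\lambda} s$, and splits into the cases $f_i=0$, constraint active with $s>0$, and $s=0$; the only cosmetic difference is that you substitute the active constraint and impose first-order stationarity where the paper writes out the Lagrangian and enumerates the complementary-slackness cases. Your first two branches are correct and reproduce the paper's Cases II and I exactly, including the derivation of $\widetilde{s}$.

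The genuine problem is in the last branch. Your governing principle there --- minimize $c^2$ over the set where the quadratic is $\leq 0$, hence take the root of smallest modulus --- is correct, but that root is $\frac{-a_i+\sign(a_i)\sqrt{a_i^2-2h_if_i}}{h_i\ell}$ (for $h_i>0$ the two roots have product $2f_i/(h_i\ell^2)>0$ and sum $-2a_i/(h_i\ell)$, so both carry the sign of $-a_i$), and this coincides with the stated $\frac{-a_i+\sqrt{a_i^2-2h_if_i}}{h_i\ell}$ only when $a_i\geq 0$. Concretely, with $\ell=h_i=1$, $a_i=-2$, $f_i=1$ the feasible roots are $2\pm\sqrt{2}$ and the minimizer of $c^2$ is $2-\sqrt{2}$, not the stated $2+\sqrt{2}$. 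So the asserted ``short case split on $\sign(a_i)$'' does not deliver the lemma's formula; carried out honestly, your argument shows the third branch needs a $\sign(a_i)$ in front of the square root, exactly as in the analogous Lemma~\ref{lma:GLM}. (The paper's own proof shares this blemish: its multiplier $\nu_1=-c/(a_i+h_i\ell c)$ is nonnegative at the ``$+$'' root only when that root is $\leq 0$, i.e.\ only when $a_i\geq 0$.) A secondary slip: feasibility of the quadratic inequality at $s=0$ with $f_i>0$ does \emph{not} force $h_i>0$ --- for $h_i<0$ the constraint holds outside the two roots and this subcase can occur --- although the smallest-modulus selection rule, with the $\sign(a_i)$ correction, still applies there. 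You should either flag the $a_i<0$ discrepancy explicitly or restrict that branch to $a_i\geq 0$.
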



To approximately solve~\eqref{eq:newtraph2ndslack_l1_unreg} in general, we again propose an approximate two step method.
The closed form solution to the  two step method is given in Lemma~\ref{lem:quadslackproj_l1_unreg} of Appendix~\ref{proof_lma:quadslackproj_l1_unreg}. We refer to this method as \texttt{SP2max}$^+$.



\section{Experiments }
\label{sec:simulations}
\vspace{-2mm}

\subsection{Non-convex problems}
\label{sec:nonconvextoy}
\vspace{-1mm}
To emphasize how our new \texttt{SP2} methods can handle non-convexity, we have tested \texttt{SP2}~\eqref{eq:SP2proj}, \texttt{SP2$^+$}~\eqref{eq:SP2plus} on the non-convex problems
PermD$\beta^+$, Rastrigin and Levy N. 13, Rosenbrock~\cite{testfuncs}\footnote{
We used the Python Package \texttt{pybenchfunction} available on github \url{Python_Benchmark_Test_Optimization_Function_Single_Objective}. We 
also note that the PermD$\beta^+$ implemented in this package is a modified version of the 
 $\texttt{PermD}\beta$ function, as we detail in 
Section~\ref{sec:PermD}.}, see Figures~\ref{fig:perm},~\ref{fig:rastrigin} in the main text, and Figures~\ref{fig:levy} and~\ref{fig:rosenbrock} in the appendix. The two experiments with the function Levy N. 13 and Rosenbrock
are detailed in the appendix in Section~\ref{secapp:nonconvextoy}.

All of these functions are sums-of-terms of the format~\eqref{eq:main} and satisfy the interpolation Assumption~\ref{ass:interpolate}. 
 To compute the \texttt{SP2} update we used ten steps of Newton's Raphson method as 
 detailed in Section~\ref{sec:SP2multistep}. We consistently find across these  non-convex problems that \texttt{SP2} and \texttt{SP2}$^+$ are very competitive, with \texttt{SP2} converging in under $10$ epochs. Here we can clearly see that \texttt{SP2} converges to a high precision solution (like most second order methods), and different than other second order methods is not attracted to local maxima or saddle points.  In contrast,  \texttt{Newtons} method converges to a local maxima on all problems excluding the Rosenbrock function in Figure~\ref{fig:rosenbrock} in the appendix. For instance on the right of Figure~\ref{fig:rastrigin} we can see the red dot of \texttt{Newton} stuck on a local maxima. The iterates of \texttt{Newton} do not appear in the middle of Figure~\ref{fig:rastrigin} since they are outside of plotted region.

\definecolor{citrine}{rgb}{0.88, 0.66, 0.37}
\definecolor{darkgreen}{rgb}{0.31, 0.47, 0.26}

\begin{figure}
\centering
\includegraphics[width = 0.3\textwidth]{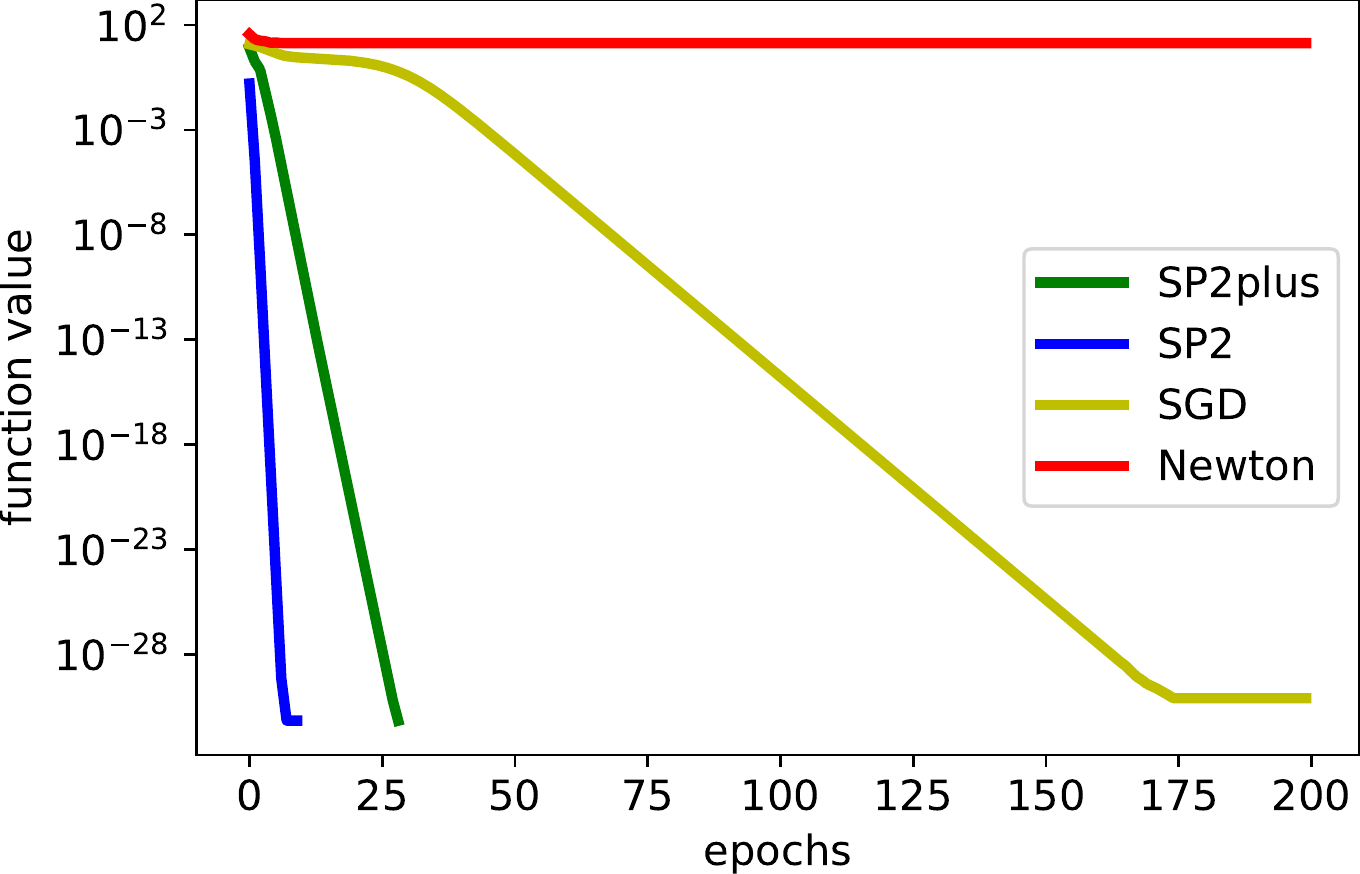}
\includegraphics[width = 0.3\textwidth]{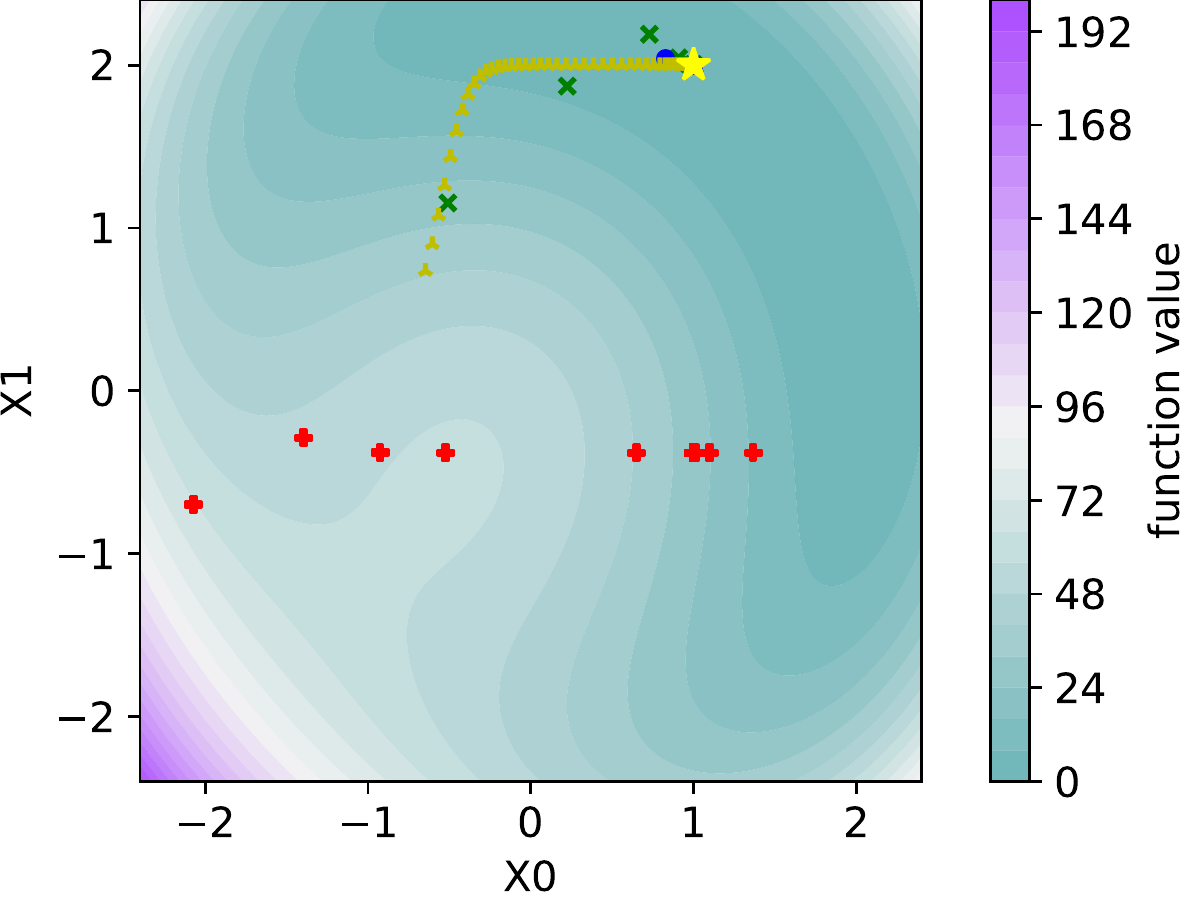}
\includegraphics[width = 0.3\textwidth]{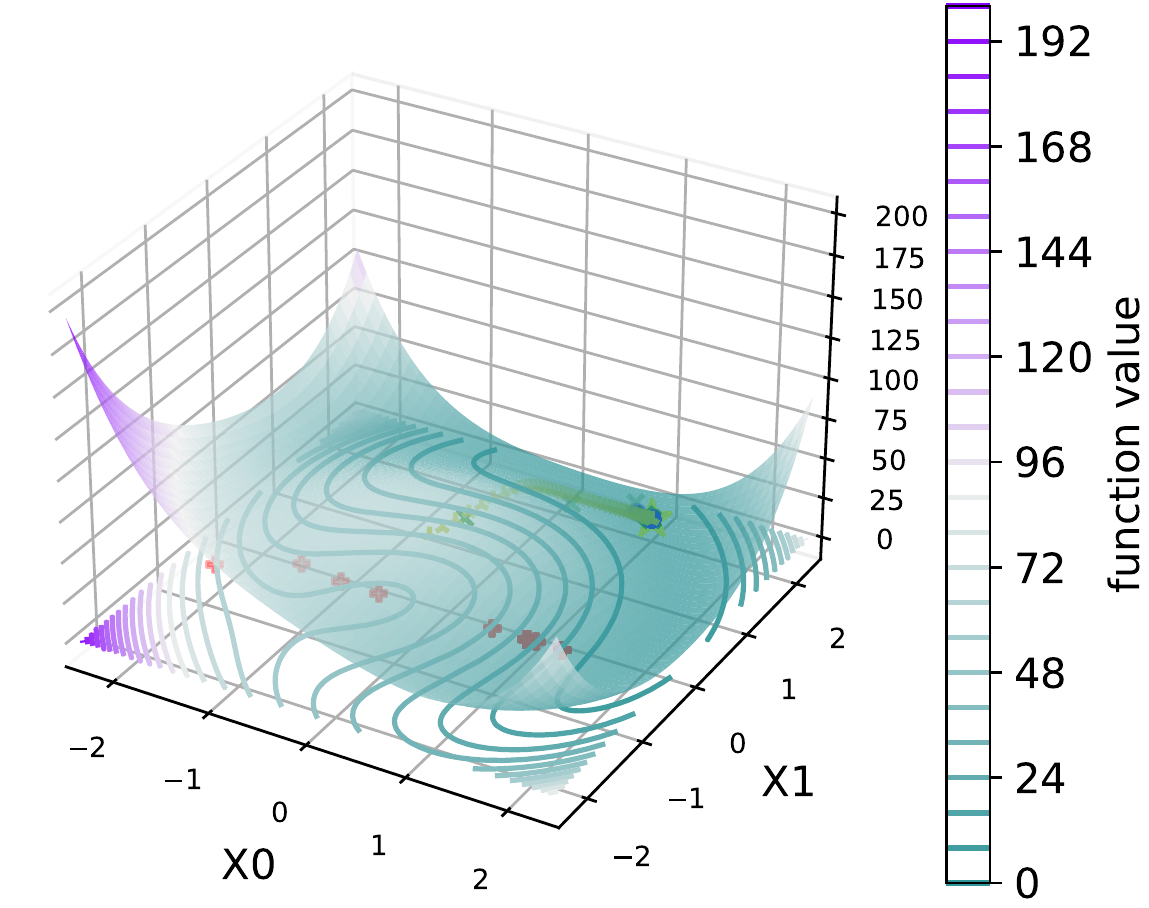}
\caption{The PermD$\beta^+$ function with $\beta =0.5$ where Left: we plot $f(x)$ across epochs Middle: level set plot, Right: Surface plot.  \texttt{SP2} is in \textcolor{blue}{blue }, \texttt{SP2$^+$} is in \textcolor{darkgreen}{green },  \texttt{SGD} is in \textcolor{citrine}{yellow } and  \texttt{Newton} in \textcolor{red}{red}.}
\label{fig:perm}
\end{figure}

\begin{figure}
\centering
\includegraphics[width = 0.3\textwidth]{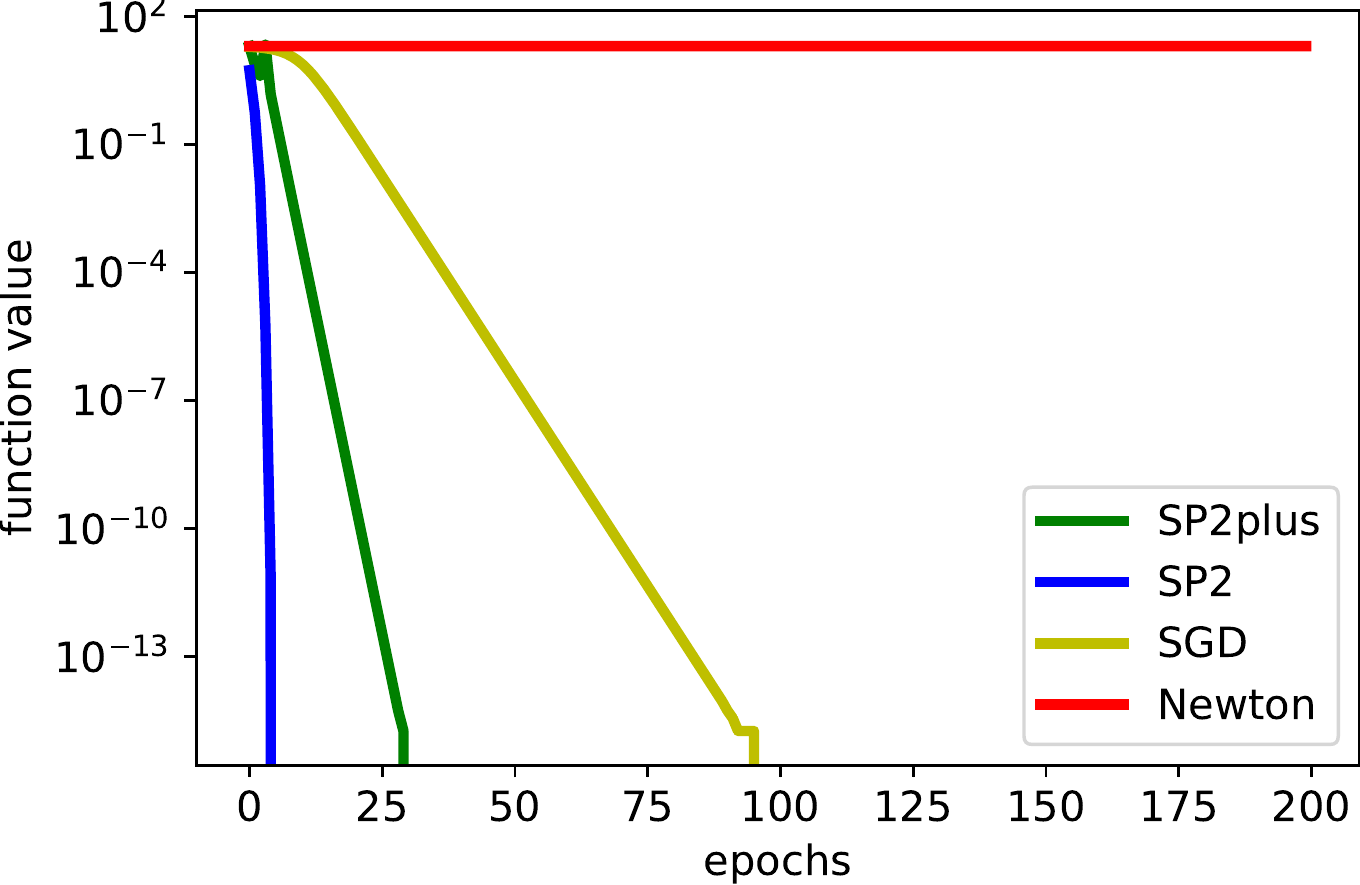}
\includegraphics[width = 0.3\textwidth]{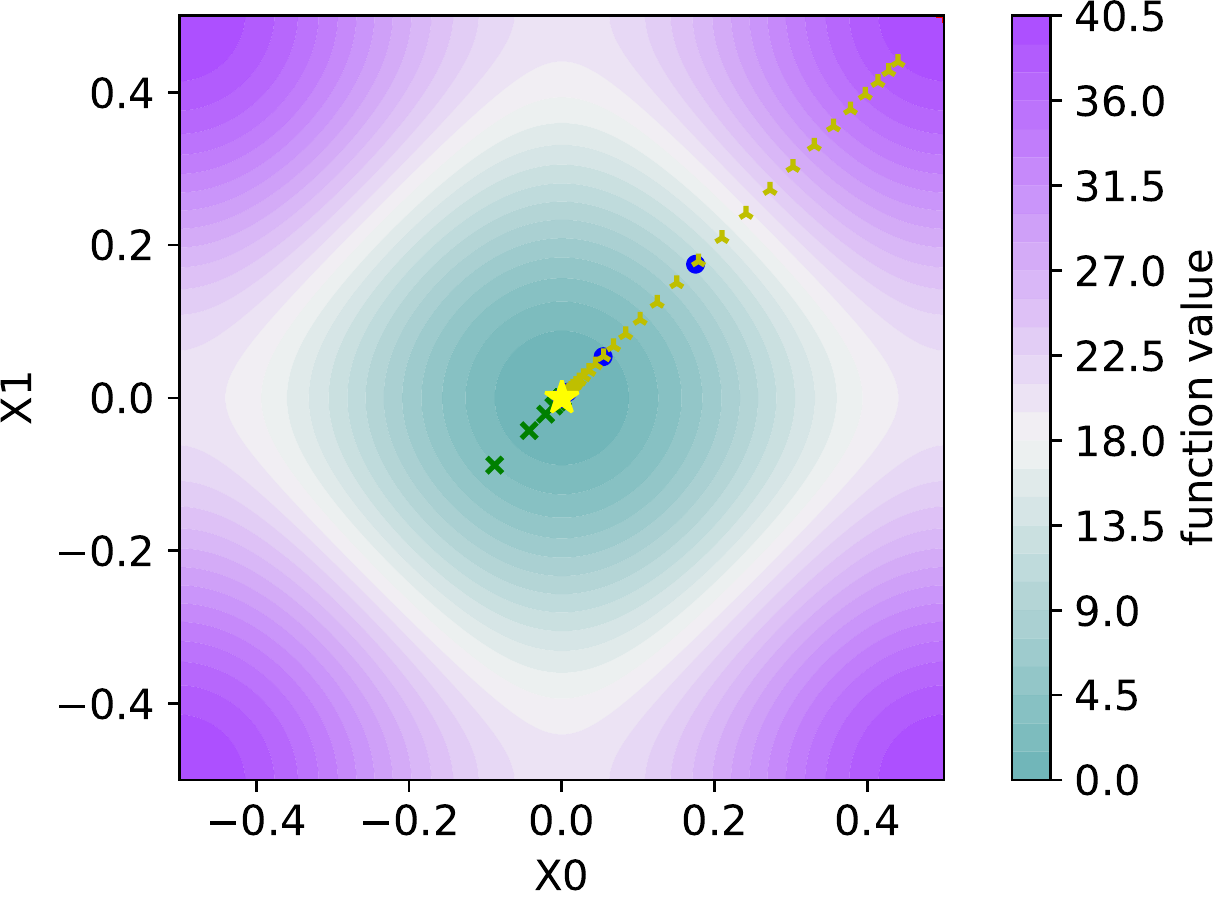}
\includegraphics[width = 0.3\textwidth]{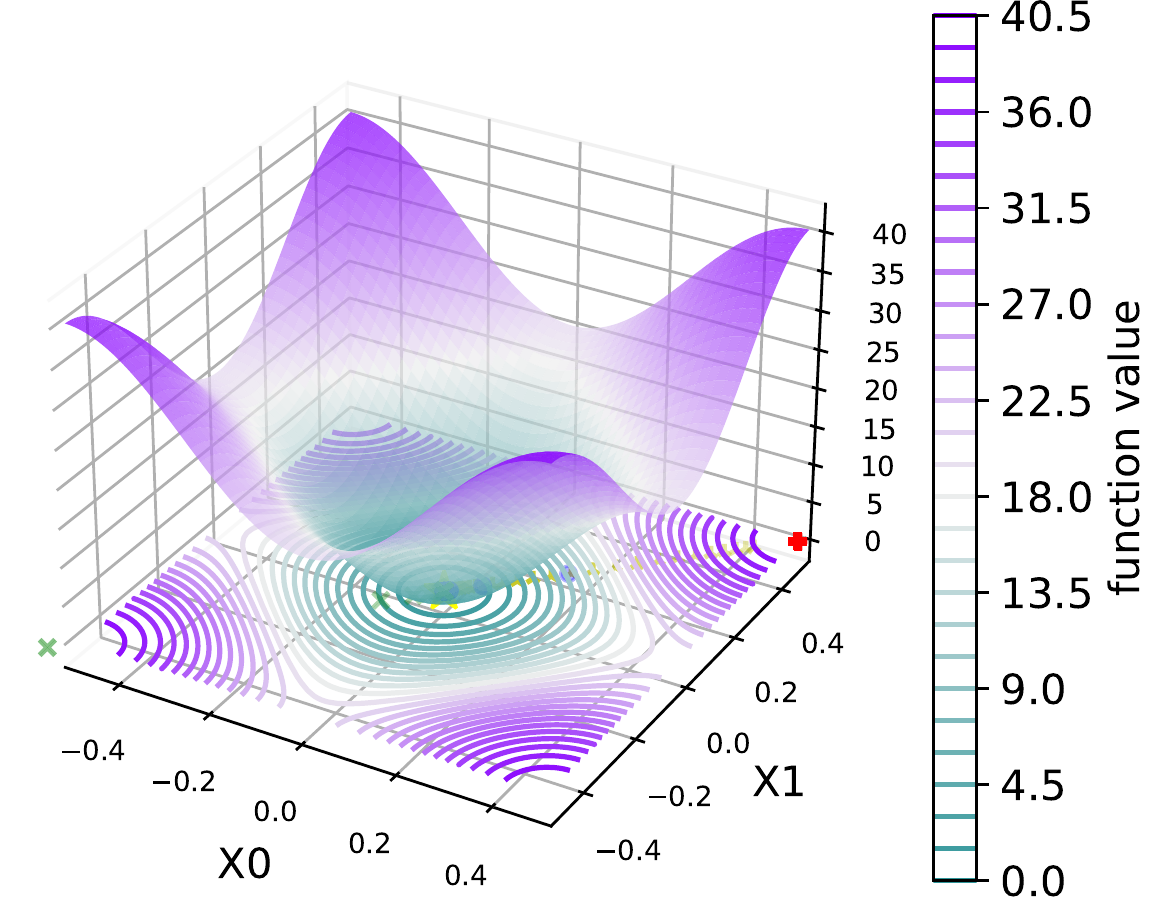}
\caption{The Rastrigin function where Left: we plot $f(x)$ across epochs Middle: level set plot, Right: Surface plot.  \texttt{SP2} is in \textcolor{blue}{blue }, \texttt{SP2$^+$} is in \textcolor{darkgreen}{green },  \texttt{SGD} is in \textcolor{citrine}{yellow } and  \texttt{Newton} is in \textcolor{red}{red}.}
\label{fig:rastrigin}
\end{figure}


\subsection{Matrix Completion}
\vspace{-1mm}
Assume a set of known values
$\{a_{i,j}\}_{(i,j)\in \Omega}$
where $\Omega$ is a set of known elements of the matrix, and we want to determine the missing elements.   One approach is solving the \emph{matrix completion} problem
\begin{equation}\label{eq:matcomp}
  \min_{U,V}  \sum_{(i,j) \in \Omega}
     \frac12 (u_i^T v_j - a_{i,j})^2,
\end{equation}
where $U = [u_i]_{(i,j)\in \Omega}$ and $V = [v_j]_{(i,j)\in \Omega}$. With the solution to~\eqref{eq:matcomp}, we then use $U^\top V$ as an approximation to the complete matrix $A = [a_{i,j}]_{i,j =1, \ldots, n}.$

Depite~\eqref{eq:matcomp} being a non-convex problem, if there exists an \emph{interpolating} solution to~\eqref{eq:matcomp}, that is one where 
$ u_i^T v_j  = a_{i,j},~ \mbox{for }(i,j) \in \Omega$,
then the SP2 method can solve~\eqref{eq:matcomp}. Indeed, the SP2 can be applied to~\eqref{eq:matcomp} by sampling a single pair $(i,j) \in \Omega$ uniformly, then projecting onto the quadratic
\begin{align}\label{eq:SP2matcom}
    u_i^{k+1}, v_j^{k+1} &=\; \argmin_{u,v} \frac12 \norm{u-u_i^k}^2 + \frac12\norm{v-v_j^k}^2 \mbox{ subject to } u^\top v  = a_{i,j}.
\end{align}
This projection can be solved as we detail in Theorem~\ref{THM:matcom} in Appendix~\ref{sec:matcom}.

We compared our method~\ref{eq:SP2matcom} to a specialized variant of \texttt{SGD} for online matrix completion described in \cite{jin2016provable}, see Figure~\ref{fig:MC}. To compare the two methods we generated a rank $k=2$ matrix $A\in \R^{100 \times 50}$.
We selected a subset entries with probability $p =0.1, 0.2 $ or $0.3$ to form our set $\Omega_{init}$ 
that was used to obtain an initial estimate $U_0, V_0$ using rank-k SVD method as described in \cite{jin2016provable}.
We extensively tuned the step size of \texttt{SGD} using a grid search, and the method labelled Non-convex \texttt{SGD} is the resulting run of \texttt{SGD} with the best step size. We also show how sensitive \texttt{SGD} is to this step size, by including the run of SGD with step sizes that were only a factor of $2$ to $4$ away from the optimal, which greatly degrades the performance of \texttt{SGD}.
In contrast, \texttt{SP2} worked with no tuning, and matches the performance of SGD with the optimal step size in the $p=0.1$ experiment, and outperforms \texttt{SGD} in the experiments with more measurements as can be seen in the $p=0.2$ and $p=0.3$ figures.


\begin{figure}
    \centering

\includegraphics[width=1.75in]{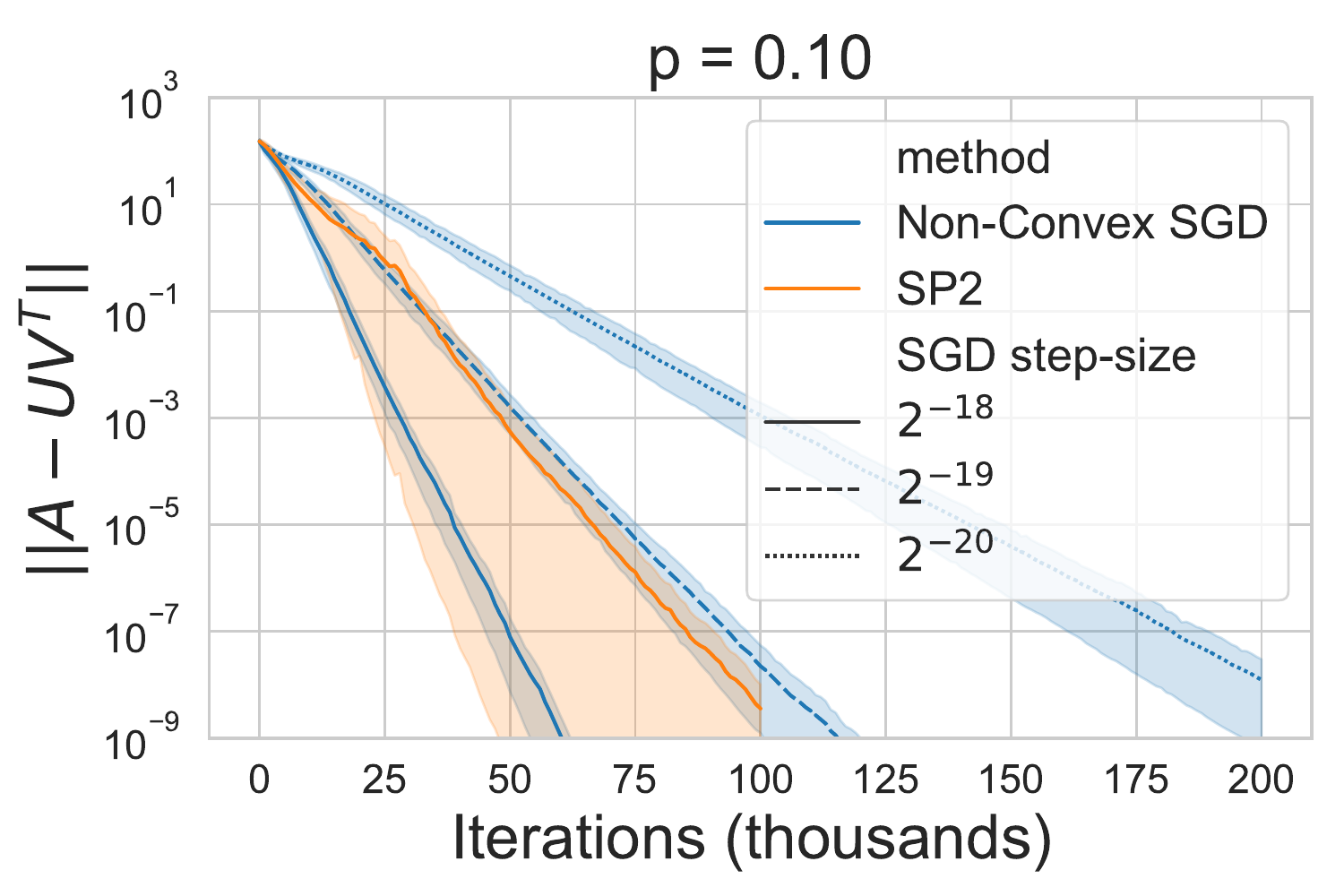}
\includegraphics[width=1.75in]{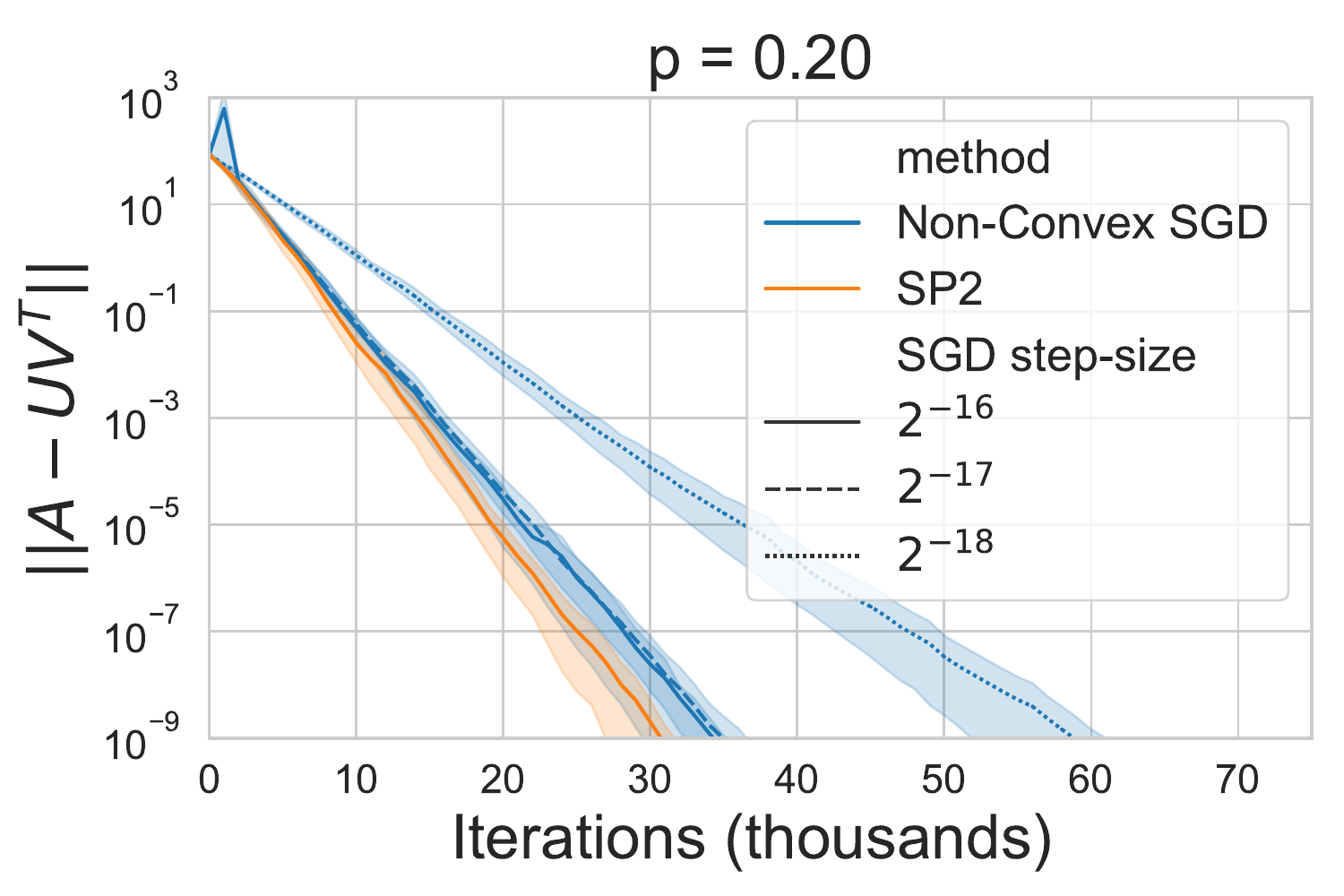}
\includegraphics[width=1.75in]{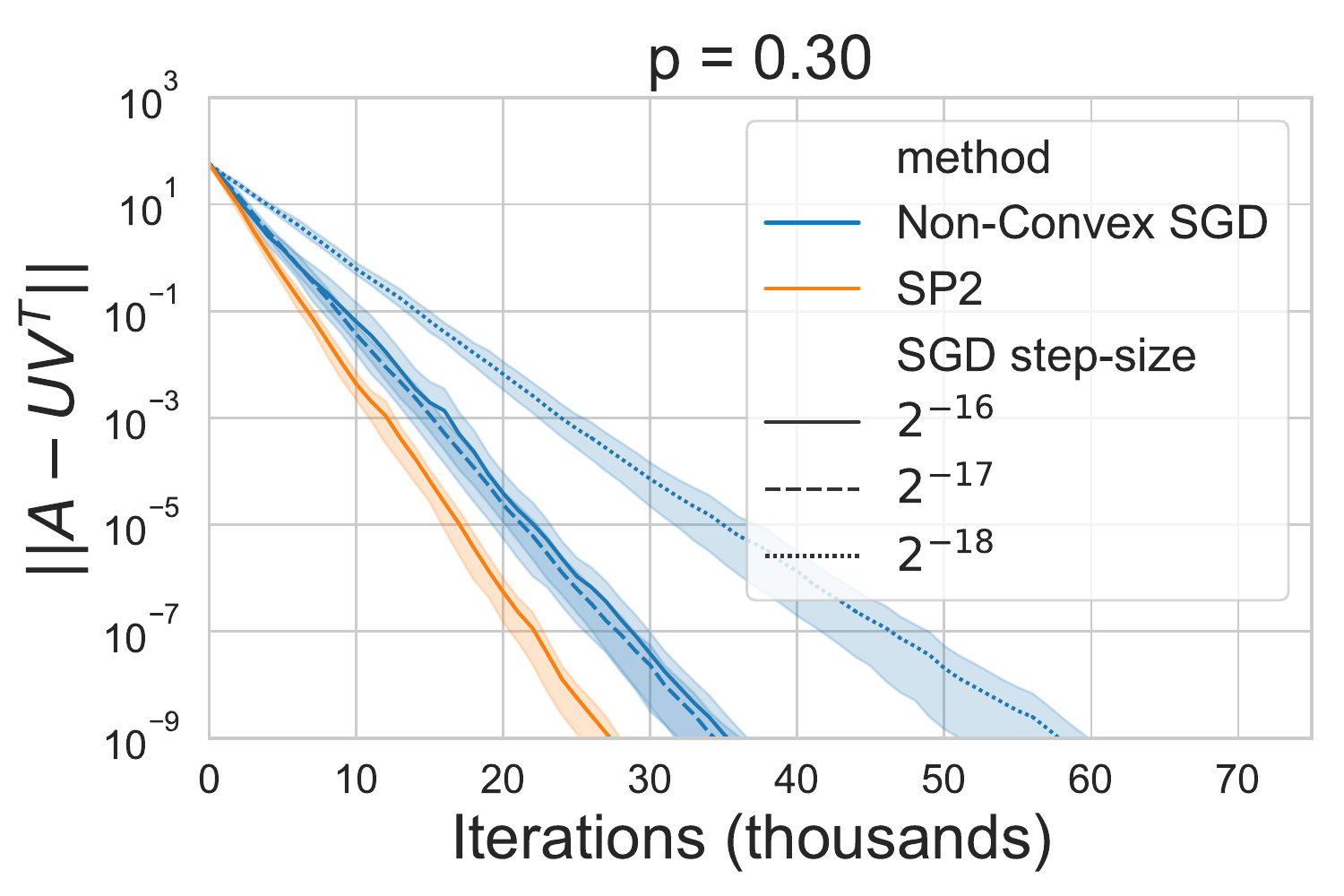}

    \caption{Recovery error for matrix completion. Left: Using 10\%, Middle: using 20\% and Right: using 30\% of the entries of $A$ to form $\Omega$, respectively. Shaded region corresponds to $5$ repeated runs. }
    \label{fig:MC}
\end{figure}

\subsection{Convex classification}
\label{sec:convexexp}
\vspace{-1mm}
In this experiment, we test the proposed methods on a logistic regression problem and compare them with some state-of-the-art methods (e.g., SGD, SP, and ADAM). In particular, we consider the problem of minimizing the following loss function
$
    f(w) = \frac 1 n \sum_{i=1}^n f_i(w) + \frac{\sigma}{2}\|w\|_2^2,
$
where $f_i(w) = \phi_i(x_i^\top w)$ with $\phi_i(t) = \ln(1+e^{-y_i t})$. Here, $\{(x_i,y_i)\in \R^{d+1}\}_{i=1}^n$ stands for the feature-label pairs and $\sigma>0$ is the regularization parameter. We can control how far each problem is from interpolation by increasing $\sigma$. When $\sigma >0$ the problem cannot interpolate, and thus we expect to see a benefit of the slack methods in Section~\ref{sec:slack} over \texttt{SP2$^+$}.

We used two data sets: colon-cancer~\cite{alon1999broad} and mushrooms~\cite{west2001predicting}, both of which interpolate when $\sigma =0.$

We compare the proposed methods \texttt{SP2}$^+$~\eqref{eq:SP2plus}, \texttt{SP2L2}$^+$ (Lemma~\ref{lem:quadslackproj}),  \texttt{SP2L1}$^+$ (Lemma~\ref{lem:quadslackproj_l1}), and \texttt{SP2max}$^+$ (Lemma~\ref{lem:quadslackproj_l1_unreg})  with \texttt{SGD}, \texttt{SP}~\eqref{eq:SP}, and \texttt{ADAM} on both data sets with three regularizations $\sigma \in  \{0, \;0.001, \;0.008\}$ and with momentum set to $0.3$. 
For the \texttt{SGD} method, we use a learning rate $L_{\max}/\sqrt{t}$ in the $t$-th iteration, where $L_{\max} = \frac{1}{4}\max_{i=1,\ldots, n} \| x_i \|^2$ denotes the smoothness constant of the loss function. We chose $\lambda$ for \texttt{SP2L2}$^+$, \texttt{SP2L1}$^+$, and \texttt{SP2max}$^+$ using a grid search of 
$\lambda \in \{0.1,0.2, \ldots, 0.9\}$, the details are in Section~\ref{sec:add_exp}.
    
The gradient norm and loss evaluated at each epoch are presented in Figures~\ref{fig:colon_Gradnorm_loss_M05} and \ref{fig:mush_Gradnorm_loss_M05} (see Appendix~\ref{sec:add_exp}).   
We see that \texttt{SP2} methods converge much faster than  classical methods (e.g., \texttt{SGD}, \texttt{SP}, \texttt{ADAM}) and need fewer epochs to achieve the tolerance when $\sigma$ is small (left and middle plots). However, they can all fail when the problem is far from interpolation, e.g., when $\sigma = 8\times 10^{-3}$.
The running time used for each algorithm to achieve either the tolerance or maximum number of epochs for both data sets is presented in Figure~\ref{fig:runtime} (see Appendix~\ref{sec:add_exp}).


\begin{figure}[t]
\begin{minipage}{0.32\linewidth}
\centering
\includegraphics[width=1.7in]{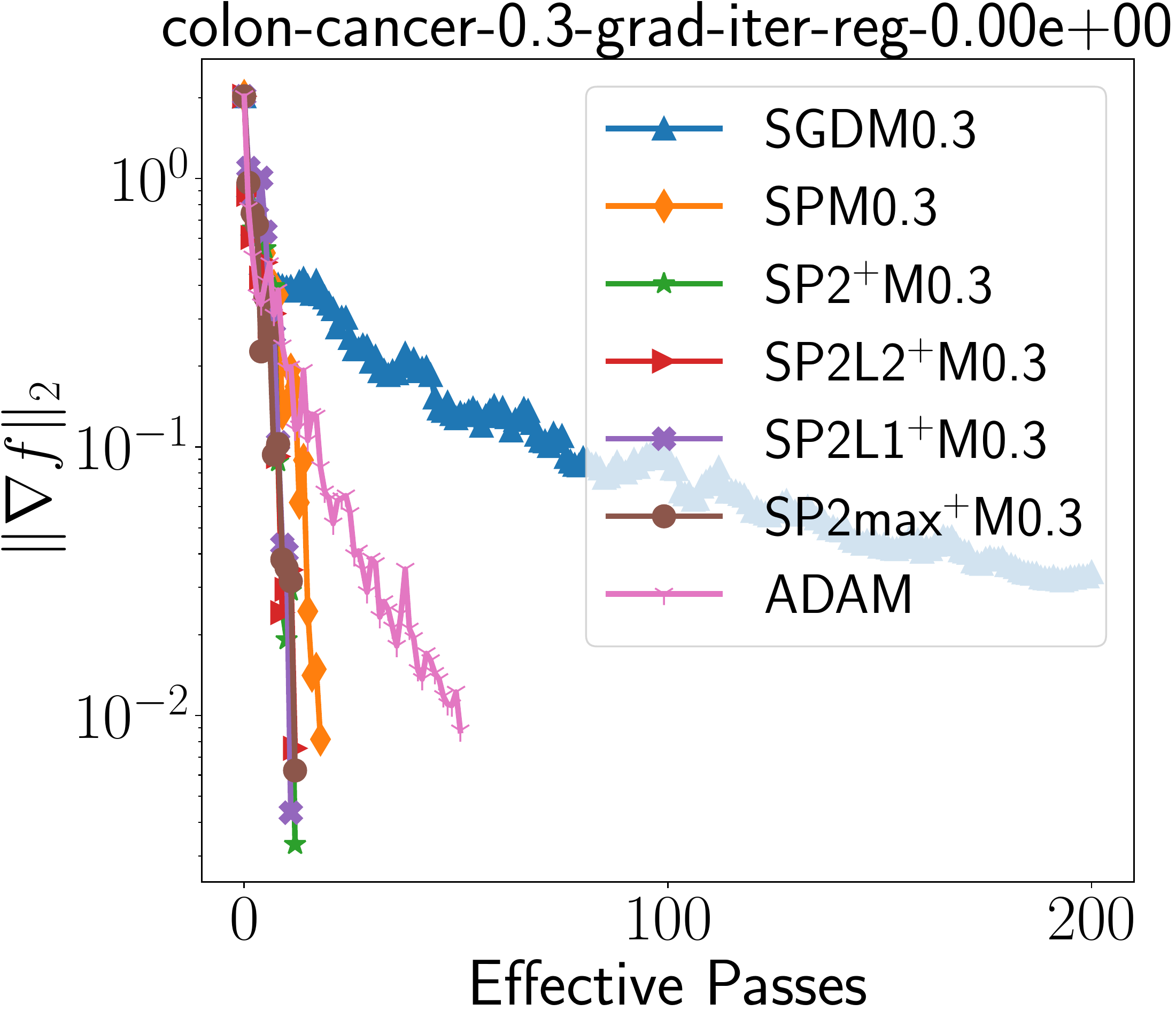}
\end{minipage}
\hfill
\begin{minipage}{0.32\linewidth}
\centering
\includegraphics[width=1.7in]{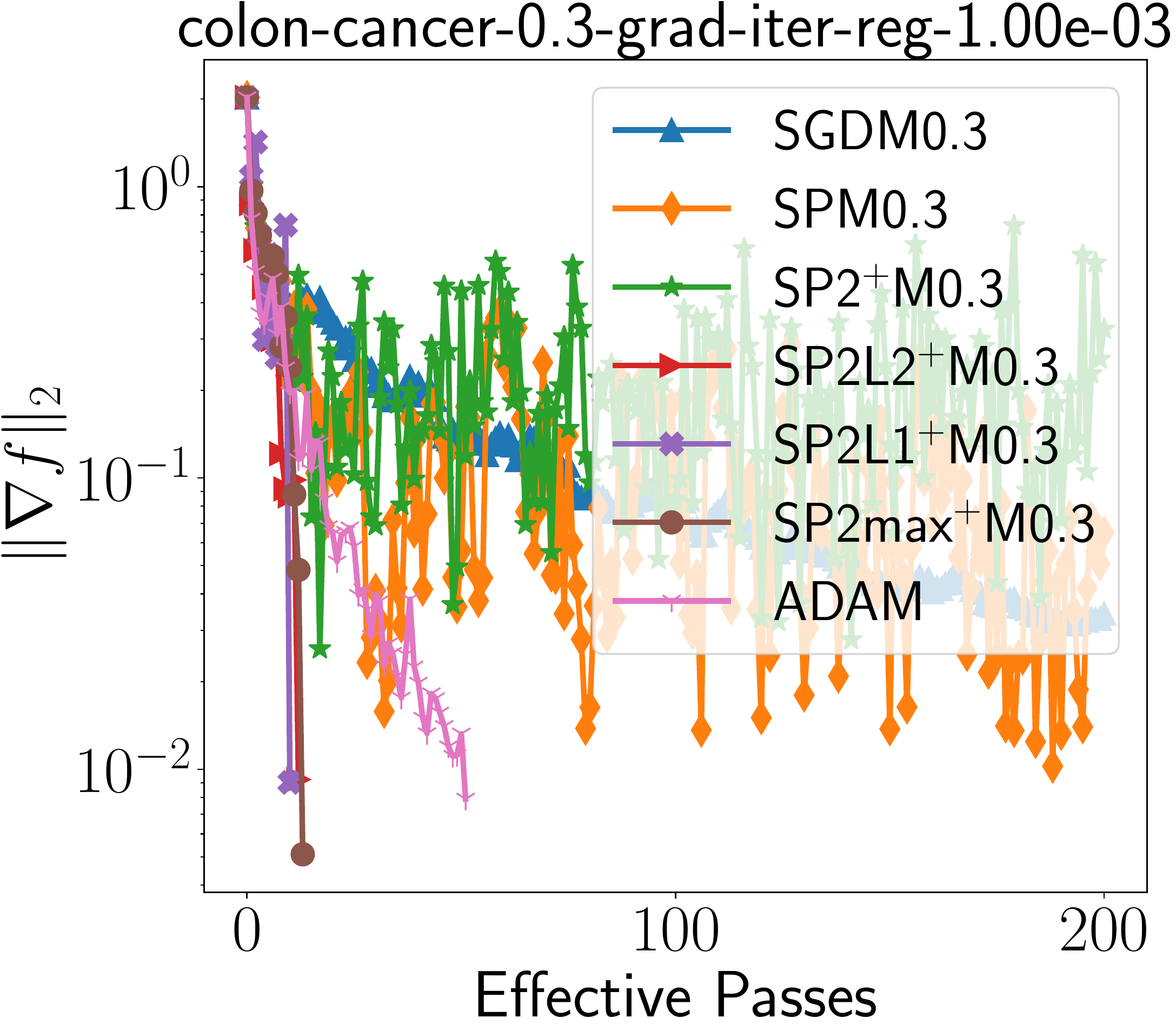}
\end{minipage}
\hfill
\begin{minipage}{0.32\linewidth}
\centering
\includegraphics[width=1.7in]{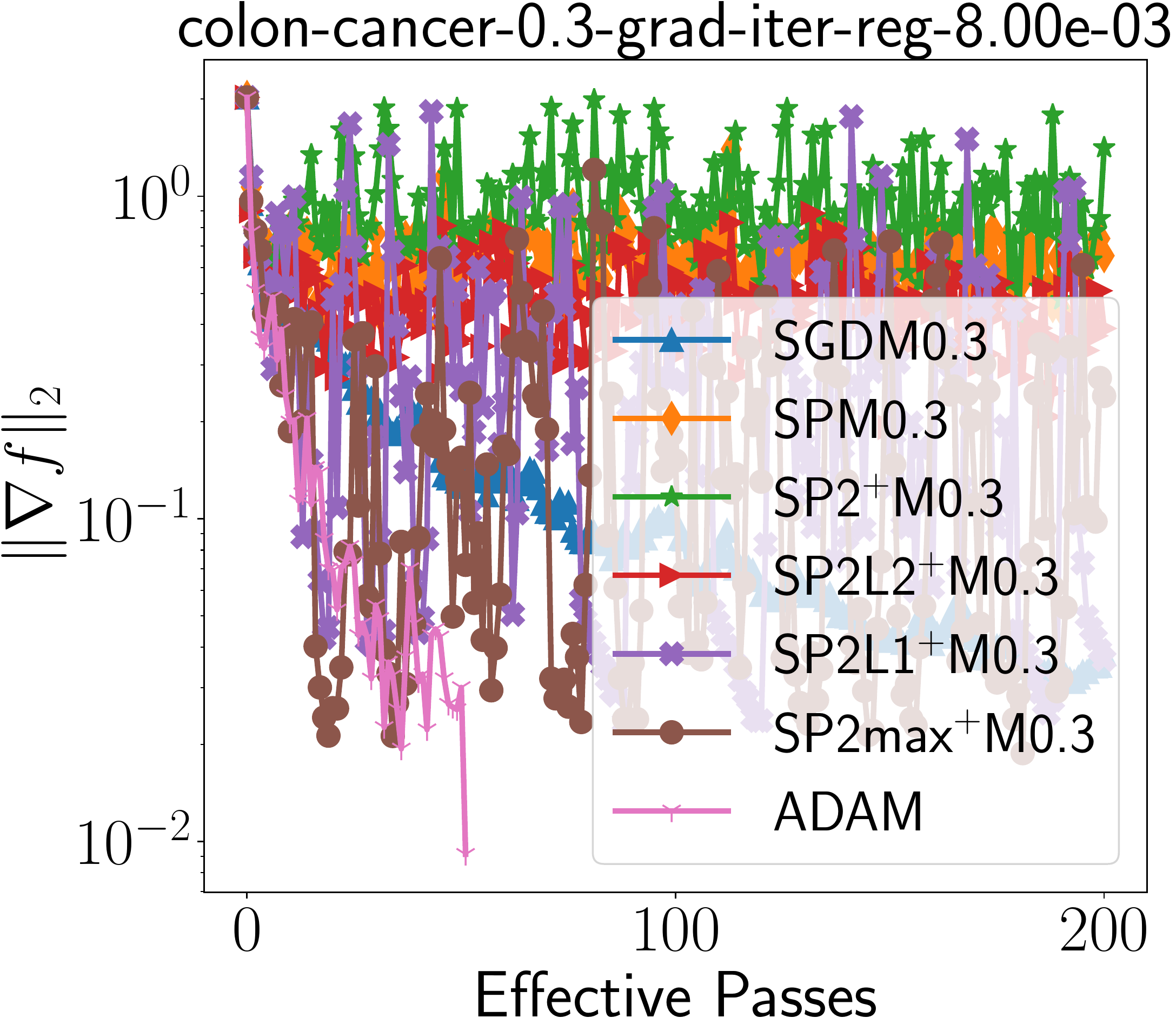}
\end{minipage}
\\
\begin{minipage}{0.32\linewidth}
\centering
\includegraphics[width=1.7in]{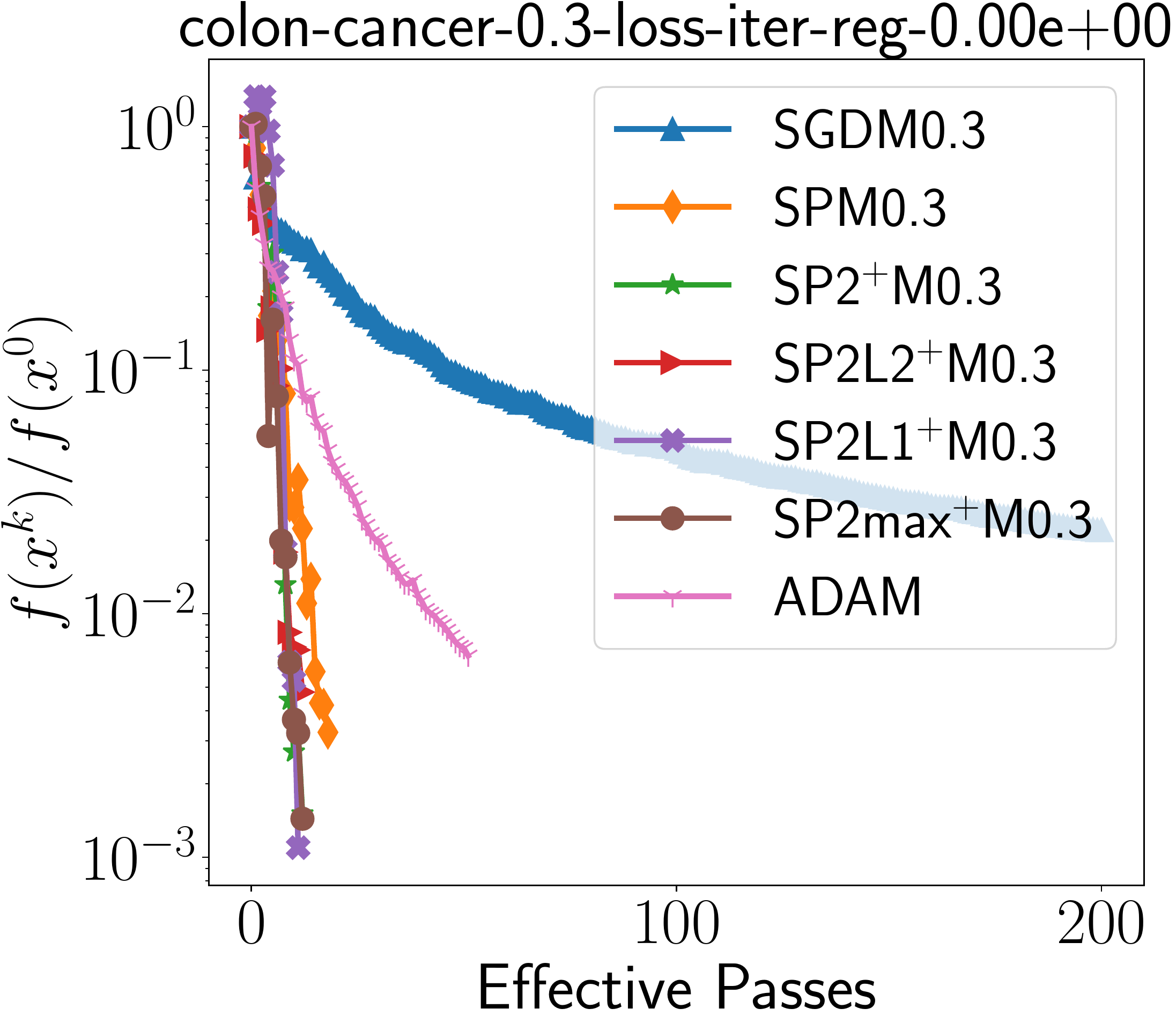}
\end{minipage}
\hfill
\begin{minipage}{0.32\linewidth}
\centering
\includegraphics[width=1.7in]{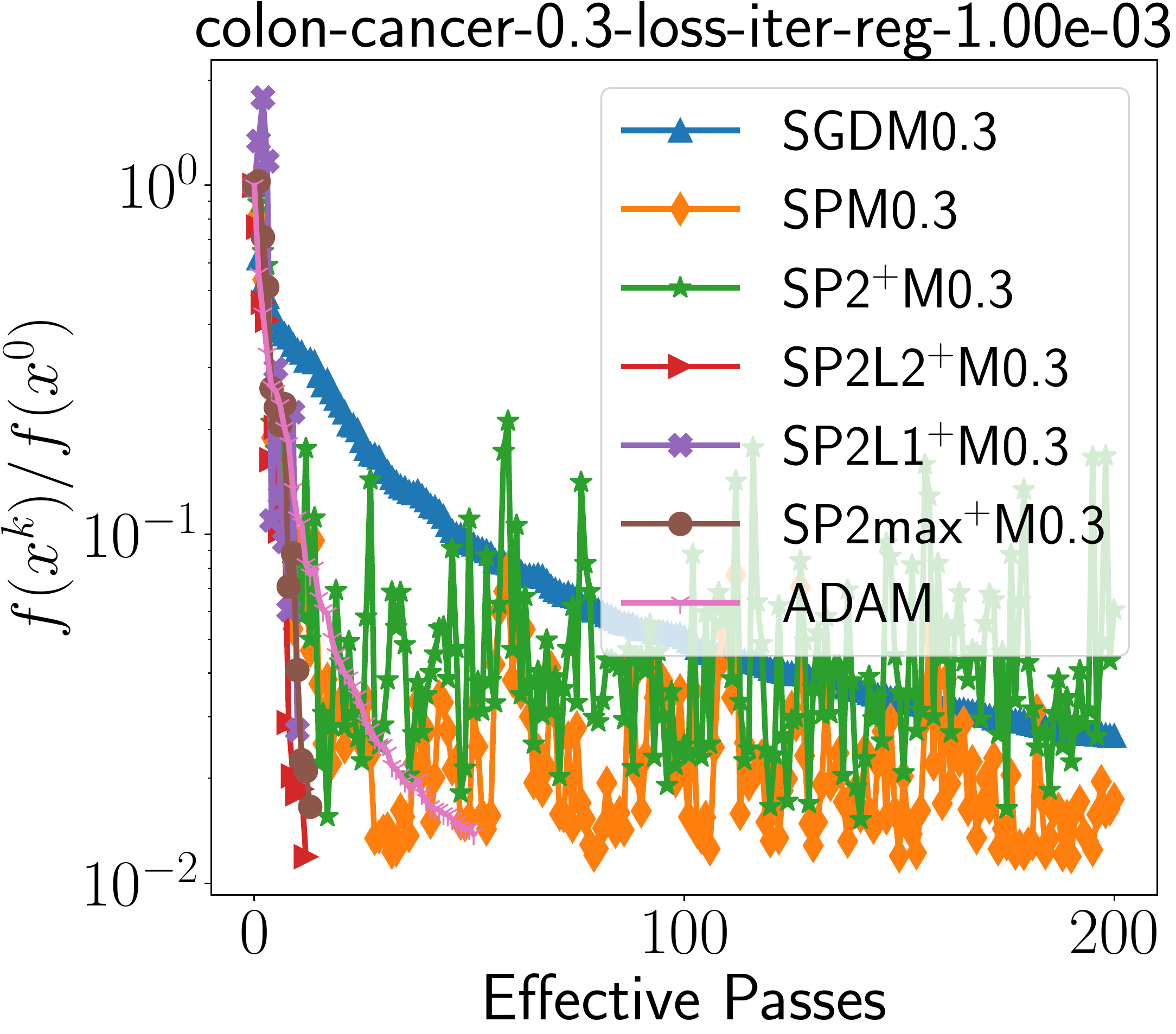}
\end{minipage}
\hfill
\begin{minipage}{0.32\linewidth}
\centering
\includegraphics[width=1.7in]{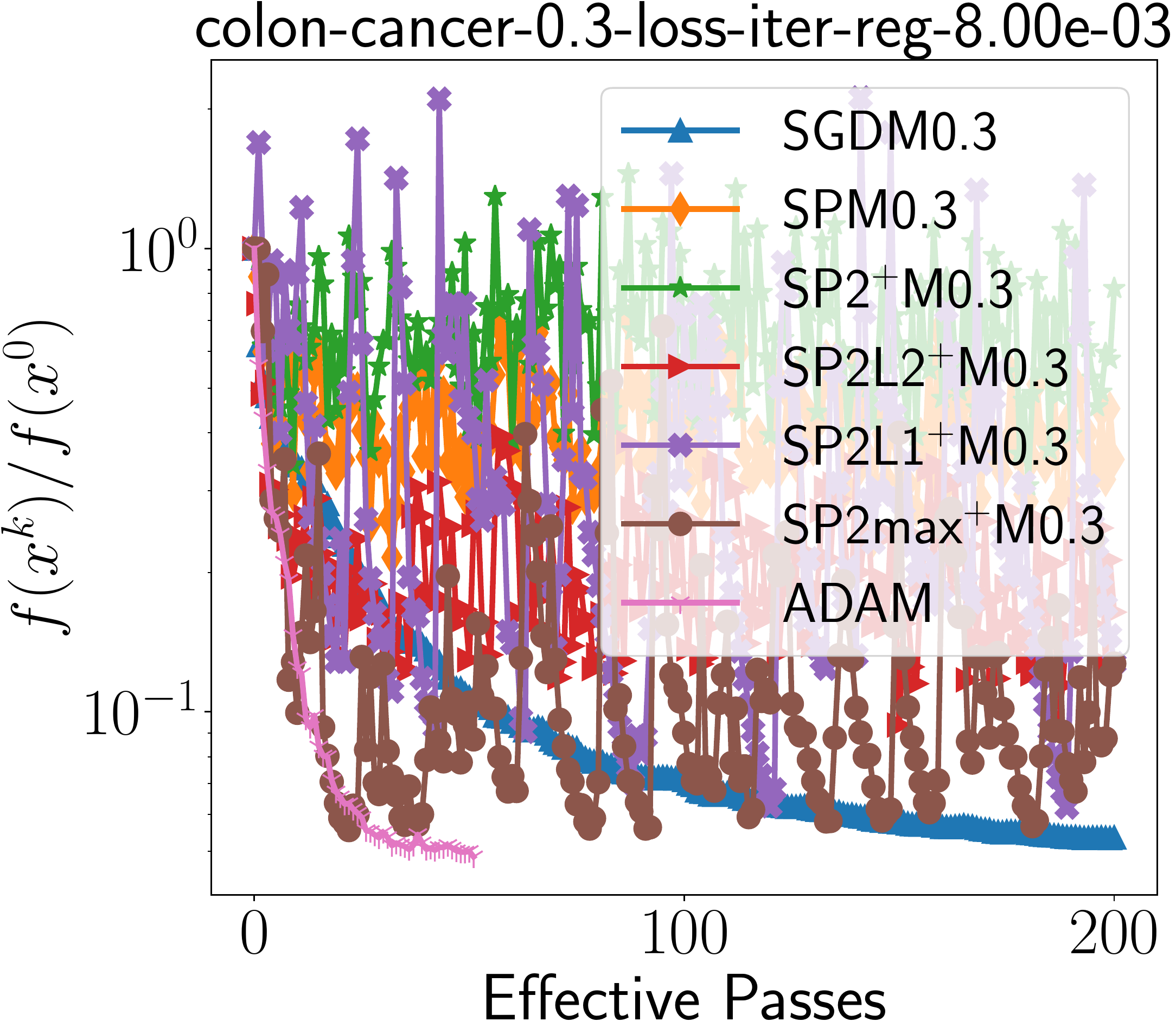}
\end{minipage}
\caption{Colon-cancer: gradient norm and loss at each epoch with momentum being $0.3$. Left: $\sigma =0,$ Middle: $\sigma = 0.001$ and Right $\sigma = 0.008$.
}
\label{fig:colon_Gradnorm_loss_M05}
\end{figure}

\section{Conclusion}
\label{sec:conclusion}
\vspace{-2mm}




Here we have proposed new incremental second order methods that exploit models that interpolate the data, or are close to interpolation.
What sets our methods apart from most previous incremental second order methods is that they do not rely on convexity in their design. Quite the opposite, the \texttt{SP2} method can benefit from the Hessian having at least one negative eigenvalue. Consequently the \texttt{SP2} method excels at minimizing non-convex models that interpolate, as can be seen in Sections~\ref{sec:nonconvextoy} and~\ref{sec:matcom}. We also provided an indicative convergence in Theorem~\ref{theo:qconvNR} that shows  that \texttt{SP2} and its approximation \texttt{SP2}${^+}$ enjoy a significantly faster rate of convergence as compared to \texttt{SGD} for sums-of-quadratics.

We then developed  second order methods that can solve a relaxed version of the interpolation equations that allows for some slack in Section~\ref{sec:slack}, and showed that these methods still perform well on problems that are close to interpolation in Section~\ref{sec:convexexp}.

In future work,  it would be interesting to develop specialized variants of \texttt{SP2} for optimizing (DNNs) Deep Neural Networks. DNNs are particularly well suited since they can interpolate, are non-convex and  since  gradients and Hessian vector products can be computed efficiently using back-propagation.

\bibliographystyle{abbrv}
\bibliography{references}

\clearpage

\newpage
 
\appendix

\section{Appendix}

\subsection{Projecting onto Quadratic}
\vspace{-1mm}
\label{sec:projquad}
This following projection lemma is based on Section B in~\cite{park2017general}. What we do in addition to~\cite{park2017general} is to clarify how to compute the resulting projection, and add further details on the proof.
\begin{lemma}\label{lem:quadeqproj}
Let $w\in \R^d$ and let $\mP \in \R^{d\times d}$ is a symmetric matrix.
Consider the projection
\begin{align}
w' \in &\argmin_{w\in\R^d}\tfrac{1}{2} \norm{w-z}^2 \nonumber \\
&\mbox{ s.t. }r+ \dotprod{q, w-z}  + \tfrac{1}{2} \dotprod{\mP (w-z), w-z}=0. \label{eq:quadproj}
\end{align}
Let 
\[\lambda_1 \leq \lambda_2 \leq \cdots \leq \lambda_d\]
be the eigenvalues of $\mP$ and
let $Q\Lambda Q^\top = \mP$ be the eigenvalue decomposition of $\mP$, where $\Lambda = \mbox{diag}(\lambda_i)$ and $QQ^\top =I.$ Let $\hat{q} = Q^\top q$.
If the quadratic constraint in~\eqref{eq:quadproj} is feasible, then there exists a solution to~\eqref{eq:quadproj}.
Now we give the three candidate solutions.
\begin{enumerate}
\item If $r=0$, then the solution is given by
\[ w= z.\]
    \item 
Now assuming $r \neq 0.$ Let 
\begin{eqnarray}
    \nu &= &\max_{i \;: \; \lambda_i \neq 0} \left\{ -\frac{1}{\lambda_1}, -\frac{1}{\lambda_d } \right\} \label{eq:quadprojnumax}\\
    i^* &\in& \arg\max_{i \;: \; \lambda_i \neq 0} \left\{ -\frac{1}{\lambda_1}, -\frac{1}{\lambda_d } \right\}\\
    N & =& \{i \; : \; \lambda_i \neq \lambda_{i^*} \}.
\end{eqnarray} 
Let
\begin{eqnarray}
 x^* & = &-(\mI +\nu \Lambda)^\dagger \nu \hat{q}. 
\end{eqnarray}
If
\begin{equation}\label{eq:condition}
     2\nu r +\nu\dotprod{\hat{q},x^*}-\norm{x^*}^2+\frac{\nu^2}{4}\sum_{i\in N}\hat{q}_i^2 \geq 0 
\end{equation}
then the solution is given by
\begin{eqnarray}
w' =z+ Q (x^* + n),
\end{eqnarray}
where $n \in \R^d$ and
\begin{align*}
 n_i & = \frac{\nu}{2} \hat{q}_i + \frac{1}{\sqrt{|N|}}\sqrt{2\nu r +\nu\dotprod{\hat{q},x^*}-\norm{x^*}^2+\frac{\nu^2}{4}\sum_{i\in N}\hat{q}_i^2}, \quad \mbox{for }i \in N \\
 n_i & = 0, \quad \mbox{for }i \not\in N.
 \end{align*}
 
\item
 
Alternatively if~\eqref{eq:condition} does not hold, then the solution is given by 
\begin{eqnarray}
w' &=& z -(\mI +\nu \Lambda)^\dagger \nu q
\end{eqnarray}
where $\nu$ is the solution to the nonlinear equation
\begin{eqnarray}\label{eq:nueq}
\frac{\nu}{2}\sum_i \frac{\hat{q}_i^2(2+\nu \lambda_i)}{(1+\nu \lambda_i)^2}&= & r.
\end{eqnarray}
\end{enumerate}
\end{lemma}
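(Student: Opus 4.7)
The plan is to attack \eqref{eq:quadproj} by Lagrangian duality, exploiting the fact that a quadratically constrained quadratic program with a single quadratic constraint enjoys strong duality (S-procedure / trust-region subproblem theory). After the substitution $u = w-z$ the problem becomes $\min_u \tfrac{1}{2}\|u\|^2$ subject to $g(u):=r+\langle q,u\rangle+\tfrac{1}{2}\langle \mP u,u\rangle=0$. The Lagrangian $L(u,\nu)=\tfrac{1}{2}\|u\|^2+\nu\,g(u)$ is bounded below in $u$ only when $\mI+\nu\mP\succeq 0$; on that range the stationarity equation is $(\mI+\nu\mP)u=-\nu q$. Rotating by $Q$ so that $x=Q^\top u$ and $\hat q=Q^\top q$, the stationarity equation decouples into $(1+\nu\lambda_i)x_i=-\nu\hat q_i$, and substituting back into $g(u)=0$ yields after elementary algebra the scalar equation~\eqref{eq:nueq}.

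The plan is then to split along the standard easy/hard case dichotomy. The trivial case $r=0$ is immediate since $u=0$ (i.e.\ $w=z$) is feasible and has zero objective. Otherwise I would argue that the dual variable $\nu$ must lie in the compact interval determined by $\mI+\nu\mP\succeq 0$, whose endpoints are exactly the reciprocals appearing in~\eqref{eq:quadprojnumax}. In the \emph{easy case}, the secular equation~\eqref{eq:nueq} admits a solution in the open interior of that interval, and $(\mI+\nu\mP)$ is invertible, so $u=-(\mI+\nu\mP)^{-1}\nu q$ is the unique stationary point and gives the projection. This is Case~3 of the lemma.

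In the \emph{hard case}, the optimum is attained at the boundary $\nu=\nu^*$ where $\mI+\nu^*\mP$ becomes singular, so the stationarity equation cannot be inverted. Here $(\mI+\nu^*\Lambda)^\dagger$ gives a particular solution $x^*=-(\mI+\nu^*\Lambda)^\dagger\nu^*\hat q$, and one must add a vector $n$ lying in the null space $\{i:\lambda_i=\lambda_{i^*}\}^c$ (indexed by the complement of $N$ as defined in the statement) to enforce $g(u)=0$. Plugging $x^*+n$ into the constraint yields a quadratic in the coefficients of $n$ whose solvability is exactly condition~\eqref{eq:condition}; the formula for $n_i$ in the statement is one such choice that distributes the needed correction uniformly over the null-space coordinates. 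This gives Case~2.

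\textbf{Main obstacle.} The delicate step, as in the classical trust-region subproblem, is the hard case: one must verify that among the candidate KKT multipliers only the boundary value $\nu^*$ is admissible, that the resulting $x^*$ is well defined (using the pseudo-inverse on the range of $\mI+\nu^*\Lambda$, which requires $\hat q_i=0$ whenever $1+\nu^*\lambda_i=0$ or otherwise a rank check), and that the scalar under the square root in~\eqref{eq:condition} is exactly the discriminant for feasibility. I would also double-check the optimality of the chosen $n$ among all null-space corrections: since any such correction preserves the objective value $\tfrac12\|x^*+n\|^2$ only up to the squared null-space component, one picks the minimum-norm $n$ satisfying the constraint, which corresponds to the symmetric distribution used in the statement. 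Finally, strong duality between~\eqref{eq:quadproj} and its dual will be invoked from the S-procedure (as in \cite{park2017general}), ensuring no duality gap and hence that the KKT candidates above exhaust the solutions.
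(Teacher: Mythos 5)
Your plan follows essentially the same route as the paper's proof: rotate into the eigenbasis of $\mP$, form the Lagrangian, use the condition $\mI+\nu\Lambda\succeq 0$ to split into the nonsingular (easy) case yielding the secular equation~\eqref{eq:nueq} and the singular (hard) case yielding the pseudo-inverse solution plus a null-space correction governed by~\eqref{eq:condition}. The only cosmetic difference is that you justify $\mI+\nu\Lambda\succeq 0$ via strong duality and the S-procedure while the paper invokes the second-order necessary optimality conditions, and your flagged concerns (range condition for $x^*$, minimality of the chosen $n$) are exactly the points the paper also treats lightly.
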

\begin{proof}
First note that there exists a solution to~\eqref{eq:quadproj} since the constraint is a closed feasible set.

Let $Q\Lambda Q^\top = \mP$ be the SVD of $\mP$, where $QQ^\top =I.$ By changing variables $x= Q^\top (w -z)$ we have that~\eqref{eq:quadproj} is equivalent to
\begin{align}
&\argmin_{\hat{x}\in\R^d}\tfrac{1}{2} \norm{x}^2 \nonumber \\
&\mbox{ s.t. }r+ \dotprod{\hat{q}, x}  + \tfrac{1}{2} \dotprod{\Lambda x, x}=0, \label{eq:quadproj2}
\end{align}
where $\hat{x} = Q^\top (w-z)$ and $\hat{q} = Q^\top q.$
The Lagrangian of~\eqref{eq:quadproj2} is given by
\begin{align}
L(x,\nu) & = \;  \tfrac{1}{2} \norm{x}^2 + \nu\big(r+ \dotprod{\hat{q}, x}  + \tfrac{1}{2} \dotprod{\Lambda x, x}\big)\nonumber \\
&= \;\tfrac{1}{2}x^\top (\mI +\nu \Lambda)x + \nu\big(r+ \dotprod{\hat{q}, x}\big).
\end{align}
Thus the KKT conditions are given by
\begin{align}
    \nabla_{x} L(x,\nu) & = (\mI +\nu \Lambda)x  + \nu \hat{q} =0 \label{eq:KKTquadprojw}\\
    \nabla_{\nu}  L(x,\nu)& = r+ \dotprod{\hat{q}, x}  + \tfrac{1}{2} \dotprod{\Lambda x, x}=0.\label{eq:KKTquadprojnu}
\end{align}
Since we are guaranteed that the projection has a solution, we have that as a necessary condition that the solution satisfies $$\nabla_{x}^2 L(x,\nu) = (\mI +\nu \Lambda) \succeq 0,$$ 
see Theorem 12.5 in~\cite{wright1999numerical}. Consequently either $(\mI +\nu \Lambda) \succ 0$ or $(\mI +\nu \Lambda) $ has a zero eigenvalue.

Consider the case where $ (\mI +\nu \Lambda) \succ 0.$ From~\eqref{eq:KKTquadprojw} we have that
\begin{eqnarray}\label{eq:tmeaasomer}
x  = -  \nu  (\mI +\nu \Lambda)^{-1}\hat{q}.
\end{eqnarray}
Now note that if $\nu =0$ then $x =0$ and by the constraint we must have $r=0$. Otherwise, if $r \neq0,$ then $\nu \neq 0.$ Assume now $\nu \neq 0$ and substituting the above into~\eqref{eq:KKTquadprojnu} and letting $\Lambda = \mbox{diag}(\lambda_i)$ gives
\begin{align*}
    \nabla_{\nu}  L(x,\nu)&=r+ \dotprod{\hat{q}, x}  + \tfrac{1}{2\nu} \dotprod{\nu \Lambda x, x} \\
   &= r+\dotprod{\hat{q}, x}  + \tfrac{1}{2\nu} \dotprod{(\mI+\nu \Lambda)x, x}  - \frac{1}{2\nu}\norm{x}^2 \\
    &=   r+\frac{1}{2}\dotprod{\hat{q}, x}   - \frac{1}{2\nu}\norm{x}^2 & \mbox{Using~\eqref{eq:KKTquadprojw}}\\
        &=   r-\frac{\nu}{2}\dotprod{\hat{q},   (\mI +\nu \Lambda)^{-1}\hat{q}}   - \frac{\nu}{2}\norm{  (\mI +\nu \Lambda)^{-1}\hat{q}}^2 & \mbox{Using~\eqref{eq:tmeaasomer} }\\
        &= r - \frac{\nu}{2}\sum_i\left( \frac{\hat{q}_i^2}{1+\nu \lambda_i}+\frac{\hat{q}_i^2}{(1+\nu \lambda_i)^2}\right).
\end{align*}
Thus
\begin{eqnarray}
\frac{\nu}{2}\sum_i \frac{\hat{q}_i^2(2+\nu \lambda_i)}{(1+\nu \lambda_i)^2}&= & r.
\end{eqnarray}
Upon finding the solution $\nu$ to the above, we have that our final solution is given by $w' = z+Qx,$ that is
\begin{eqnarray}
w' &=& z -Q(\mI +\nu \Lambda)^\dagger \nu \hat{q} \nonumber \\
& =& z -(\mI +\nu \Lambda)^\dagger \nu Q\hat{q} \nonumber \\
&= & z -\nu(\mI +\nu \Lambda)^\dagger  q
\end{eqnarray}

Alternatively, suppose that $ (\mI +\nu \Lambda) \succeq 0$ is  non-singular. The positive definiteness implies that
\begin{equation}\label{eq:tempzne9ze}
     \nu  \; \geq \; -\frac{1}{\lambda_i}, \quad \mbox{for } i=1,\ldots, d.
\end{equation}
For $(\mI +\nu \Lambda)$ to be non-singular, at least one of the above inequalities will hold to equality. To ease notation, 
let us arrange the eigenvalues in increasing order so that 
\[\lambda_1 \leq \lambda_2 \leq \cdots \leq \lambda_d.\]
For one of the~\eqref{eq:tempzne9ze} inequalities  to hold to equality we need that 
$$ \nu \;  = \; \max_{i \;: \; \lambda_i \neq 0} -\frac{1}{\lambda_i}  = \max_{i \;: \; \lambda_i \neq 0} \left\{ -\frac{1}{\lambda_1}, -\frac{1}{\lambda_d } \right\}.$$ 
Since $(\mI +\nu \Lambda)$ is now singular
with this $\nu$, we have that the solution to~\eqref{eq:KKTquadprojw} is given by
\begin{equation}
 x \;=\; -(\mI +\nu \Lambda)^\dagger \nu \hat{q} + n \; \eqdef \; x^* +n, \quad \mbox{where } \dotprod{x^*,n} =0, \label{eq:tempnxo848os}
\end{equation} 
where $\dagger$ denotes the pseudo-inverse and where $n$ is in the kernel of $(\mI +\nu \Lambda)$, in other words  $(\mI +\nu \Lambda)n =0.$  It remains to determine $n$, which we can do with~\eqref{eq:KKTquadprojnu}. Indeed, substituting~\eqref{eq:tempnxo848os} into~\eqref{eq:KKTquadprojnu} gives
\begin{align*}
    \nabla_{\nu}  L(x,\nu)
    &=   r+\frac{1}{2}\dotprod{\hat{q}, x}   - \frac{1}{2\nu}\norm{x}^2 & \mbox{Using~\eqref{eq:KKTquadprojw}}\\
        &=  r+\frac{1}{2}\dotprod{\hat{q}, x^* +n}   - \frac{1}{2\nu}\norm{n}^2  - \frac{1}{2\nu}\norm{x^*}^2. & \mbox{Using~\eqref{eq:tempnxo848os}}.
\end{align*}
Setting to zero and completing the squares in $n$ 
we have that
\begin{eqnarray}\label{eq:zn;oeinznz}
\frac{1}{2\nu}\norm{n-\frac{\nu}{2}\hat{q}}^2 = r +\frac{1}{2}\dotprod{\hat{q},x^*}-\frac{1}{2\nu}\norm{x^*}^2+\frac{\nu}{8}\norm{\hat{q}}^2.
\end{eqnarray}
To characterize the solutions in $n$ of the above, first note that $n$ will only have a few non-zero elements. To see this,  let 
 $i^* \in \arg\max_{i \;: \; \lambda_i \neq 0} \left\{ -\frac{1}{\lambda_1}, -\frac{1}{\lambda_d } \right\}$, and note that $(\mI +\nu \Lambda)$ has as many zeros on the diagonal as the multiplicity of the eigenvalue $\lambda_{i^*}.$ That is, it has zeros elements on the  indices in 
 $$I \;= \; \{i \; :\;  \lambda_i = \lambda_{i^*} \} .$$ Thus the non-zero elements of $n$ are in the set $$N \;=\;  \{i \; :\;  \lambda_i \neq \lambda_{i^*} \}.$$ Because of this observation we further re-write~\eqref{eq:zn;oeinznz} as
 
\begin{eqnarray}
\sum_{i\in N}\left(n_i - \frac{\nu}{2} \hat{q}_i\right)^2
&=& 2\nu r +\nu\dotprod{\hat{q},x^*}-\norm{x^*}^2+\frac{\nu^2}{4}\norm{\hat{q}}^2- \sum_{i\in I}\frac{\nu^2}{4} \hat{q}^2_i \nonumber \\
&= & 2\nu r +\nu\dotprod{\hat{q},x^*}-\norm{x^*}^2+\frac{\nu^2}{4}\sum_{i\in N}\hat{q}_i^2.
\end{eqnarray}
Consequently, if the above is positive, then there exists solutions to the above of which
\begin{eqnarray}
n_i = \frac{\nu}{2} \hat{q}_i + \frac{1}{\sqrt{|N|}}\sqrt{2\nu r +\nu\dotprod{\hat{q},x^*}-\norm{x^*}^2+\frac{\nu^2}{4}\sum_{i\in N}\hat{q}_i^2}, \quad \mbox{for }i \in N.
\end{eqnarray}
is one. Consequently, the final solution is given by $w = z+ Q(x^* +n)$ where $x^*$ is given in by~\eqref{eq:tempnxo848os}.
\end{proof}

\begin{corollary}
If $r >0$ and $\mP$ has at least one negative eigenvalue, there always exists a solution to the projection~\eqref{eq:quadproj}.
\end{corollary}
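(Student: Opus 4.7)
The plan is to reduce the corollary to the feasibility claim already established inside Lemma~\ref{lem:quadeqproj}. Recall that the first sentence of the proof of Lemma~\ref{lem:quadeqproj} observes that whenever the quadratic constraint in~\eqref{eq:quadproj} is feasible, a minimizer exists (the constraint set is closed and the objective $\tfrac12\|w-z\|^2$ is coercive). So my task reduces to exhibiting a single point $w$ satisfying $r + \langle q, w-z\rangle + \tfrac12\langle \mP(w-z), w-z\rangle = 0$.

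To produce such a $w$, I would restrict attention to a one-dimensional slice along a negative eigendirection of $\mP$. Let $v \in \R^d$ be a unit eigenvector of $\mP$ with eigenvalue $\lambda < 0$, which exists by hypothesis, and set $w = z + tv$ for a scalar $t$ to be chosen. The constraint then collapses to the scalar quadratic equation
\begin{equation*}
\tfrac{\lambda}{2} t^2 + \langle q, v\rangle\, t + r \;=\; 0.
\end{equation*}
Its discriminant equals $\langle q, v\rangle^2 - 2\lambda r$, and since $\lambda < 0$ and $r > 0$ we have $-2\lambda r > 0$, so the discriminant is strictly positive and two real roots $t_\pm$ exist. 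Either root gives a feasible $w = z + t_\pm v$, which completes the feasibility step and hence the proof via Lemma~\ref{lem:quadeqproj}.

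There is no real obstacle here: the only subtlety is verifying that the hypothesis "at least one negative eigenvalue" is exactly what guarantees the quadratic in $t$ opens downward, so that the constant term $r > 0$ forces sign-change and therefore a real root. One could alternatively argue by the intermediate value theorem, noting that the left-hand side of the constraint equals $r > 0$ at $t = 0$ and tends to $-\infty$ as $|t| \to \infty$ along the direction $v$, yielding the same conclusion without invoking the quadratic formula.
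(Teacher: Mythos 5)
Your proposal is correct and follows the same overall route as the paper: reduce the corollary to feasibility of the quadratic constraint and then invoke Lemma~\ref{lem:quadeqproj} for existence of the projection. The paper's own proof stops at exactly that reduction and never actually exhibits a feasible point, whereas you supply the missing step by restricting to a negative eigendirection $v$, obtaining the scalar equation $\tfrac{\lambda}{2}t^2 + \langle q,v\rangle t + r = 0$ with discriminant $\langle q,v\rangle^2 - 2\lambda r > 0$; this computation is correct and makes the argument complete where the paper's version is only gestural.
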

\begin{proof}
We only need to prove that there exists a solution to the quadratic equation in~\eqref{eq:quadproj}, after which Lemma~\ref{lem:quadeqproj} guarantees the existance of a solution.
\end{proof}

\section{Matrix Completion}
\label{sec:matcom}


The projection~\eqref{eq:SP2matcom} can be solved as we shown in the following theorem.
\begin{theorem}\label{THM:matcom}
The solution to ~\eqref{eq:SP2matcom} is given by one of the following cases.
\begin{enumerate}
    \item If $(u_j^k)^\top v_j^k = a_{i,j}$ then $u= u^k_i$ and $v = v^k_j.$
    \item Alternatively if $u_j^k = v_j^k$ and 
    \[ (u_j^k)^\top v_j^k \geq 4 a_{i,j}\]
    then
    \begin{align}\label{eq:vsol2}
v &= \frac{1}{2}v_j^k  +\frac{1}{2}\frac{v_j^k}{\norm{v_j^k}} \sqrt{\norm{v_j^k}^2-4a_{i,j} }\\
u &= -\frac{1}{2}u_j^k  +\frac{1}{2}\frac{u_j^k}{\norm{u_j^k}} \sqrt{\norm{u_j^k}^2-4a_{i,j} }
\end{align} 
   \item Finally, if none of the above holds then 
\begin{align}
     u &= \frac{u_i^k-\gamma v_j^k}{1-\gamma^2} \\
     v &= \frac{v_j^k-\gamma u_i^k}{1-\gamma^2}, 
\end{align}
where $\gamma \in (-1, 1)$ and is the solution to the depressed quartic equation
\begin{align}
    (1+\gamma^2)\dotprod{u_i^k,v_j^k}- \gamma( \norm{u_i^k}^2+\norm{v_j^k}^2)  &= (1-\gamma^2)^2 a_{i,j}.
\end{align}
\end{enumerate}
\end{theorem}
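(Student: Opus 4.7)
The plan is to apply Lagrange multipliers directly to the constrained projection. Writing $\bar u \eqdef u_i^k$, $\bar v \eqdef v_j^k$, form the Lagrangian
\begin{align*}
L(u,v,\mu) = \tfrac{1}{2}\|u-\bar u\|^2 + \tfrac{1}{2}\|v-\bar v\|^2 + \mu(u^\top v - a_{i,j}),
\end{align*}
whose stationarity equations form the coupled $2\times 2$ block system $u + \mu v = \bar u$, $v + \mu u = \bar v$, paired with the feasibility constraint $u^\top v = a_{i,j}$. The key algebraic move is to add and subtract the two stationarity equations to decouple them into $(1+\mu)(u+v) = \bar u + \bar v$ and $(1-\mu)(u-v) = \bar u - \bar v$; this exposes the three singular values $\mu \in \{-1,0,1\}$ and leaves the generic regime $\mu \notin \{-1,0,1\}$ where the linear part inverts uniquely.

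In the generic regime, solving the decoupled system yields $u = (\bar u - \mu \bar v)/(1-\mu^2)$ and $v = (\bar v - \mu \bar u)/(1-\mu^2)$. Substituting into $u^\top v = a_{i,j}$ and clearing the $(1-\mu^2)^2$ denominator, the expansion of $(\bar u - \mu \bar v)^\top (\bar v - \mu \bar u) = (1+\mu^2)\bar u^\top \bar v - \mu(\|\bar u\|^2 + \|\bar v\|^2)$ produces exactly the depressed quartic of Case (3) upon renaming $\mu = \gamma$. Case (1) is the specialization to $\mu = 0$: the Lagrange equations force $(u,v) = (\bar u, \bar v)$, the constraint becomes $\bar u^\top \bar v = a_{i,j}$, and the zero objective value makes this trivially optimal whenever it is feasible.

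Case (2) treats the degenerate values $\mu = \pm 1$, where one of $(1\pm\mu)(u\pm v) = \bar u \pm \bar v$ collapses and imposes a compatibility relation between $\bar u$ and $\bar v$ (the colinearity condition displayed). Under this compatibility the KKT system is underdetermined, so the one remaining equation is the feasibility constraint $u^\top v = a_{i,j}$; eliminating $v$ produces a scalar quadratic in the remaining free direction whose discriminant is nonnegative exactly under the stated norm inequality. Choosing the root aligned with the direction of $\bar u$ (respectively $\bar v$) then gives the displayed closed-form expressions, and a direct substitution back into $u^\top v$ and into the objective verifies critically and optimality.

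The main obstacle will be verifying \emph{global} optimality, since $\{u^\top v = a_{i,j}\}$ is a non-convex quadric and the KKT conditions yield several candidate critical points. Concretely, one must argue that the quartic of Case (3) has a root $\gamma \in (-1,1)$ and that, among the competing roots (and the degenerate-case candidates), the selected one achieves the minimum. Existence of a root in $(-1,1)$ should follow from an intermediate-value argument using the sign of the quartic polynomial at $\gamma = 0$ (where it reduces to $\bar u^\top \bar v - a_{i,j}$) and the behaviour as $\gamma \to \pm 1$, where the $(1-\gamma^2)^2 a_{i,j}$ term controls the limit; a direct comparison of the objective value $\tfrac{1}{2}\|u-\bar u\|^2 + \tfrac{1}{2}\|v-\bar v\|^2$ across candidate roots then singles out the minimizer.
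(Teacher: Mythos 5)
Your proposal follows essentially the same route as the paper's proof: the same Lagrangian and KKT system, the same case split on the multiplier (generic $\gamma$ giving the closed forms and the depressed quartic, $\gamma=\pm1$ giving the degenerate colinear case, $\gamma=0$ giving feasibility of $(\bar u,\bar v)$ itself), and the same intermediate-value argument for a root in $(-1,1)$. One small correction to your sketch of that last step: the paper evaluates the quartic at the endpoints directly, where the $(1-\gamma^2)^2 a_{i,j}$ term \emph{vanishes} rather than controlling the limit, yielding $\phi(-1)=\norm{u_i^k+v_j^k}^2\geq 0$ and $\phi(1)=-\norm{u_i^k-v_j^k}^2\leq 0$, which is what makes the sign change immediate.
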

\begin{proof}
The Lagrangian of~\eqref{eq:SP2matcom} is given by
$$L(u,v,\gamma) \; = \; \tfrac{1}{2}\norm{u-u_i^k}^2 + \tfrac{1}{2} \norm{v-v_j^k}^2 + \gamma(u^\top v  - a_{i,j}), $$
where $\gamma \in \R$ is the unknown Lagrangian Multiplier.
Thus the KKT equations are
\begin{align}
u- u_i^k + \gamma v &=0 \\
v- v_j^k + \gamma u &=0  \\
    u^\top v   &= a_{i,j} \label{eq:KKTMatComp}
\end{align}
Subtracting $\gamma$ times the the second equation from the first equation, and  analogously, subtracting the first equation from the second gives
\begin{align}\label{eq:temposoiunsin}
    (1-\gamma^2)u-u_i^k +\gamma v_j^k &=0\\
    \label{eq:temposoiunsin2}
(1-\gamma^2)v-v_j^k +\gamma u_i^k &=0.\end{align}
If $\gamma =1$, then necessarily $u^k_i = v_j^k$ and furthermore
from the  first equation in~\eqref{eq:KKTMatComp} we have that
\begin{equation}
    u = u_i^k-v = v_j^k -v.
\end{equation}
Substituting $u$ out in the original projection problem~\eqref{eq:SP2matcom} we have that 
\begin{align}\label{eq:SP2matcomvonly}
    \; \min_{v} \norm{v}^2 -v^\top v_j^k \;\mbox{ subject to } \;v^\top v_j^k  - \norm{v}^2= a_{i,j}.
\end{align}
Consequently, for every $v$ that satisfies the constraint we have that the objective value is invariant and equal to $-a_{i,j}.$
Consequently there are infinite solutions. To find one such solution, we complete the squares of the constraint and find
\begin{eqnarray}
\norm{v-\tfrac{1}{2} v_j^k}^2& =& \frac{1}{4}\norm{v_j^k}^2-a_{i,j}.
\end{eqnarray}
The above only has solutions if $\frac{1}{4}\norm{v_j^k}^2-a_{i,j} \geq 0.$ One solution to the above is given by~\eqref{eq:vsol2}.

Alternatively, if $\gamma \neq 1$ then by isolating $u$ and $v$ in~\eqref{eq:temposoiunsin} and~\eqref{eq:temposoiunsin2}, respectively, gives
\begin{align}
     u &= \frac{u_i^k-\gamma v_j^k}{1-\gamma^2} \\
     v &= \frac{v_j^k-\gamma u_i^k}{1-\gamma^2} 
\end{align}
To figure out $\gamma$, we use the third constraint in~\eqref{eq:KKTMatComp} and the above two equations, which gives
\begin{align*}
    u^\top v &= \frac{(u_i^k-\gamma v_j^k)^\top}{1-\gamma^2}\frac{v_j^k-\gamma u_i^k}{1-\gamma^2}  \\
    &= \frac{(1+\gamma^2)\dotprod{u_i^k,v_j^k}- \gamma( \norm{u_i^k}^2+\norm{v_j^k}^2) }{(1-\gamma^2)^2}  \;=\; a_{i,j}.
\end{align*}
Let
\[\phi(\gamma) = (1+\gamma^2)\dotprod{u_i^k,v_j^k}- \gamma( \norm{u_i^k}^2+\norm{v_j^k}^2)  - (1-\gamma^2)^2 a_{i,j}.\]
Can we now find an interval which will contain the solution in $\gamma$? Note that 
\begin{align*}
     \phi(-1) &= 2\dotprod{u_i^k,v_j^k} +\norm{u_i^k}^2+\norm{v_j^k}^2 =\norm{u_i^k+v_j^k}^2 \geq 0 \\
     \phi(1) &=  2\dotprod{u_i^k,v_j^k} - \norm{u_i^k}^2-\norm{v_j^k}^2 = -\norm{u_i^k-v_j^k}^2\leq 0.
\end{align*}
Thus it suffices to search for $\gamma \in (-1, \; 1),$ which can be done efficiently with bisection.


\end{proof}

\section{Proofs of Important Lemmas}

\subsection{Proof of Lemma \ref{lma:GLM}}
\label{proof_lma:GLM}
\vspace{-1mm}

Let us first describe the set of solutions for given constraint. We need to have
\begin{align}
    f_i + a_ix_i^T \Delta + \frac12 h_i \Delta^T x_i x_i^T \Delta = 0,  \label{eq:linearModelConstraint}
\end{align}
where $\Delta = w - w^t$ is unknown.
If we denote by $\tau_i = x_i^T \Delta$ then
\eqref{eq:linearModelConstraint}
will reduce to
\begin{align}
\label{eq:linModel1d}
    f_i + a_i\tau_i  + \frac12 h_i \tau_i^2  = 0. 
\end{align}
This quadratic equation ~\eqref{eq:linModel1d} 
 has solution if
\begin{equation}\label{eq:conditiontemp}
a_i^2 - 2 h_i f_i \geq 0.
\end{equation}
If the condition above holds then we have that the solution for $\tau$ is in this set
\begin{align}
    T^* :=  
    \left\{ \frac{-a_i+ \sqrt{a_i^2 - 2 h_i f_i }  }
         {h_i},
         \frac{-a_i- \sqrt{a_i^2 - 2 h_i f_i }  }
         {h_i}\right\}.
\end{align}
Recall that the problem \eqref{eq:SP2proj} 
now reduces into
\begin{equation}\label{linearModelSolution}
    \min_\Delta \left\{ \|\Delta\|^2,\ \mbox{such that}\ x_i^T\Delta \in T^*\right\}.
\end{equation}
Note that because we want to minimize $\|\Delta\|^2$,
we want to choose the constraint with smallest possible absolute value, hence the problem
\eqref{linearModelSolution}
is equivalent to
\begin{equation}\label{linearModelSolution1}
    \min_\Delta \left\{ \|\Delta\|^2,\ \mbox{such that}\ x_i^T\Delta = \tau_i^* \right\},
\end{equation}
where 
$$
\tau_i^* = \begin{cases}
  \frac{-a_i+ \sqrt{a_i^2 - 2 h_i f_i }  }
         {h_i},& \mbox{if}\ a_i> 0\\
    \frac{-a_i- \sqrt{a_i^2 - 2 h_i f_i }  }
         {h_i},& \mbox{otherwise}.     
  \end{cases}
$$
In other words,
 \[\tau_i^* =  -\frac{a_i}{h_i}+\mbox{sign}(a_i)\frac{\sqrt{a_i^2 - 2 h_i f_i }  } {h_i} = \frac{a_i}{h_i} \left( \frac{\sqrt{a_i^2 - 2 h_i f_i }}{|a_i|} -1 \right)\]
The final solution is hence
$$
\Delta^* = \frac{\tau_i^*}{\|x_i\|^2} x_i
$$
and therefore
$$
w^* = w^t +  \frac{\tau_i^*}{\|x_i\|^2} x_i
$$

In case when \eqref{eq:conditiontemp}
is not satisfied, and because we assumed that the loss function is non-negative, we necessary have
$h_i > 0$.
Then natural choice of $\tau_i$ is the one that would minimize the
$$f_i+a_i \tau_i + \frac12 h_i \tau_i^2.$$
From first order optimality conditions we obtain that
$$   \tau_i^* = -\frac{a_i }{h_i}$$
which leads to \eqref{eq:newtonStep}.

\subsection{Proof of Lemma \ref{lem:phi2}}
\label{proof_lem:phi2}
\vspace{-1mm}

\begin{proof}
If $\phi(t)=0$ then the condition holds trivially. For $t$ such that $\phi(t) \neq 0$, $\sqrt{\phi(t)}$ is differentiable, and we have
\begin{align*}
\frac{d^2}{dt^2} \sqrt{\phi(t)} &= -\tfrac14\phi(t)^{-3/2}\phi'(t)^2 + \tfrac12 \phi(t)^{-1/2}\phi''(t)= \tfrac14 \phi(t)^{-3/2}(-\phi'(t)^2 + 2\phi(t)\phi''(t)),
\end{align*}
which is negative precisely when $\phi'(t)^2 \geq 2\phi(t)\phi''(t).$
\end{proof}

\subsection{Proof of Lemma~\ref{lem:SPS2plus} }
\vspace{-1mm}

Note that
\begin{align*}
  q(w^{t+1/2})  &\overset{\eqref{eq:polyakup}+\eqref{eq:quadeqzero2}} {=} f_i(w^t) - \dotprod{\nabla f_i(w^t), \frac{f_i(w^t)}{\norm{\nabla f_i(w^t)}^2}\nabla f_i(w^t)}  \\
&\quad\quad  + \frac{1}{2} \dotprod{\nabla^2 f_i(w^t) \frac{f_i(w^t)}{\norm{\nabla f_i(w^t)}^2}\nabla f_i(w^t), \frac{f_i(w^t)}{\norm{\nabla f_i(w^t)}^2}\nabla f_i(w^t)}  \\
 &\quad=  \frac{1}{2}\frac{f_i(w^t)^2}{\norm{\nabla f_i(w^t)}^4} \dotprod{\nabla^2 f_i(w^t) \nabla f_i(w^t), \nabla f_i(w^t)}. 
\end{align*}
Furthermore
\begin{eqnarray*}
\nabla q(w^{t+1/2}) &\overset{\eqref{eq:quadeqzero2}} {=} & 
\nabla f_i(w^t) + \nabla^2 f_i(w^t)(w^{t+1/2}-w^t) \\
&\overset{\eqref{eq:polyakup}} {=} & 
\left(\mI - \nabla^2 f_i(w^t)\frac{f_i(w^t)}{\norm{\nabla f_i(w^t)}^2}\right)\nabla f_i(w^t).
\end{eqnarray*}
Thus the second step~\eqref{eq:polyakup2}  is given by
\begin{multline}
w^{t+1} =  w^{t+1/2}  - \frac{1}{2}\frac{f_i(w^t)^2}{\norm{\nabla f_i(w^t)}^4}\frac{ \dotprod{\nabla^2 f_i(w^t) \nabla f_i(w^t), \nabla f_i(w^t)} }{\norm{\left(\mI - \nabla^2 f_i(w^t)\frac{f_i(w^t)}{\norm{\nabla f_i(w^t)}^2}\right)\nabla f_i(w^t)}^2} 
\cdot\left(\mI - \nabla^2 f_i(w^t)\frac{f_i(w^t)}{\norm{\nabla f_i(w^t)}^2}\right)\nabla f_i(w^t).\label{eq:polyakup2sub}
\end{multline}
Putting the first~\eqref{eq:polyakup} and second~\eqref{eq:polyakup2sub} updates together gives~\eqref{eq:SP2plus}.


This gives a second order correction of the Polyak step that only requires computing a single Hessian-vector product that can be done efficiently using an additional backwards pass of the function. We call this method \texttt{SP2}.


\subsection{Convergence of multi-step \texttt{SP2}$^+$}
\label{sec:SP2multistep}
\vspace{-1mm}
If we apply multiple steps of the \texttt{SP2}$^+$, as opposed to two steps, the method converges to the solution of~\eqref{eq:quadeqzero2}. This follows because each step of \texttt{SP2}$^+$ is a step of NR Newton Raphson's method applied to solving the nonlinear equation
\[  q(w) \eqdef f_i(w^t) + \dotprod{\nabla f_i(w^t), w-w^t}  
+ \tfrac{1}{2} \dotprod{\nabla^2 f_i(w^t) (w-w^t), w-w^t}. \]

Indeed, starting from $w^0 = w^t$, the iterates of the \texttt{NR} (Newton Raphson) method are given by
\begin{eqnarray}\label{eq:NRforquadeq}
w^{i+1} = w^i -  \left(\nabla q(w^i)^\top\right)^{\dagger}  q(w^i) = w^i - \frac{q(w^i)}{\norm{\nabla q(w^i)}^2}\nabla q(w^i),
\end{eqnarray}
where $\mM^\dagger$ denotes the pseudo-inverse of the matrix $\mM.$

 The \texttt{NR} iterates in~\eqref{eq:NRforquadeq} can also be written in a variational form given by
\begin{align}\label{eq:NRforquadeqvar}
w^{i+1} = & \argmin_{w\in\R^d}
\norm{w-w^i}^2 \nonumber \\
&\mbox{ s.t. }q(w^i) + \nabla q(w^i)( w-w^i) =0. 
\end{align}
Comparing the above to the first~\eqref{eq:polyakproj} and second step~\eqref{eq:polyakproj2} are indeed two steps of the \texttt{NR} method. Further, we can see that~\eqref{eq:NRforquadeqvar} is indeed the multi-step version of \texttt{SP2}$^+$.

This method~\eqref{eq:NRforquadeq} is also known as gradient descent with a Polyak Step step size, or \texttt{SP} for short. It is this connection we will use to prove the convergence of~\eqref{eq:NRforquadeq} to a root of $q(w).$

We assume that $q(w)$ has at least one root.
Let $w_q^* \in \R^d$ be a least norm root of $q(w),$ that is 
\begin{align}\label{eq:wqstar}
w_q^* &= \argmin \norm{w}^2 \quad \mbox{subject to }\; q(w) = 0.
\end{align}
It follows from Theorem 3.2 of~\cite{sosa2020algorithm} that the above optimization~\eqref{eq:wqstar} has solution if and only if the following matrix 
\begin{align}
B &= (\nabla f_i(w^t) - \nabla^2 f_i(w^t) w^t )(\nabla f_i(w^t) - \nabla^2 f_i(w^t) w^t )^\top +\nonumber \\ & \qquad  2\left( -f_i(w^t) 
+ \nabla f_i(w^t)^\top w^t - \frac 1 2 {w^t}^\top \nabla^2f_i(w^t) w^t   \right)\nabla^2 f_i(w^t) 
\label{eq:def_B}
\end{align}
has at least a
non-negative eigenvalue.  

\begin{theorem}\label{theo:qconvNR}
Assume that the matrix $B$ defined in~\eqref{eq:def_B} has at least a non-negative eigenvalue.
If $q(w)$ is star-convex with respect to $w_q^*$, that is if
\begin{eqnarray}\label{eq:qistarconvexhess}
(w^i -w_q^*)^\top\nabla^2 f_i(w^t)(w^i -w_q^*) \geq 0, \quad \mbox{for all }i,
\end{eqnarray}
then it follows that 
\begin{eqnarray}
\min_{i=0,\ldots, T-1} q(x^i) \; \leq \; \frac{ \sigma_{\max}(\nabla^2 f_i(w^t))}{2 T} \norm{w^0 -w_q^*}^2.
\end{eqnarray}
\end{theorem}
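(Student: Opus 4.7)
The strategy is to recognize the multi-step Newton--Raphson iteration~\eqref{eq:NRforquadeq} as exactly the deterministic Polyak step-size (\texttt{SP}) method applied to the single quadratic $q$, and then adapt the now-standard \texttt{SP}-style convergence analysis to the trajectory-wise star-convexity of $q$ with respect to the root $w_q^*$.

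First I would establish the one-step descent inequality
\begin{equation*}
\norm{w^{i+1}-w_q^*}^2 \;\leq\; \norm{w^i-w_q^*}^2 - \frac{q(w^i)^2}{\norm{\nabla q(w^i)}^2}.
\end{equation*}
Expanding $\norm{w^{i+1}-w_q^*}^2$ via the NR update and using that $q$ is quadratic with $q(w_q^*)=0$, the exact Taylor expansion yields the identity
\begin{equation*}
\inner{\nabla q(w^i)}{w^i-w_q^*} \;=\; q(w^i) + \tfrac{1}{2}(w^i-w_q^*)^\top \nabla^2 f_i(w^t)(w^i-w_q^*).
\end{equation*}
The star-convexity hypothesis~\eqref{eq:qistarconvexhess} makes the quadratic-form term non-negative, giving the convexity-style bound $\inner{\nabla q(w^i)}{w^i - w_q^*} \geq q(w^i)$. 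Iterates with $q(w^i)\le 0$ already satisfy the theorem since the right-hand side is non-negative, so I may assume $q(w^i)>0$; the descent inequality then drops out by direct substitution and cancellation.

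With the descent inequality in hand, the next move is to convert the ratio $q(w^i)^2/\norm{\nabla q(w^i)}^2$ into a usable decrement in $q(w^i)$ itself via $L$-smoothness of $q$, where $L = \sigma_{\max}(\nabla^2 f_i(w^t))$. The inequality $\norm{\nabla q(w^i)}^2 \leq 2L\, q(w^i)$ would deliver $\norm{w^{i+1}-w_q^*}^2 \leq \norm{w^i-w_q^*}^2 - \tfrac{q(w^i)}{2L}$, after which telescoping from $i=0$ to $T-1$ and applying the pigeonhole estimate $T\min_i q(w^i) \le \sum_i q(w^i)$ yields the claimed $O(L\norm{w^0-w_q^*}^2 / T)$ rate.

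The main obstacle is justifying $\norm{\nabla q(w^i)}^2 \leq 2L\,q(w^i)$ in our setting. The textbook $L$-smoothness estimate reads $\norm{\nabla q(w)}^2 \leq 2L(q(w)-q_{\min})$, which collapses to what we need precisely when $w_q^*$ is a minimizer of $q$. Since $w_q^*$ is only defined as the least-norm root of $q$, and $\nabla q(w_q^*)$ need not vanish, closing this gap will require exploiting the explicit decomposition $\nabla q(w^i) = \nabla q(w_q^*) + \nabla^2 f_i(w^t)(w^i - w_q^*)$ together with star-convexity to control the cross term involving $\nabla q(w_q^*)$. The sharper constant $\tfrac{1}{2}$ appearing in the statement (versus the more familiar $2$ from a naive \texttt{SP} analysis) should then fall out of a careful two-term accounting of $\norm{\nabla q(w^i)}^2$ in this decomposition.
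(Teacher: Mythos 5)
Your reduction to the \texttt{SP} analysis is essentially the same route the paper takes (the paper cites Corollary D.3 of~\cite{TASPS} after checking $L$-smoothness and star-convexity of $q$), and your one-step descent inequality $\norm{w^{i+1}-w_q^*}^2 \le \norm{w^i-w_q^*}^2 - q(w^i)^2/\norm{\nabla q(w^i)}^2$ is correct: the exact Taylor identity plus the hypothesis~\eqref{eq:qistarconvexhess} gives $\inner{\nabla q(w^i)}{w^i-w_q^*} \ge q(w^i)$ exactly as you say, and the reduction to the case $q(w^i)>0$ is fine. The problem is the step you yourself flag as the main obstacle: the inequality $\norm{\nabla q(w^i)}^2 \le 2L\,q(w^i)$ is not merely hard to justify in this setting, it is false, and no accounting of the decomposition $\nabla q(w^i)=\nabla q(w_q^*)+\nabla^2 f_i(w^t)(w^i-w_q^*)$ will rescue it. The smoothness estimate $\norm{\nabla q(w)}^2 \le 2L\,(q(w)-\inf q)$ collapses to what you need only when $q(w_q^*)=\inf q$, i.e.\ when the least-norm \emph{root} happens to be a global \emph{minimizer}; generically it is not, $\nabla q(w_q^*)\neq 0$, and $\inf q$ may be negative or $-\infty$. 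Concretely, take $d=1$, $w^t=w^0=0$, $f_i(w^t)=1$, $\nabla f_i(w^t)=2$, $\nabla^2 f_i(w^t)=1$, so that $q(w)=1+2w+\tfrac12 w^2$ and $L=1$. Star-convexity holds since $\nabla^2 q=1>0$, the matrix $B$ of~\eqref{eq:def_B} equals $2>0$, and the least-norm root is $w_q^*=-2+\sqrt{2}$; yet $\norm{\nabla q(w^0)}^2=4>2=2L\,q(w^0)$, and already for $T=1$ the claimed bound reads $1\le\tfrac12(2-\sqrt2)^2\approx 0.17$. The missing quantity is precisely $\inner{\nabla q(w_q^*)}{w^0-w_q^*}$, which vanishes only at a stationary point.

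So the gap you identified is genuine and, with the statement as written, cannot be closed. For what it is worth, the paper's own proof has the identical hole: Corollary D.3 of~\cite{TASPS} (equivalently Theorem 4.4 of~\cite{SGDstruct}) is an interpolation-type result that needs $q(w_q^*)=\inf q$, and the paper verifies only smoothness and the first-order star-convexity inequality, never that the root is a minimizer. A repairable version would either add the hypothesis $\nabla q(w_q^*)=0$, or weaken the conclusion to a bound on $\min_i \bigl(q(w^i)-\inf q\bigr)$ or on $\sum_i q(w^i)^2/\norm{\nabla q(w^i)}^2$ (which your descent inequality already delivers). Your instinct to distrust this step was exactly right; the honest resolution is that the step fails rather than that it needs a cleverer two-term estimate.
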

\begin{proof}
The proof follows by applying the convergence Theorem 4.4 in~\cite{SGDstruct} or equivalently Corollary D.3 in~\cite{TASPS}. This result first appeared in Theorem 4.4 in~\cite{SGDstruct}, but we apply Corollary D.3 in~\cite{TASPS} since it is a bit simpler. 

To apply this Corollary D.3 in~\cite{TASPS}, we need to verify that $q$ is an $L$--smooth function and star-convex. To verify if it is smooth, we need to find $L>0$ such that
\begin{eqnarray}
q(w) \; \leq \; q(y)+\dotprod{\nabla q(y), w-y}+ \frac{L}{2}\norm{ w-y}^2,
\end{eqnarray}
which holds with $L = \sigma_{\max}(\nabla^2 q(y)) = \sigma_{\max}(\nabla^2 f_i(w^t) $ since $q$ is a quadratic function. Furthermore, for $q$ to be star-convex along the iterates $w^i$, we need to verify if 
\begin{equation}\label{eq:lo9zo444s}
      q(w^*_q) \; \geq \; q(w^i) + \dotprod{\nabla q(w^i), w^* -w^i}. 
\end{equation}
Since $q$ is a quadratic, we have that 
\[ q(w^*_q) =q(w^i) + \dotprod{\nabla q(w^i), w^* -w^i} + \dotprod{\nabla^2 q(w^i)(w^*-w^i), w^*-w^i}. \]
Using this in~\eqref{eq:lo9zo444s} gives that
\begin{equation}\label{eq:lo9zo444s}
      0\; \geq \; \dotprod{\nabla^2 q(w^i)(w^*-w^i), w^*-w^i} = \dotprod{\nabla^2 f_i(w^t)(w^*-w^i), w^*-w^i} ,
\end{equation}
which is equivalent to our assumption~\eqref{eq:qistarconvexhess}. We can now apply the result in Corollary D.3 in~\cite{TASPS} which states that
\[ \min_{i=0,\ldots, T-1} (q(x^i)-q(w_q^*)) \; \leq \; \frac{ L}{2 T} \norm{w^0 -w_q^*}^2.\] 
Finally using  $q(w_q^*) =0$ and that $L =\sigma_{\max}(\nabla^2 f_i(w^t))$ gives the result.
\end{proof}

To simplify notation, we will omit the dependency on $w^t$ and denote $c = f_i(w^t)$, $g=\nabla f_i(w^t)$ and $\mH = \nabla^2 f_i(w^t),$ thus
\begin{align}
    q(w) &= c + \dotprod{g, w-w^t } + \frac{1}{2}  \dotprod{\mH (w-w^t), w-w^t} \nonumber \\
    \nabla q(w) & = g +\mH (w-w^t)\nonumber  \\
    \nabla^2 q(w) &= \mH \label{eq:qsimple}
\end{align}

\begin{lemma}\label{lem:rangeqwis}
If $g \in \Range(\mH)$ and $w^0 \in \Range(\mH)$ then $w^i, \nabla q(w^i)\in \Range(\mH)$ for all $i$ and $w_q^*\in \Range(\mH)$.
\end{lemma}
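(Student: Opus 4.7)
The plan is to handle the two claims separately: induction for the orbit of the Newton--Raphson iteration, and an orthogonal decomposition for the least-norm root.

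First I would prove by induction on $i$ that $w^i\in\Range(\mH)$ and $\nabla q(w^i)\in\Range(\mH)$. Take $w^0 = w^t$ as the starting point, so that $w^t\in\Range(\mH)$ by hypothesis. The base case is immediate: $\nabla q(w^0) = g + \mH(w^0-w^t) = g\in\Range(\mH)$. For the inductive step, suppose $w^i\in\Range(\mH)$. Then $w^i - w^t\in\Range(\mH)$, so $\mH(w^i-w^t)\in\Range(\mH)$, and combined with $g\in\Range(\mH)$ we get $\nabla q(w^i)\in\Range(\mH)$. Because the update
\[w^{i+1} = w^i - \frac{q(w^i)}{\|\nabla q(w^i)\|^2}\nabla q(w^i)\]
is a linear combination of two vectors in $\Range(\mH)$, the iterate $w^{i+1}$ also lies in $\Range(\mH)$. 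This closes the induction.

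For the second claim I would exploit the fact that $\mH$ is symmetric, so that $\R^d = \Range(\mH)\oplus\Null(\mH)$ orthogonally. Given any root $w$ of $q$, decompose $w - w^t = u_R + u_N$ with $u_R\in\Range(\mH)$ and $u_N\in\Null(\mH)$. A short calculation, using $\mH u_N = 0$ and $\langle g,u_N\rangle = 0$ (the latter from $g\in\Range(\mH)$), gives
\[q(w) = c + \dotprod{g,u_R} + \tfrac{1}{2}\dotprod{\mH u_R,u_R} = q(w^t + u_R),\]
so feasibility of the constraint $q(w)=0$ depends only on $u_R$, not on $u_N$. By Pythagoras (and since $w^t\in\Range(\mH)$), $\|w\|^2 = \|w^t + u_R\|^2 + \|u_N\|^2$, which is strictly decreased by zeroing out $u_N$ unless $u_N=0$ already. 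Therefore the least-norm root $w_q^*$ must have $u_N=0$, i.e.\ $w_q^* - w^t\in\Range(\mH)$, and together with $w^t\in\Range(\mH)$ this gives $w_q^*\in\Range(\mH)$.

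The only mildly delicate point is keeping track of the shift by $w^t$ in the definition of $q$; the key observation that makes everything line up is that $w^t\in\Range(\mH)$ is implied by the starting hypothesis $w^0 = w^t\in\Range(\mH)$, after which every $w^i - w^t$, every $\nabla q(w^i)$, and the displacement $w_q^*-w^t$ all stay in $\Range(\mH)$ for essentially the same reason.
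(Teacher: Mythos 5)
Your proof is correct and follows essentially the same route as the paper's: an induction showing each Newton--Raphson iterate is a combination of vectors in $\Range(\mH)$, followed by an orthogonal decomposition along $\Range(\mH)\oplus\Null(\mH)$ and Pythagoras to force the null-space component of the least-norm root to vanish. If anything, your write-up is slightly more careful than the paper's, since you make explicit that $w^t\in\Range(\mH)$ is what justifies the Pythagoras step (and you could even simplify the induction: $\mH(w-w^t)\in\Range(\mH)$ for \emph{every} $w$, so $\nabla q(w)\in\Range(\mH)$ unconditionally).
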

\begin{proof}
First, note that since $g \in \Range(\mH)$ and since $\nabla q(w) = q + \mH(w-w^t)$ (see~\eqref{eq:qsimple}) we have that $\nabla q(w) \in \Range(\mH)$ for all $w$.
Consequently
by induction if  $w^i \in \Range(\mH)$ then by~\eqref{eq:NRforquadeq} we have that $w^{i+1} \in \Range(\mH)$ since it is a combination of $\nabla q(w^i)$ and $w^i$. 

Finally, let $w_q^* = w^t + w_{\mH} +  w_{\mH}^\perp$ where $w_{\mH}  \in \Range(\mH)$ and $w_{\mH}^\perp  \in \Range(\mH)^\perp.$ It follows that 
\[q(w_q^* ) = q( w^t + w_{\mH}).\]
Furthermore, by orthogonality and Pythagoras' Theorem
\[\norm{w_q^* } = \norm{w^t + w_{\mH}} + \norm{ w_{\mH}^\perp}\]
Consequently, since $w_q^*$ is the least norm solution, we must have that $w_{\mH}^\perp =0$ and thus $w_q^* \in \Range(\mH).$
\end{proof}

\subsection{Proof of Proposition~\ref{prop:convergence}}
\vspace{-1mm}
First we repeat the proposition for ease of reference.
\begin{proposition}\label{prop:convergence-ap}
Consider the loss functions given in~\eqref{eq:quadexe}. The \texttt{SP2} method converges~\eqref{eq:SP2proj} converges  according to \begin{eqnarray}\label{eq:convSP2-ap}
\E{\norm{w^{t+1}-w^*}^2} \leq \rho \; \E{\norm{w^{t}-w^*}^2},
\end{eqnarray}
where 
\begin{eqnarray}\label{eq:rhoSP2-ap}
\rho = \lambda_{\max}\left( \mI - \frac{1}{n}\sum_{i=1}^n\mH_i \mH_i^+\right) \; < \; 1.
\end{eqnarray}

\end{proposition}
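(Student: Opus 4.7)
My plan is to exploit the special structure of the quadratic loss $f_i(w) = \langle \mH_i(w-w^*), w-w^*\rangle$ to reduce the quadratic-equation constraint in the SP2 projection to a simple linear subspace constraint, and then read off the update in closed form.

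First I would compute $\nabla f_i(w^t) = 2\mH_i(w^t-w^*)$ and $\nabla^2 f_i(w^t) = 2\mH_i$. Plugging these into the constraint in \eqref{eq:SP2proj} and letting $e^t \eqdef w^t - w^*$ and $d \eqdef w - w^t$, the quadratic constraint collapses to
\begin{equation*}
\langle \mH_i e^t, e^t\rangle + 2\langle \mH_i e^t, d\rangle + \langle \mH_i d, d\rangle \;=\; \langle \mH_i (e^t+d), e^t + d\rangle \;=\; 0.
\end{equation*}
Because $\mH_i \succeq 0$, this is a sum of squares set to zero, hence equivalent to $\mH_i^{1/2}(e^t+d)=0$, i.e.\ $w - w^* \in \Null(\mH_i)$. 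So the projection~\eqref{eq:SP2proj} is in fact an \emph{affine} projection: minimize $\|w-w^t\|^2$ subject to $w - w^* \in \Null(\mH_i)$.

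The solution is standard: decomposing $e^t$ orthogonally with respect to $\Null(\mH_i)$ and its complement $\Range(\mH_i)$, the minimizing displacement is $d^* = -\mH_i \mH_i^+ e^t$, where $\mH_i \mH_i^+$ is the orthogonal projector onto $\Range(\mH_i)$. Equivalently,
\begin{equation*}
w^{t+1} - w^* \;=\; (\mI - \mH_i \mH_i^+)(w^t - w^*) \;=\; \mP_i (w^t - w^*),
\end{equation*}
with $\mP_i$ the orthogonal projector onto $\Null(\mH_i)$. Since $\mP_i$ is a symmetric idempotent, $\mP_i^\top \mP_i = \mP_i$, and hence $\|w^{t+1}-w^*\|^2 = (w^t-w^*)^\top \mP_i (w^t-w^*)$. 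Taking conditional expectation over the uniform random choice of $i$ yields
\begin{equation*}
\E{\|w^{t+1}-w^*\|^2 \mid w^t} \;=\; (w^t-w^*)^\top \Bigl(\mI - \tfrac{1}{n}\sum_{i=1}^n \mH_i \mH_i^+\Bigr)(w^t-w^*) \;\leq\; \rho \, \|w^t-w^*\|^2,
\end{equation*}
with $\rho$ as in \eqref{eq:rateSP2}. The tower property and induction then give~\eqref{eq:rateSP2}.

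The only non-routine step is the strict inequality $\rho<1$. Each $\mH_i\mH_i^+$ is an orthogonal projector, so its eigenvalues lie in $\{0,1\}$ and $\tfrac1n\sum_i \mH_i\mH_i^+\succeq 0$ has spectrum in $[0,1]$; the largest eigenvalue of $\mI - \tfrac1n\sum_i \mH_i\mH_i^+$ is therefore $1$ \emph{iff} there exists a nonzero $v$ lying in $\bigcap_i \Null(\mH_i \mH_i^+) = \bigcap_i \Null(\mH_i)$. But any such $v$ satisfies $f_i(w^* + tv)=0$ for every $t\in\R$ and every $i$, contradicting $w^*$ being an isolated minimizer on the affine hull of the solution set (equivalently, after quotienting by the common nullspace we may assume this intersection is trivial). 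This is the one place where a mild non-degeneracy has to be invoked, and I expect it to be the only subtle point in an otherwise mechanical argument.
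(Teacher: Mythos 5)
Your argument for the contraction itself is essentially the paper's: you reduce the quadratic constraint $\langle \mH_i(w-w^*),\,w-w^*\rangle = 0$ to the affine condition $w-w^*\in\Null(\mH_i)$ using positive semi-definiteness, identify $w^{t+1}-w^* = \mP_i(w^t-w^*)$ with $\mP_i=\mI-\mH_i\mH_i^+$ the orthogonal projector onto $\Null(\mH_i)$, use idempotence to get the quadratic form $\dotprod{\mP_i(w^t-w^*), w^t-w^*}$, and take expectations; the paper does exactly this, writing the projector as $\mN_i\mN_i^+$ for a null-space basis $\mN_i$. Where you diverge is the final claim $\rho<1$, and there your treatment is the more careful one. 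The paper argues via Jensen's inequality that $0<\lambda_{\max}\bigl(\tfrac1n\sum_i \mH_i\mH_i^+\bigr)\leq 1$ and then ``multiplies by $-1$ and adds $1$''; but $1-\lambda_{\max}(\mA)=\lambda_{\min}(\mI-\mA)$, so this only bounds $\lambda_{\min}\bigl(\mI-\tfrac1n\sum_i\mH_i\mH_i^+\bigr)$ below $1$, not the $\lambda_{\max}$ that defines $\rho$. You correctly locate the real content: $\rho<1$ holds if and only if $\bigcap_i\Null(\mH_i)=\{0\}$, and this is a genuine non-degeneracy assumption rather than a consequence of the stated hypotheses --- if a nonzero $v$ lies in every null space then $w^*+v$ is also a minimizer, the component of $w^0-w^*$ along $v$ is preserved by every $\mP_i$, and the bound can only hold with $\rho=1$. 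So your proof matches the paper's on the substantive projection argument and is more honest about the one point where the paper's own justification is incomplete.
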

\begin{proof}
First consider the first iterate of \texttt{SP2} which applied to~\eqref{eq:quadexe} are given by
\begin{align*}
   w^{t+1} =  & \min_{w\in\R^d} \norm{w-w^t}^2\\
   &\mbox{ s.t. } \norm{w-w^*}_{\mH_i}^2 =0.
\end{align*}
Thus every solution to the constraint set must satisfy
\begin{eqnarray}\label{eq:tempmliuzenr}
 w \in  w^* + \mN_i \alpha,
\end{eqnarray}
where $\mN_i\in\R^{d\times d}$ is a basis for the null space of $\mH_i.$
where $\alpha \in \R^d.$ Substituting into the objective we have the resulting linear least squares problem given by
\[
\min_{\alpha\in\R^d} \norm{w^* + \mN_i \alpha-w^t}^2 \]
The minimal norm solution in $\alpha$ is thus
\[\alpha  =\mN_i^+ (w^t -w^*)\]
which when substituted into~\eqref{eq:tempmliuzenr} gives
\begin{eqnarray}\label{eq:tempionorx}
w^{t+1} &=&w^* + \mN_i\mN_i^+ (w^t -w^*).
\end{eqnarray}
Note that $\mP_i \eqdef \mN_i \mN_i^+$ is the orthogonal projector onto $\Null(\mH_i).$ 
Subtracting $w^*$ from both sides of~\eqref{eq:tempionorx} and applying the squared norm we have that
\begin{eqnarray}
\norm{w^{t+1} -w^*}^2&=&  \norm{\mP_i (w^t -w^*)}^2  \nonumber \\
&=& \dotprod{\mP_i(w^t -w^*),(w^t -w^*)}\label{eq:tempionorx2}
\end{eqnarray}
where we used that $\mP_i \mP_i = \mP_i$ because it is a projection matrix. Now taking expectation conditioned on $w^t$ we have
\begin{eqnarray*}
\E{\norm{w^{t+1} -w^*}^2 \; | \; w^t}&=&  
 \dotprod{\E{\mP_i}(w^t -w^*),(w^t -w^*)} \\
& \leq & \lambda_{\max}(\E{\mP_i}) \norm{w^t -w^*}^2.
\end{eqnarray*}
Since the null space is orthogonal to the range of adjoint, we have that
\[\mP_i = \mI - \mH_i \mH_i^+.\]
Thus taking expectation again gives the result~\eqref{eq:convSP2-ap}.

Finally, the rate of convergence $\rho$ in~\eqref{eq:rhoSP2-ap} is always smaller than one because, due Jensen's inequality and that $\lambda_{\max}$ is convex over positive definite matrices we have that
\begin{equation}\label{eq:tempsonnns4}
    0 < \lambda_{\max}(\E{\mH_i\mH_i^*}) \leq \E{\lambda_{\max}(\mH_i\mH_i^*)} =1, 
\end{equation} 
where the greater than zero follows since there must exist $\mH_i \neq 0,$ otherwise the result still holds and the method converges in one step (with $\rho =0$). Now multiplying~\eqref{eq:tempsonnns4} by $-1$ then adding $1$ gives
\begin{equation}\label{eq:tempsonnns42}
    1 > \lambda_{\max}(\mI-\E{\mH_i\mH_i^*}) \geq 0 . 
\end{equation} 

\end{proof}

\subsection{Proof of Proposition~\ref{prop:convergenceSPplus}}
\vspace{-1mm}

For convenience we repeat the statement of the proposition here.

\begin{proposition}\label{prop:convergenceSPplus=ap}
Consider the loss functions given in~\eqref{eq:quadexe}. The \texttt{SP2}$^+$ method converges~\eqref{eq:SP2plus} converges  according to \begin{eqnarray}
\E{\norm{w^{t+1}-w^*}^2} \leq \rho_{SP2^+}^2\; \E{\norm{w^{t}-w^*}^2},
\end{eqnarray}
where 
\begin{eqnarray}\label{eq:rhoSP2+-ap}
\rho_{SP2+} = 1 - \frac{1}{2n} \sum_{i=1}^n\frac{\lambda_{\min}(\mH_i)}{\lambda_{\max}(\mH_i)}  
\end{eqnarray}  
\end{proposition}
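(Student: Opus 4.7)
The decisive structural observation is that when $f_i$ is the quadratic $f_i(w)=\langle \mH_i(w-w^*),w-w^*\rangle$, the second-order Taylor model $q_{i,t}$ defined in \eqref{eq:quadeqzero2} equals $f_i$ identically: the Taylor expansion of a quadratic around any base point reproduces the quadratic. Consequently the two sub-steps of \texttt{SP2}$^+$ described in \eqref{eq:polyakproj}--\eqref{eq:polyakproj2} are both Stochastic Polyak steps on the \emph{same} sampled $f_i$. So the analysis reduces to: (i) a tight contraction inequality for one Polyak step on a PSD quadratic, and (ii) a careful composition of two such steps with a shared random index.

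Writing $e^t \eqdef w^t-w^*$, substituting $\nabla f_i(w^t)=2\mH_i e^t$ and $f_i(w^t)=\langle \mH_i e^t,e^t\rangle$ into \eqref{eq:SP} and expanding the squared norm gives the identity
\[
\|e^{t+1/2}\|^2 \;=\; \|e^t\|^2 \;-\; \tfrac{3}{4}\,\frac{(e^{t\top}\mH_i e^t)^2}{\|\mH_i e^t\|^2}.
\]
I would then bound the ratio on the right from below by $\lambda_{\min}(\mH_i)/\lambda_{\max}(\mH_i)$ using two elementary PSD facts: $e^\top \mH_i e\geq \lambda_{\min}(\mH_i)\|e\|^2$ and $\mH_i^2\preceq \lambda_{\max}(\mH_i)\mH_i$, the latter giving $\|\mH_i e\|^2 \leq \lambda_{\max}(\mH_i)\,e^\top \mH_i e$. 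Combining these yields
\[
\frac{(e^\top \mH_i e)^2}{\|e\|^2\,\|\mH_i e\|^2} \;\geq\; \frac{\lambda_{\min}(\mH_i)}{\lambda_{\max}(\mH_i)},
\]
so a single Polyak sub-step satisfies $\|e^{t+1/2}\|^2 \leq \bigl(1-\tfrac{1}{2}\tfrac{\lambda_{\min}(\mH_i)}{\lambda_{\max}(\mH_i)}\bigr)\|e^t\|^2$ (keeping the coefficient $\tfrac{1}{2}$ to match the statement; $\tfrac{3}{4}$ would be slightly sharper).

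To conclude, I would apply the same per-sub-step contraction to the second Polyak step (which sees the identical $\mH_i$ and a new base point $w^{t+1/2}$), giving, conditionally on the sampled $i$,
\[
\|e^{t+1}\|^2 \;\leq\; \bigl(1-\tfrac{1}{2}\tfrac{\lambda_{\min}(\mH_i)}{\lambda_{\max}(\mH_i)}\bigr)^2\|e^t\|^2,
\]
and then take expectation over the uniform index $i$ to recover $\rho_{SP2^+}^2$ after recognizing $\rho_{SP2^+}=\tfrac{1}{n}\sum_{i}\bigl(1-\tfrac{1}{2}\tfrac{\lambda_{\min}(\mH_i)}{\lambda_{\max}(\mH_i)}\bigr)$.

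\textbf{Main obstacle.} The delicate point is the last step: both sub-steps use the same random $\mH_i$, so averaging $(1-c_i)^2$ over $i$ does not equal $(1-\mathbb{E}_i[c_i])^2=\rho_{SP2^+}^2$. The two quantities are related by Jensen's inequality in the opposite direction to what is needed, so one cannot simply square the one-step rate. The cleanest way I see to finish is to perform the expectation inside each sub-step separately, using the tower property conditionally on $w^t$ for the first sub-step and on $(w^t,i)$ for the second, so that the deterministic factor $(1-c_i)$ from the second sub-step can be pulled in under the outer expectation and paired with $\rho_{SP2^+}$ from the first sub-step via the per-coordinate inequality $\mathbb{E}_i[(1-c_i)X] \leq (1-\mathbb{E}_i[c_i])\,\mathbb{E}_i[X]$ valid for the specific $X$ that appears. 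Alternatively, if the weakened bound $\rho_{SP2^+}$ (rather than $\rho_{SP2^+}^2$) is the best achievable by a purely index-wise argument, then the stronger squared statement would require directly analyzing the composition of two Polyak projections on the same quadratic, e.g.\ by showing that $e^{t+1}$ lies in a favorable subspace determined by the first step.
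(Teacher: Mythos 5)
Your structural reduction is exactly the paper's: since each $f_i$ in \eqref{eq:quadexe} is a quadratic, it coincides with its own second-order model $q_{i,t}$, so one iteration of \texttt{SP2}$^+$ is two \texttt{SP} steps applied to the \emph{same} sampled $f_i$. Where you differ is that you carry out the per-sub-step contraction explicitly: your identity $\norm{e^{t+1/2}}^2=\norm{e^t}^2-\tfrac{3}{4}\,(e^{t\top}\mH_i e^t)^2/\norm{\mH_i e^t}^2$ and the bound $(e^\top\mH_i e)^2\geq \tfrac{\lambda_{\min}(\mH_i)}{\lambda_{\max}(\mH_i)}\norm{e}^2\norm{\mH_i e}^2$ are both correct, whereas the paper outsources the single-step rate entirely to Corollary 5.7.I of~\cite{TASPS} and then squares it. So your computation is a more self-contained version of the same route.

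The obstacle you flag in your final paragraph is genuine, and you should know the paper's own proof does not resolve it: it asserts that the two-sub-step composition contracts by $\rho_{SP2^+}^2$, which implicitly treats the second sub-step as though it drew a fresh uniform index, when in fact both sub-steps share the same $i$. What your argument (and the paper's, made honest) actually delivers is
\[
\E{\norm{w^{t+1}-w^*}^2\;\big|\;w^t}\;\leq\;\frac{1}{n}\sum_{i=1}^n\Bigl(1-\tfrac{1}{2}\,\tfrac{\lambda_{\min}(\mH_i)}{\lambda_{\max}(\mH_i)}\Bigr)^{2}\norm{w^t-w^*}^2,
\]
and by Jensen this averaged square is $\geq\rho_{SP2^+}^2$, i.e.\ a \emph{weaker} (larger) contraction factor than the one claimed, with equality only when all the ratios $\lambda_{\min}(\mH_i)/\lambda_{\max}(\mH_i)$ coincide. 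Your suggested repairs (tower property, or a direct analysis of the composed two-step map exploiting that $e^{t+1}$ lies in a subspace determined by the first step) are the right places to look, but neither is carried out in the paper, so you are not missing a trick the authors used. The defensible conclusion of this line of argument is the displayed bound above; the stated $\rho_{SP2^+}^2$ rate would require a genuinely different treatment of the same-index composition.
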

\begin{proof}
The proof follows simply by observing that for quadratic function the \texttt{SP2}$^+$ is equivalent to applying two steps of the \texttt{SP} method~\eqref{eq:SPSproj}. 
Indeed in Section~\ref{sec:SPS+} the \texttt{SP2}$^+$ applies two steps of the \texttt{SP} method to the local quadratic approximation of the function we wish to minimize. But in this case, since our function is quadratic, it is itself equal to it's local quadratic.

Consequently we can apply the convergence theory of \texttt{SP} for smooth, strongly convex functions that satisfy the interpolation condition, such as Corollary 5.7.I in~\cite{TASPS}, which states that  \texttt{SP}  converges at a rate of~\eqref{eq:rhoSP2+-ap}
\end{proof}

\subsection{Proof of Lemma~\ref{lem:quadslackproj}}
\label{proof_lma:quadslackproj}
\vspace{-1mm}

The following lemma gives the two step update for \texttt{SP2L2}$^+$. 

\begin{lemma}(\texttt{SP2L2}$^+$) \label{lem:quadslackproj}
The $w^{t+1}$ and $s^{t+1}$ update of~\eqref{eq:newtraph2ndslackhalf}--\eqref{eq:newtraph2ndslackhalfhalf}  is given by
\begin{align} 
w^{t+1} & = w^t - (\Gamma_1+\Gamma_2) \nabla f_i(w^t)  + \Gamma_2   \Gamma_1 \nabla^2 f_i(w^t) \nabla f_i(w^t),  \nonumber\\
s^{t+1} & = (1-\lambda) \left((1-\lambda)(s^t + \Gamma_1  ) + \Gamma_2   \right), \nonumber
\end{align}
where

\vspace{-1.3cm}
\begin{align*}
    ~~~~~~~~~\Gamma_1 &\eqdef \frac{(f_i(w^t)-(1-\lambda)s^t)_+}{ 1-\lambda + \norm{\nabla f_i(w^t)}^2},\\
 \Gamma_2 &\eqdef  
    \left(
    \tfrac{f_i(w^t) - \Gamma_1  \norm{\nabla f_i(w^t)}^2 
    -(1-\lambda)^2(s^t + \Gamma_1  )
      }
    {  1-\lambda + \norm{\nabla f_i(w^t) - \Gamma_1 \nabla^2 f_i(w^t) \nabla f_i(w^t)}^2  }
    + \tfrac{
      \frac{1}{2} \Gamma_1^2 \left \langle \nabla^2 f_i(w^t) \nabla f_i(w^t) ,\nabla f_i(w^t)   \right \rangle  }
    {  1-\lambda + \norm{\nabla f_i(w^t) - \Gamma_1 \nabla^2 f_i(w^t) \nabla f_i(w^t)}^2  }\right)_+,
\end{align*}

\vspace{-0.5cm}
where we denote $(x)_+ =\begin{cases}  x & \mbox{ if } x \geq 0 \\ 0 & \mbox{otherwise} \end{cases}$. 
\end{lemma}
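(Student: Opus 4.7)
Both subproblems~\eqref{eq:newtraph2ndslackhalf} and~\eqref{eq:newtraph2ndslackhalfhalf} are strongly convex quadratic programs with a single linear inequality constraint (the linearized quadratic) plus the sign constraint $s \geq 0$. My plan is to solve each one in closed form via KKT conditions, and then compose the two solutions. Throughout, I write $f_i = f_i(w^t)$, $g = \nabla f_i(w^t)$, $H = \nabla^2 f_i(w^t)$ and let $\mu \geq 0$, $\eta \geq 0$ denote the Lagrange multipliers for the linear inequality and for $s\geq 0$, respectively.

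\emph{Step 1: half iterate.} For subproblem~\eqref{eq:newtraph2ndslackhalf}, stationarity in $w$ gives $(1-\lambda)(w-w^t) + \mu g = 0$, so $w^{t+1/2} = w^t - \tfrac{\mu}{1-\lambda}g$. Stationarity in $s$ gives $(1-\lambda)(s-s^t) + \lambda s - \mu - \eta = 0$, whence $s = \mu + \eta + (1-\lambda)s^t$. I will first try the case $\eta=0$ and the linear constraint active. Substituting into $f_i + \langle g, w-w^t\rangle = s$ and solving gives
\begin{equation*}
\mu \;=\; (1-\lambda)\,\frac{f_i - (1-\lambda)s^t}{1-\lambda + \|g\|^2}.
\end{equation*}
If the numerator is positive, then $\mu>0$ and the resulting $s^{t+1/2} = \mu + (1-\lambda)s^t \geq 0$, so both KKT sign conditions hold; otherwise the linear constraint is inactive, $\mu=0$, and $w^{t+1/2}=w^t$, $s^{t+1/2}=(1-\lambda)s^t$. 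Both cases are captured by the positive part, giving
\begin{equation*}
w^{t+1/2} = w^t - \Gamma_1 g, \qquad s^{t+1/2} = (1-\lambda)(s^t + \Gamma_1),
\end{equation*}
with $\Gamma_1$ as defined in the lemma (one checks $\mu/(1-\lambda) = \Gamma_1$ in the active case).

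\emph{Step 2: full iterate.} For subproblem~\eqref{eq:newtraph2ndslackhalfhalf}, the same KKT analysis applies with $w^t,s^t$ replaced by $w^{t+1/2},s^{t+1/2}$ and with $(f_i,g)$ replaced by $(q_{i,t}(w^{t+1/2}),\nabla q_{i,t}(w^{t+1/2}))$. Using $w^{t+1/2}-w^t = -\Gamma_1 g$, direct substitution into~\eqref{eq:quadeqzero2} gives
\begin{equation*}
q_{i,t}(w^{t+1/2}) = f_i - \Gamma_1\|g\|^2 + \tfrac{1}{2}\Gamma_1^2\langle Hg,g\rangle, \qquad \nabla q_{i,t}(w^{t+1/2}) = g - \Gamma_1 H g.
\end{equation*}
Plugging these into the Step~1 formula (with $s^t$ replaced by $s^{t+1/2}$) yields the multiplier of the second problem as exactly the quantity $\Gamma_2$ stated in the lemma, and
\begin{equation*}
w^{t+1} = w^{t+1/2} - \Gamma_2\bigl(g - \Gamma_1 H g\bigr), \qquad s^{t+1} = (1-\lambda)(s^{t+1/2} + \Gamma_2).
\end{equation*}
Expanding $w^{t+1/2}$ produces $w^{t+1} = w^t - (\Gamma_1+\Gamma_2)g + \Gamma_1\Gamma_2 Hg$, and expanding $s^{t+1/2}$ gives $s^{t+1} = (1-\lambda)\bigl((1-\lambda)(s^t+\Gamma_1) + \Gamma_2\bigr)$, which is the claimed form.

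\emph{Main obstacle.} The only subtlety is the case analysis from the two sign constraints $\mu \geq 0$ and $s\geq 0$ in each subproblem. The argument I will need to make carefully is that whenever the linear constraint is active (i.e. $\mu>0$), the resulting $s$ is automatically nonnegative, so the $s\geq 0$ multiplier $\eta$ can be set to zero; and whenever the linear constraint is slack, the $w$-update is trivial and $s$ collapses to $(1-\lambda)s^t$, which is already $\geq 0$. This is precisely what the $(\cdot)_+$ truncation in $\Gamma_1$ (and $\Gamma_2$) encodes, so once both cases are checked, the composition of the two steps immediately gives the stated formulas.
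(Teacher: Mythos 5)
Your proposal is correct and takes essentially the same route as the paper: both solve each of the two linearized projections in closed form (after absorbing the $\tfrac{\lambda}{2}s^2$ term into a shifted quadratic in $s$) and then compose the two steps using the identical substitutions for $q_{i,t}(w^{t+1/2})$ and $\nabla q_{i,t}(w^{t+1/2})$. The only difference is that you derive the single-inequality projection directly from the KKT conditions and explicitly verify that the $s\geq 0$ constraint remains inactive, whereas the paper delegates that sub-step to Lemma~\ref{lem:slackL2ineqconst} (quoted from prior work), which does not carry the $s\geq 0$ constraint --- so if anything your case analysis is the more careful of the two.
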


We will use the following lemma to prove Lemma~\ref{lem:quadslackproj}, which has been proven in Lemma C.2 of~\cite{polyakslack}.
\begin{lemma}[L2 Unidimensional Inequality Constraint] \label{lem:slackL2ineqconst}
Let $\delta >0, c\in \R$ and $w,w^0,a\in \R^d$ .
The closed form solution to 
\begin{align}
    \label{eq:slackL2ineqconstproj}
    w',s' =& \argmin_{w\in\R^d, s \in \R^b } \norm{w - w^0}^2 + \delta \norm{s-s^0}^2 \nonumber \\
    &\, \mbox{ s.t. } a^\top (w-w^0) +c \leq s \enspace,
\end{align}
is given by 
\begin{align} \label{eq:slackL2ineqconstprojsolw}
w' & =  w^0 - \delta \frac{(c-s^0)_+}{ 1 +\delta \norm{a}^2} a , \\
s' & = s^0+   \frac{(c-s^0)_+}{ 1 +\delta \norm{a}^2}, \label{eq:slackL2ineqconstprojsolb}
\end{align}
where we denote $(x)_+ =\begin{cases}  x & \mbox{ if } x \geq 0 \\ 0 & \mbox{otherwise} \end{cases} .$ 
\end{lemma}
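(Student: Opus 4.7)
My plan is to reduce the two-variable constrained projection to a standard KKT analysis with a single scalar Lagrange multiplier, and then show the closed form splits naturally into two cases depending on the sign of $c - s^0$.

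First, I would observe that the problem is a strictly convex quadratic in $(w,s)$ with a single affine inequality constraint, so the KKT conditions are both necessary and sufficient for optimality. Writing the Lagrangian
\[
L(w,s,\nu) = \|w-w^0\|^2 + \delta(s-s^0)^2 + \nu\bigl(a^\top(w-w^0)+c-s\bigr),
\]
stationarity gives $w-w^0 = -\tfrac{\nu}{2}a$ and $s-s^0 = \tfrac{\nu}{2\delta}$, where $\nu \geq 0$ by dual feasibility. The whole proof then reduces to determining $\nu$ from complementary slackness.

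Next, I would split into two cases. If $c \leq s^0$, then $(w^0,s^0)$ is already feasible with zero objective value, so $\nu=0$ is optimal and the solution is $(w',s')=(w^0,s^0)$, matching the formulas when $(c-s^0)_+ = 0$. Otherwise, when $c > s^0$, the point $(w^0,s^0)$ violates the constraint, so by complementary slackness the constraint must be active at the optimum. Substituting the stationarity expressions into $a^\top(w-w^0)+c = s$ yields a single scalar equation
\[
-\tfrac{\nu}{2}\|a\|^2 + c = s^0 + \tfrac{\nu}{2\delta},
\]
which I solve to get $\tfrac{\nu}{2} = \delta(c-s^0)/(1+\delta\|a\|^2)$. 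This value is nonnegative precisely because $c > s^0$, confirming dual feasibility. Back-substituting into the stationarity conditions yields the formulas \eqref{eq:slackL2ineqconstprojsolw} and \eqref{eq:slackL2ineqconstprojsolb} with $(c-s^0)_+$ replaced by $c-s^0$.

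Finally, I would unify the two cases using the $(\cdot)_+$ notation: in both the inactive and active case, the unique optimizer is obtained from the single multiplier $\tfrac{\nu}{2} = \delta(c-s^0)_+/(1+\delta\|a\|^2)$, giving the advertised expressions for $w'$ and $s'$. I do not anticipate a serious obstacle here; the only subtlety is verifying dual feasibility in the active case, which follows immediately from the case assumption $c > s^0$. Strict convexity of the quadratic objective then guarantees this KKT point is the unique global minimizer.
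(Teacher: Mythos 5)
Your proof is correct. Note that the paper does not actually prove this lemma itself: it imports the statement wholesale, remarking that it ``has been proven in Lemma C.2 of~\cite{polyakslack}.'' So there is no in-paper argument to compare against, and your KKT derivation serves as a complete, self-contained substitute for the external citation. The structure is exactly what one would expect: the objective is strictly convex and the single constraint is affine, so the KKT conditions characterize the unique minimizer; stationarity pins $(w,s)$ to a one-parameter family indexed by the multiplier $\nu$; and complementary slackness splits into the inactive case ($c \le s^0$, where $(w^0,s^0)$ is feasible and trivially optimal since the objective vanishes there) and the active case ($c > s^0$, where solving the scalar equation $-\tfrac{\nu}{2}\|a\|^2 + c = s^0 + \tfrac{\nu}{2\delta}$ gives $\tfrac{\nu}{2} = \delta(c-s^0)/(1+\delta\|a\|^2) \ge 0$). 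Back-substitution reproduces \eqref{eq:slackL2ineqconstprojsolw}--\eqref{eq:slackL2ineqconstprojsolb}, and the $(\cdot)_+$ unifies the two cases. One cosmetic remark: the lemma statement writes $s \in \R^b$, but as your argument implicitly assumes (and as the ``unidimensional'' in the title and the closed-form expressions confirm), $s$ is a scalar here; it would be worth flagging that typo if your proof were to be inserted into the text.
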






We are now in the position to prove Lemma~\ref{lem:quadslackproj}.
Note that
\begin{align}
    \frac{1-\lambda}{2} (s-s^{t})^2  + \frac{\lambda}{2} s^2 & = 
    \frac{1-\lambda}{2} s^2 -(1-\lambda)s s^{t}  + \frac{\lambda}{2} s^2  
    +\frac{1-\lambda}{2}(s^t)^2 \nonumber \\
    &= \frac{1}{2} s^2  -(1-\lambda)s s^{t}  + \frac{1-\lambda}{2}(s^t)^2 \nonumber \\
    &= \frac{1}{2} (s -(1-\lambda)s^{t} )^2 
    +  \frac{\lambda-\lambda^2}{2}(s^t)^2.
\end{align}
Consequently \eqref{eq:newtraph2ndslackhalf} is equivalent to
\begin{eqnarray}
w^{t+1/2},s^{t+1/2} & = & \underset{s \geq 0, \; w\in\R^d}{\argmin}\norm{w-w^t}^2  +\frac{1}{1-\lambda} (s-(1-\lambda)s^t)^2  \nonumber \\
&&\negquad\mbox{ s.t. }q_{i,t}(w^t)  + \dotprod{\nabla q_{i,t}(w^t), w-w^t} 
\leq s. 
\end{eqnarray}
It follows from Lemma~\ref{lem:slackL2ineqconst} that the closed form solution is 
\begin{align} 
w^{t+1/2} & =  w^t - \frac{1}{1-\lambda} \frac{(q_{i,t}(w^t)-(1-\lambda)s^t)_+}{ 1 +\frac{1}{1-\lambda} \norm{\nabla q_{i,t}(w^t)}^2} \nabla q_{i,t}(w^t) , \\
s^{t+1/2} & = (1-\lambda)s^t+   \frac{(q_{i,t}(w^t)-(1-\lambda)s^t)_+}{ 1 +\frac{1}{1-\lambda} \norm{\nabla q_{i,t}(w^t)}^2}, 
\end{align}
where we denote 
\[(x)_+ =\begin{cases}  x & \mbox{ if } x \geq 0 \\ 0 & \mbox{otherwise} \end{cases}.\] Note that $q_{i,t}(w^t) = f_i(w^t)$ and $\nabla q_{i,t}(w^t) = \nabla f_i(w^t)$. 
To simplify the notation, we also denote 
\begin{align*}
    \Gamma_1 = \frac{1}{1-\lambda} \frac{(f_i(w^t)-(1-\lambda)s^t)_+}{ 1 +\frac{1}{1-\lambda} \norm{\nabla f_i(w^t)}^2}. 
\end{align*}
With this notation we have that
\begin{align} 
w^{t+1/2} & =  w^t - \frac{1}{1-\lambda} \frac{(f_i(w^t)-(1-\lambda)s^t)_+}{ 1 +\frac{1}{1-\lambda} \norm{\nabla f_i(w^t)}^2} \nabla f_i(w^t) \\
&= w^t - \Gamma_1 \nabla f_i(w^t) , \\
s^{t+1/2} & = (1-\lambda)s^t+   \frac{(f_i(w^t)-(1-\lambda)s^t)_+}{ 1 +\frac{1}{1-\lambda} \norm{\nabla f_i(w^t)}^2}\\
&= (1-\lambda)(s^t + \Gamma_1  ) .
\end{align}

In a completely analogous way, the closed form solution to~\eqref{eq:newtraph2ndslackhalfhalf} is
\begin{align} 
w^{t+1} & =  w^{t+1/2} - \frac{1}{1-\lambda} \frac{(q_{i,t}(w^{t+1/2})-(1-\lambda)s^{t+1/2})_+}{ 1 +\frac{1}{1-\lambda} \norm{\nabla q_{i,t}(w^{t+1/2})}^2} \cdot\nabla q_{i,t}(w^{t+1/2}) , \nonumber \\
s^{t+1} & = (1-\lambda)s^{t+1/2}   +   \frac{(q_{i,t}(w^{t+1/2})-(1-\lambda)s^{t+1/2})_+}{ 1 +\frac{1}{1-\lambda} \norm{\nabla q_{i,t}(w^{t+1/2})}^2}. 
\end{align}
Note that 
\begin{align*}
    q_{i,t}(w^{t+1/2}) & = f_i(w^t) - \left \langle \nabla f_i(w^t), \Gamma_1 \nabla f_i(w^t) \right\rangle + \frac 1 2 \left \langle \nabla^2 f_i(w^t) \Gamma_1 \nabla f_i(w^t), \Gamma_1 \nabla f_i(w^t)    \right \rangle\\
    & = f_i(w^t) - \Gamma_1  \norm{\nabla f_i(w^t)}^2  + \frac 1 2 \Gamma_1^2 \left \langle \nabla^2 f_i(w^t) \nabla f_i(w^t) ,\nabla f_i(w^t)   \right \rangle\\
\end{align*}
and 
\begin{align*}
    \nabla q_{i,t}(w^{t+1/2}) & = \nabla f_i(w^t) + \nabla^2 f_i(w^t)(w^{t+1/2} - w^t  )\\
    & = \nabla f_i(w^t) - \Gamma_1 \nabla^2 f_i(w^t) \nabla f_i(w^t).
\end{align*}
Denoting $\Gamma_2$ as in the statement of the lemma we conclude that
\begin{align} 
w^{t+1} & =  w^{t+1/2} - \frac{1}{1-\lambda} \frac{(q_{i,t}(w^{t+1/2})-(1-\lambda)s^{t+1/2})_+}{ 1 +\frac{1}{1-\lambda} \norm{\nabla q_{i,t}(w^{t+1/2})}^2} \cdot \nabla q_{i,t}(w^{t+1/2}) \nonumber \\
&= w^t - \Gamma_1 \nabla f_i(w^t) - \Gamma_2  \left(\nabla f_i(w^t) - \Gamma_1 \nabla^2 f_i(w^t) \nabla f_i(w^t) \right) , \\
s^{t+1} & = (1-\lambda)s^{t+1/2} +   \frac{(q_{i,t}(w^{t+1/2})-(1-\lambda)s^{t+1/2})_+}{ 1 +\frac{1}{1-\lambda} \norm{\nabla q_{i,t}(w^{t+1/2})}^2}\nonumber\\
& = (1-\lambda) \left((1-\lambda)(s^t + \Gamma_1  ) + \Gamma_2   \right)
\end{align}

\subsection{Proof of Lemma~\ref{lem:quadslackproj_l1}}
\label{proof_lma:quadslackproj_l1}
\vspace{-1mm}

The following Lemma gives a closed form for the two-step update for  \texttt{SP2L1}$^+$.
\begin{lemma}(\texttt{SP2L1}$^+$) \label{lem:quadslackproj_l1}
The $w^{t+1}$ and $s^{t+1}$ update
is given by
\begin{align*}
    w^{t+1} & =  w^t - (\Gamma_4+\Gamma_6) \nabla f_i(w^t) + \Gamma_6 \Gamma_4\nabla^2 f_i(w^t) \nabla f_i(w^t),   \\
    s^{t+1} 
    &= \left(\!\! \left( \!\! s^t - \frac{\lambda}{2(1-\lambda)}\! +  \Gamma_3   \!   \right)_+ \!\!- \frac{\lambda}{2(1-\lambda)} +  \Gamma_5   \!   \right)_+ \!,
\end{align*}
where 

\vspace{-1.2cm}
\begin{align*}
  \Gamma_3 &=   \frac{\left( f_i(w^t)- \left(s^t - \frac{\lambda}{2(1-\lambda)} \right)  \right)_+ }{1+\| \nabla f_i(w^t) \|^2  },\quad
  \Gamma_4 = \min\left\{  \Gamma_3, \frac{f_i(w^t)}{\| \nabla f_i(w^t) \|^2  }     \right\},\\
  \Gamma_5 &= \frac{\left( \Lambda_1- \left(s^t - \frac{\lambda}{2(1-\lambda)} \right)  \right)_+ }{1+\| \nabla f_i(w^t) - \Gamma_4\nabla^2 f_i(w^t) \nabla f_i(w^t) \|^2  },\\
  \Gamma_6 &= \min\left\{  \Gamma_5, \frac{\Lambda_1}{\| \nabla f_i(w^t) - \Gamma_4\nabla^2 f_i(w^t) \nabla f_i(w^t) \|^2  }\right\},\\
  \Lambda_1 &=  f_i(w^t )  -  \Gamma_4  \norm{\nabla f_i(w^t )}^2     +  \frac 1 2 \Gamma_4^2   \left \langle  \nabla^2 f_i(w^t ) \nabla  f_i(w^t ) ,  \nabla  f_i(w^t )    \right \rangle .
\end{align*}
\end{lemma}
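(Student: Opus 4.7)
The plan is to mirror the proof of Lemma~\ref{lem:quadslackproj} (SP2L2$^+$), first establishing an L1 analogue of Lemma~\ref{lem:slackL2ineqconst} for a unidimensional inequality-constrained projection, then applying it twice. Concretely, I would prove the auxiliary result that the solution of
\begin{align*}
\min_{w\in\R^d,\, s\geq 0}\; \tfrac{1}{2}\norm{w-w^0}^2 + \tfrac{1}{2}(s-s^0)^2 + \tilde\lambda s \;\; \text{s.t.}\;\; a^\top(w-w^0)+c \leq s
\end{align*}
(with $\tilde\lambda = \tfrac{\lambda}{2(1-\lambda)}$ after rescaling the $(1-\lambda)$ factor as in the L2 proof) has closed form $w' = w^0 - \gamma a$, $s' = (s^0 - \tilde\lambda + \gamma_0)_+$, where $\gamma_0 = \frac{(c - (s^0-\tilde\lambda))_+}{1+\|a\|^2}$ and $\gamma = \min\{\gamma_0,\, c/\|a\|^2\}$ when $c \geq 0$.

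To derive this, I would write the Lagrangian with multipliers $\mu\geq 0$ for the inequality constraint and $\eta\geq 0$ for $s\geq 0$, and analyze the KKT system. Stationarity in $w$ gives $w-w^0 = -\mu a$, so $\gamma = \mu$. Stationarity in $s$ gives $s = s^0 - \tilde\lambda + \mu + \eta$. Splitting into cases by complementary slackness yields three situations: (i) the linear constraint is inactive ($\mu=0$), which occurs when $c \leq s^0 - \tilde\lambda$; (ii) the linear constraint is active with $s>0$ ($\eta=0$), giving $\mu = \gamma_0$ from substituting $s$ into the constraint equality; (iii) both constraints active ($s=0$), which gives $\mu = c/\|a\|^2$. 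Checking which case is consistent (i.e.\ which makes the implicitly determined $s$ non-negative and $\mu \geq 0$) reduces to comparing $\gamma_0$ and $c/\|a\|^2$, producing the $\min$ and the $(\cdot)_+$ in the formulas for $\gamma$ and $s'$.

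With the auxiliary lemma in hand, I apply it to step~\eqref{eq:newtraph2ndslackhalf} first: using $a=\nabla q_{i,t}(w^t)=\nabla f_i(w^t)$ and $c = q_{i,t}(w^t) = f_i(w^t)$, the step yields $w^{t+1/2} = w^t - \Gamma_4\nabla f_i(w^t)$ and $s^{t+1/2} = (s^t - \tfrac{\lambda}{2(1-\lambda)}+\Gamma_3)_+$, matching $\Gamma_3,\Gamma_4$ in the statement. For the second step, I substitute the new linearization point: compute $q_{i,t}(w^{t+1/2}) = \Lambda_1$ exactly as in the proof of Lemma~\ref{lem:quadslackproj} and $\nabla q_{i,t}(w^{t+1/2}) = \nabla f_i(w^t) - \Gamma_4 \nabla^2 f_i(w^t)\nabla f_i(w^t)$, then reapply the auxiliary lemma with $(c,a,s^0) = (\Lambda_1, \nabla q_{i,t}(w^{t+1/2}), s^{t+1/2})$. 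This produces $\Gamma_5,\Gamma_6$ and the nested positive-parts in $s^{t+1}$, as well as the stated expression for $w^{t+1}$.

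The main obstacle is the case analysis for the auxiliary lemma: unlike the L2 setting, where the unconstrained optimum in $s$ naturally gave a single-branch formula, the L1 penalty combined with $s \geq 0$ yields a genuine kink, and one must verify that the $\min$ of the two candidate multipliers correctly picks the KKT-consistent branch. The subtle point is showing that $s^{t+1/2}$ becomes exactly zero precisely when $\Gamma_4 = f_i(w^t)/\|\nabla f_i(w^t)\|^2 < \Gamma_3$; this is verified by rewriting $s^t - \tfrac{\lambda}{2(1-\lambda)} + \Gamma_3 = \frac{(s^t - \tfrac{\lambda}{2(1-\lambda)})\|\nabla f_i(w^t)\|^2 + f_i(w^t)}{1+\|\nabla f_i(w^t)\|^2}$ and comparing signs with the same comparison that switches between $\Gamma_3$ and $f_i/\|\nabla f_i\|^2$ in the $\min$. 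Once this monotonicity is confirmed, the two iterations chain together routinely.
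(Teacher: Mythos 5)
Your proposal follows essentially the same route as the paper: complete the square to fold the $\frac{\lambda}{2}s$ term into a shifted quadratic in $s$, apply a closed-form unidimensional inequality-constrained projection lemma to each of the two linearize-and-project steps, and chain them by computing $q_{i,t}(w^{t+1/2})=\Lambda_1$ and $\nabla q_{i,t}(w^{t+1/2})$. The only difference is that you derive the auxiliary projection lemma yourself via a KKT case analysis (correctly, including the $\min$ and positive-part branches), whereas the paper outsources that step to a cited lemma; your version is simply more self-contained.
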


To solve~\eqref{eq:newtraph2ndslack_l1}, we consider the following two-step method similar to~\eqref{eq:newtraph2ndslackhalf} and~\eqref{eq:newtraph2ndslackhalfhalf}:
\begin{align}
w^{t+1/2},s^{t+1/2} & =  \underset{s \geq 0, \; w\in\R^d}{\argmin}\frac{1-\lambda}{2}\Delta_{t} + \frac{\lambda}{2} s\label{eq:newtraph2ndslackhalf_l1} \\
&\mbox{ s.t. }q_{i,t}(w^t) +\dotprod{\nabla q_{i,t}(w^t), w-w^t} 
\leq s. \nonumber
\end{align}
\begin{align}
w^{t+1}, s^{t+1} =  &\underset{s \geq 0, \; w\in\R^d}{\argmin}\frac{1-\lambda}{2}\Delta_{t+\frac{1}{2}} + \frac{\lambda}{2} s\label{eq:newtraph2ndslackhalfhalf_l1} \\
&\quad\mbox{s.t. }q_{i,t}(w^{t+1/2})+ \dotprod{\nabla q_{i,t}(w^{t+1/2}), w-w^{t+1/2}} 
\leq s. \nonumber
\end{align}

Note that 
\begin{align*}
    \frac{1-\lambda}{2} (s-s^t)^2 + \frac{\lambda}{2} s = \frac{1-\lambda}{2} \left( s - \left( s^t - \frac{\lambda}{2(1-\lambda)}   \right) \right)^2 + \text{constants w.r.t. }w\text{ and }s.
\end{align*}
Then, \eqref{eq:newtraph2ndslackhalf_l1} is equivalent to solving 
\begin{align}
w^{t+1/2},s^{t+1/2} & =  \underset{s \geq 0, \; w\in\R^d}{\argmin} \|w-w^t\|^2 +  \left( s - \left( s^t - \frac{\lambda}{2(1-\lambda)}   \right) \right)^2  \label{eq:newtraph2ndslackhalf_l1_prf} \\
&\mbox{ s.t. }q_{i,t}(w^t) +\dotprod{\nabla q_{i,t}(w^t), w-w^t} 
\leq s. \nonumber
\end{align}
It follows from Lemma C.4 of \textcolor{red}{[cite]} that the closed form solution to~\eqref{eq:newtraph2ndslackhalf_l1} is  
\begin{align*}
    w^{t+1/2} & = w^t - \min\left\{  \frac{\left( q_{i,t}(w^t)- \left(s^t - \frac{\lambda}{2(1-\lambda)} \right)  \right)_+ }{1+\| \nabla q_{i,t}(w^t) \|^2  }, \frac{q_{i,t}(w^t)}{\| \nabla q_{i,t}(w^t) \|^2  }     \right\} \nabla q_{i,t}(w^t),   \\
    s^{t+1/2} & = \left( \left(s^t - \frac{\lambda}{2(1-\lambda)} \right) +  \frac{\left( q_{i,t}(w^t)- \left(s^t - \frac{\lambda}{2(1-\lambda)} \right)  \right)_+ }{1+\| \nabla q_{i,t}(w^t) \|^2  }      \right)_+,
\end{align*}
where we denote $(x)_+ =\begin{cases}  x & \mbox{ if } x \geq 0 \\ 0 & \mbox{otherwise} \end{cases}$. 

Note that $q_{i,t}(w^t) = f_i(w^t)$ and $\nabla q_{i,t}(w^t) = \nabla f_i(w^t)$. To simplify the notation, denote
\begin{align*}
  \Gamma_3 =   \frac{\left( f_i(w^t)- \left(s^t - \frac{\lambda}{2(1-\lambda)} \right)  \right)_+ }{1+\| \nabla f_i(w^t) \|^2  },
\end{align*}
and 
\begin{align*}
    \Gamma_4 = \min\left\{  \Gamma_3, \frac{f_i(w^t)}{\| \nabla f_i(w^t) \|^2  }     \right\}.
\end{align*}
Then, we have
\begin{align*}
    w^{t+1/2} & = w^t - \Gamma_4 \nabla f_i(w^t),   \\
    s^{t+1/2} & = \left( \left(s^t - \frac{\lambda}{2(1-\lambda)} \right) +  \Gamma_3      \right)_+.
\end{align*}

In a similar way, we can get the closed form solution to~\eqref{eq:newtraph2ndslackhalfhalf_l1}, which is given as
\begin{align*}
    w^{t+1} & = w^{t+1/2} - \min\left\{  \frac{\left( q_{i,t}(w^{t+1/2})- \left(s^{t+1/2} - \frac{\lambda}{2(1-\lambda)} \right)  \right)_+ }{1+\| \nabla q_{i,t}(w^{t+1/2}) \|^2  }, \frac{q_{i,t}(w^{t+1/2})}{\| \nabla q_{i,t}(w^{t+1/2}) \|^2  }     \right\} \nabla q_{i,t}(w^{t+1/2}),   \\
    s^{t+1} & = \left( \left(s^{t+1/2} - \frac{\lambda}{2(1-\lambda)} \right) +  \frac{\left( q_{i,t}(w^{t+1/2})- \left(s^t - \frac{\lambda}{2(1-\lambda)} \right)  \right)_+ }{1+\| \nabla q_{i,t}(w^{t+1/2}) \|^2  }      \right)_+.
\end{align*}
Note that 
\begin{align*}
    q_{i,t}(w^{t+1/2}) & = f_i(w^t) - \left \langle \nabla f_i(w^t), \Gamma_4 \nabla f_i(w^t) \right\rangle + \frac 1 2 \left \langle \nabla^2 f_i(w^t) \Gamma_4 \nabla f_i(w^t), \Gamma_4 \nabla f_i(w^t)    \right \rangle\\
    & = f_i(w^t) - \Gamma_4 \norm{\nabla f_i(w^t)}^2  + \frac 1 2 \Gamma_4^2 \left \langle \nabla^2 f_i(w^t) \nabla f_i(w^t) ,\nabla f_i(w^t)   \right \rangle\\
    & \triangleq \Lambda_1
\end{align*}
and 
\begin{align*}
    \nabla q_{i,t}(w^{t+1/2}) & = \nabla f_i(w^t) + \nabla^2 f_i(w^t)(w^{t+1/2} - w^t  )\\
    & = \nabla f_i(w^t) - \Gamma_4\nabla^2 f_i(w^t) \nabla f_i(w^t).
\end{align*}
Again, to simplify the notation, we denote
\begin{align*}
  \Gamma_5 &=   \frac{\left( q_{i,t}(w^{t+1/2})- \left(s^t - \frac{\lambda}{2(1-\lambda)} \right)  \right)_+ }{1+\| \nabla q_{i,t}(w^{t+1/2}) \|^2  } \\
  & = \frac{\left( \Lambda_1- \left(s^t - \frac{\lambda}{2(1-\lambda)} \right)  \right)_+ }{1+\| \nabla f_i(w^t) - \Gamma_4\nabla^2 f_i(w^t) \nabla f_i(w^t) \|^2  },
\end{align*}
and
\begin{align*}
    \Gamma_6 &= \min\left\{  \Gamma_5, \frac{q_{i,t}(w^{t+1/2})}{\| \nabla q_{i,t}(w^{t+1/2}) \|^2  }     \right\}\\
    &=\min\left\{  \Gamma_5, \frac{\Lambda_1}{\| \nabla f_i(w^t) - \Gamma_4\nabla^2 f_i(w^t) \nabla f_i(w^t) \|^2  }\right\}.
\end{align*}
Then, we have
\begin{align*}
    w^{t+1} & = w^{t+1/2} - \Gamma_6 \nabla q_{i,t}(w^{t+1/2})\\
    &=w^t - \Gamma_4 \nabla f_i(w^t) - \Gamma_6 \left(\nabla f_i(w^t) - \Gamma_4\nabla^2 f_i(w^t) \nabla f_i(w^t) \right)\\
    &= w^t - (\Gamma_4+\Gamma_6) \nabla f_i(w^t) + \Gamma_6 \Gamma_4\nabla^2 f_i(w^t) \nabla f_i(w^t),   \\
    s^{t+1} & = \left( \left(s^{t+1/2} - \frac{\lambda}{2(1-\lambda)} \right) +  \Gamma_5      \right)_+\\
    &= \left( \left(\left( \left(s^t - \frac{\lambda}{2(1-\lambda)} \right) +  \Gamma_3      \right)_+ - \frac{\lambda}{2(1-\lambda)} \right) +  \Gamma_5      \right)_+ .
\end{align*}

\subsection{Proof of Lemma~\ref{lem:quadslackproj_l1_unreg}}

The following lemma gives a closed form for two step method \texttt{SP2max}$^+$.

\begin{lemma}(\texttt{SP2max}$^+$) \label{lem:quadslackproj_l1_unreg}
The $w^{t+1}$ and $s^{t+1}$ update 
is given by
\begin{align*}
    w^{t+1} &= w^t - \left(\Gamma_1 + \Gamma_3\right) \nabla f_i(w^t)+ \Gamma_3 \Gamma_1\nabla^2 f_i(w^t) \nabla f_i(w^t),\\
    s^{t+1} &= \max\left\{\Gamma_2  - \frac{\lambda}{2(1-\lambda)} \norm{\nabla f_i(w^t) - \Gamma_1\nabla^2 f_i(w^t) \nabla f_i(w^t)}^2,0  \right\},
\end{align*}
where 

\vspace{-1.2cm}
\begin{align*}
\Gamma_1 &= \min \left\{ \frac{f_i(w^t)}{\norm{\nabla f_i(w^t)}^2}, \frac{\lambda}{2(1-\lambda)}  \right\},\\
\Gamma_2 & = f_i(w^t) -\Gamma_1 \norm{\nabla f_i(w^t)}^2  
+ \tfrac{1}{2} \Gamma_1^2 \dotprod{\nabla^2 f_i(w^t)  \nabla f_i(w^t),  \nabla f_i(w^t)},\\
    \Gamma_3 &= \min \left\{ \frac{\Gamma_2}{\norm{\nabla f_i(w^t) - \Gamma_1\nabla^2 f_i(w^t) \nabla f_i(w^t)}^2}, \frac{\lambda}{2(1-\lambda)}  \right\}.
\end{align*}

\end{lemma}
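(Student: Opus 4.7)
The plan is to recognize that the two-step method underlying \texttt{SP2max}$^+$ reduces to two successive instances of a common prototype subproblem,
\begin{equation*}
\min_{w \in \R^d,\, s \geq 0}\; \tfrac{1-\lambda}{2}\|w - w_0\|^2 + \tfrac{\lambda}{2} s \quad \text{s.t.}\quad a^\top (w - w_0) + c \leq s,
\end{equation*}
and to solve this prototype in closed form via KKT. First I would introduce multipliers $\mu \geq 0$ for the inequality and $\nu \geq 0$ for $s \geq 0$, write the Lagrangian, and obtain the stationarity conditions $w = w_0 - \tfrac{\mu}{1-\lambda} a$ and $\mu + \nu = \tfrac{\lambda}{2}$. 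Complementary slackness splits into two regimes: if $s > 0$ then $\nu = 0$ and $\mu = \tfrac{\lambda}{2}$, yielding step size $\widetilde{\lambda} \eqdef \tfrac{\lambda}{2(1-\lambda)}$; if $s = 0$ then the linear inequality is tight and the step size equals $c/\|a\|^2$. Matching these against feasibility $0 \leq \mu \leq \tfrac{\lambda}{2}$ unifies them into the single formula: when $c \geq 0$, the step size is $\min\{c/\|a\|^2,\, \widetilde{\lambda}\}$ and the optimal $s$ equals $c - (\text{step size})\cdot\|a\|^2$, which is strictly positive in the first regime and exactly $0$ in the second.

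Next I would instantiate this prototype twice. For the first step, with $w_0 = w^t$, $a = \nabla f_i(w^t)$, $c = f_i(w^t) \geq 0$, the prototype yields $w^{t+1/2} = w^t - \Gamma_1 \nabla f_i(w^t)$ with $\Gamma_1$ exactly as stated. For the second step I would compute the linearization data at $w^{t+1/2}$ explicitly: using $w^{t+1/2} - w^t = -\Gamma_1 \nabla f_i(w^t)$ in the definitions of $q_{i,t}$ and $\nabla q_{i,t}$ yields $c = q_{i,t}(w^{t+1/2}) = \Gamma_2$ and $a = \nabla q_{i,t}(w^{t+1/2}) = \nabla f_i(w^t) - \Gamma_1 \nabla^2 f_i(w^t)\nabla f_i(w^t)$. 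Re-applying the prototype produces $\Gamma_3$ and $w^{t+1} = w^{t+1/2} - \Gamma_3 (\nabla f_i(w^t) - \Gamma_1 \nabla^2 f_i(w^t)\nabla f_i(w^t))$; expanding and collecting terms yields the stated coefficients $(\Gamma_1 + \Gamma_3)$ and $\Gamma_3 \Gamma_1$.

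The main obstacle will be the KKT analysis of the prototype, and specifically showing that the $s \geq 0$ constraint interacts with the slack penalty in precisely the right way to produce the $\min$ with $\widetilde{\lambda}$, with no feasibility violation in either regime. Once the prototype is pinned down, the rest is bookkeeping: tracking $q_{i,t}$ and its gradient at $w^{t+1/2}$, and composing the two updates. The outer $\max\{\cdot,\, 0\}$ on $s^{t+1}$ falls out automatically because in the $s = 0$ regime the expression $\Gamma_2 - \widetilde{\lambda}\|a\|^2$ is necessarily nonpositive, while in the $s > 0$ regime it coincides exactly with the nonnegative optimal $s^{t+1}$.
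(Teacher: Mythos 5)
Your proposal follows essentially the same route as the paper: reduce each of the two linearized projections to the prototype subproblem with a single linear inequality and a nonnegative slack, solve that prototype in closed form, evaluate $q_{i,t}$ and $\nabla q_{i,t}$ at $w^{t+1/2}$ to get $\Gamma_2$ and the corrected direction, and compose the two updates. The only difference is that you derive the prototype's closed form yourself via KKT (correctly, including the two complementary-slackness regimes that produce the $\min$ with $\tfrac{\lambda}{2(1-\lambda)}$ and the outer $\max\{\cdot,0\}$), whereas the paper simply invokes an external lemma for it.
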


\label{proof_lma:quadslackproj_l1_unreg}
\vspace{-1mm}
To solve~\eqref{eq:newtraph2ndslack_l1_unreg}, we again consider a two step method similar to~\eqref{eq:newtraph2ndslackhalf_l1} and~\eqref{eq:newtraph2ndslackhalfhalf_l1}: 
\begin{align}
w^{t+1/2},s^{t+1/2} & =  \underset{s \geq 0, \; w\in\R^d}{\argmin}\frac{1-\lambda}{2}\norm{w-w^t}^2+ \frac{\lambda}{2} s\label{eq:newtraph2ndslackhalf_l1_unreg} \\
&\mbox{ s.t. }q_{i,t}(w^t) +\dotprod{\nabla q_{i,t}(w^t), w-w^t} 
\leq s. \nonumber
\end{align}
\begin{align}
&w^{t+1}, s^{t+1} =  \underset{s \geq 0, \; w\in\R^d}{\argmin}\frac{1-\lambda}{2}\norm{w-w^{t+1/2}}^2+ \frac{\lambda}{2} s\label{eq:newtraph2ndslackhalfhalf_l1_unreg} \\
&\quad\mbox{s.t. }q_{i,t}(w^{t+1/2})+ \dotprod{\nabla q_{i,t}(w^{t+1/2}), w-w^{t+1/2}} 
\leq s. \nonumber
\end{align}

Note that \eqref{eq:newtraph2ndslackhalf_l1_unreg} is equivalent to solving 
\begin{align}
w^{t+1/2},s^{t+1/2} & =  \underset{s \geq 0, \; w\in\R^d}{\argmin}\frac{1}{2}\norm{w-w^t}^2+ \frac{\lambda}{2(1-\lambda)} s\label{eq:newtraph2ndslackhalf_l1_unreg_prf} \\
&\mbox{ s.t. }q_{i,t}(w^t) +\dotprod{\nabla q_{i,t}(w^t), w-w^t} 
\leq s. \nonumber
\end{align}
It follows from Lemma 3.1 in \textcolor{red}{[cite]} that the closed form solution to~\eqref{eq:newtraph2ndslackhalf_l1_unreg_prf} is 
\begin{align*}
    w^{t+1/2} & = w^t - \min \left\{ \frac{q_{i,t}(w^t)}{\norm{\nabla q_{i,t}(w^t)}^2}, \frac{\lambda}{2(1-\lambda)}  \right\} \nabla q_{i,t}(w^t) \\
    &= w^t - \min \left\{ \frac{f_i(w^t)}{\norm{\nabla f_i(w^t)}^2}, \frac{\lambda}{2(1-\lambda)}  \right\} \nabla f_i(w^t)\\
    &= w_t - \Sigma_1 \nabla f_i(w^t),\\
    s^{t+1/2} & = \max\left\{ q_{i,t}(w^t)  - \frac{\lambda}{2(1-\lambda)} \norm{\nabla q_{i,t}(w^t)}^2,0  \right\} \\
    &= \max\left\{ f_i(w^t)  - \frac{\lambda}{2(1-\lambda)} \norm{\nabla f_i(w^t)}^2,0  \right\},
\end{align*}
where we denote 
\begin{align*}
    \Gamma_1 = \min \left\{ \frac{f_i(w^t)}{\norm{\nabla f_i(w^t)}^2}, \frac{\lambda}{2(1-\lambda)}  \right\}. 
\end{align*}

Note that 
\begin{align*}
    q_{i,t}(w^{t+1/2}) & = f_i(w^t) -\Gamma_1 \norm{\nabla f_i(w^t)}^2  
+ \tfrac{1}{2} \Gamma_1^2 \dotprod{\nabla^2 f_i(w^t)  \nabla f_i(w^t),  \nabla f_i(w^t)}\\
&\eqdef \Gamma_2,
\end{align*}
and 
\begin{align*}
    q_{i,t}(w^{t+1/2}) & = \nabla f_i(w^t) + \nabla^2 f_i(w^t)(w^{t+1/2} - w^t  )\\
    & = \nabla f_i(w^t) - \Gamma_1\nabla^2 f_i(w^t) \nabla f_i(w^t).
\end{align*}

Similarly, we have the closed form solution to~\eqref{eq:newtraph2ndslackhalfhalf_l1_unreg} given as
\begin{align*}
    w^{t+1} & = w^{t+1/2} - \min \left\{ \frac{q_{i,t}(w^{t+1/2})}{\norm{\nabla q_{i,t}(w^{t+1/2})}^2}, \frac{\lambda}{2(1-\lambda)}  \right\} \nabla q_{i,t}(w^{t+1/2}) \\
    &= w^{t+1/2} - \min \left\{ \frac{\Gamma_2}{\norm{\nabla f_i(w^t) - \Gamma_1\nabla^2 f_i(w^t) \nabla f_i(w^t)}^2}, \frac{\lambda}{2(1-\lambda)}  \right\} \left( \nabla f_i(w^t) - \Gamma_1\nabla^2 f_i(w^t) \nabla f_i(w^t) \right)\\
    & = w^{t+1/2} - \Gamma_3 \left( \nabla f_i(w^t) - \Gamma_1\nabla^2 f_i(w^t) \nabla f_i(w^t) \right)\\
    &= w^t - \left(\Gamma_1 + \Gamma_3\right) \nabla f_i(w^t)+ \Gamma_3 \Gamma_1\nabla^2 f_i(w^t) \nabla f_i(w^t)\\
    s^{t+1} & = \max\left\{ q_{i,t}(w^{t+1/2})  - \frac{\lambda}{2(1-\lambda)} \norm{\nabla q_{i,t}(w^{t+1/2})}^2,0  \right\} \\
    &= \max\left\{\Gamma_2  - \frac{\lambda}{2(1-\lambda)} \norm{\nabla f_i(w^t) - \Gamma_1\nabla^2 f_i(w^t) \nabla f_i(w^t)}^2,0  \right\}   .
\end{align*}

\subsection{Proof of Lemma~\ref{lma:GLM_L1_unreg}}
\label{proof_lma:GLM_L1_unreg}
\vspace{-1mm}

In GLMs, the unregularized problem~\eqref{eq:newtraph2ndslack_l1_unreg} becomes
\begin{align}
w^{t+1} , s^{t+1}  =  &\underset{s \geq 0, \; w\in\R^d}{\argmin}\frac{1}{2}\norm{w-w^t}^2
 + \widetilde{\lambda} s\nonumber \\
&\mbox{ s.t. }f_i + \dotprod{a_i x_i, w-w^t}  
+ \tfrac{1}{2} \dotprod{h_i x_i x_i^\top (w-w^t), w-w^t}
\leq s, \label{eq:newtraph2ndslack_l1_unreg_glm}
\end{align}
where $\widetilde{\lambda} \eqdef \frac{\lambda}{2(1-\lambda)}$, and we denote $f_i=f_i(w^t)$, $a_i  \eqdef \phi_i'(x_i^\top w - y_i)$, $h_i  \eqdef \phi_i''(x_i^\top w - y_i)$ for short.

Denote $\triangle  \eqdef w-w^t$. Then, problem \eqref{eq:newtraph2ndslack_l1_unreg_glm} reduces to
\begin{align}
&\underset{s \geq 0, \; \triangle\in\R^d}{\min}\frac{1}{2}\norm{\triangle}^2
 + \widetilde{\lambda} s\nonumber \\
&~~~~~\mbox{ s.t. }f_i + a_i x_i^\top \triangle  
+ \tfrac{1}{2} h_i \triangle^\top x_i x_i^\top \triangle
\leq s. \label{eq:newtraph2ndslack_l1_unreg_glm_tri}
\end{align}
Note that we want to minimize $\|\triangle\|^2$. Together with the above constraint, we can conclude that $\triangle$ must be a multiple of $x_i$ since any other component will not help satisfy the constraint but increase $\|\triangle\|^2$. Let $\triangle = c x_i$ and $\ell = \|x_i\|^2$, then problem~\eqref{eq:newtraph2ndslack_l1_unreg_glm_tri} becomes 
\begin{align}
&\underset{s \geq 0, \; c\in\R}{\min}\frac{1}{2}c^2 \ell
 + \widetilde{\lambda} s\nonumber \\
&~~~~~\mbox{ s.t. }f_i + a_i  \ell c  
+ \tfrac{1}{2} h_i  \ell^2 c^2
\leq s. \label{eq:newtraph2ndslack_l1_unreg_glm_tri_c}
\end{align}
The corresponding Lagrangian function is then given as
\begin{align*}
    L(s,c, \nu_1, \nu_2) = \frac{1}{2}c^2 \ell
 + \widetilde{\lambda} s + \nu_1(f_i + a_i  \ell c  
+ \tfrac{1}{2} h_i  \ell^2 c^2 -s ) - \nu_2 s,
\end{align*}
where $\nu_1,\nu_2 \geq 0$ are the Lagrangian multipliers.
The KKT conditions are thus
\begin{align*}
    f_i + a_i  \ell c  
+ \tfrac{1}{2} h_i  \ell^2 c^2 -s  &\leq 0, \\
s\geq 0,~~ \nu_1 \geq 0,~~ \nu_2 &\geq 0,\\
\nu_1  (f_i + a_i  \ell c  
+ \tfrac{1}{2} h_i  \ell^2 c^2 -s ) = 0, ~~\nu_2 s &= 0 ,\\
\widetilde{\lambda} - \nu_1 - \nu_2 &= 0,\\
\ell c + \nu_1 a_i \ell + \nu_1 h_i \ell^2 c &= 0. 
\end{align*}

By checking the complementary conditions, the solution to the above KKT equations has three cases, which are summarized below.
\begin{enumerate}
    \item[Case I:]  The Lagragian multiplier $\nu_2 =0$. In which case $\nu_1^\star = \widetilde{\lambda}$, $\nu_2^\star = 0$, $c^\star = -\frac{\widetilde{\lambda} a_i }{1+ \widetilde{\lambda} h_i \ell  }$,  and 
   $$s^\star = f_i + a_i \ell c^\star + \frac 1 2 h_i \ell^2 {c^\star}^2 = f_i  -\frac{\widetilde{\lambda} a_i^2 \ell}{1+ \widetilde{\lambda} h_i \ell  }  +  \frac{ h_i\widetilde{\lambda}^2 a_i^2 \ell^2}{2(1+ \widetilde{\lambda} h_i \ell )^2 },$$ 
    which is feasible if $s^\star \geq 0$. The resulting objective function is $\frac 1 2 {c^\star}^2 \ell + \widetilde{\lambda}s^\star$, which is $\geq 0$.  

\item[Case II:] The Lagragian multiplier $\nu_1 =0$. In which case $\nu_1^\star = 0$, $\nu_2^\star = \widetilde{\lambda}$, $c^\star = 0$, $s^\star = 0$, which is feasible if $f_i = 0$. The objective function is $0$ in this case and the variable $w$ is unchanged since $w -w^t= c^\star x_i = 0$.

\item[Case III:] Neither Lagragian multiplier is zero. In which case there are two possible solutions for $c$ given by  $c^\star = \frac{-a_i  \pm \sqrt{a_i^2  - 2h_i  f_i}  }{h_i\ell }$, $\nu_1^\star = -\frac{ c}{a_i + h_i \ell c}$, $\nu_2^\star = \widetilde{\lambda} + \frac{ c}{a_i  + h_i \ell c}$, $s^\star =0$. Note that 
\[a_i +h_i\ell c = \pm\sqrt{a_i^2-2h_if_i}.\]
Consequently to guarantee that the Lagrangian multipliers $\nu_1$ and $\nu_2$ are non-negative, we must have
$c^\star = \frac{-a_i + \sqrt{a_i^2  - 2h_i  f_i}  }{h_i\ell }$
and in this case the objective function equals $\frac 1 2 {c^\star}^2 \ell \geq 0$.
\end{enumerate}
As a summary, if $f_i = 0$, Case II is the optimal solution. Alternatively if $f_i>0$ and
if $$\widetilde{s} = f_i  -\frac{\widetilde{\lambda} a_i^2 \ell}{1+ \widetilde{\lambda} h_i \ell  }  +  \frac{ h_i\widetilde{\lambda}^2 a_i^2 \ell^2}{2(1+ \widetilde{\lambda} h_i \ell )^2 }$$
is non-negative 
then Case I is the optimal solution. Otherwise, Case III with $c^\star = \frac{-a_i + \sqrt{a_i^2  - 2h_i  f_i}  }{h_i \ell}$ is the optimal solution.

Therefore, the optimal solution to~\eqref{eq:newtraph2ndslack_l1_unreg_glm} is then  
\begin{align*}
    w^{t+1} &= w^t + c^\star x_i ,\\
    s^{t+1} &= \max\left\{ \widetilde{s},~0 \right\},
\end{align*}
where
\begin{align*}
    c^\star = \begin{cases}
    0, \quad &\text{if } f_i = 0\\
    -\frac{\widetilde{\lambda} a_i }{1+ \widetilde{\lambda} h_i \ell  }, &\text{if } f_i>0 \text{ and }  \widetilde{s}\geq 0,\\
\frac{-a_i + \sqrt{a_i^2  - 2h_i  f_i}  }{h_i\ell}, & \text{otherwise.}
    \end{cases}
\end{align*}

\section{Additional Numerical Experiments}
\label{sec:add_exp}

\subsection{Non-convex problems}
For the non-convex experiements, 
we used the Python Package \texttt{pybenchfunction} available on github
\url{Python_Benchmark_Test_Optimization_Function_Single_Objective}.

\subsubsection{PermD$\beta^+$ is an incorrect implementation of PermD$\beta$. }
\label{sec:PermD}
 We note here that the PermD$\beta^+$ implemented  is given by 
$$  \texttt{PermD}\beta^+(x) \eqdef  \sum_{i=1}^{d} \sum_{j=1}^d \left((j^i +\beta)\left(\left(\frac{x_j}{j} \right)^i -1 \right) \right)^2.$$
which is different than the standard \texttt{PermD}$\beta$ function which is given by
$$  \texttt{PermD}\beta(x) \eqdef  \sum_{i=1}^{d} \left(\sum_{j=1}^d (j^i +\beta)\left(\left(\frac{x_j}{j} \right)^i -1 \right) \right)^2.$$
We believe this is a small mistake, which is why we have introduced the plus in $\texttt{PermD}\beta^+$ to distinguish this function from the standard \texttt{PermD}$\beta$ function. Yet, the $\texttt{PermD}\beta^+$ is still an interesting non-convex problem, and thus we have used it in our experiments despite this small alteration.

\subsubsection{The Levy N. 13 and Rosenbrock problems}
\label{secapp:nonconvextoy}
\vspace{-1mm}

Here we provide two additional experiments on the non-convex function Levy N. 13 and Rosenbrock that complement the findings in~\ref{sec:nonconvextoy}.

For the Levy N. 13 function in Figure~\ref{fig:levy} we have that again \texttt{SP2} converges in $10$ epochs to the global minima. In contrast Newton's method converges immediately to a local maxima, that can be easily seen on the surface plot of the right of Figure~\ref{fig:levy}.

The one problem where \texttt{SP2} was not the fastest was on the Rosenbrock function, see Figure~\ref{fig:rosenbrock}. Here \texttt{Newton} was the fastest, converging in under 10 epochs. But note, this problem was designed to emphasize the advantages of Newton over gradient descent.

\begin{figure}[t]
\centering
\includegraphics[width = 0.3\textwidth]{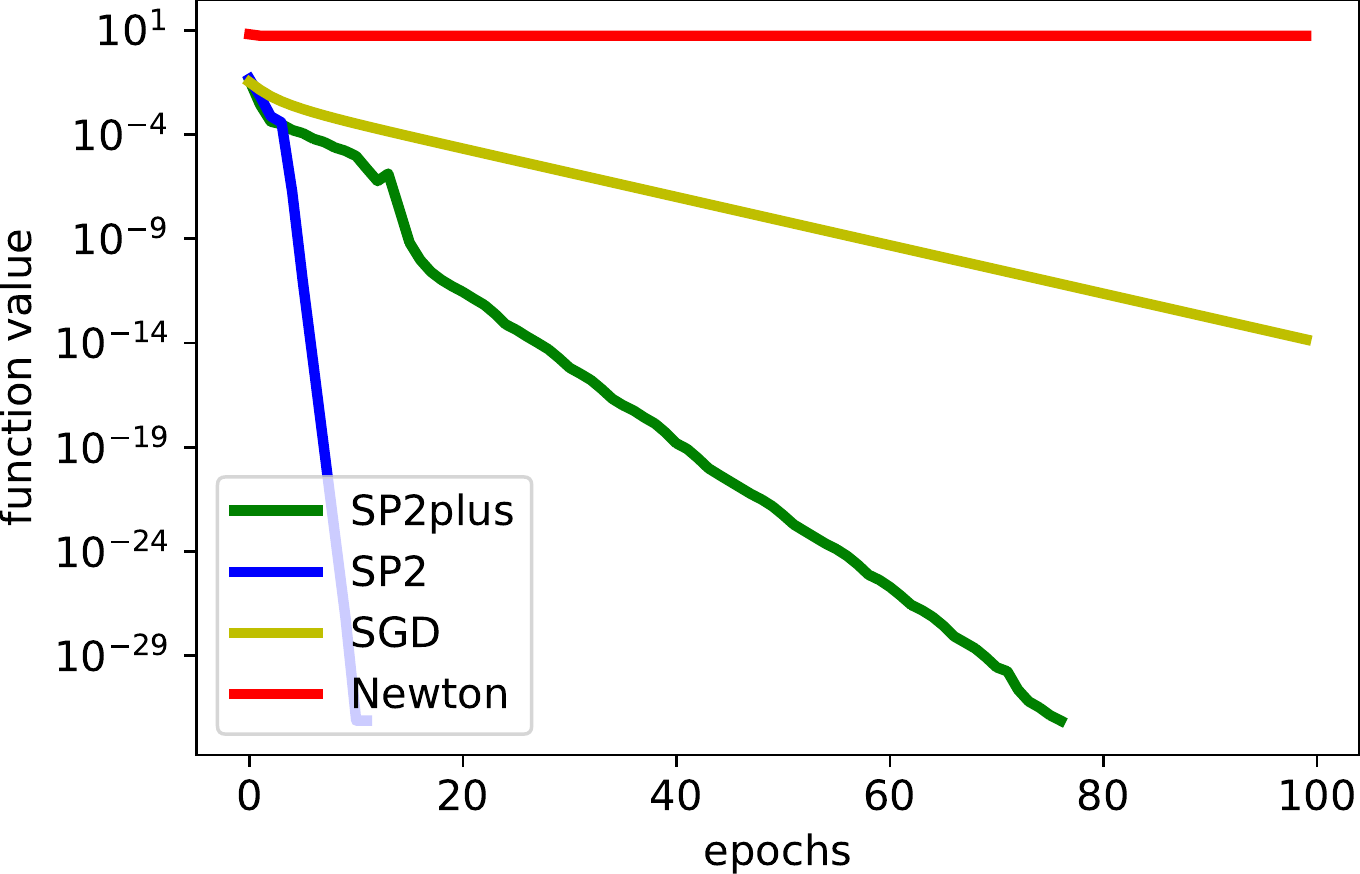}
\includegraphics[width = 0.3\textwidth]{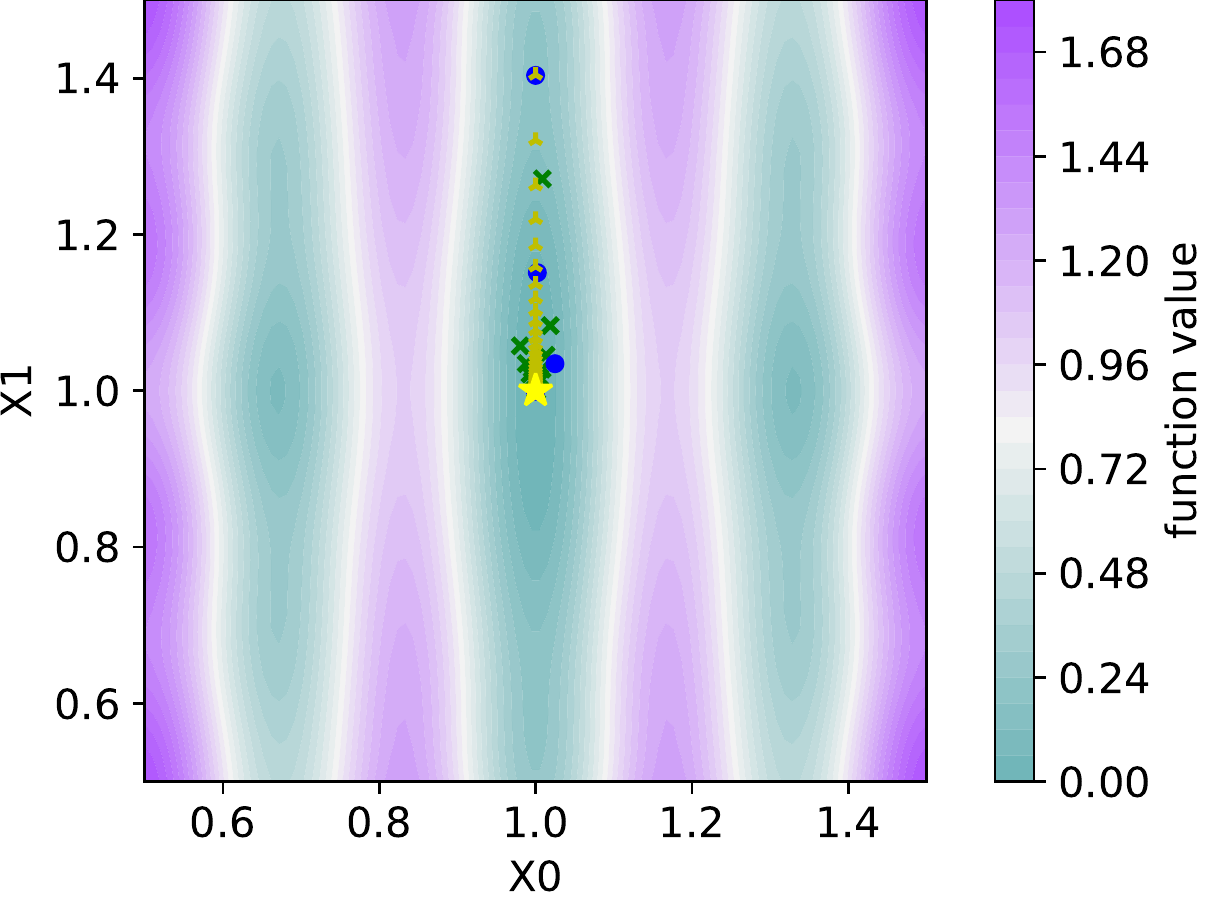}
\includegraphics[width = 0.3\textwidth]{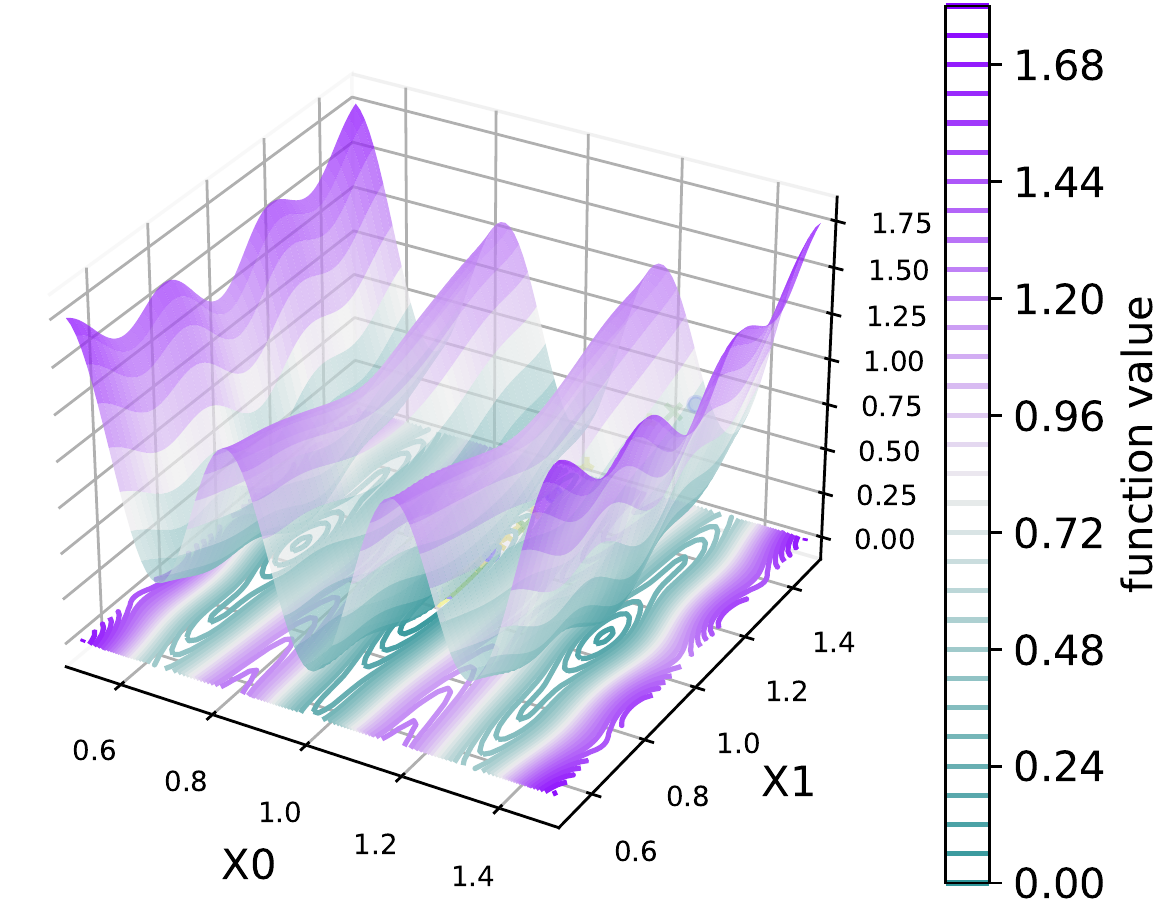}
\caption{The Levy N. 13 function where Left: we plot $f(x)$ across epochs Middle: level set plot, Right: Surface plot.  \texttt{SP2} is in \textcolor{blue}{blue }, \texttt{SP2$^+$} is in \textcolor{darkgreen}{green },  \texttt{SGD} is in \textcolor{citrine}{yellow } and  \texttt{Newton} is in \textcolor{red}{red }.}
\label{fig:levy}
\end{figure}

\begin{figure}[t]
\centering
\includegraphics[width = 0.3\textwidth]{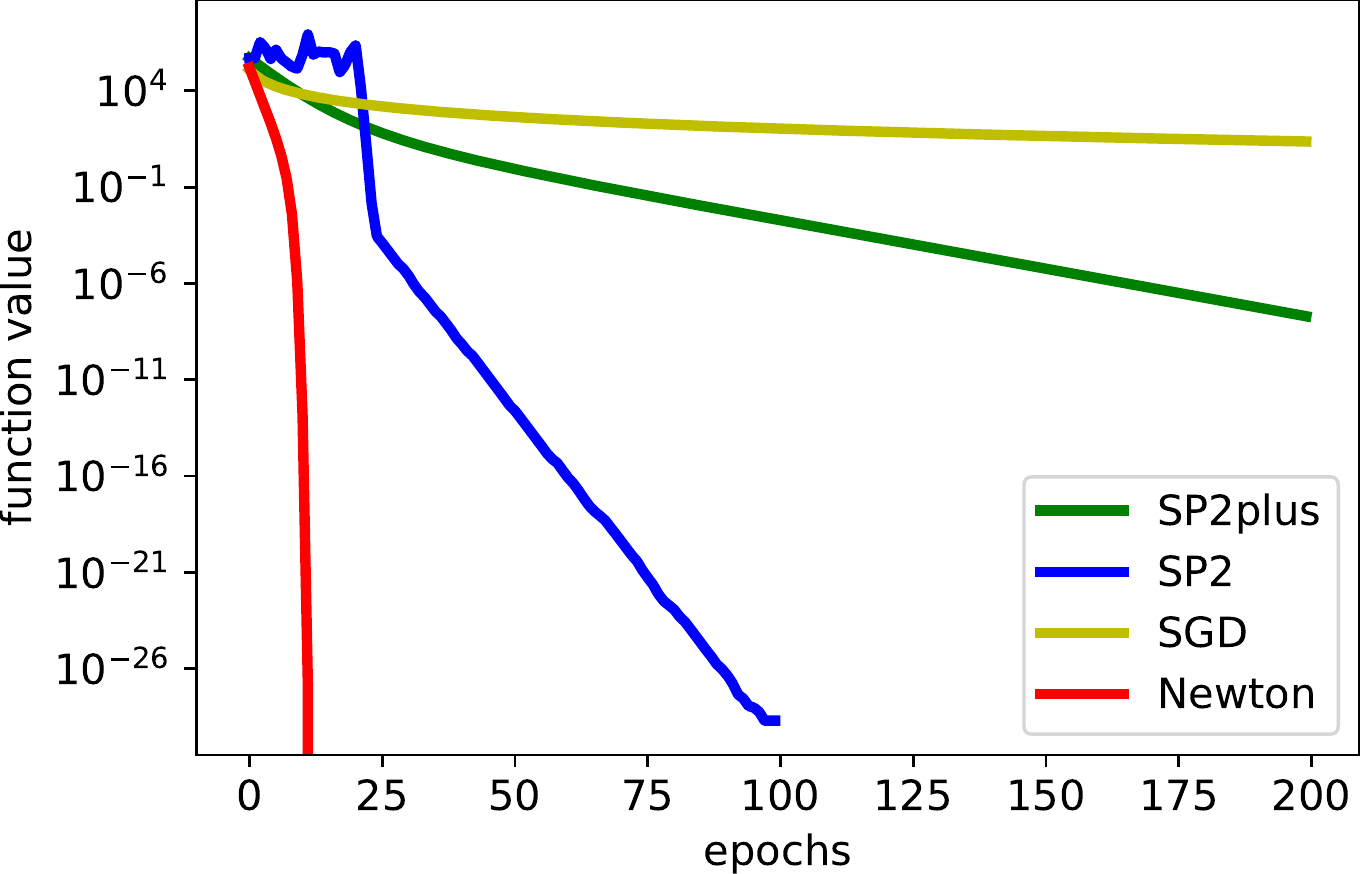}
\includegraphics[width = 0.3\textwidth]{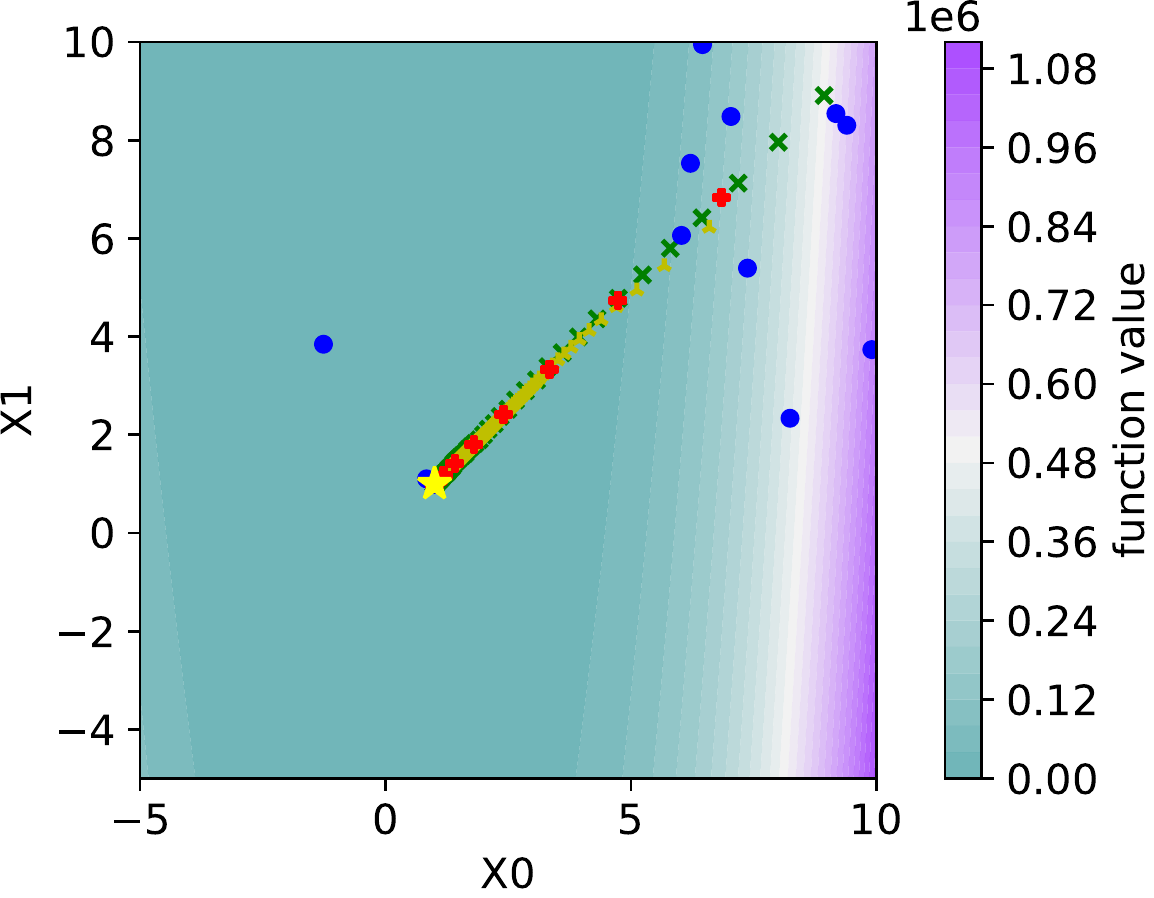}
\includegraphics[width = 0.3\textwidth]{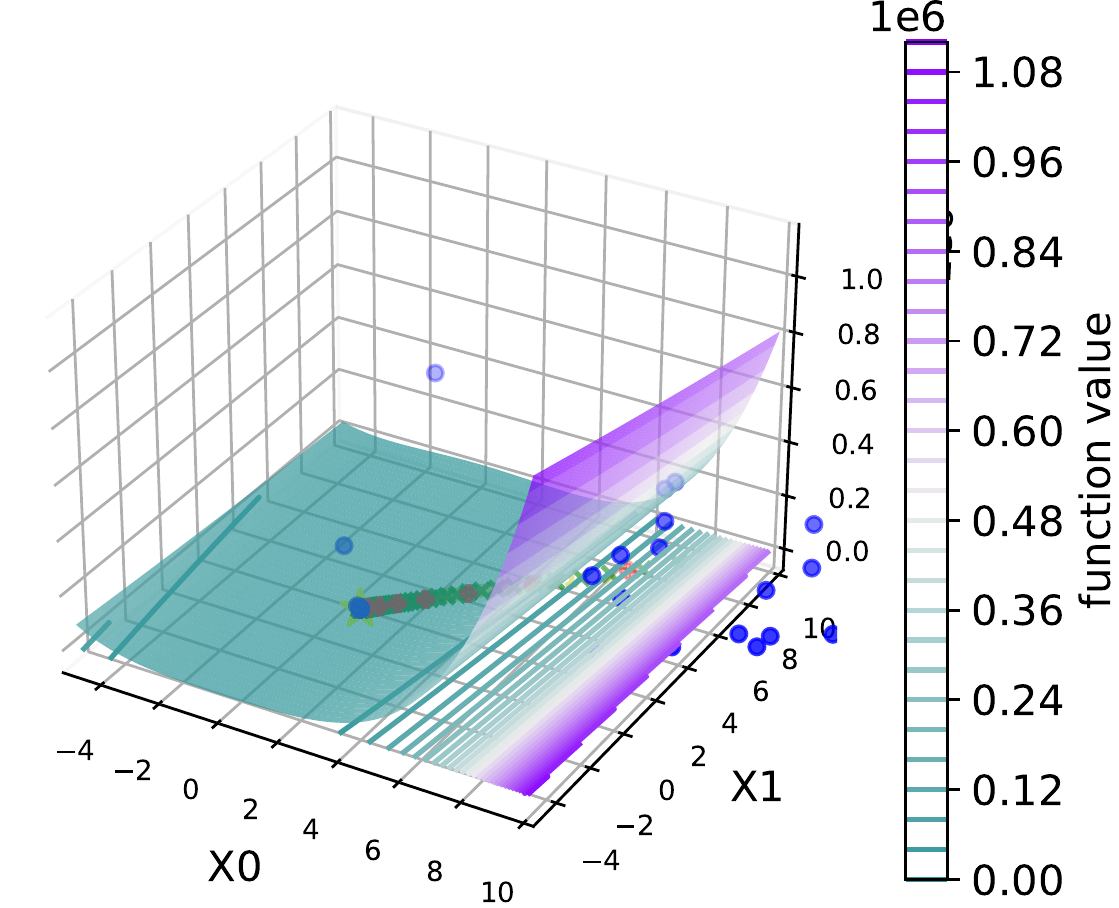}
\caption{The Rosenbrock function where Left: we plot $f(x)$ across epochs Middle: level set plot, Right: Surface plot.  \texttt{SP2} is in \textcolor{blue}{blue }, \texttt{SP2$^+$} is in \textcolor{darkgreen}{green },  \texttt{SGD} is in \textcolor{citrine}{yellow } and  \texttt{Newton} is in \textcolor{red}{red }.}
\label{fig:rosenbrock}
\end{figure}

\subsection{Additional Convex Experiments}

We set the desired tolerance for each algorithm to $0.01$, and set the maximum number of epochs for each algorithm to $200$ in colon-cancer and 30 in mushrooms. 
To choose an optimal slack parameter $\lambda$ for \texttt{SP2L2}$^+$, \texttt{SP2L1}$^+$, and \texttt{SP2max}$^+$, we test these three methods on a uniform grid $\lambda \in \{0.1,0.2, \ldots, 0.9\}$ with $\sigma = 0.001$. The gradient norm and loss evaluated at each epoch are presented in Figures~\ref{fig:colon_grad_diff_lamb}-\ref{fig:mush_loss_diff_lamb} (see Appendix~\ref{sec:add_exp}). It can be seen that \texttt{SP2L2}$^+$ performs best when $\lambda = 0.9$ in colon-cancer and $\lambda = 0.1$ in mushrooms, \texttt{SP2L1}$^+$ and \texttt{SP2max}$^+$ perform best when $\lambda = 0.1$ in both data sets. Therefore, we set $\lambda = 0.9$ for \texttt{SP2L2}$^+$ in colon-cancer and fix $\lambda = 0.1$ in other cases.

Under the same setting as in Section~\ref{sec:convexexp}, we also compare the \texttt{SP2max} and \texttt{SP2max}$^+$ methods on a  grid $\lambda =[0.001~ 0.01:0.01:0.05]$ with $\sigma = 0$. The gradient norm and loss evaluated at each epoch are presented in Figures~\ref{fig:colon_grad_diff_lamb_max}-\ref{fig:colon_loss_diff_lamb_max} (see Appendix~\ref{sec:add_exp}). As we observe, \texttt{SP2max}$^+$ always outperforms the \texttt{SP2max} method.

\begin{figure}[t]
\begin{minipage}{0.32\linewidth}
\centering
\includegraphics[width=1.7in]{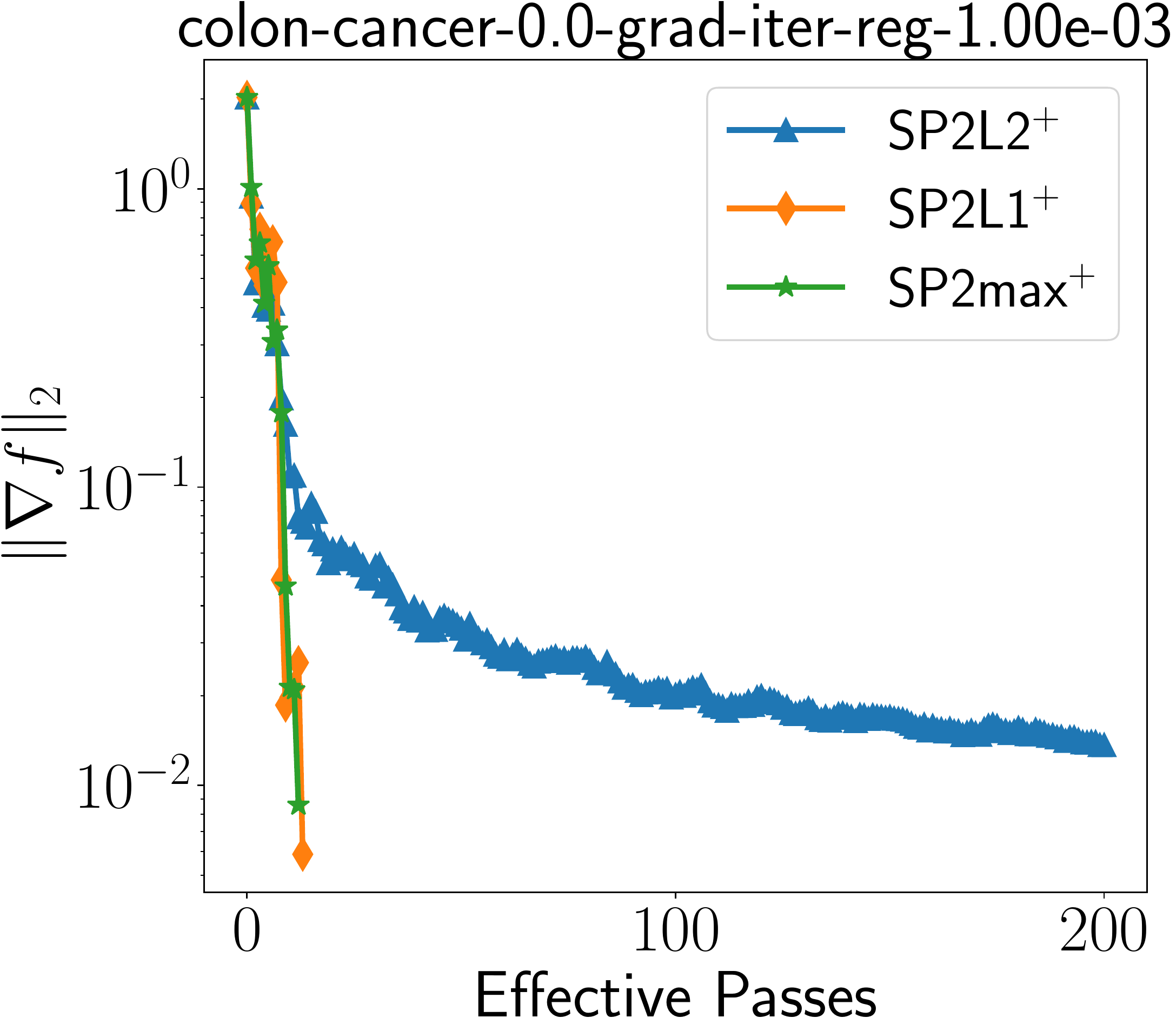}
\centerline{\small{(a) $\lambda = 0.1$}}
\end{minipage}
\hfill
\begin{minipage}{0.32\linewidth}
\centering
\includegraphics[width=1.7in]{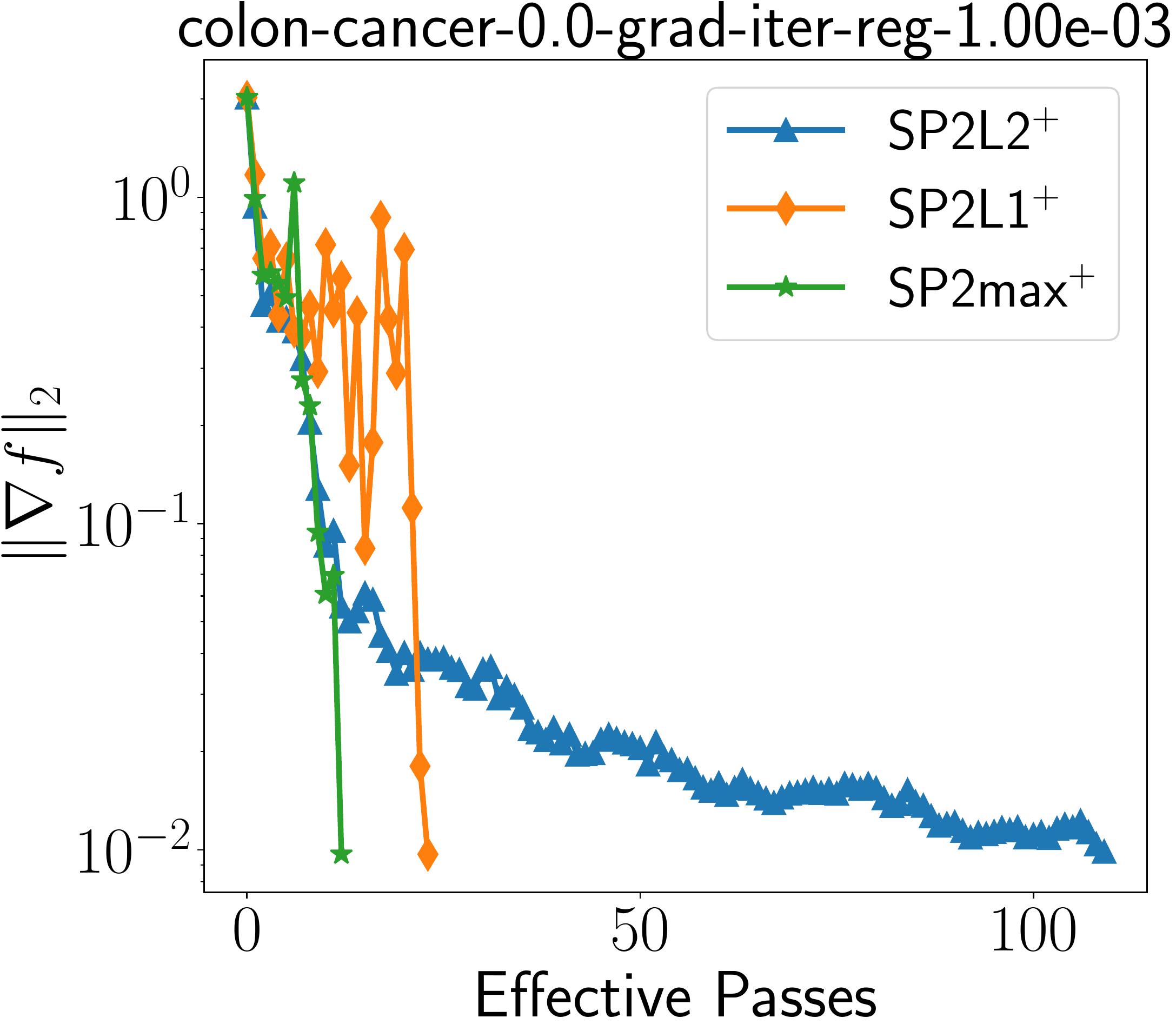}
\centerline{\small{(b) $\lambda = 0.2$}}
\end{minipage}
\hfill
\begin{minipage}{0.32\linewidth}
\centering
\includegraphics[width=1.7in]{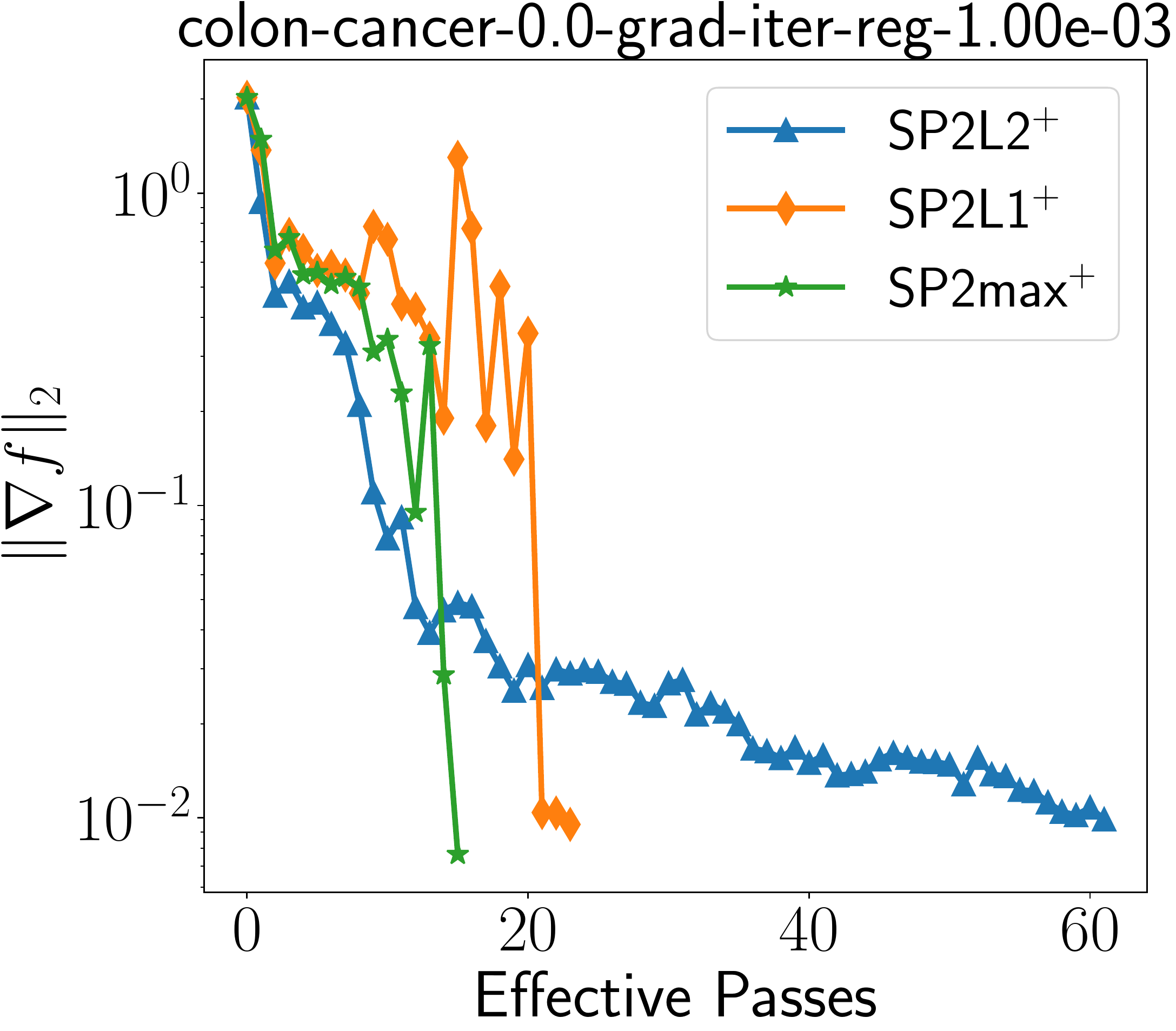}
\centerline{\small{(c) $\lambda = 0.3$}}
\end{minipage}
\\
\begin{minipage}{0.32\linewidth}
\centering
\includegraphics[width=1.7in]{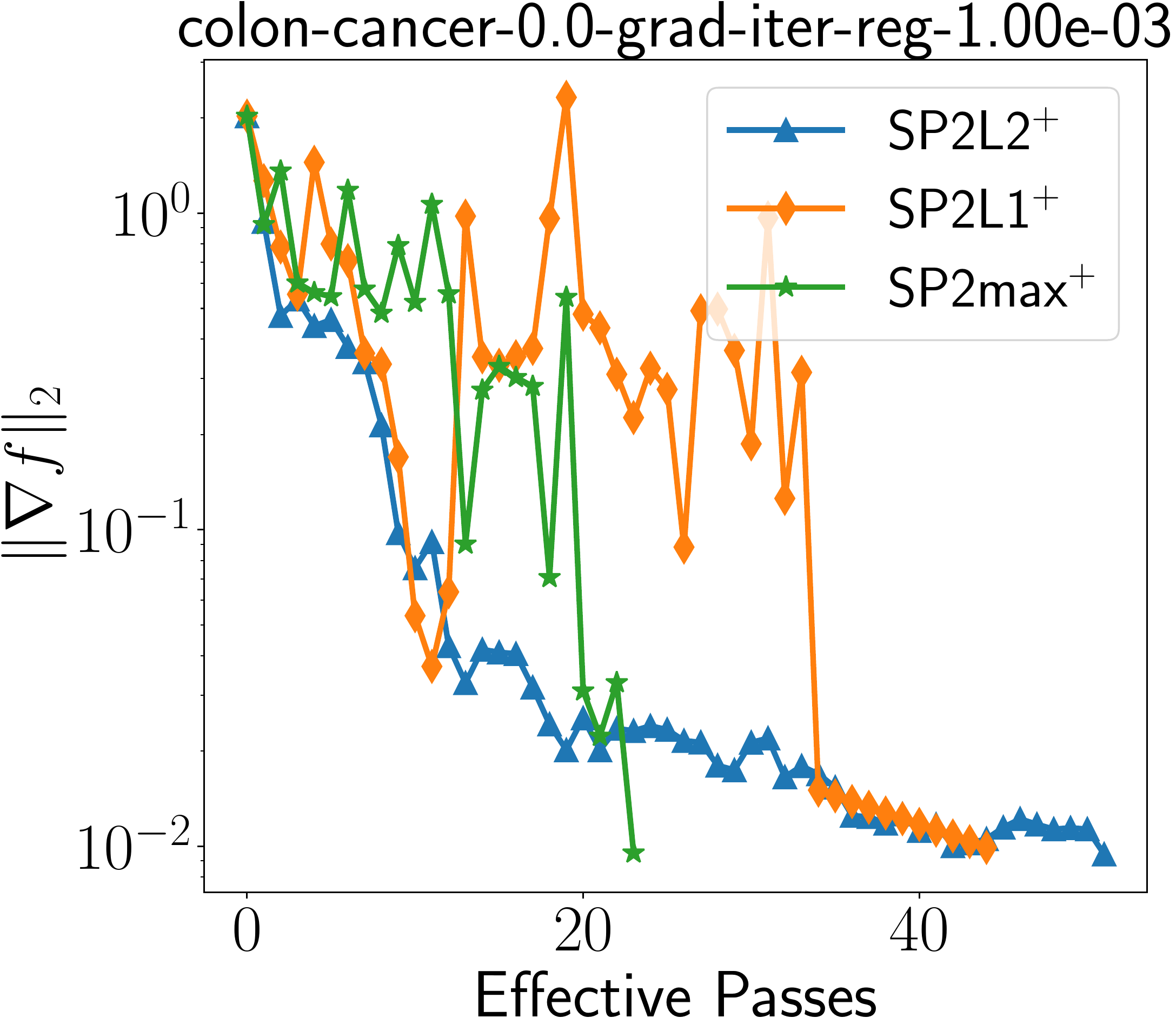}
\centerline{\small{(d) $\lambda = 0.4$}}
\end{minipage}
\hfill
\begin{minipage}{0.32\linewidth}
\centering
\includegraphics[width=1.7in]{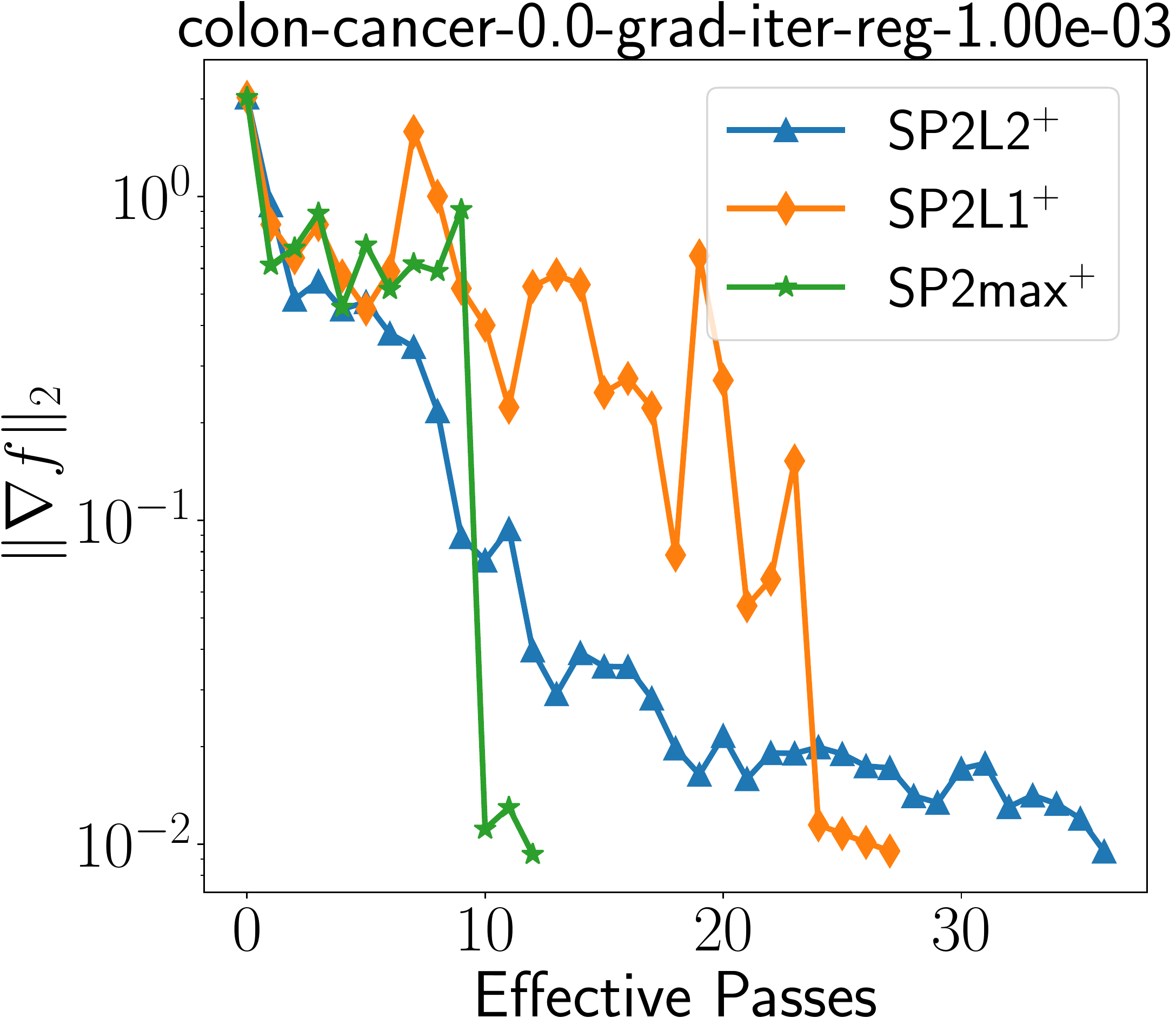}
\centerline{\small{(e) $\lambda = 0.5$}}
\end{minipage}
\hfill
\begin{minipage}{0.32\linewidth}
\centering
\includegraphics[width=1.7in]{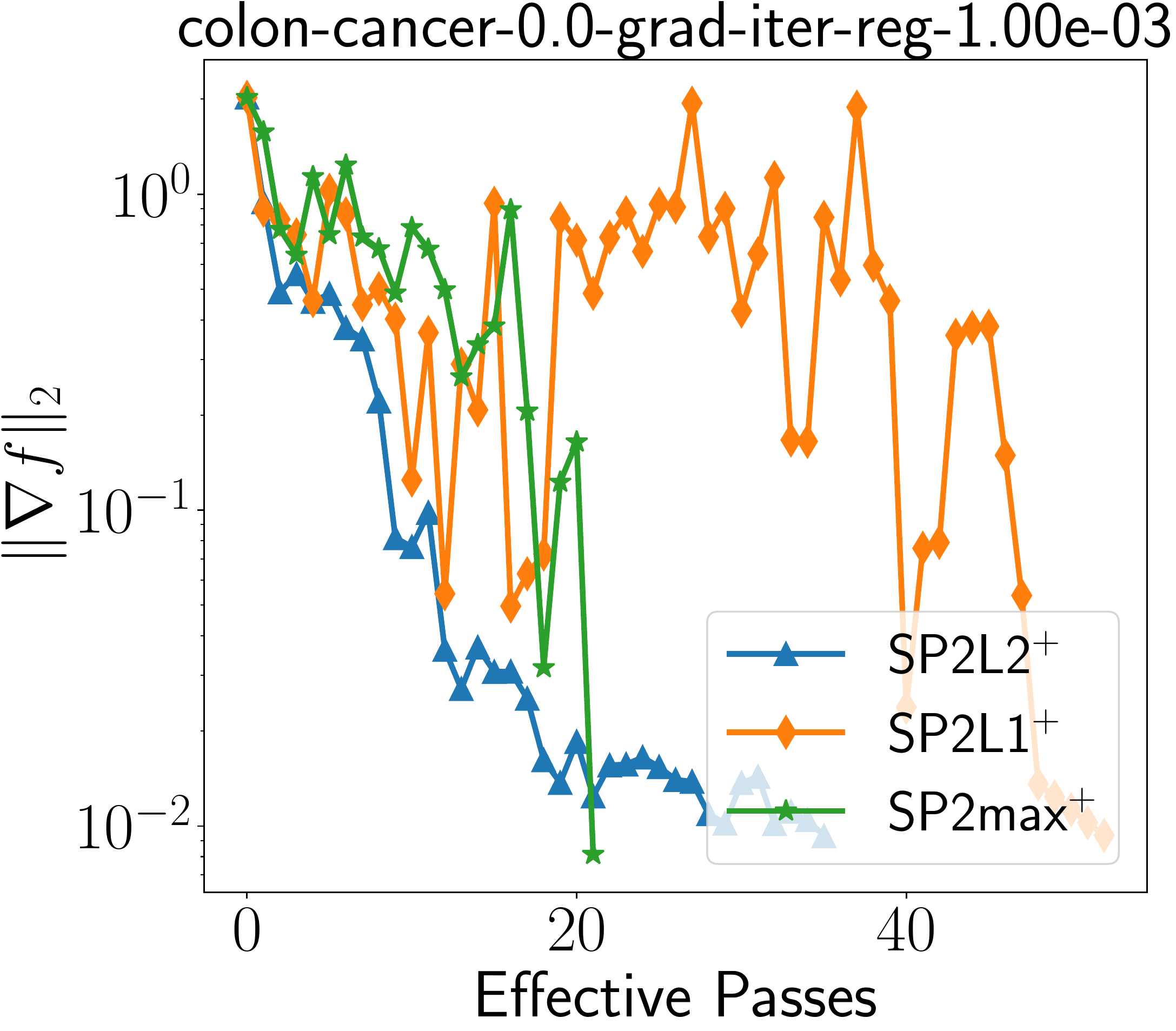}
\centerline{\small{(f) $\lambda = 0.6$}}
\end{minipage}
\\
\begin{minipage}{0.32\linewidth}
\centering
\includegraphics[width=1.7in]{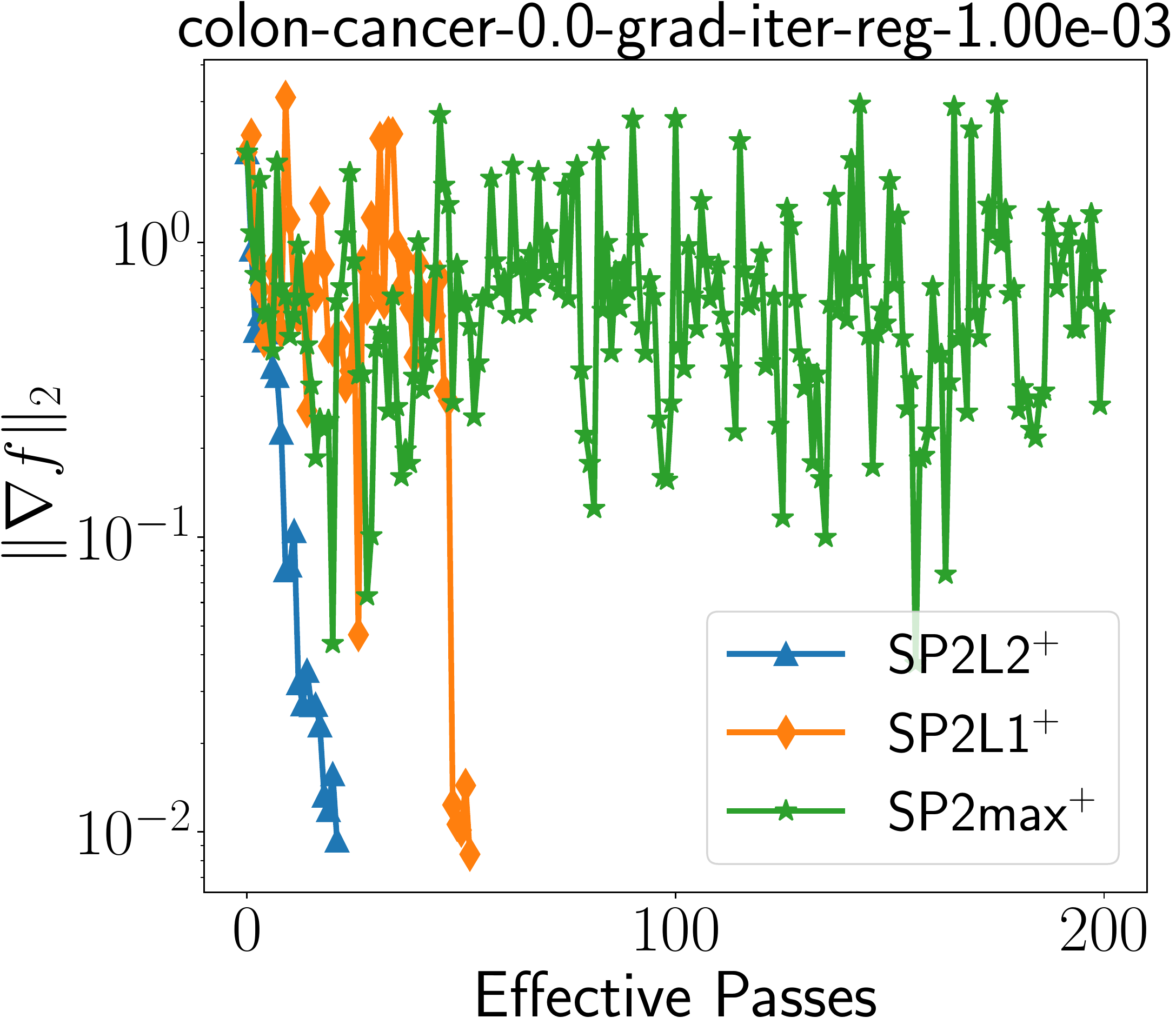}
\centerline{\small{(g) $\lambda = 0.7$}}
\end{minipage}
\hfill
\begin{minipage}{0.32\linewidth}
\centering
\includegraphics[width=1.7in]{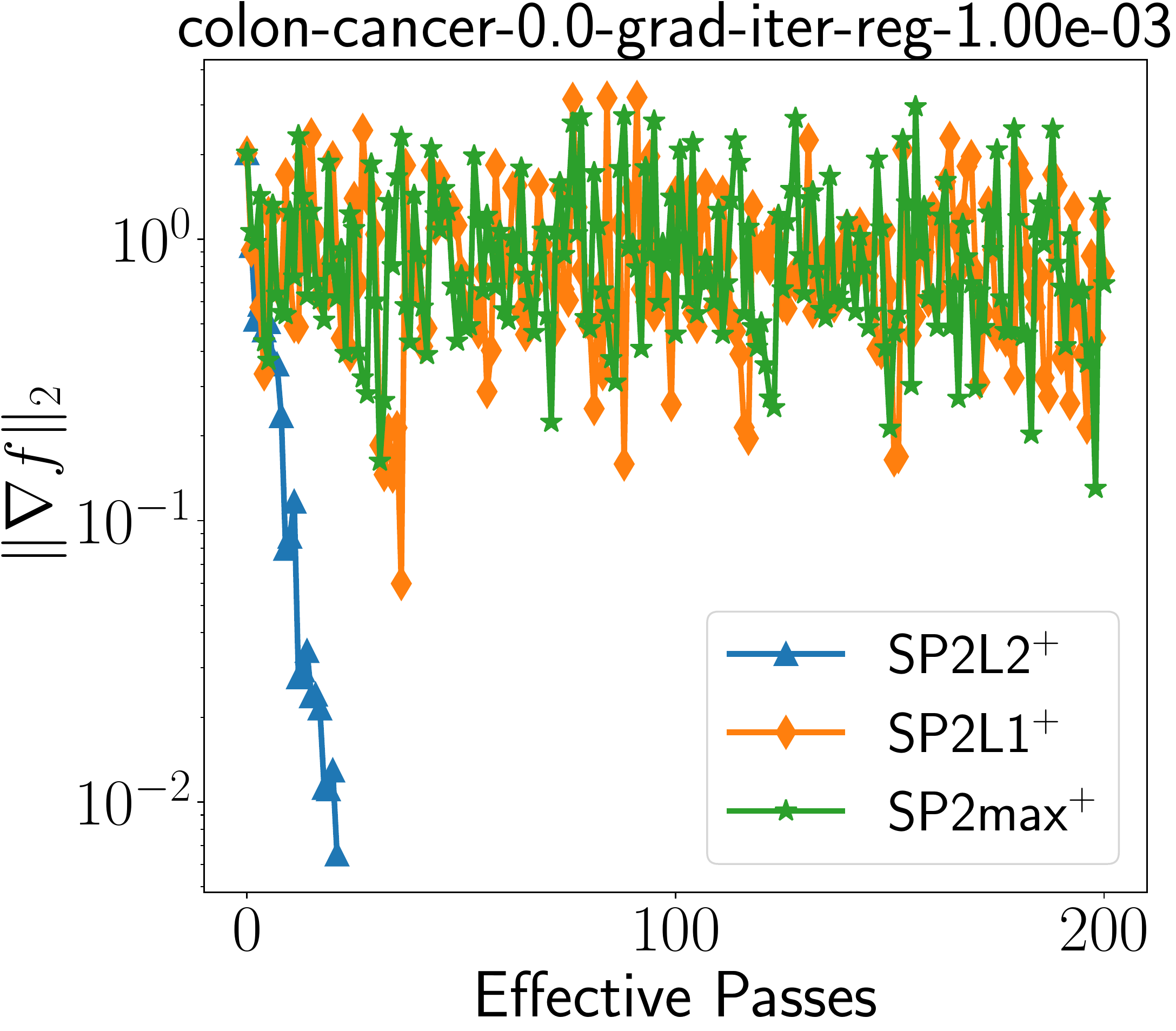}
\centerline{\small{(h) $\lambda = 0.8$}}
\end{minipage}
\hfill
\begin{minipage}{0.32\linewidth}
\centering
\includegraphics[width=1.7in]{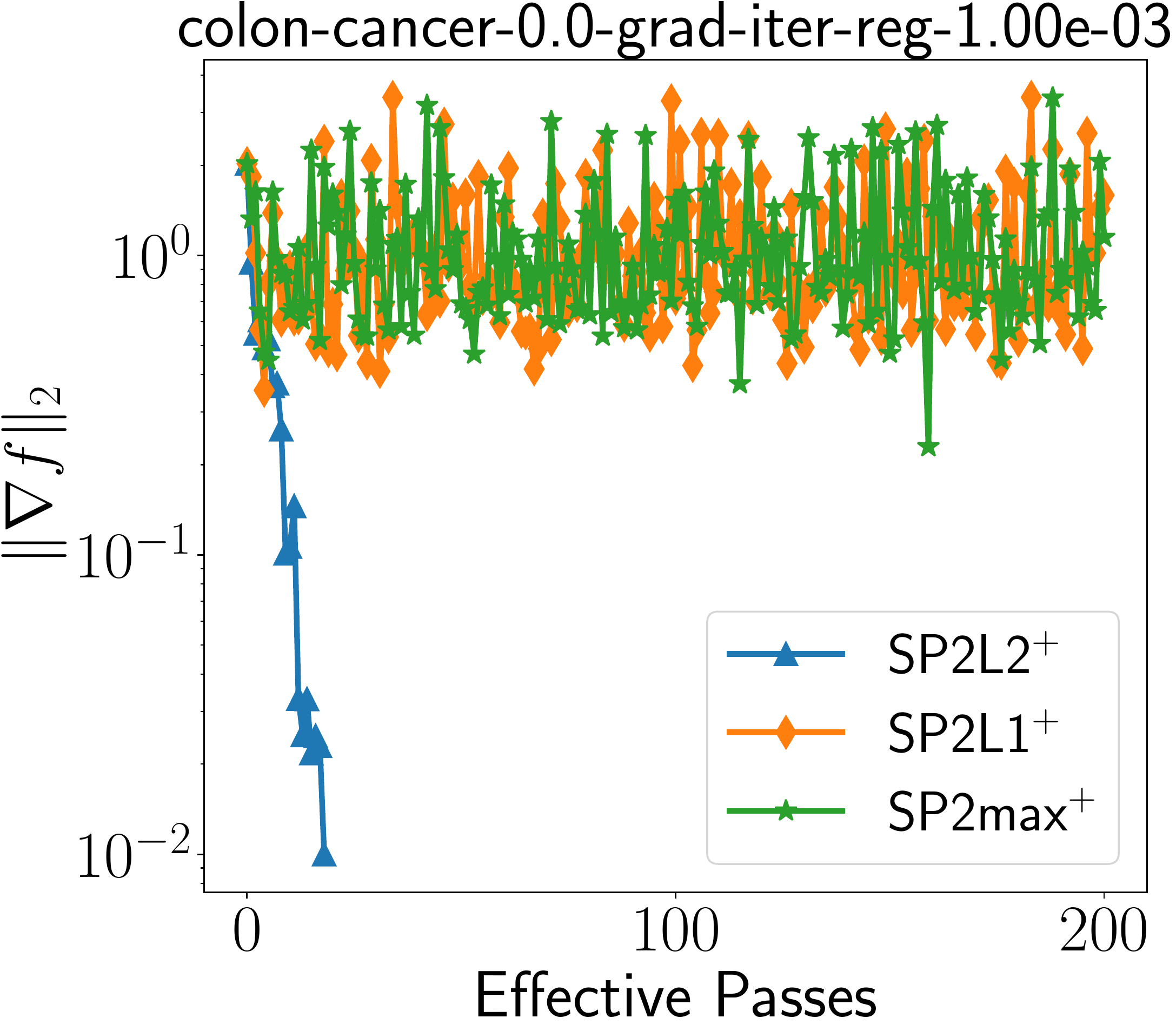}
\centerline{\small{(i) $\lambda = 0.9$}}
\end{minipage}
\caption{Colon-cancer: gradient norm at each epoch with different $\lambda$. }
\label{fig:colon_grad_diff_lamb}
\end{figure}

\begin{figure}[t]
\begin{minipage}{0.32\linewidth}
\centering
\includegraphics[width=1.7in]{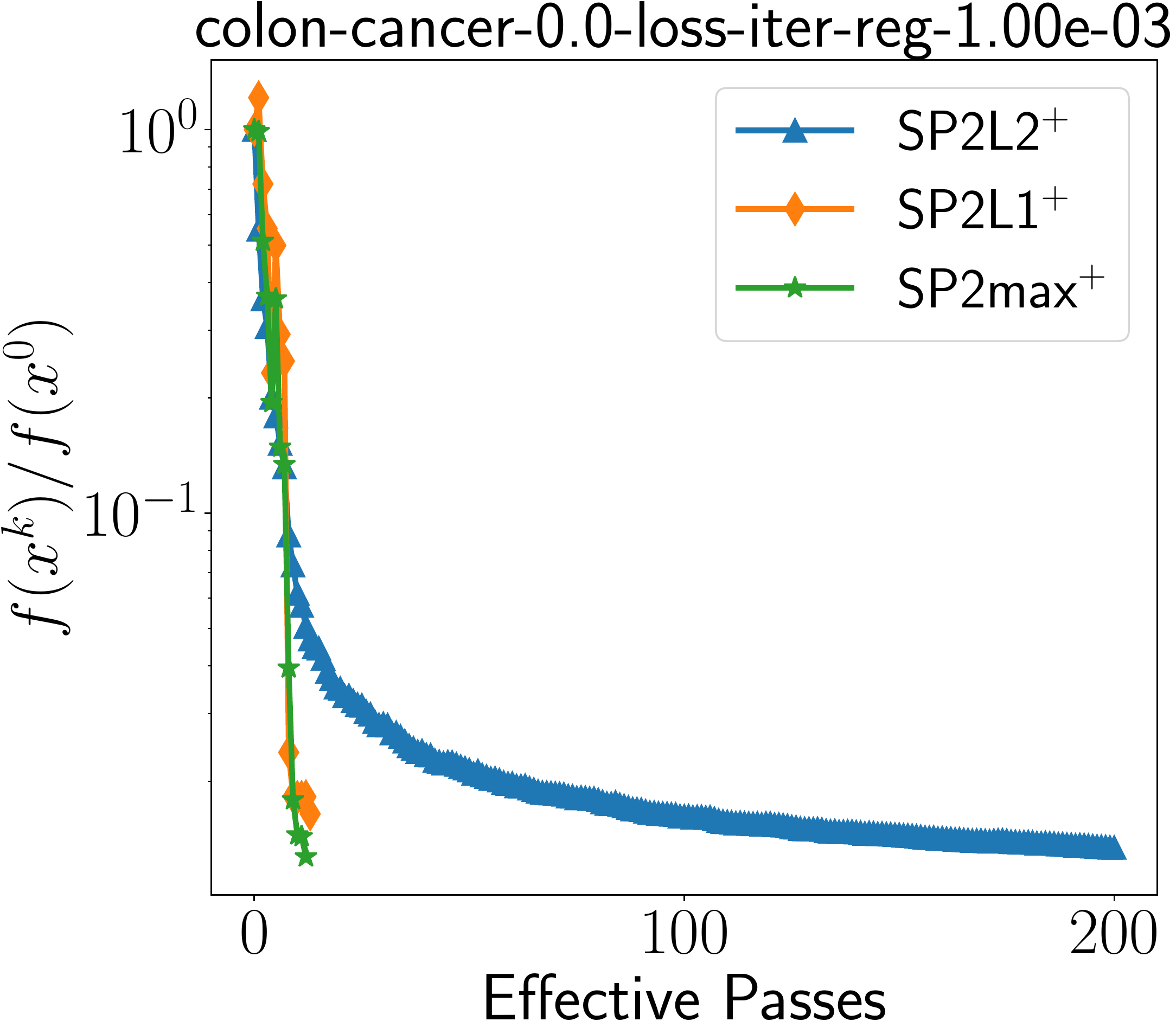}
\centerline{\small{(a) $\lambda = 0.1$}}
\end{minipage}
\hfill
\begin{minipage}{0.32\linewidth}
\centering
\includegraphics[width=1.7in]{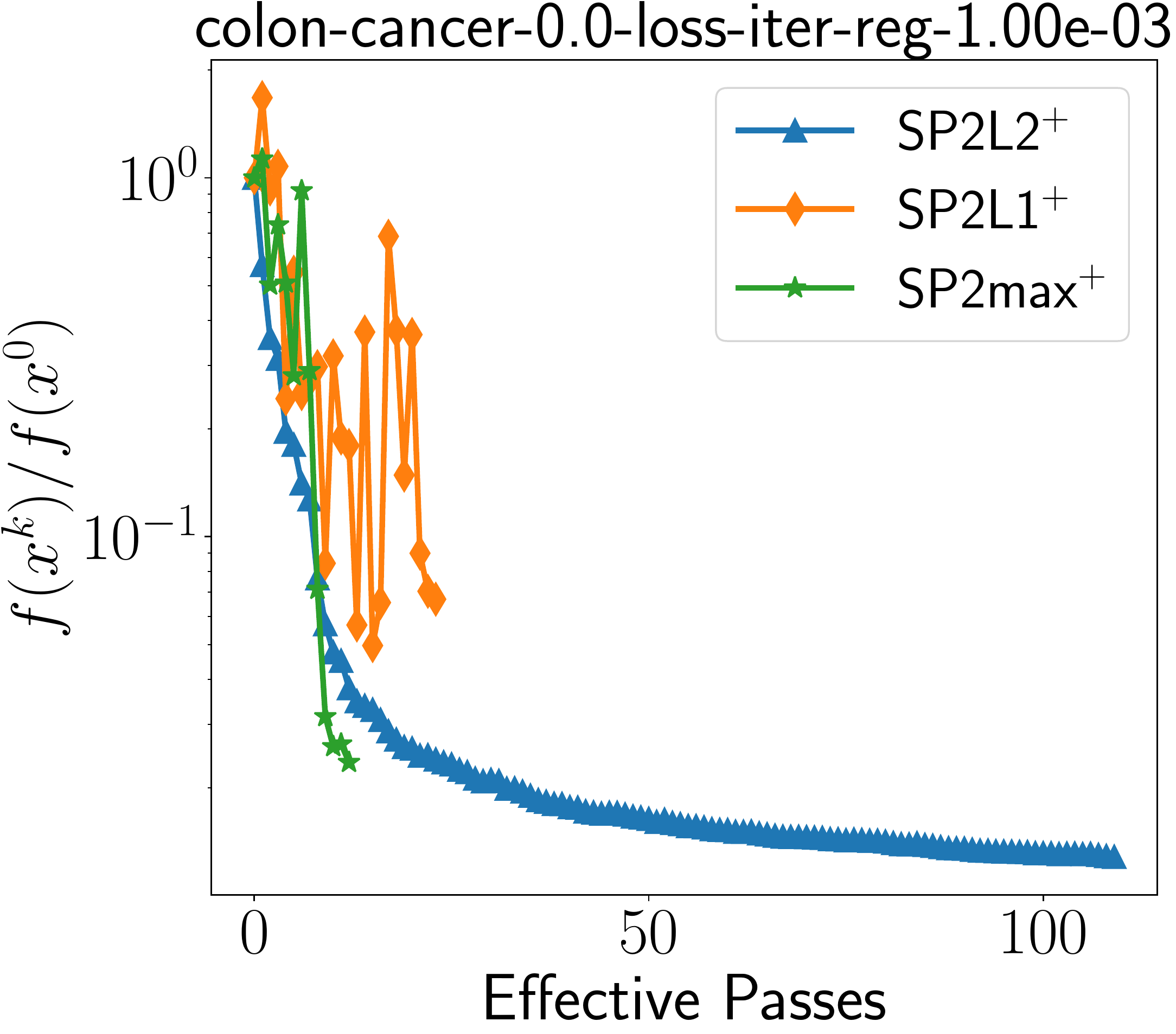}
\centerline{\small{(b) $\lambda = 0.2$}}
\end{minipage}
\hfill
\begin{minipage}{0.32\linewidth}
\centering
\includegraphics[width=1.7in]{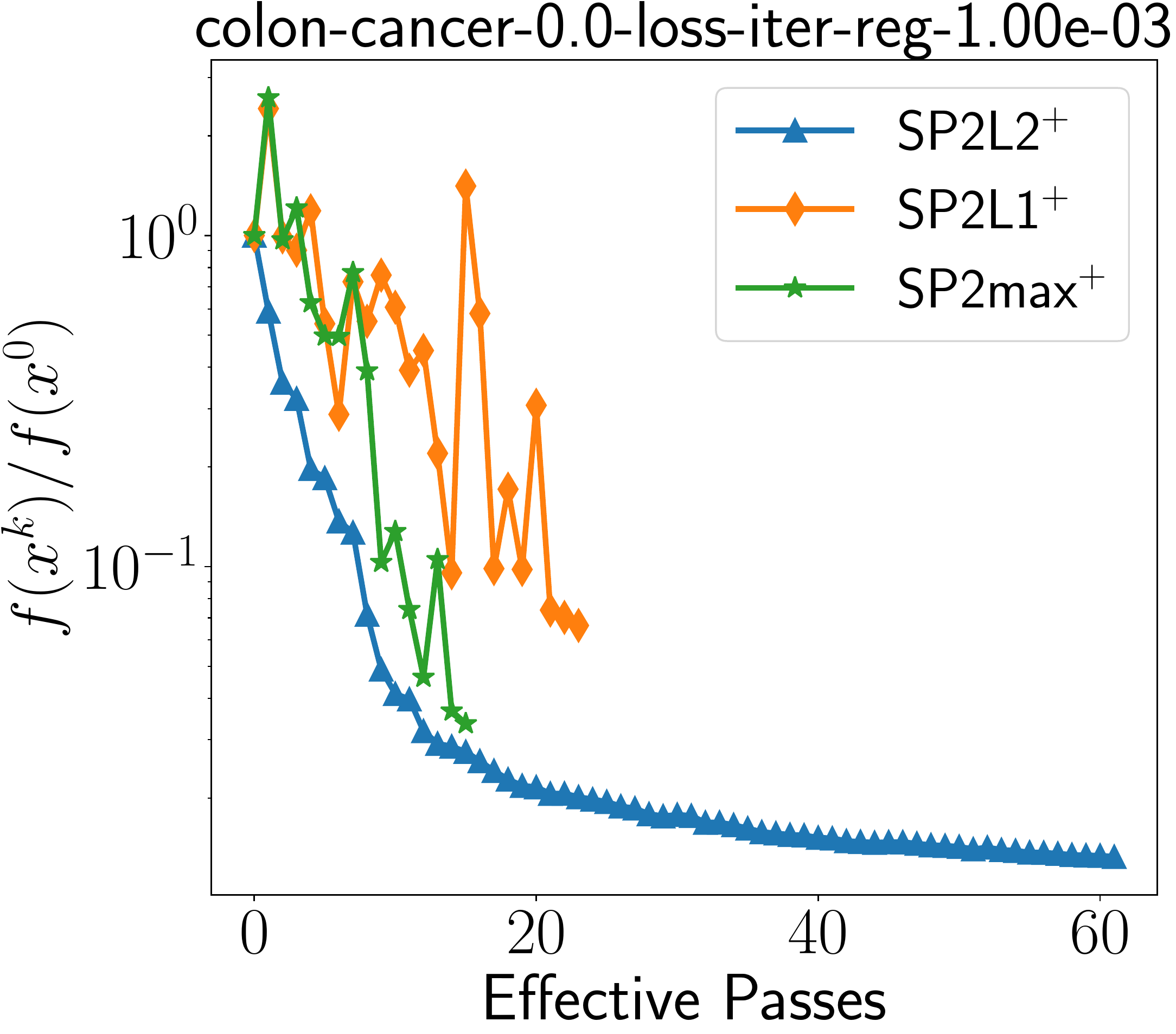}
\centerline{\small{(c) $\lambda = 0.3$}}
\end{minipage}
\\
\begin{minipage}{0.32\linewidth}
\centering
\includegraphics[width=1.7in]{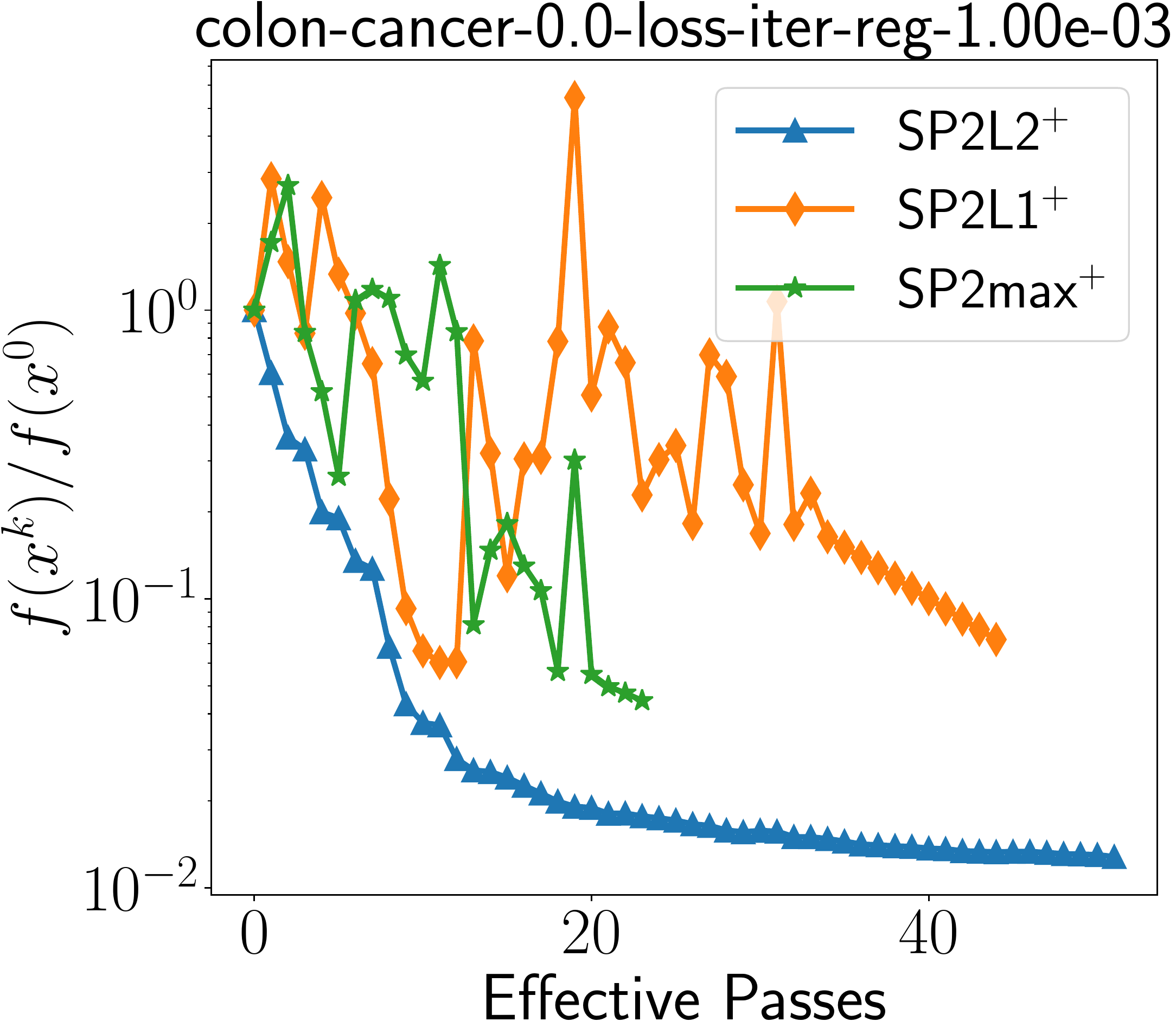}
\centerline{\small{(d) $\lambda = 0.4$}}
\end{minipage}
\hfill
\begin{minipage}{0.32\linewidth}
\centering
\includegraphics[width=1.7in]{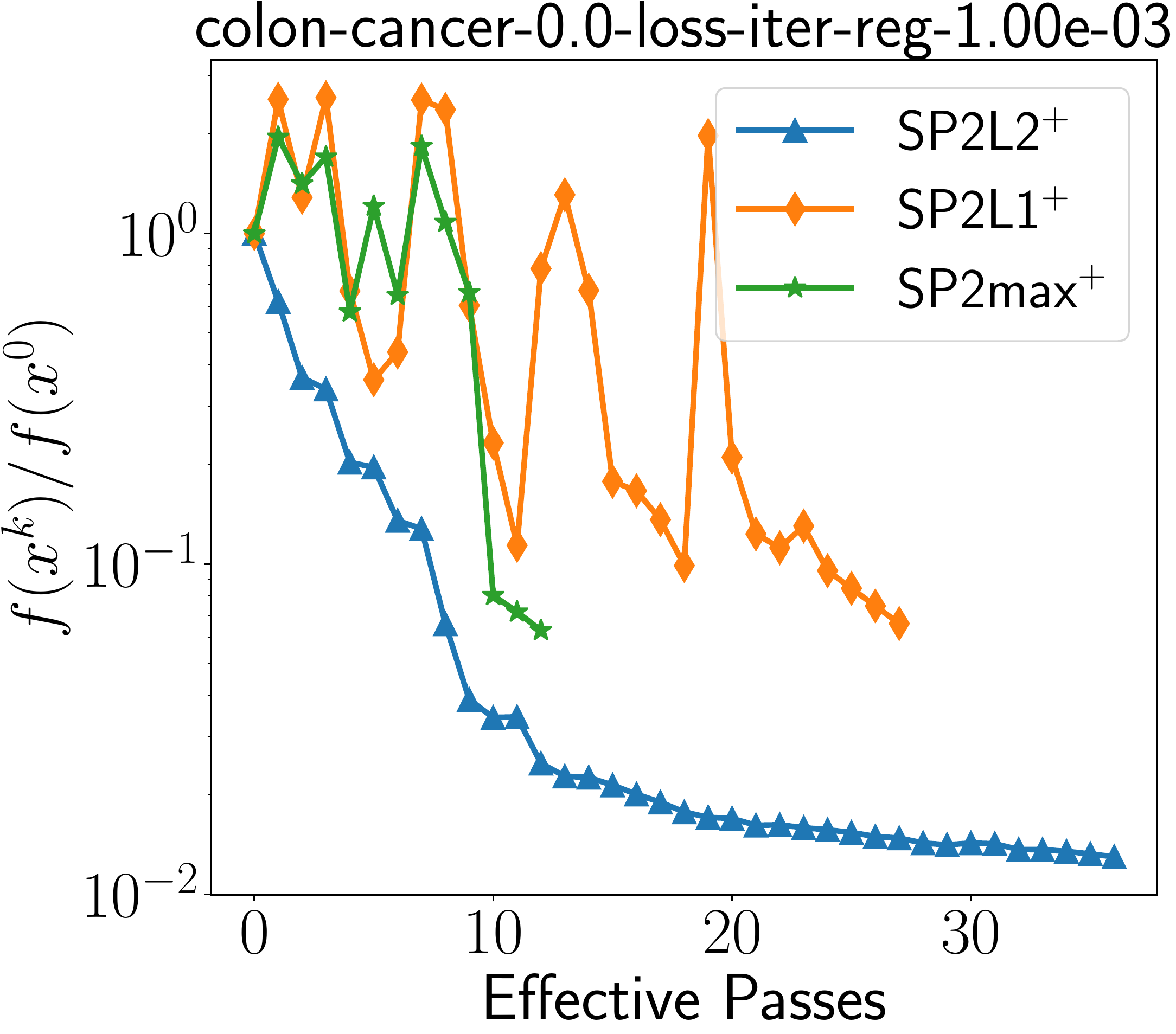}
\centerline{\small{(e) $\lambda = 0.5$}}
\end{minipage}
\hfill
\begin{minipage}{0.32\linewidth}
\centering
\includegraphics[width=1.7in]{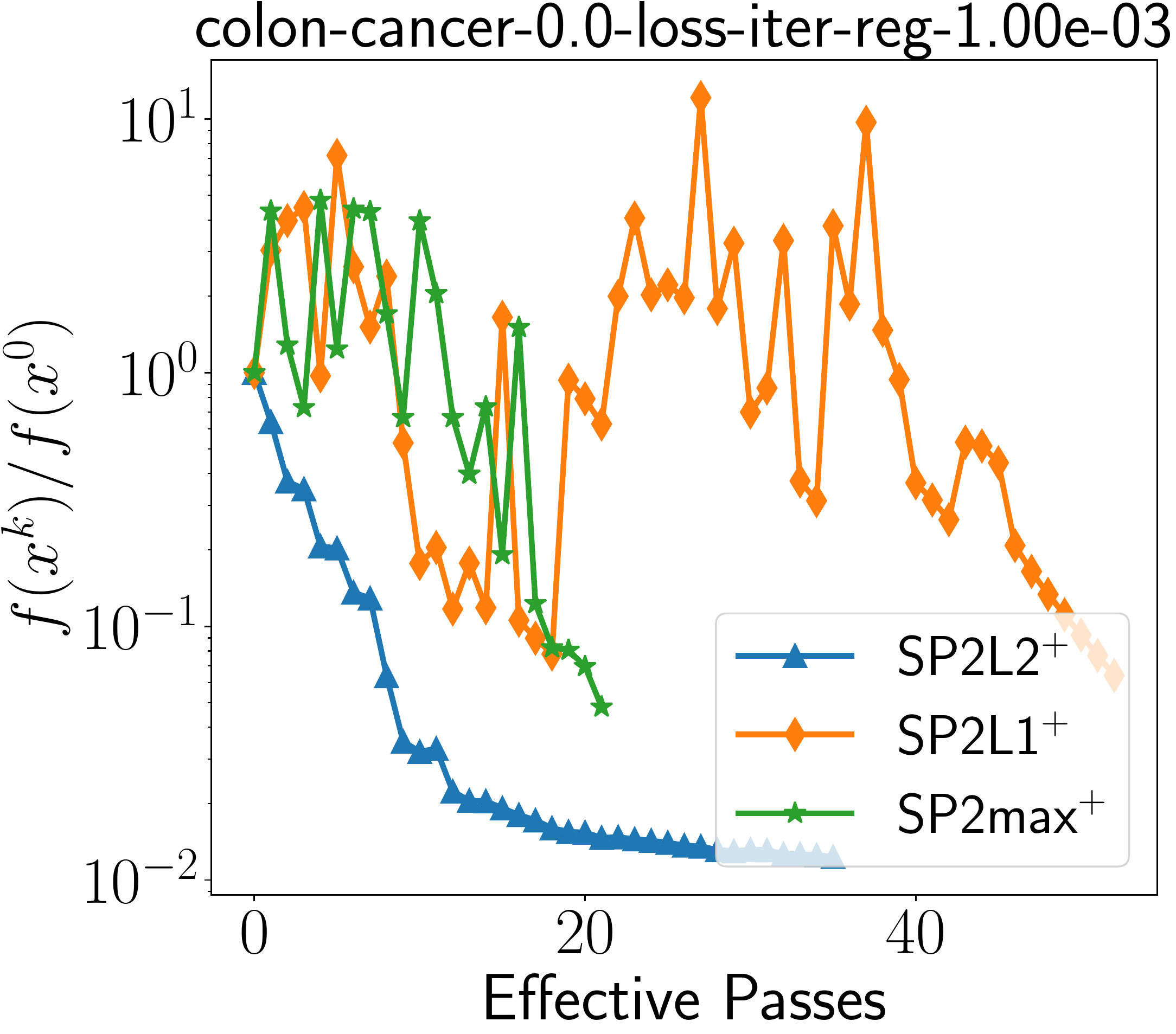}
\centerline{\small{(f) $\lambda = 0.6$}}
\end{minipage}
\\
\begin{minipage}{0.32\linewidth}
\centering
\includegraphics[width=1.7in]{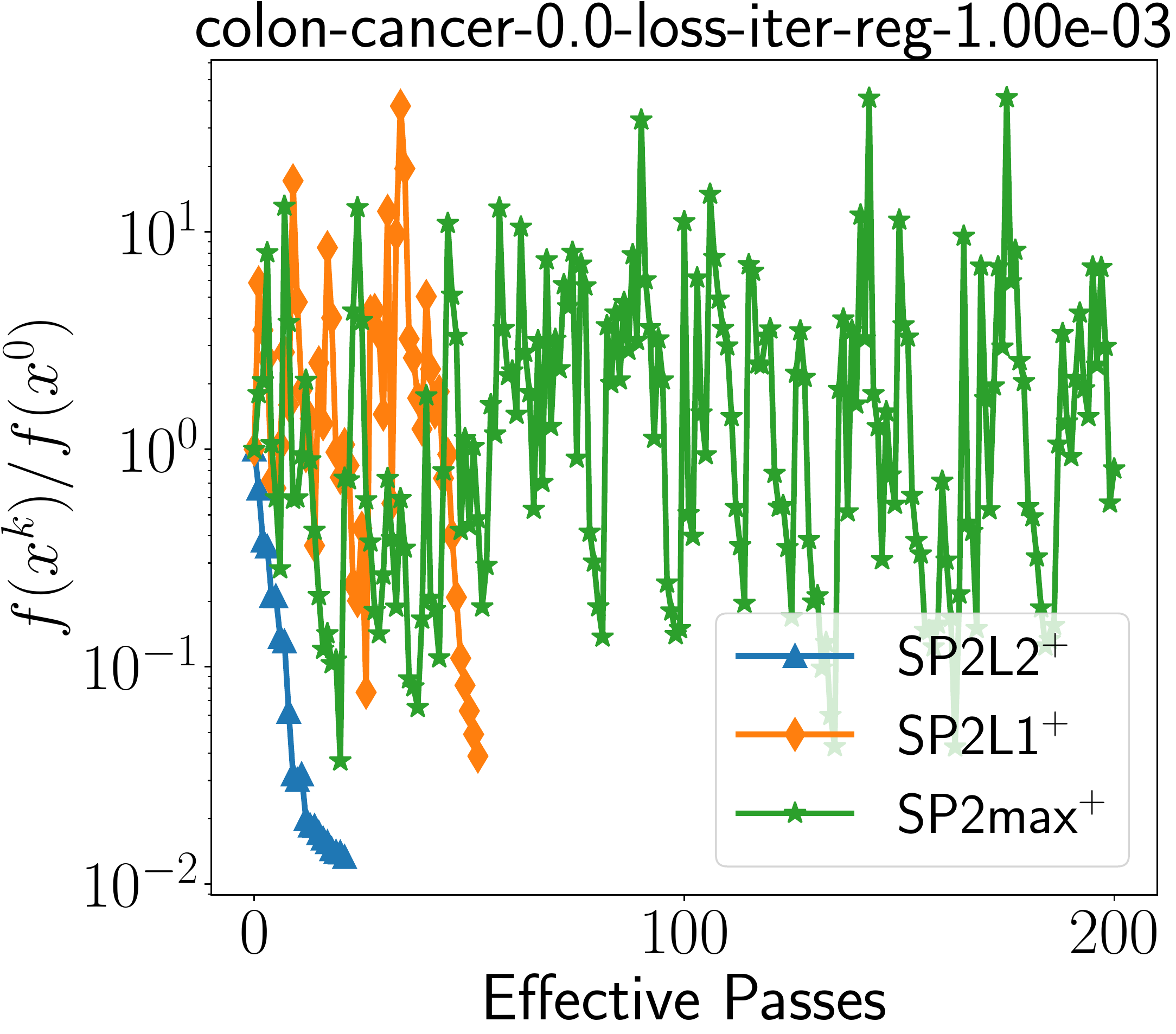}
\centerline{\small{(g) $\lambda = 0.7$}}
\end{minipage}
\hfill
\begin{minipage}{0.32\linewidth}
\centering
\includegraphics[width=1.7in]{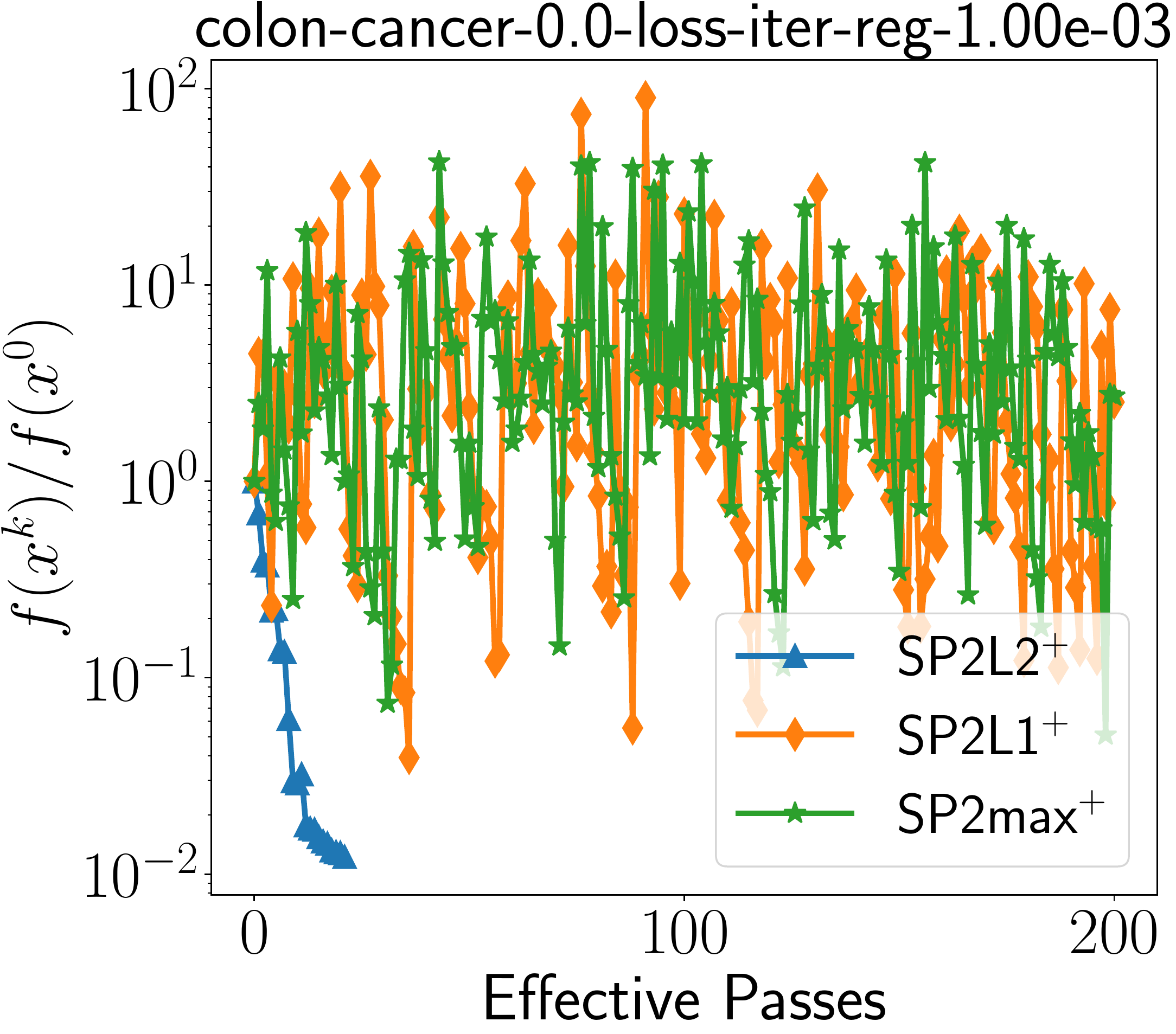}
\centerline{\small{(h) $\lambda = 0.8$}}
\end{minipage}
\hfill
\begin{minipage}{0.32\linewidth}
\centering
\includegraphics[width=1.7in]{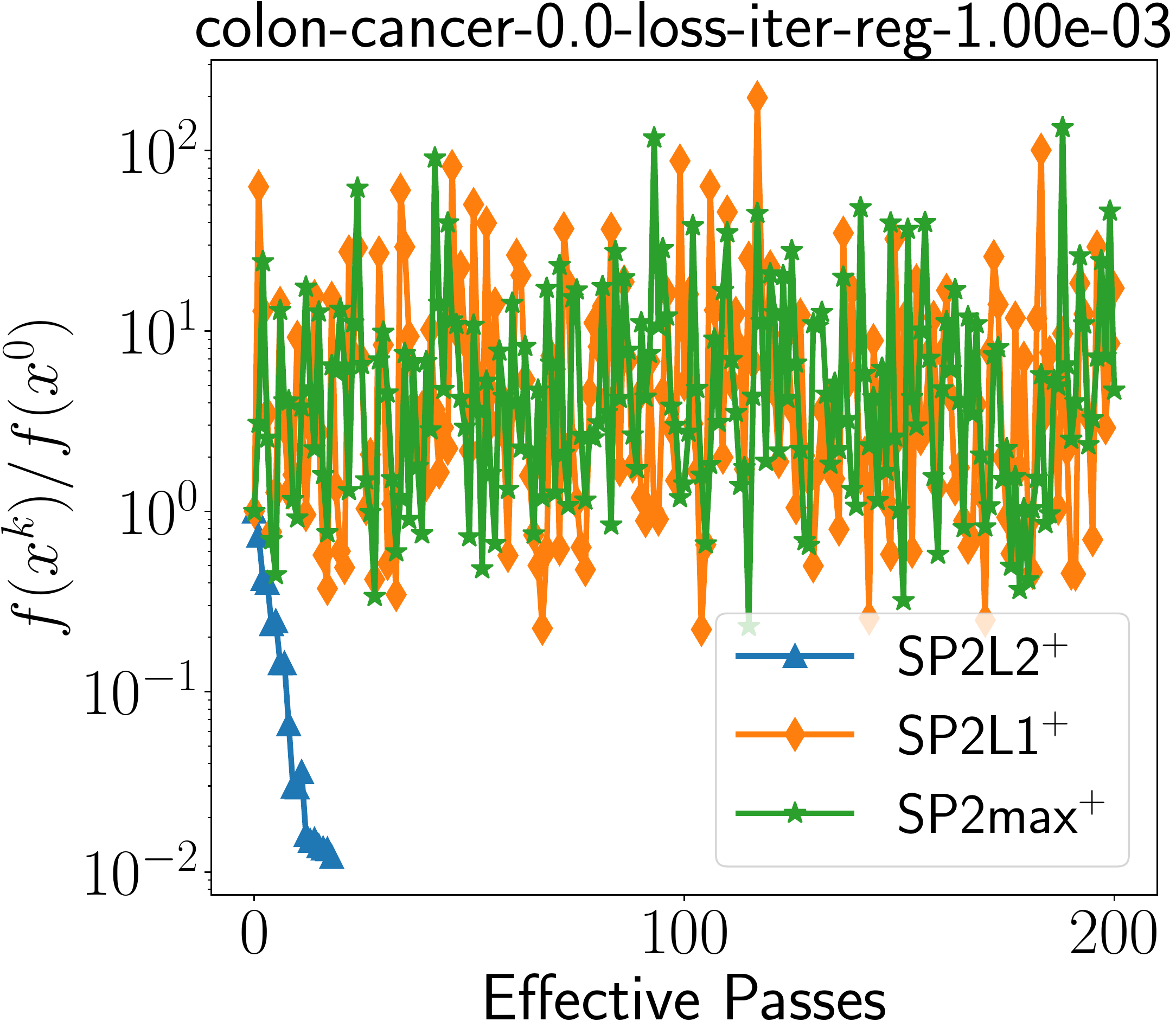}
\centerline{\small{(i) $\lambda = 0.9$}}
\end{minipage}
\caption{Colon-cancer: loss at each epoch with different $\lambda$. }
\label{fig:colon_loss_diff_lamb}
\end{figure}


\begin{figure}[t]
\begin{minipage}{0.32\linewidth}
\centering
\includegraphics[width=1.7in]{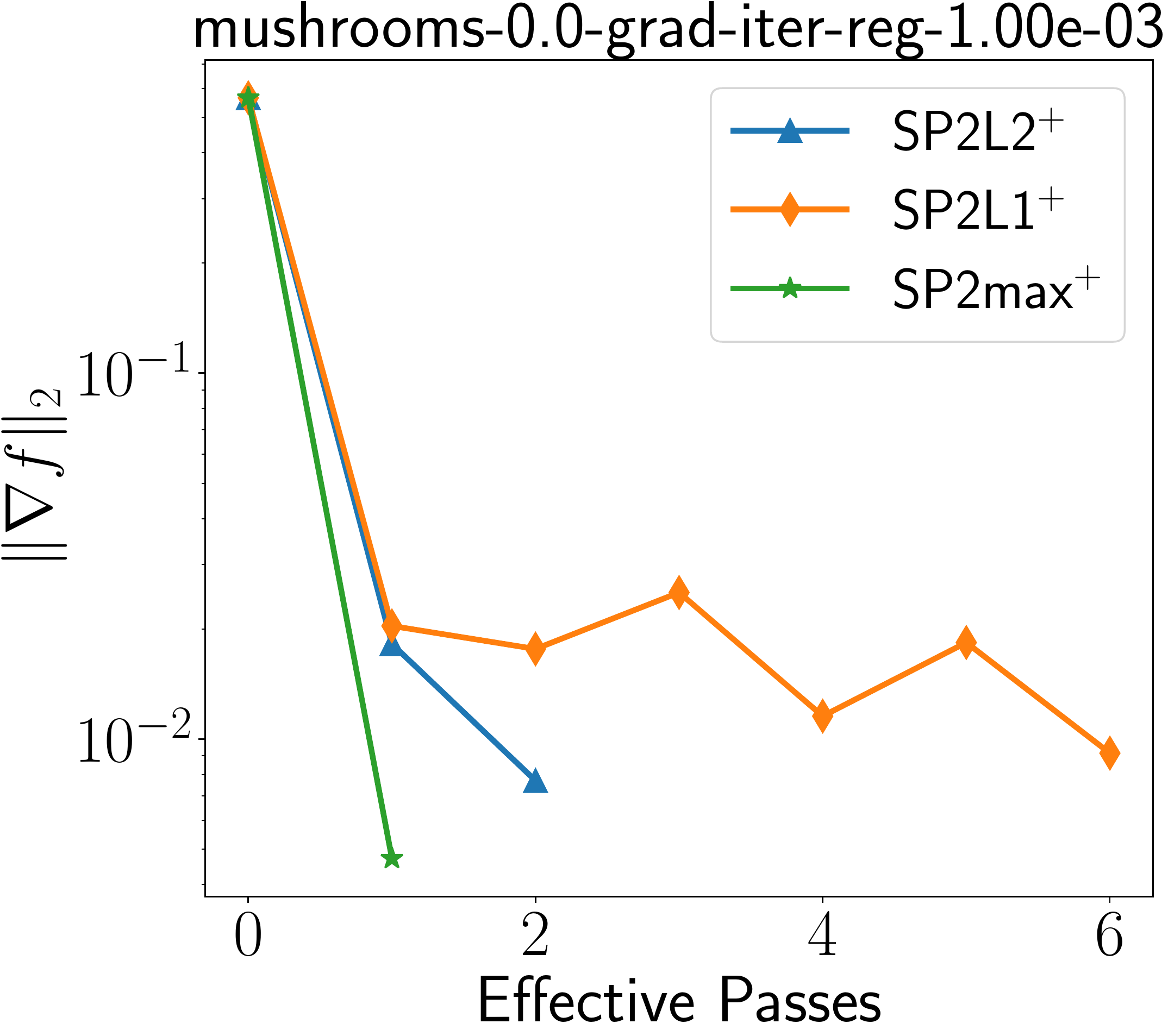}
\centerline{\small{(a) $\lambda = 0.1$}}
\end{minipage}
\hfill
\begin{minipage}{0.32\linewidth}
\centering
\includegraphics[width=1.7in]{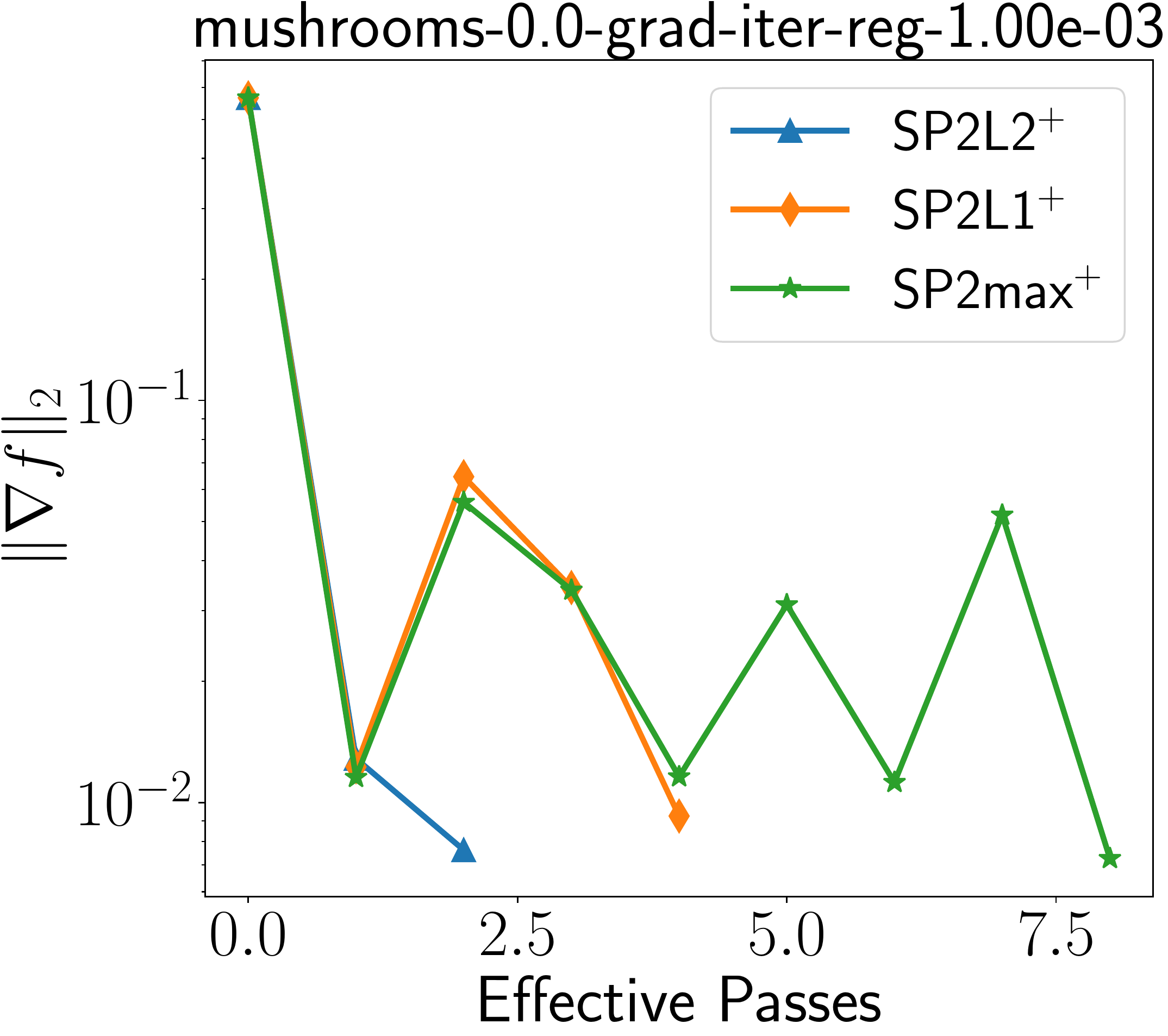}
\centerline{\small{(b) $\lambda = 0.2$}}
\end{minipage}
\hfill
\begin{minipage}{0.32\linewidth}
\centering
\includegraphics[width=1.7in]{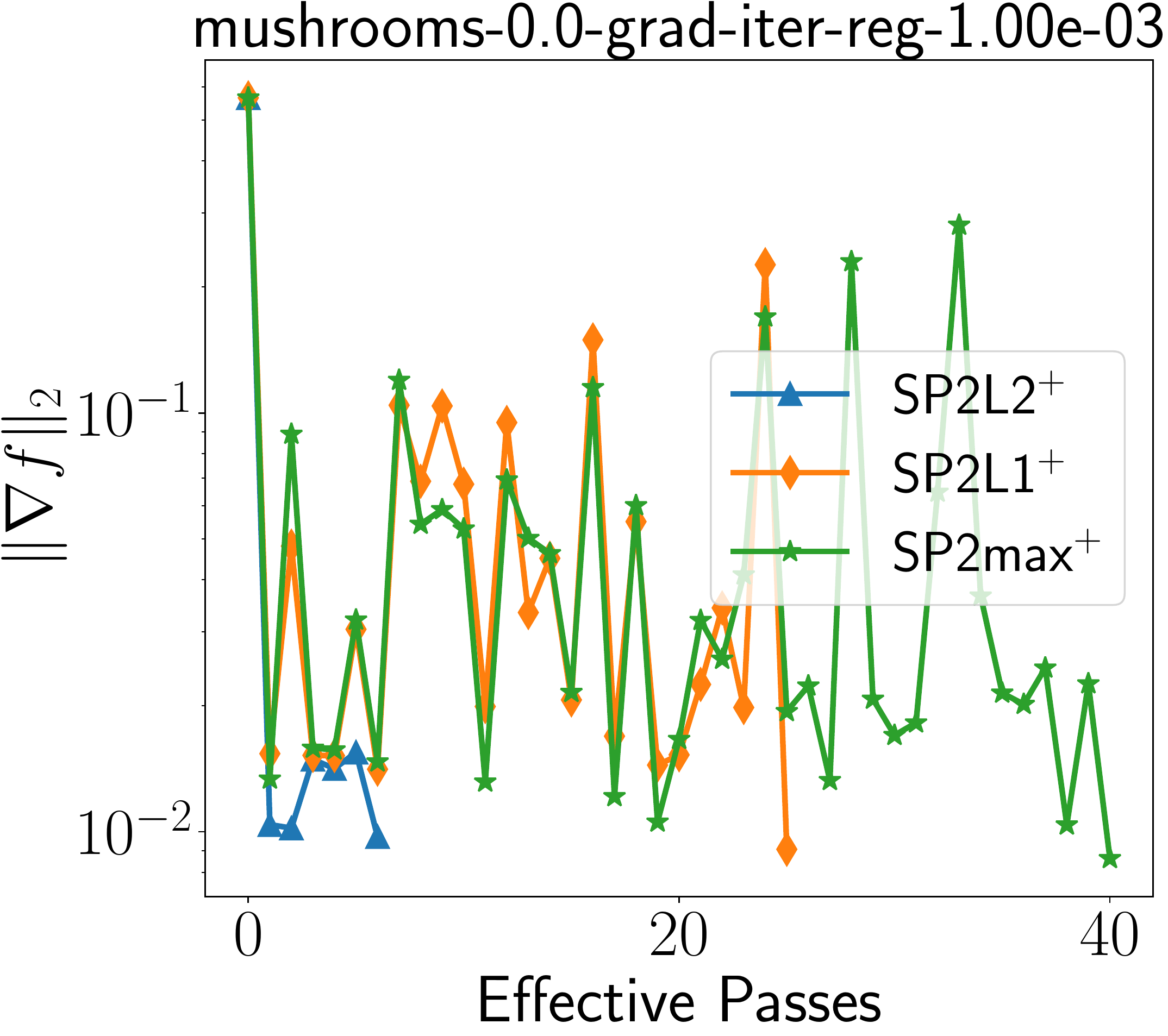}
\centerline{\small{(c) $\lambda = 0.3$}}
\end{minipage}
\\
\begin{minipage}{0.32\linewidth}
\centering
\includegraphics[width=1.7in]{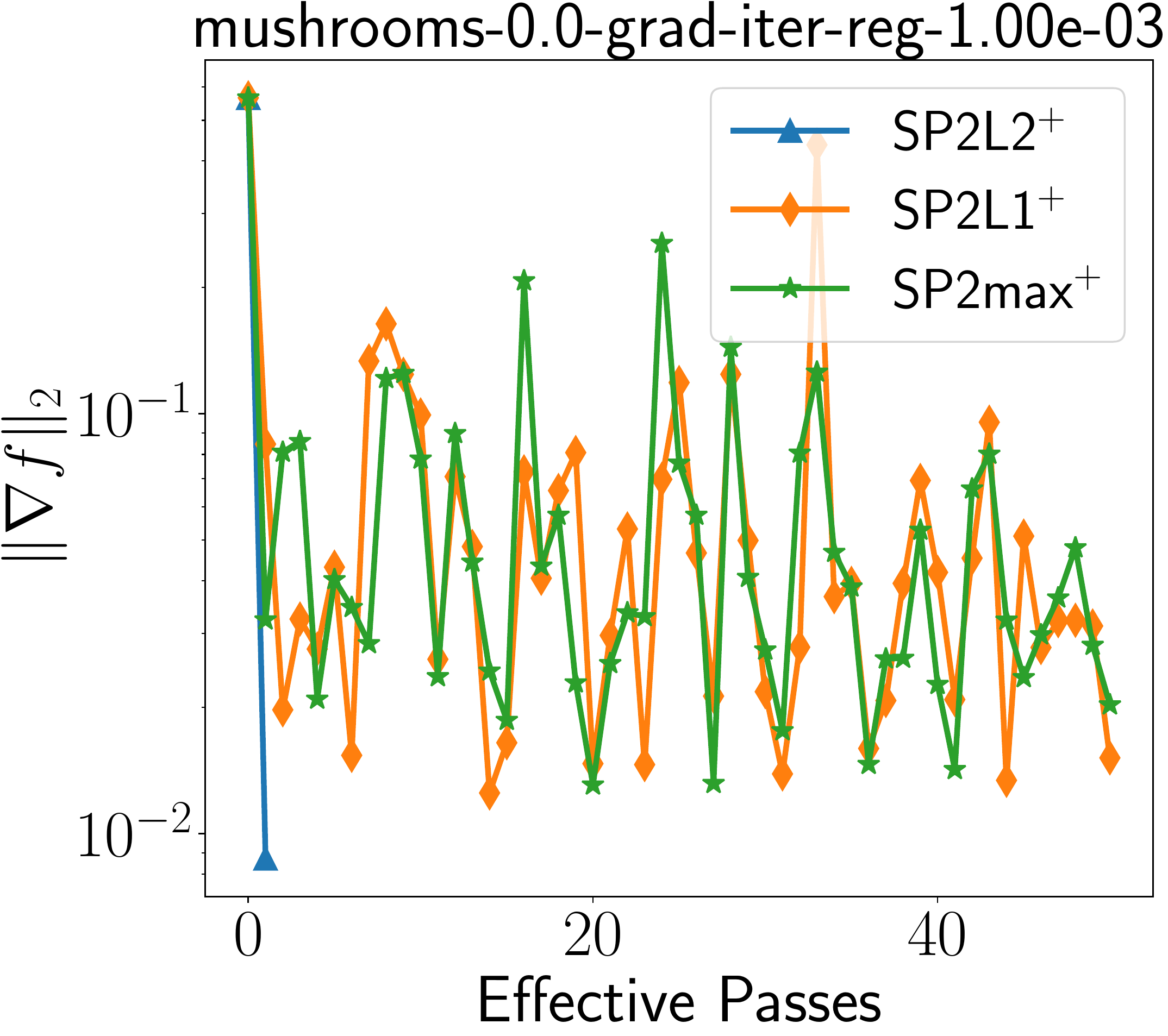}
\centerline{\small{(d) $\lambda = 0.4$}}
\end{minipage}
\hfill
\begin{minipage}{0.32\linewidth}
\centering
\includegraphics[width=1.7in]{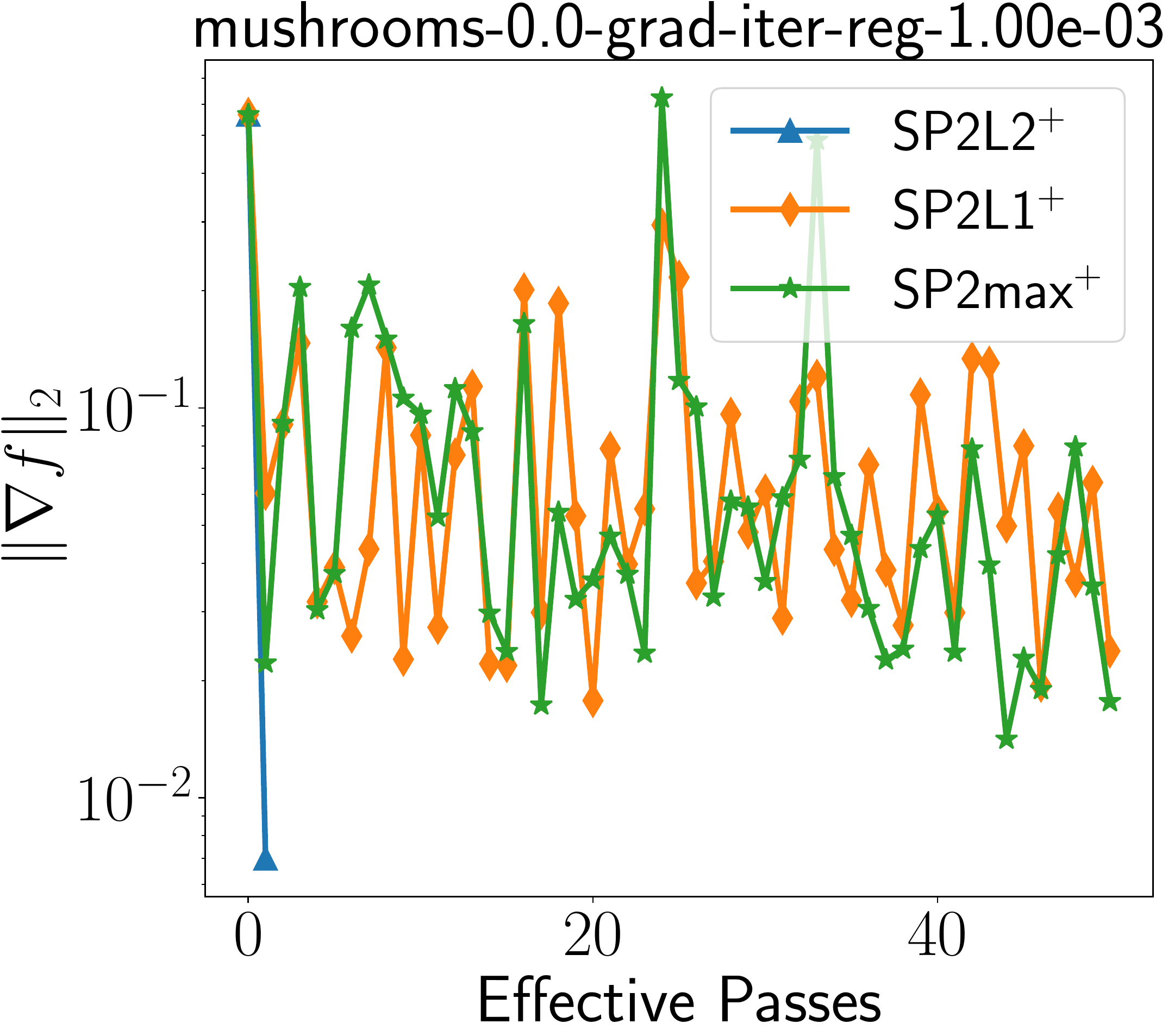}
\centerline{\small{(e) $\lambda = 0.5$}}
\end{minipage}
\hfill
\begin{minipage}{0.32\linewidth}
\centering
\includegraphics[width=1.7in]{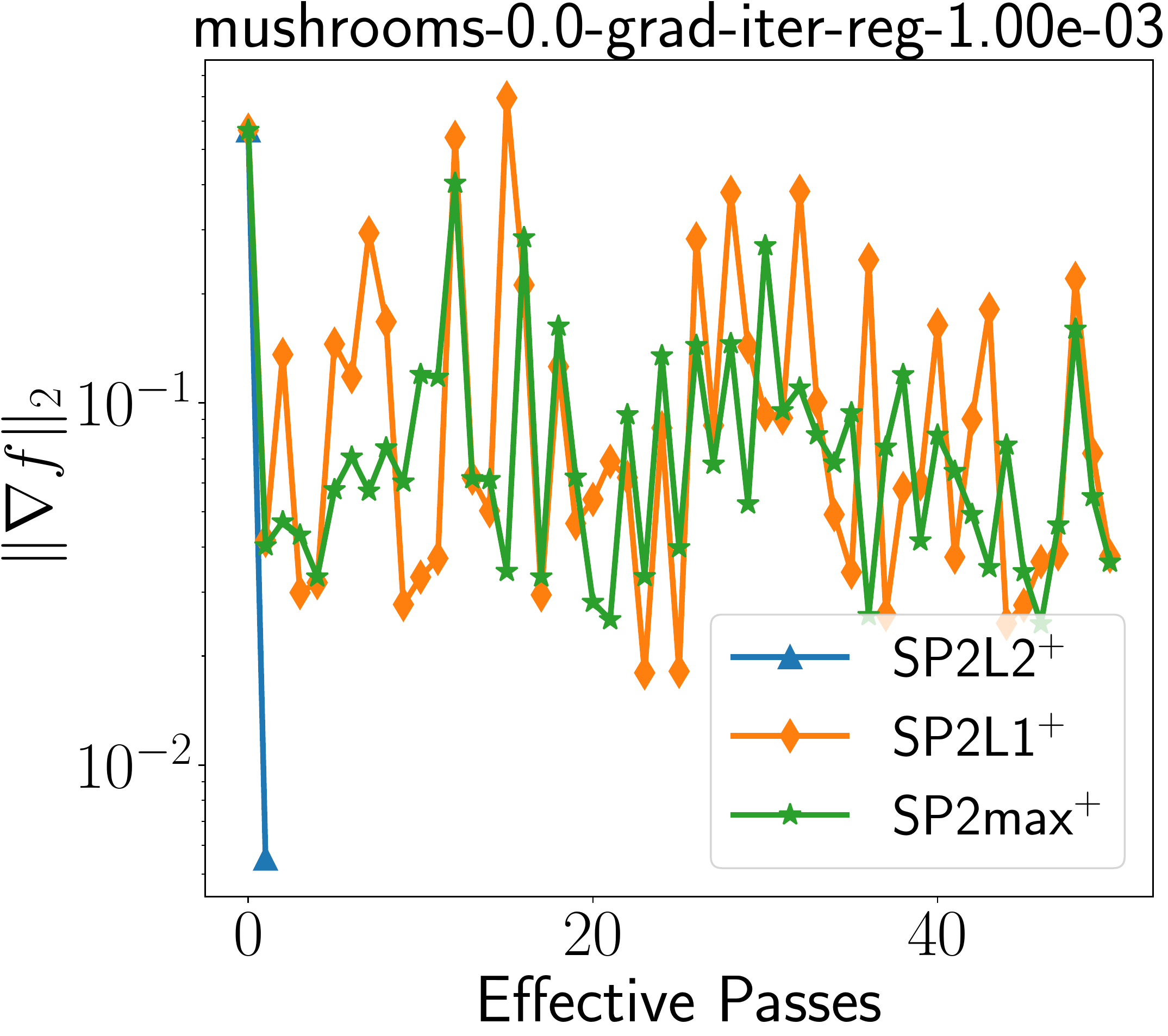}
\centerline{\small{(f) $\lambda = 0.6$}}
\end{minipage}
\\
\begin{minipage}{0.32\linewidth}
\centering
\includegraphics[width=1.7in]{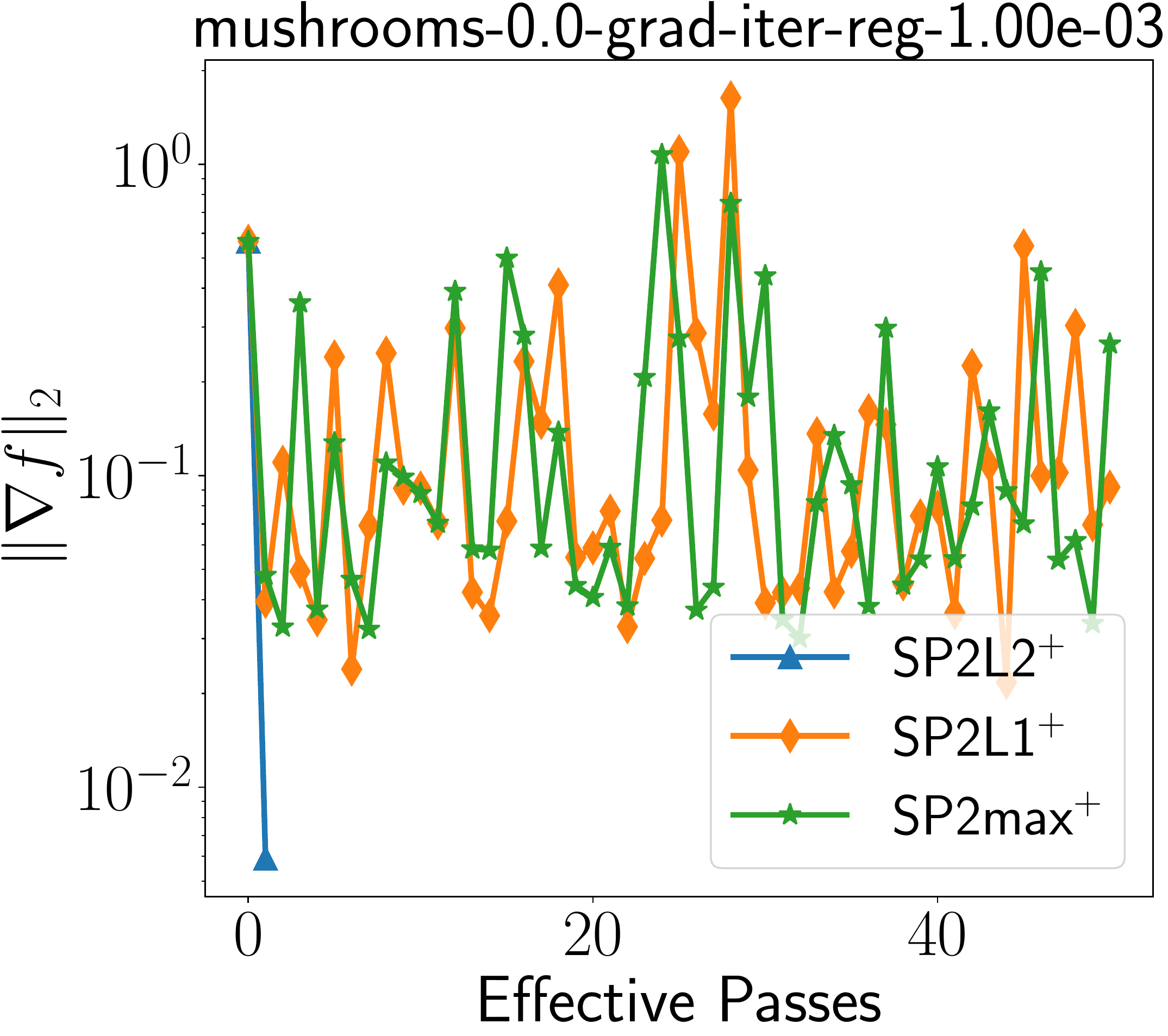}
\centerline{\small{(g) $\lambda = 0.7$}}
\end{minipage}
\hfill
\begin{minipage}{0.32\linewidth}
\centering
\includegraphics[width=1.7in]{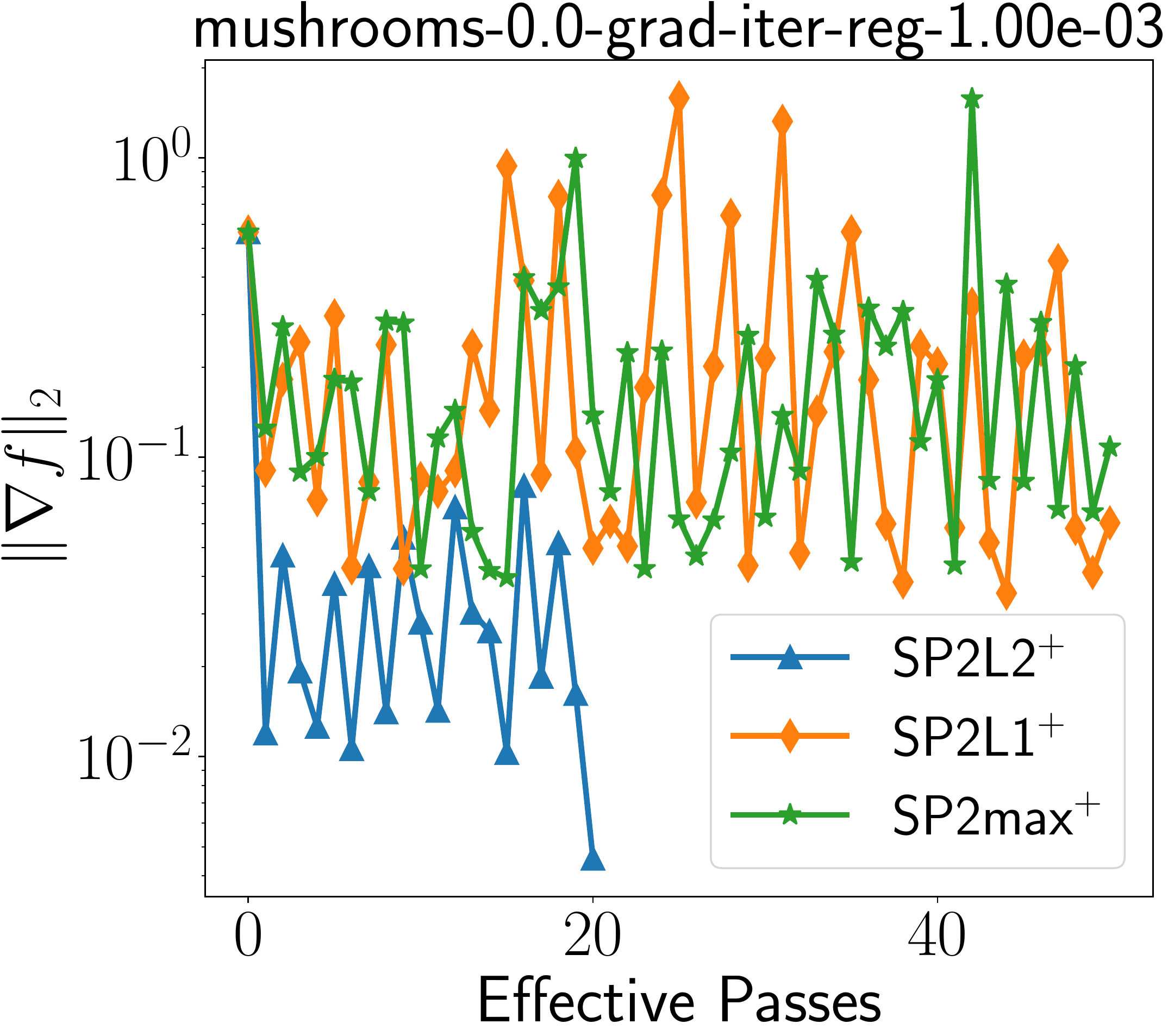}
\centerline{\small{(h) $\lambda = 0.8$}}
\end{minipage}
\hfill
\begin{minipage}{0.32\linewidth}
\centering
\includegraphics[width=1.7in]{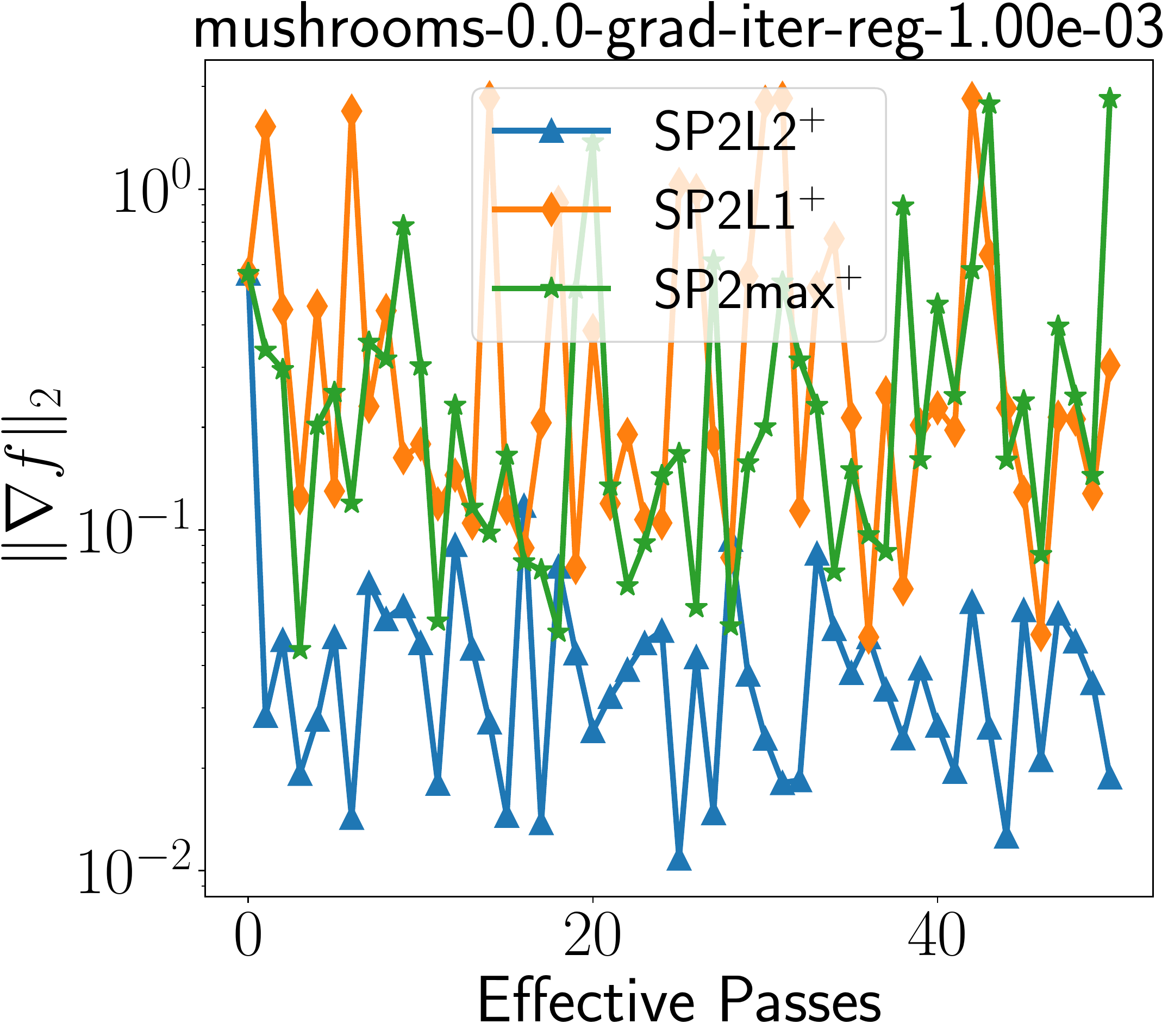}
\centerline{\small{(i) $\lambda = 0.9$}}
\end{minipage}
\caption{Mushrooms: gradient norm at each epoch with different $\lambda$. }
\label{fig:mush_grad_diff_lamb}
\end{figure}

\begin{figure}[t]
\begin{minipage}{0.32\linewidth}
\centering
\includegraphics[width=1.7in]{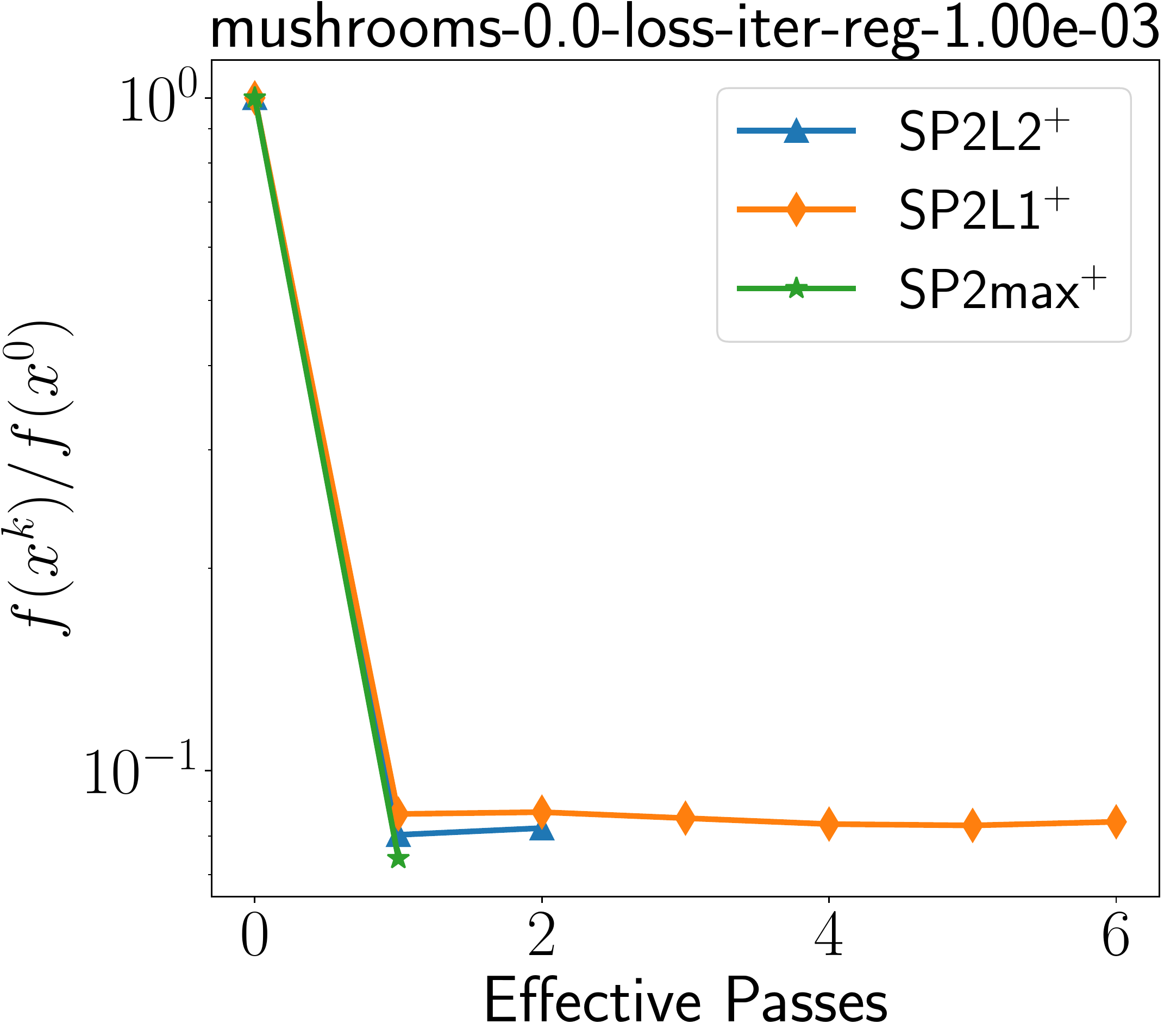}
\centerline{\small{(a) $\lambda = 0.1$}}
\end{minipage}
\hfill
\begin{minipage}{0.32\linewidth}
\centering
\includegraphics[width=1.7in]{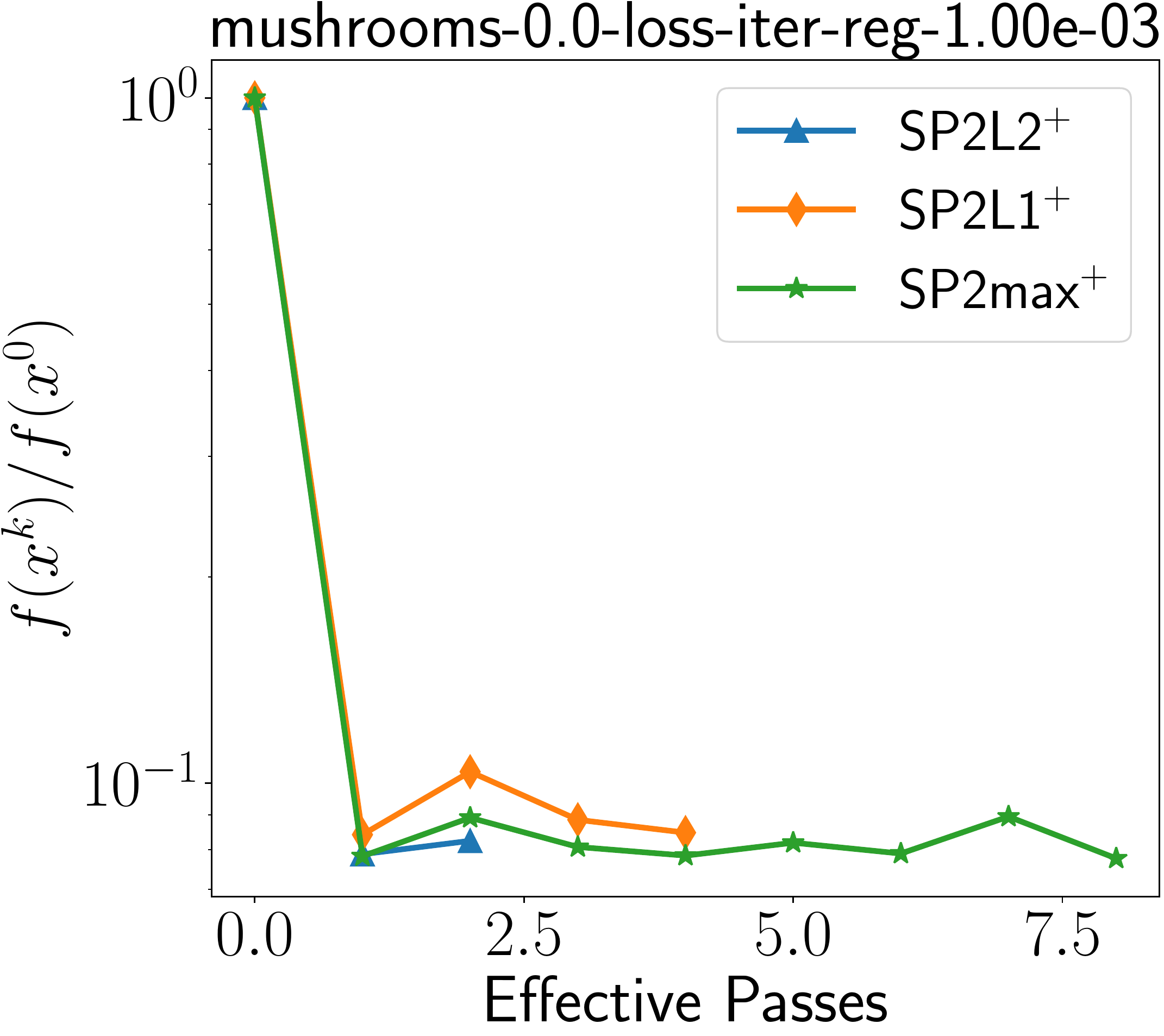}
\centerline{\small{(b) $\lambda = 0.2$}}
\end{minipage}
\hfill
\begin{minipage}{0.32\linewidth}
\centering
\includegraphics[width=1.7in]{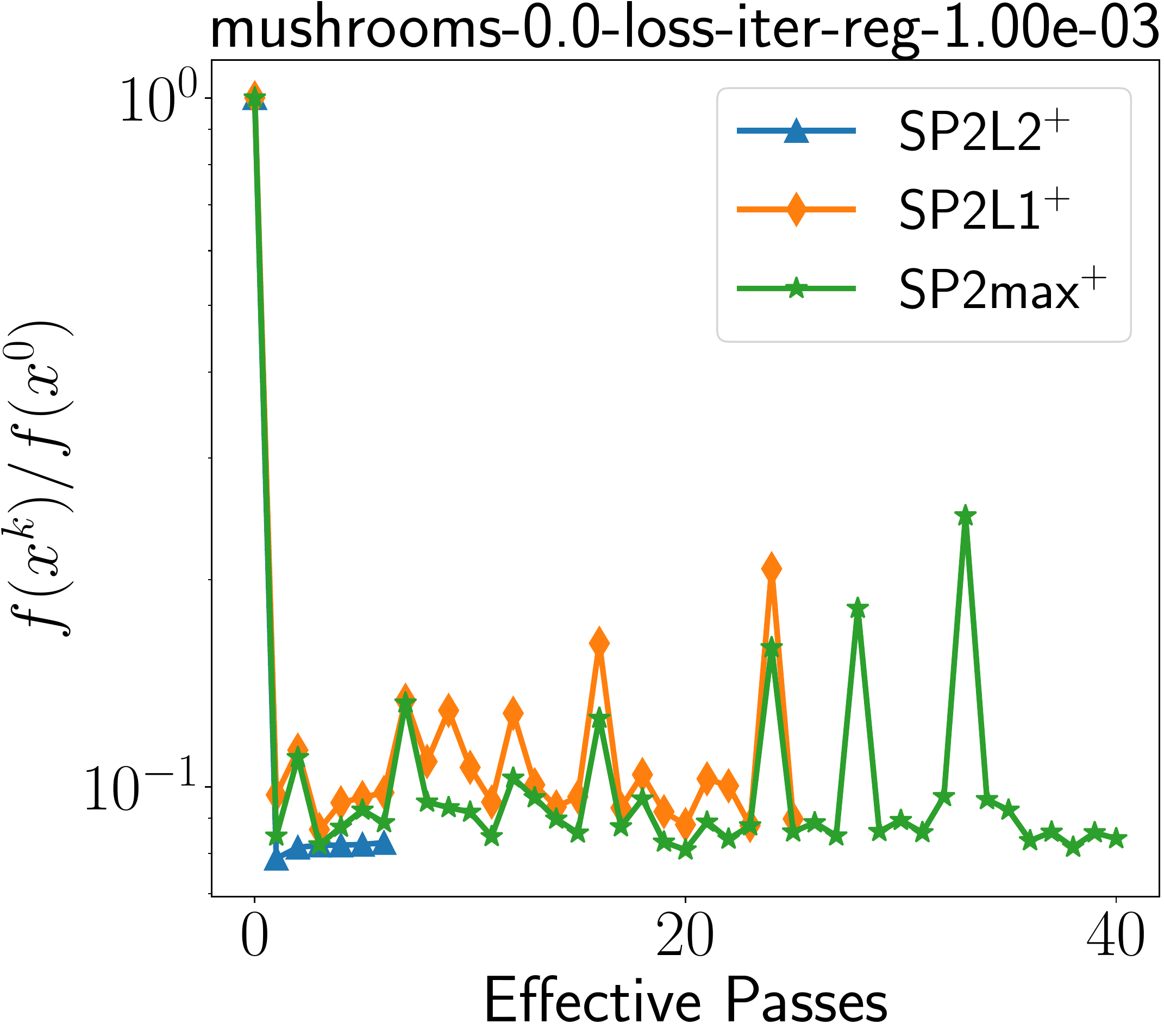}
\centerline{\small{(c) $\lambda = 0.3$}}
\end{minipage}
\\
\begin{minipage}{0.32\linewidth}
\centering
\includegraphics[width=1.7in]{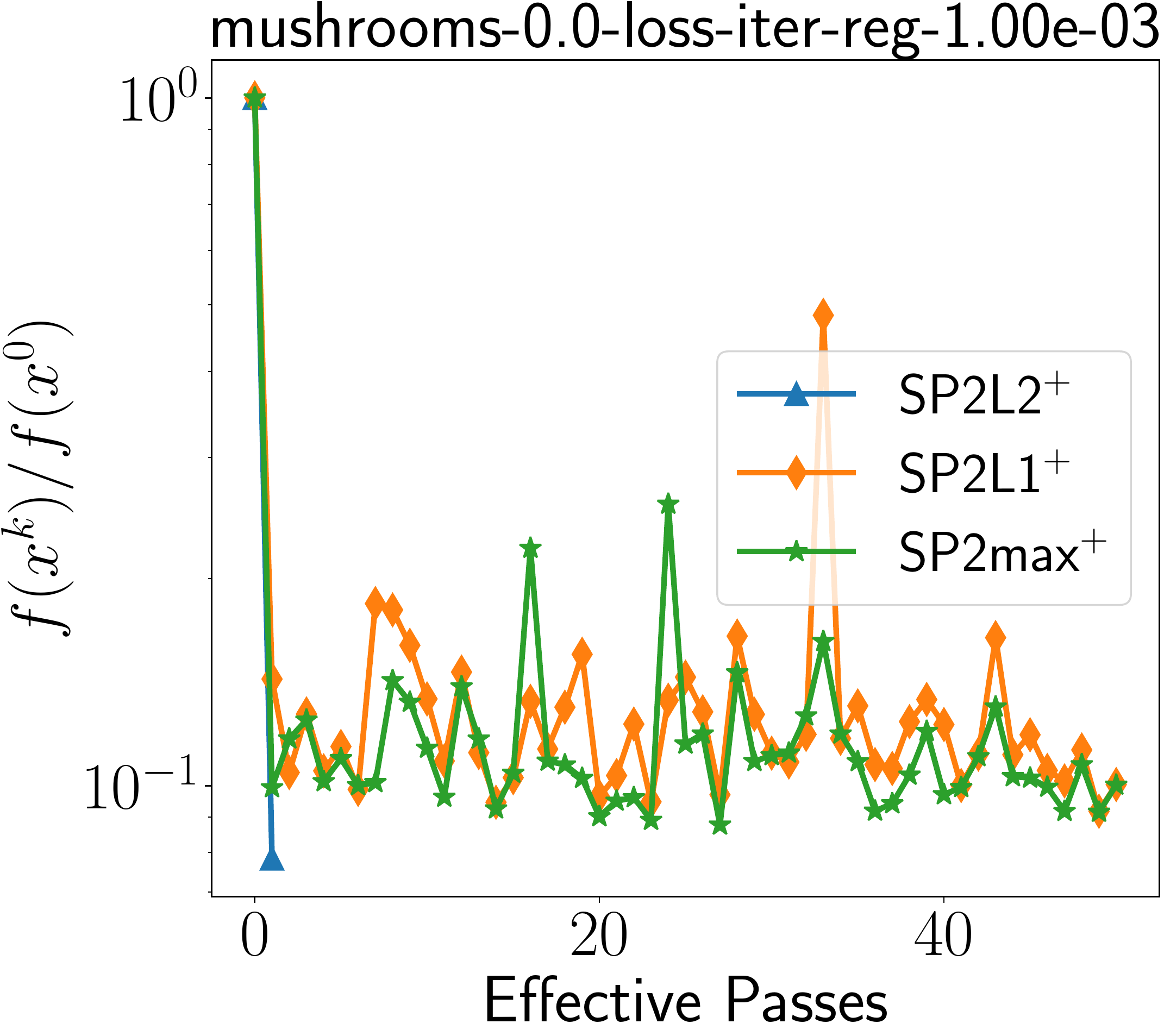}
\centerline{\small{(d) $\lambda = 0.4$}}
\end{minipage}
\hfill
\begin{minipage}{0.32\linewidth}
\centering
\includegraphics[width=1.7in]{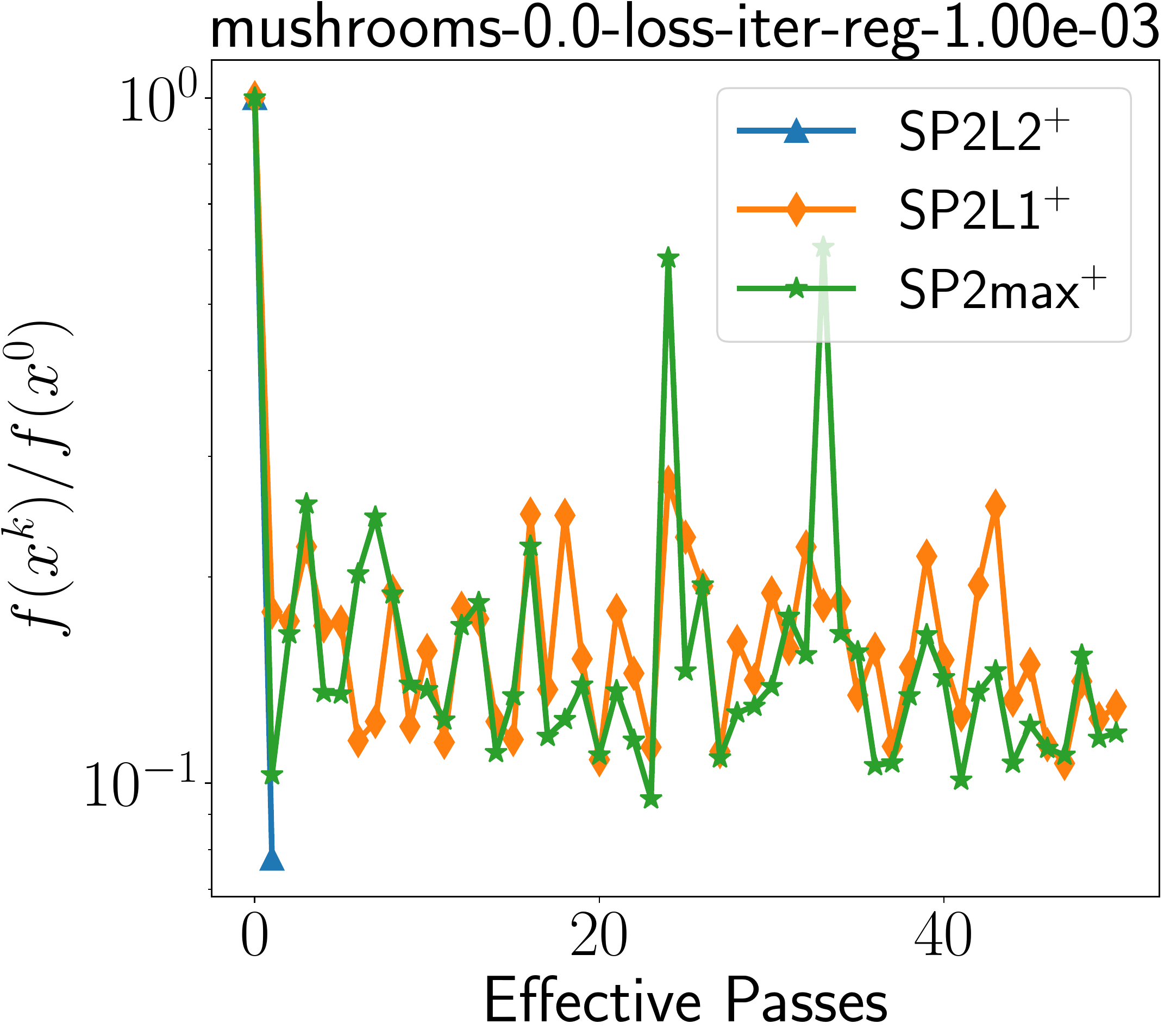}
\centerline{\small{(e) $\lambda = 0.5$}}
\end{minipage}
\hfill
\begin{minipage}{0.32\linewidth}
\centering
\includegraphics[width=1.7in]{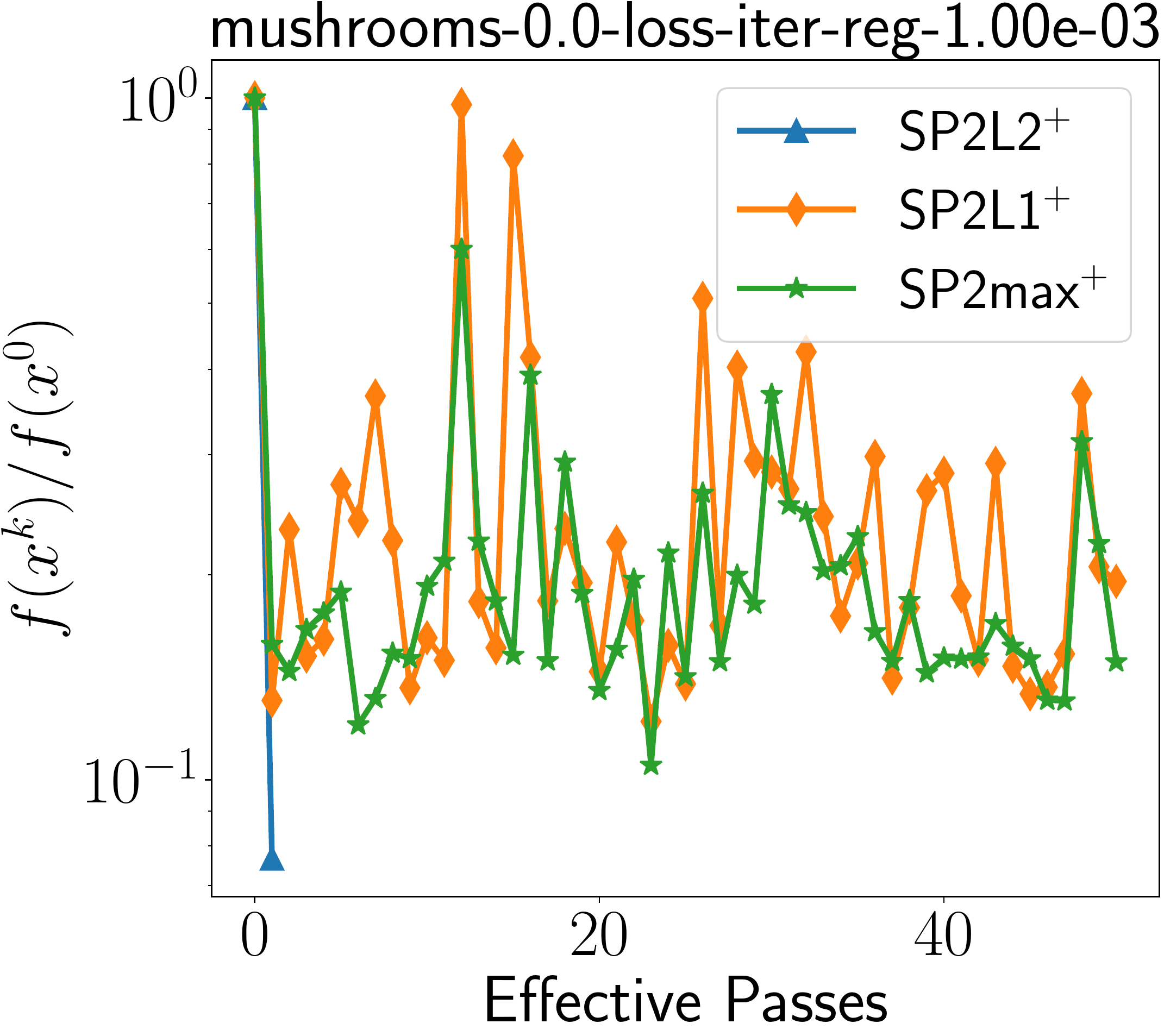}
\centerline{\small{(f) $\lambda = 0.6$}}
\end{minipage}
\\
\begin{minipage}{0.32\linewidth}
\centering
\includegraphics[width=1.7in]{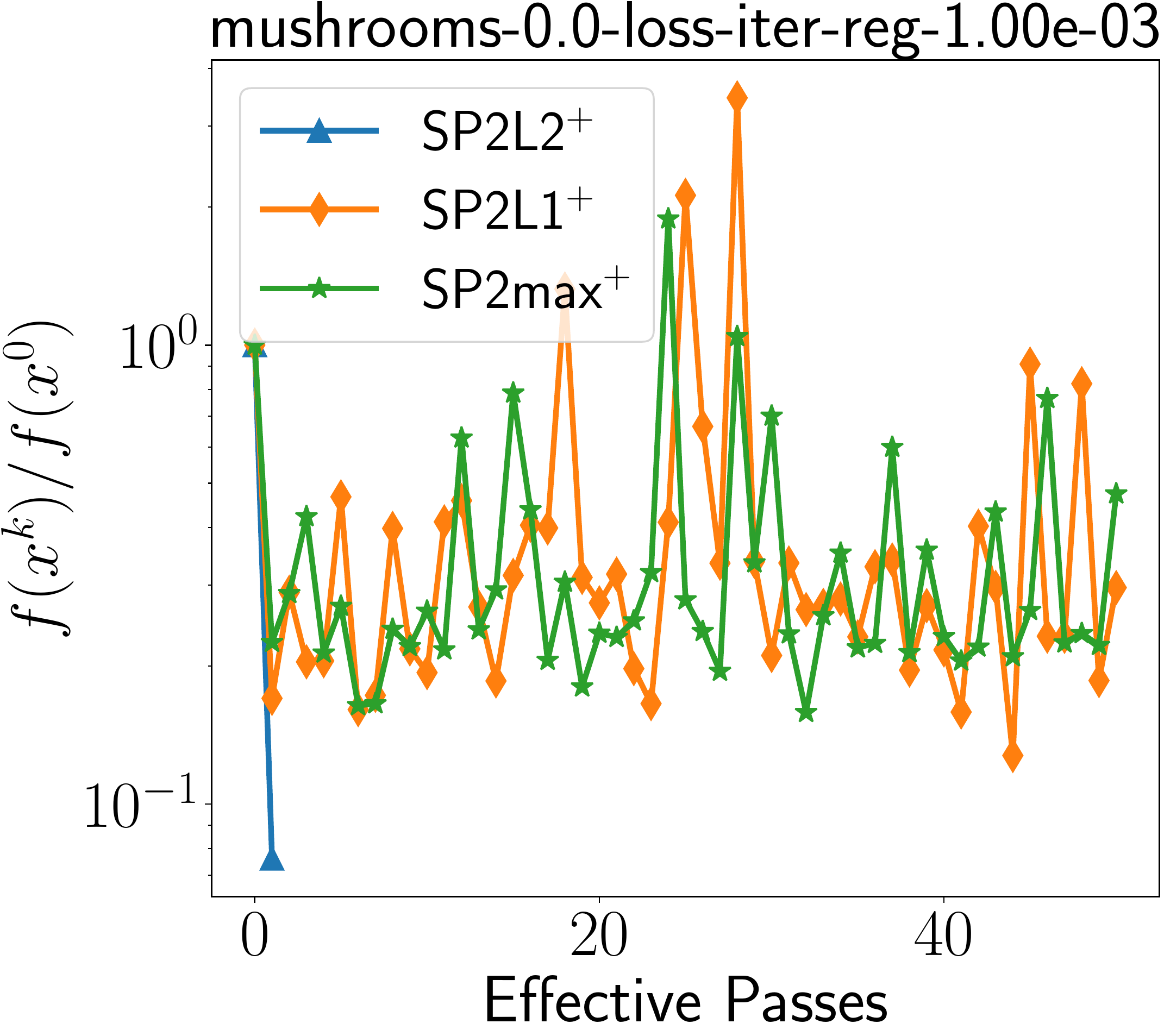}
\centerline{\small{(g) $\lambda = 0.7$}}
\end{minipage}
\hfill
\begin{minipage}{0.32\linewidth}
\centering
\includegraphics[width=1.7in]{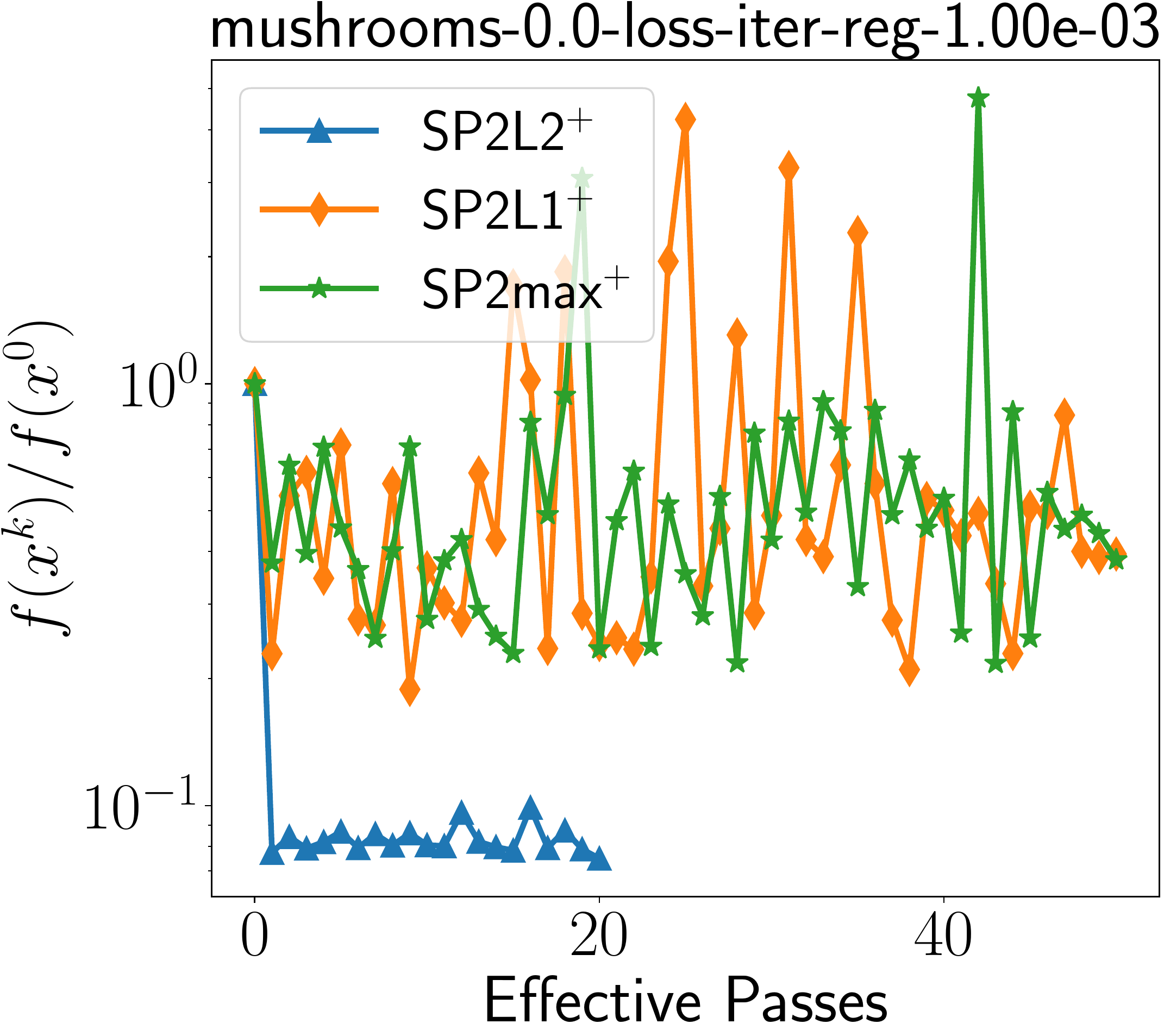}
\centerline{\small{(h) $\lambda = 0.8$}}
\end{minipage}
\hfill
\begin{minipage}{0.32\linewidth}
\centering
\includegraphics[width=1.7in]{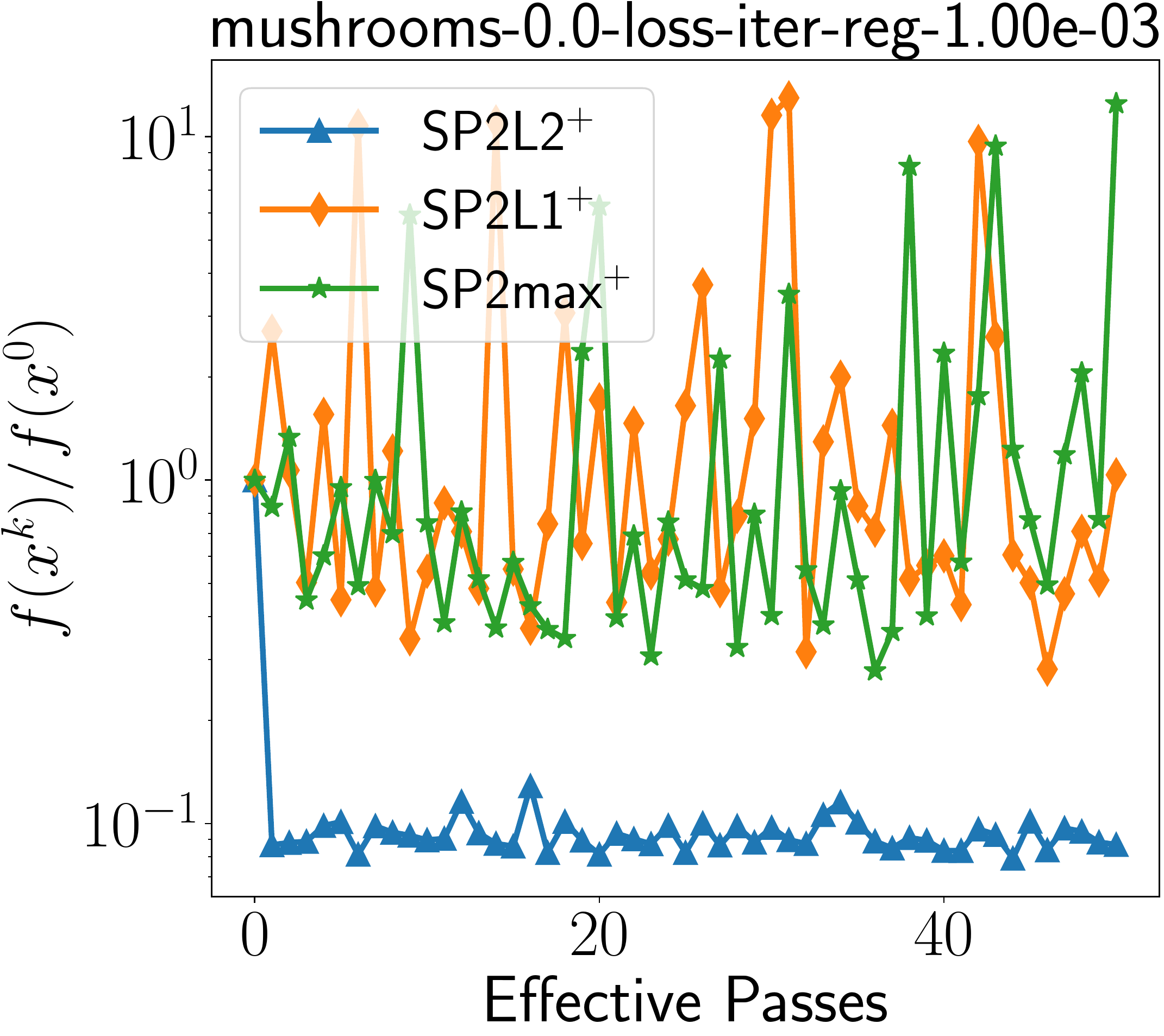}
\centerline{\small{(i) $\lambda = 0.9$}}
\end{minipage}
\caption{Mushrooms: loss at each epoch with different $\lambda$. }
\label{fig:mush_loss_diff_lamb}
\end{figure}

\begin{figure}[t]
\begin{minipage}{0.32\linewidth}
\centering
\includegraphics[width=1.7in]{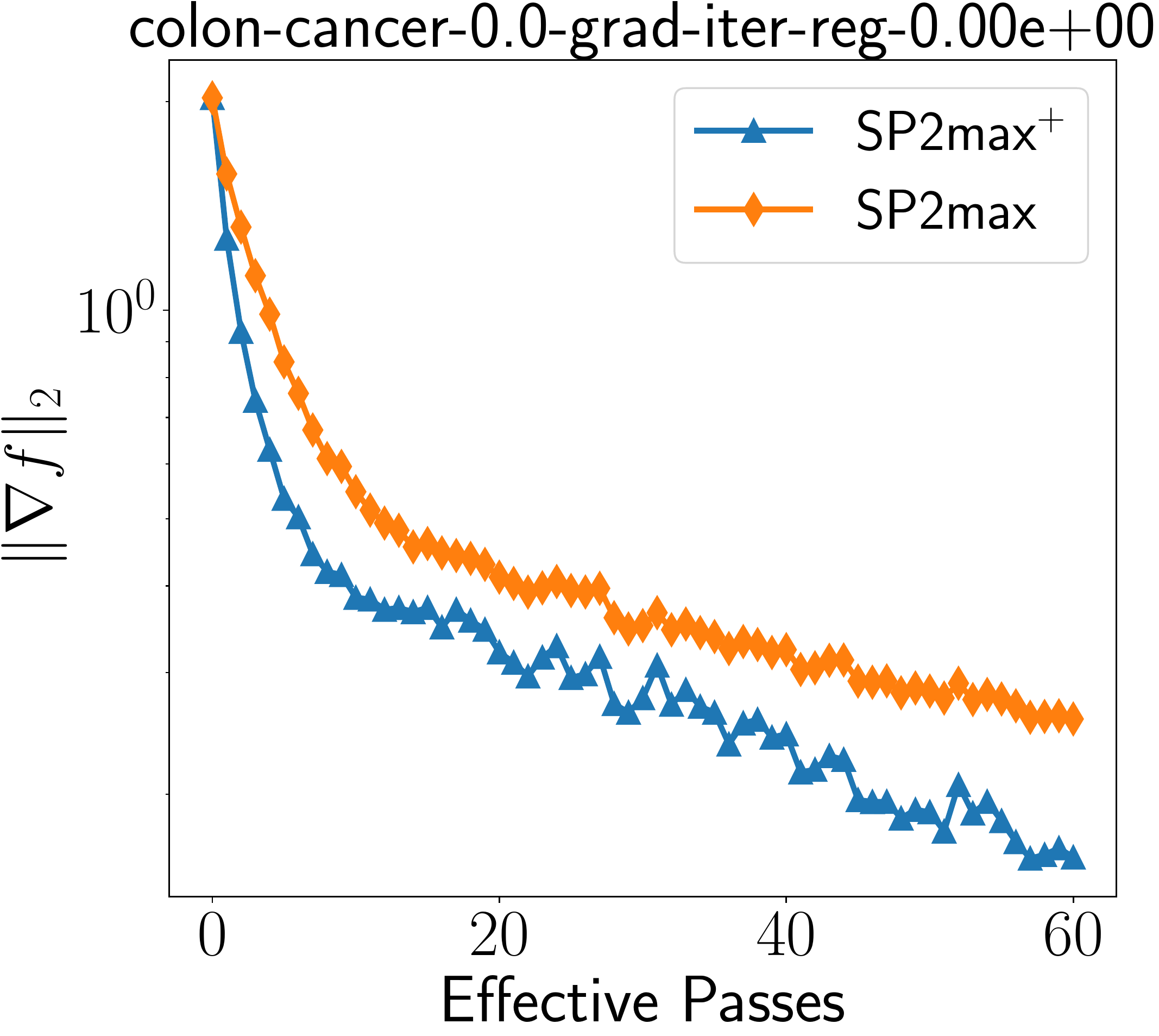}
\centerline{\small{(a) $\lambda = 0.001$}}
\end{minipage}
\hfill
\begin{minipage}{0.32\linewidth}
\centering
\includegraphics[width=1.7in]{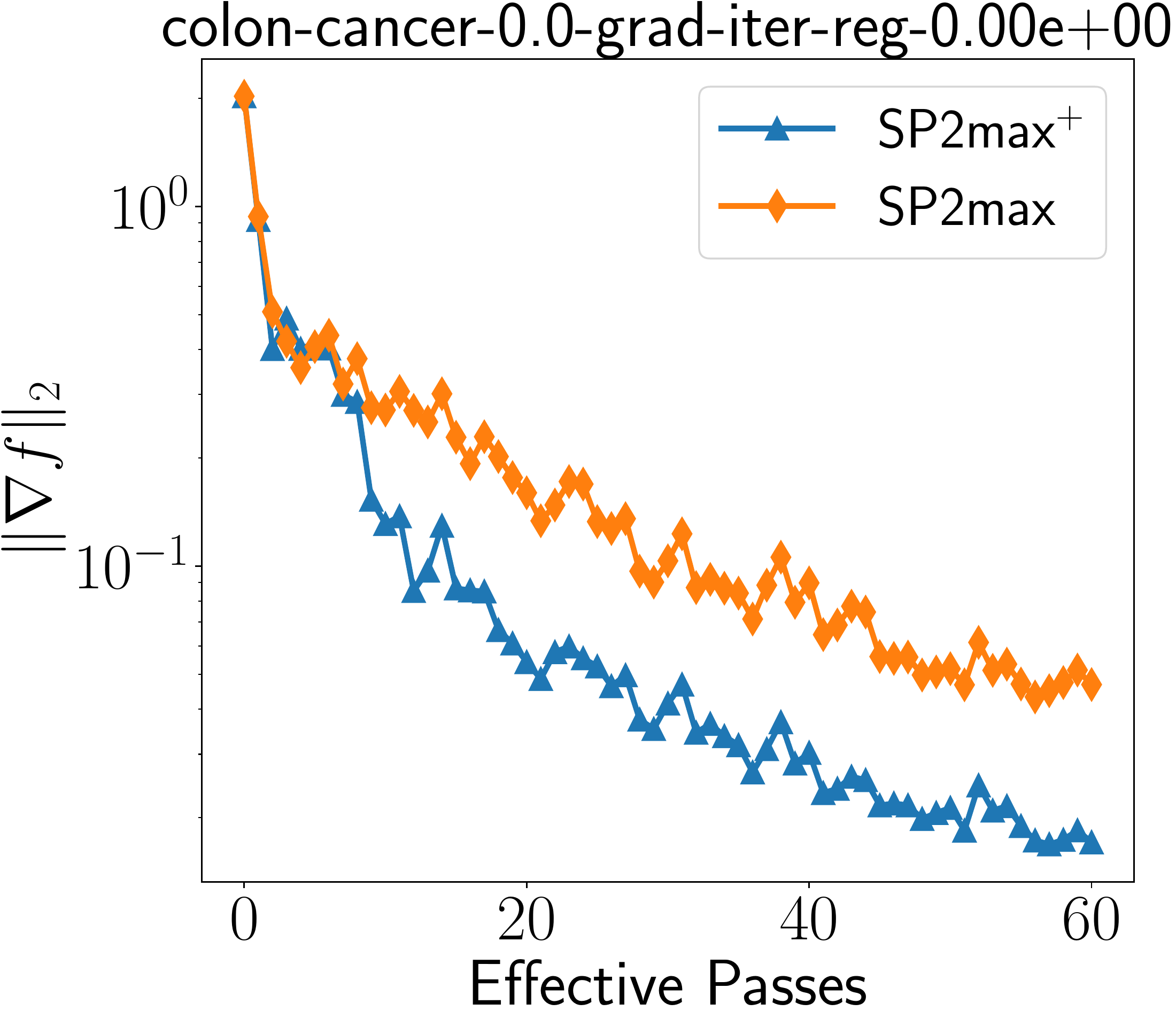}
\centerline{\small{(b) $\lambda = 0.01$}}
\end{minipage}
\hfill
\begin{minipage}{0.32\linewidth}
\centering
\includegraphics[width=1.7in]{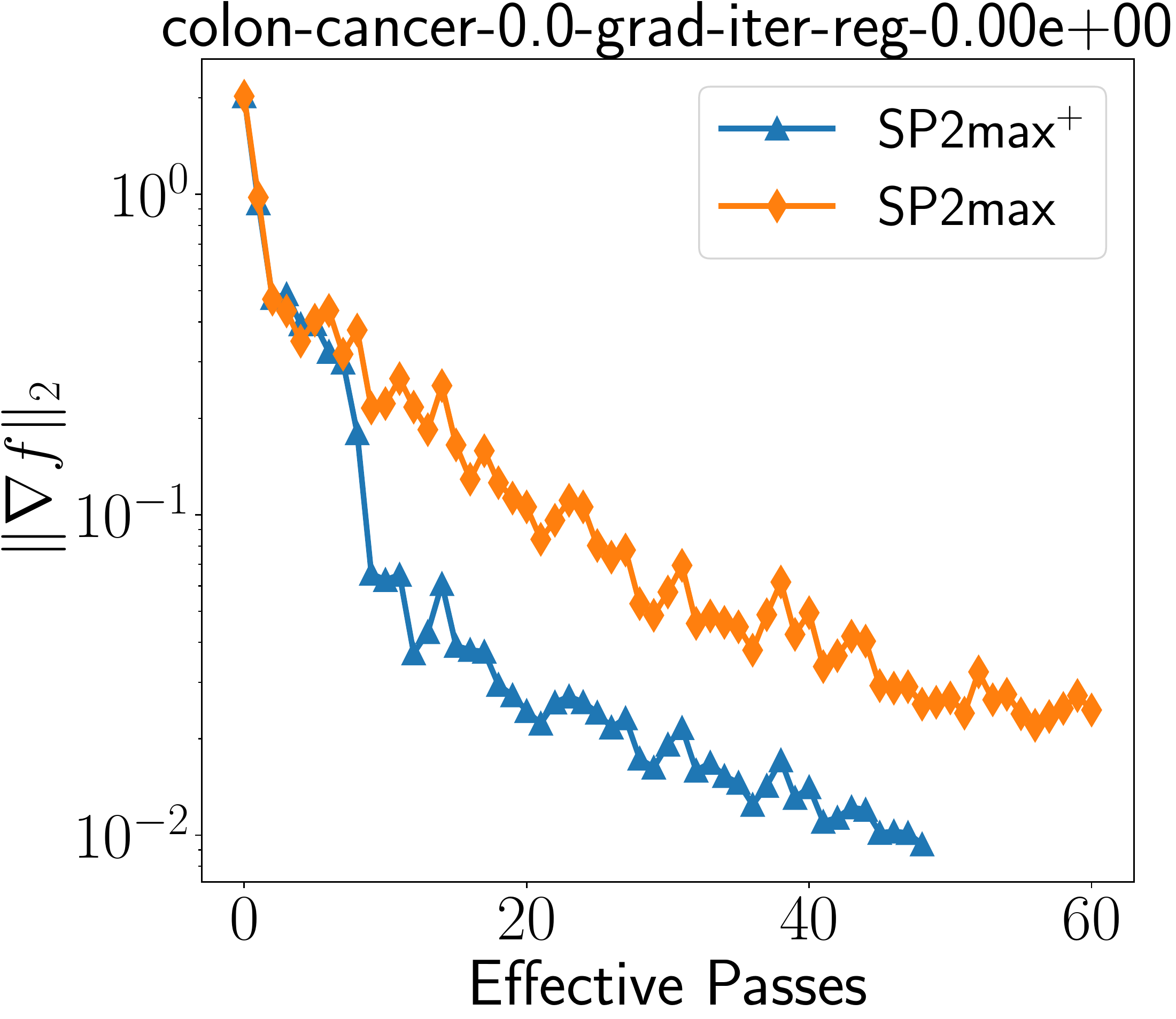}
\centerline{\small{(c) $\lambda = 0.02$}}
\end{minipage}
\\
\begin{minipage}{0.32\linewidth}
\centering
\includegraphics[width=1.7in]{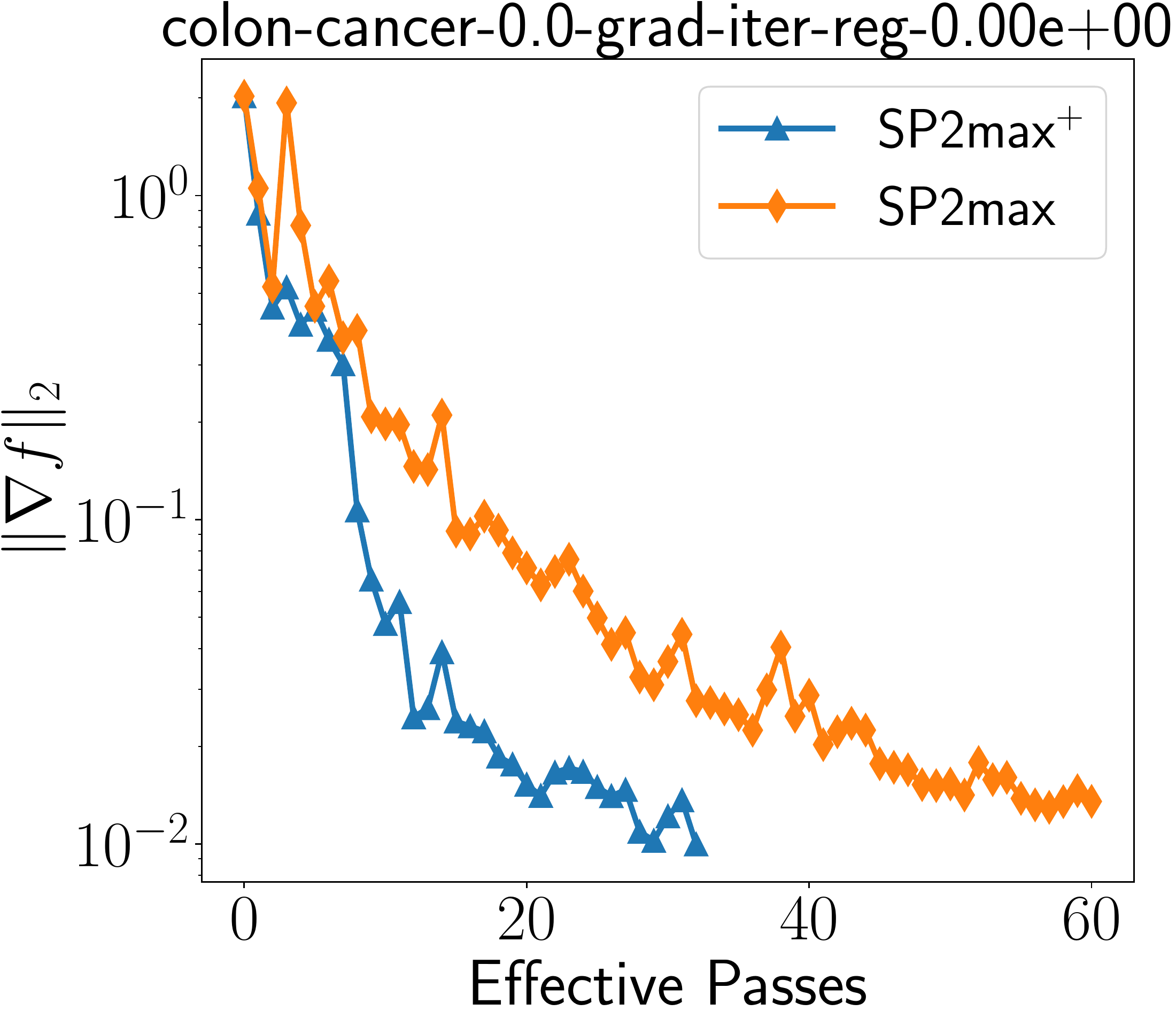}
\centerline{\small{(d) $\lambda = 0.03$}}
\end{minipage}
\hfill
\begin{minipage}{0.32\linewidth}
\centering
\includegraphics[width=1.7in]{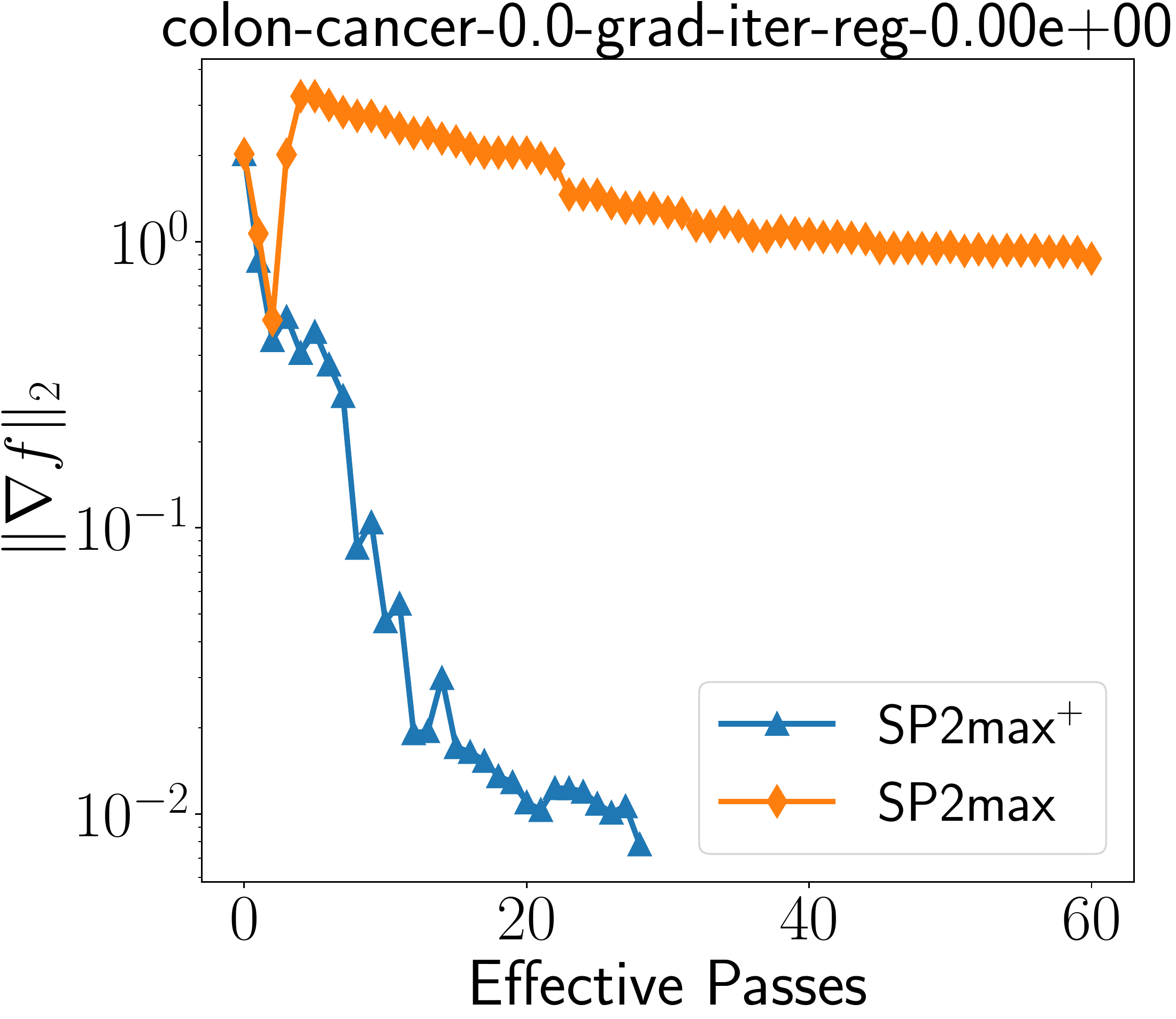}
\centerline{\small{(e) $\lambda = 0.04$}}
\end{minipage}
\hfill
\begin{minipage}{0.32\linewidth}
\centering
\includegraphics[width=1.7in]{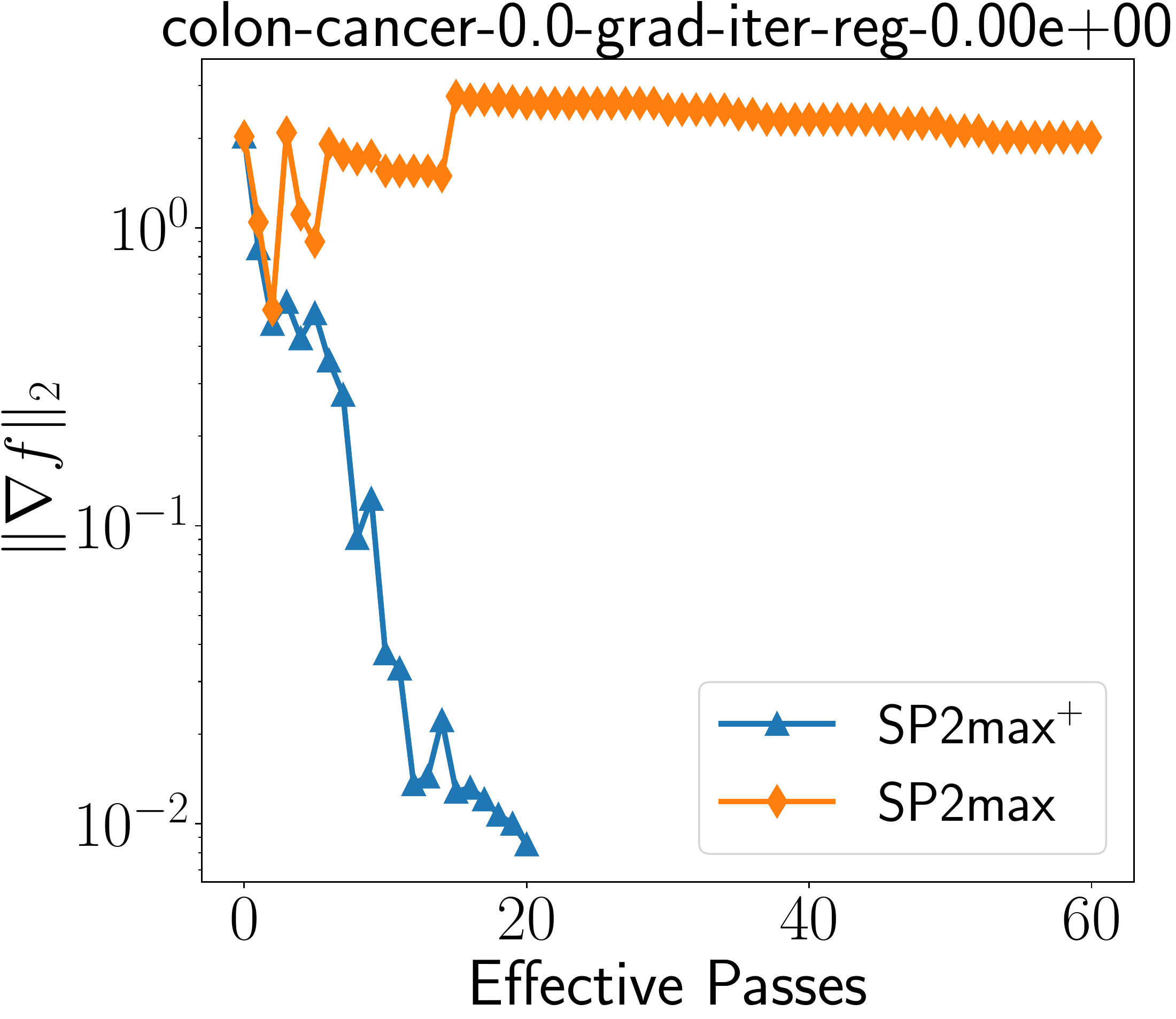}
\centerline{\small{(f) $\lambda = 0.05$}}
\end{minipage}
\caption{Colon-cancer: gradient norm at each epoch with different $\lambda$. }
\label{fig:colon_grad_diff_lamb_max}
\end{figure}

\begin{figure}[t]
\begin{minipage}{0.32\linewidth}
\centering
\includegraphics[width=1.7in]{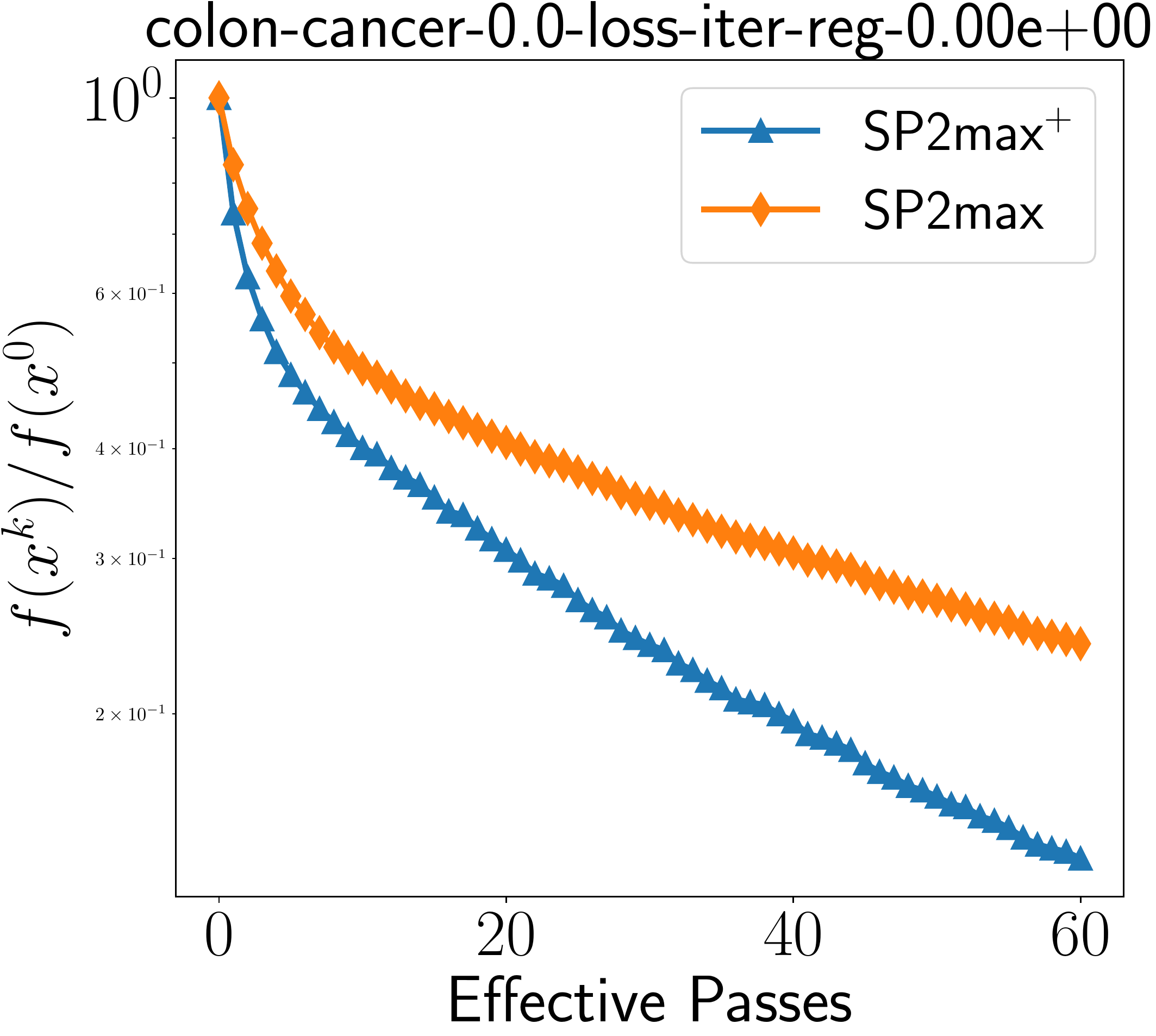}
\centerline{\small{(a) $\lambda = 0.001$}}
\end{minipage}
\hfill
\begin{minipage}{0.32\linewidth}
\centering
\includegraphics[width=1.7in]{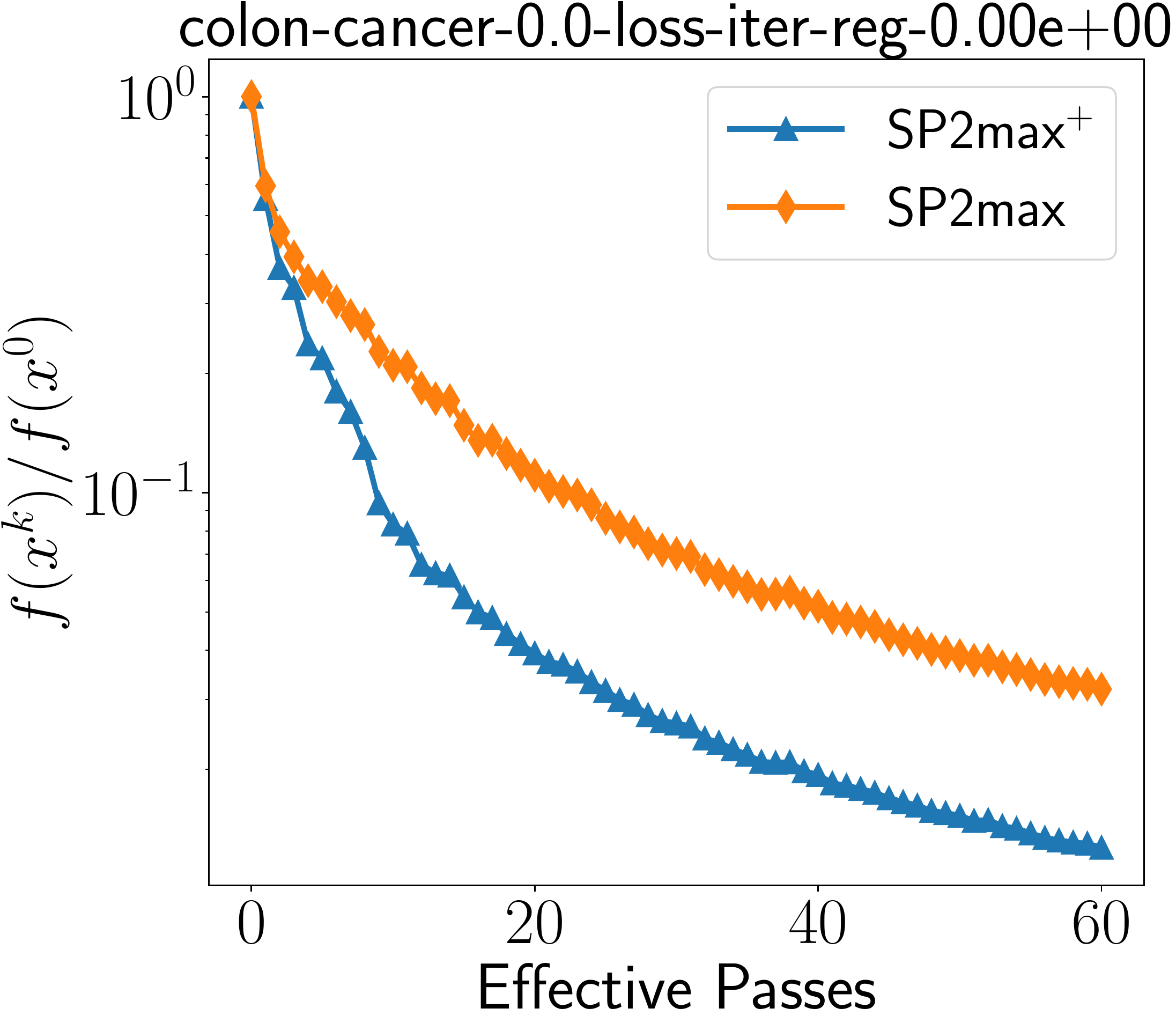}
\centerline{\small{(b) $\lambda = 0.01$}}
\end{minipage}
\hfill
\begin{minipage}{0.32\linewidth}
\centering
\includegraphics[width=1.7in]{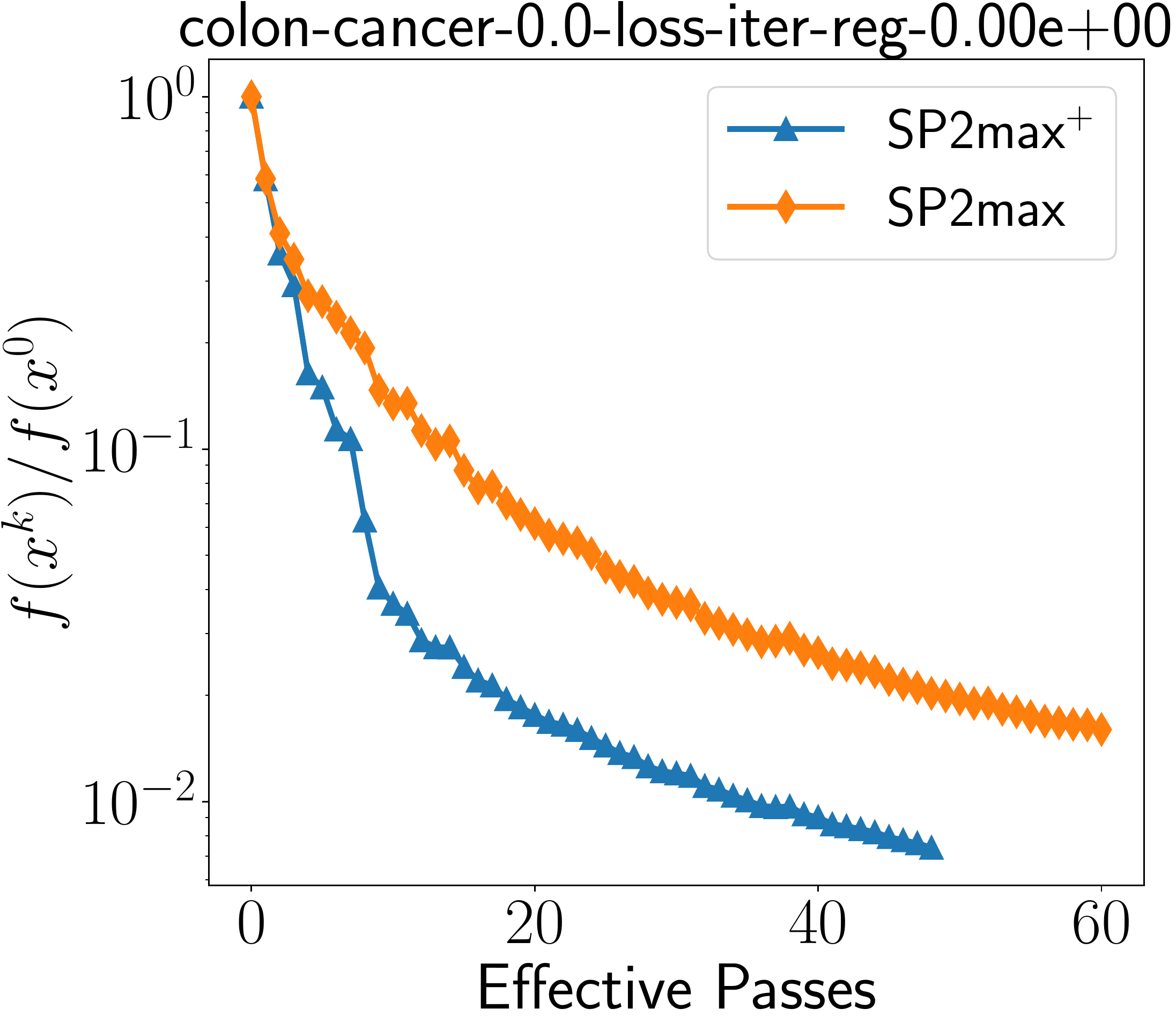}
\centerline{\small{(c) $\lambda = 0.02$}}
\end{minipage}
\\
\begin{minipage}{0.32\linewidth}
\centering
\includegraphics[width=1.7in]{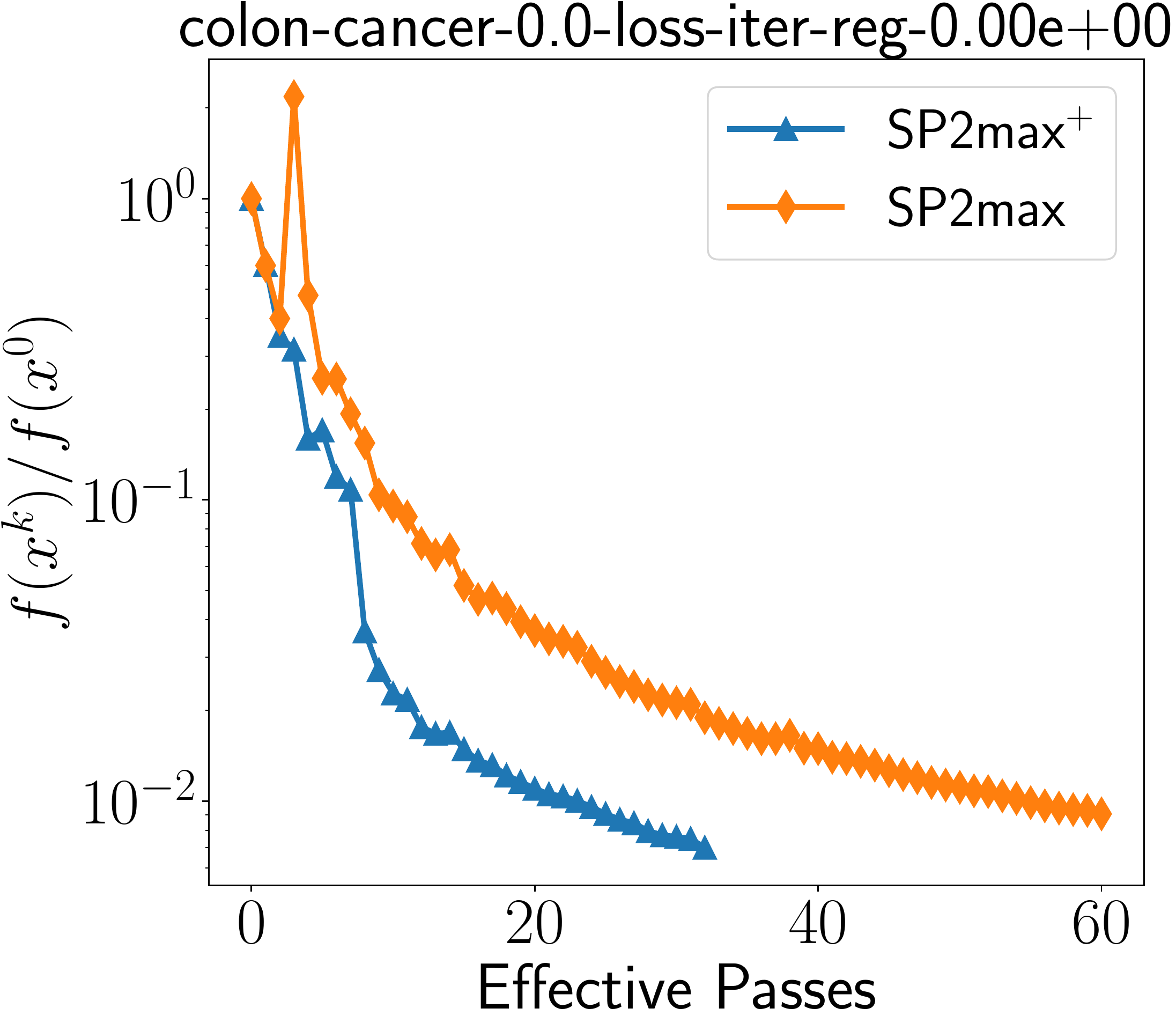}
\centerline{\small{(d) $\lambda = 0.03$}}
\end{minipage}
\hfill
\begin{minipage}{0.32\linewidth}
\centering
\includegraphics[width=1.7in]{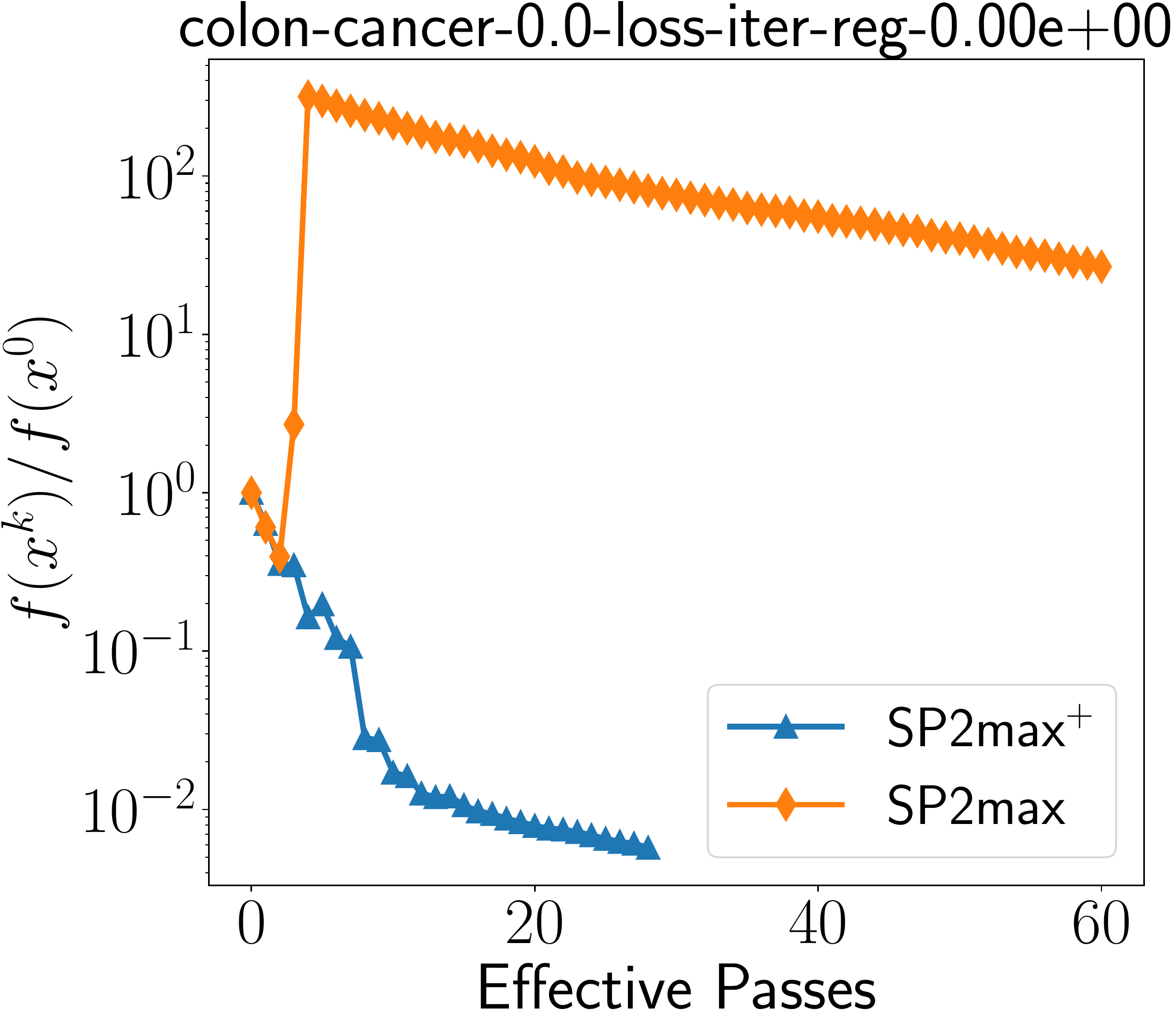}
\centerline{\small{(e) $\lambda = 0.04$}}
\end{minipage}
\hfill
\begin{minipage}{0.32\linewidth}
\centering
\includegraphics[width=1.7in]{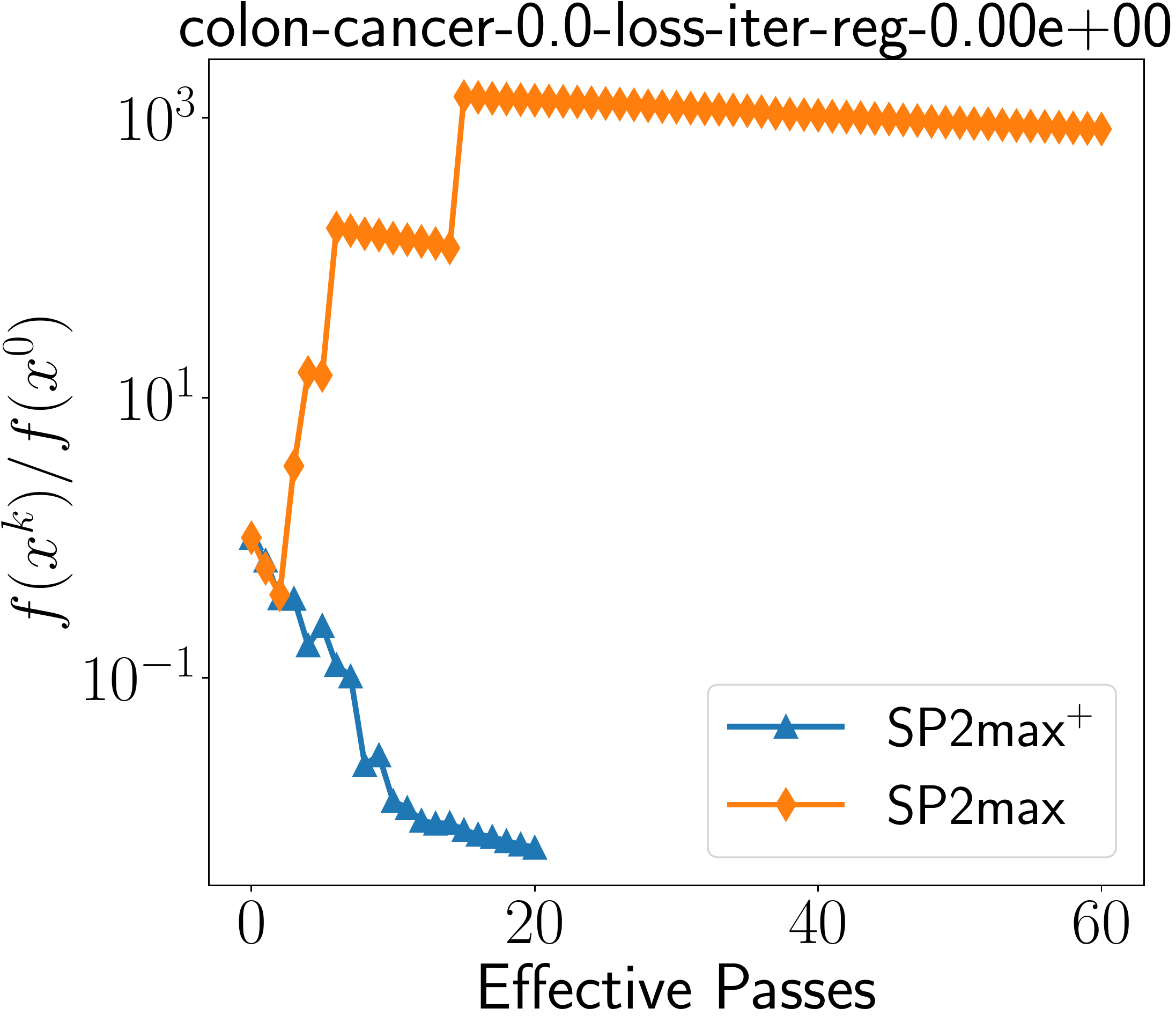}
\centerline{\small{(f) $\lambda = 0.05$}}
\end{minipage}
\caption{Colon-cancer: loss at each epoch with different $\lambda$. }
\label{fig:colon_loss_diff_lamb_max}
\end{figure}

\begin{figure}[t]
\begin{minipage}{0.32\linewidth}
\centering
\includegraphics[width=1.7in]{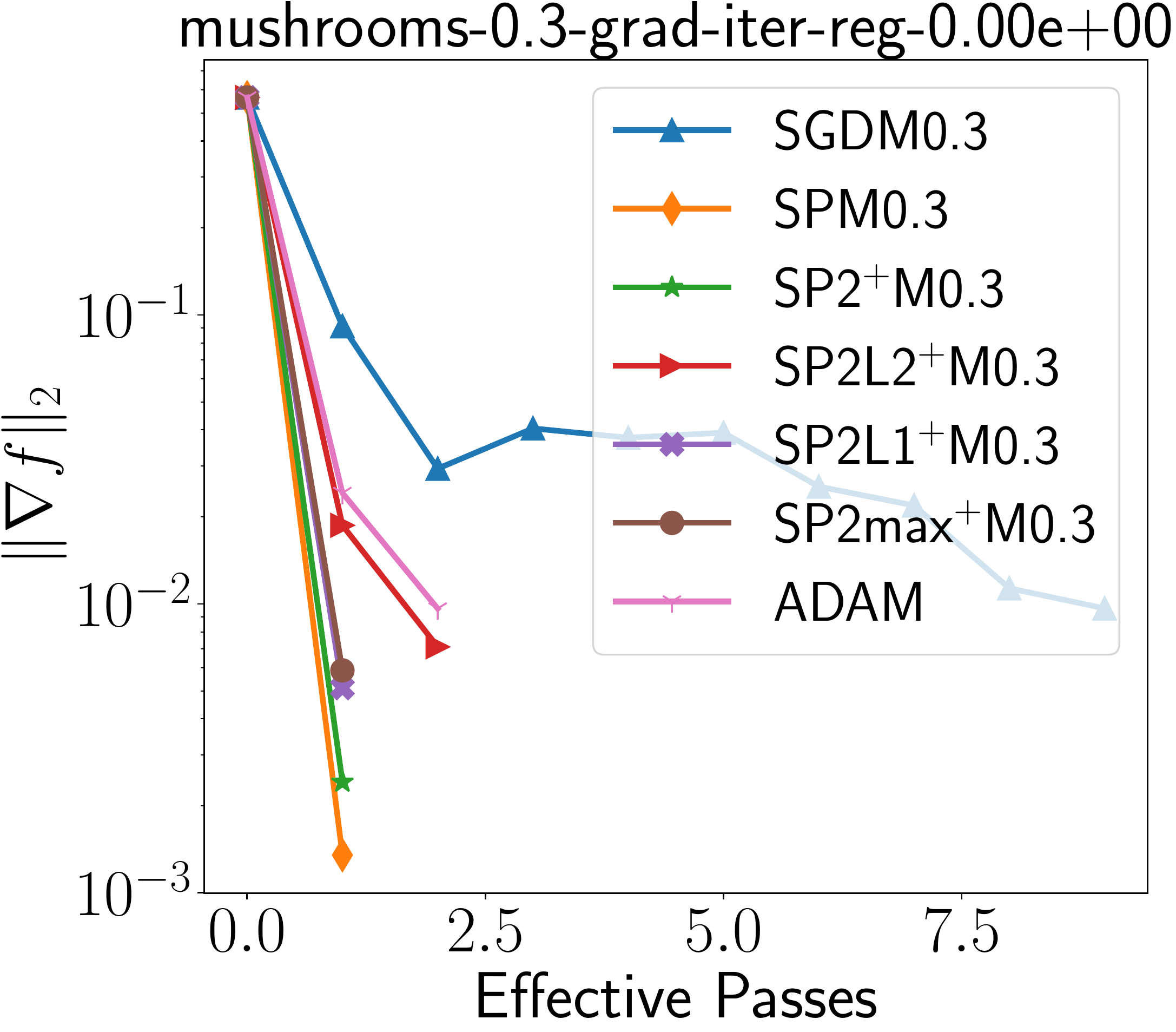}
\end{minipage}
\hfill
\begin{minipage}{0.32\linewidth}
\centering
\includegraphics[width=1.7in]{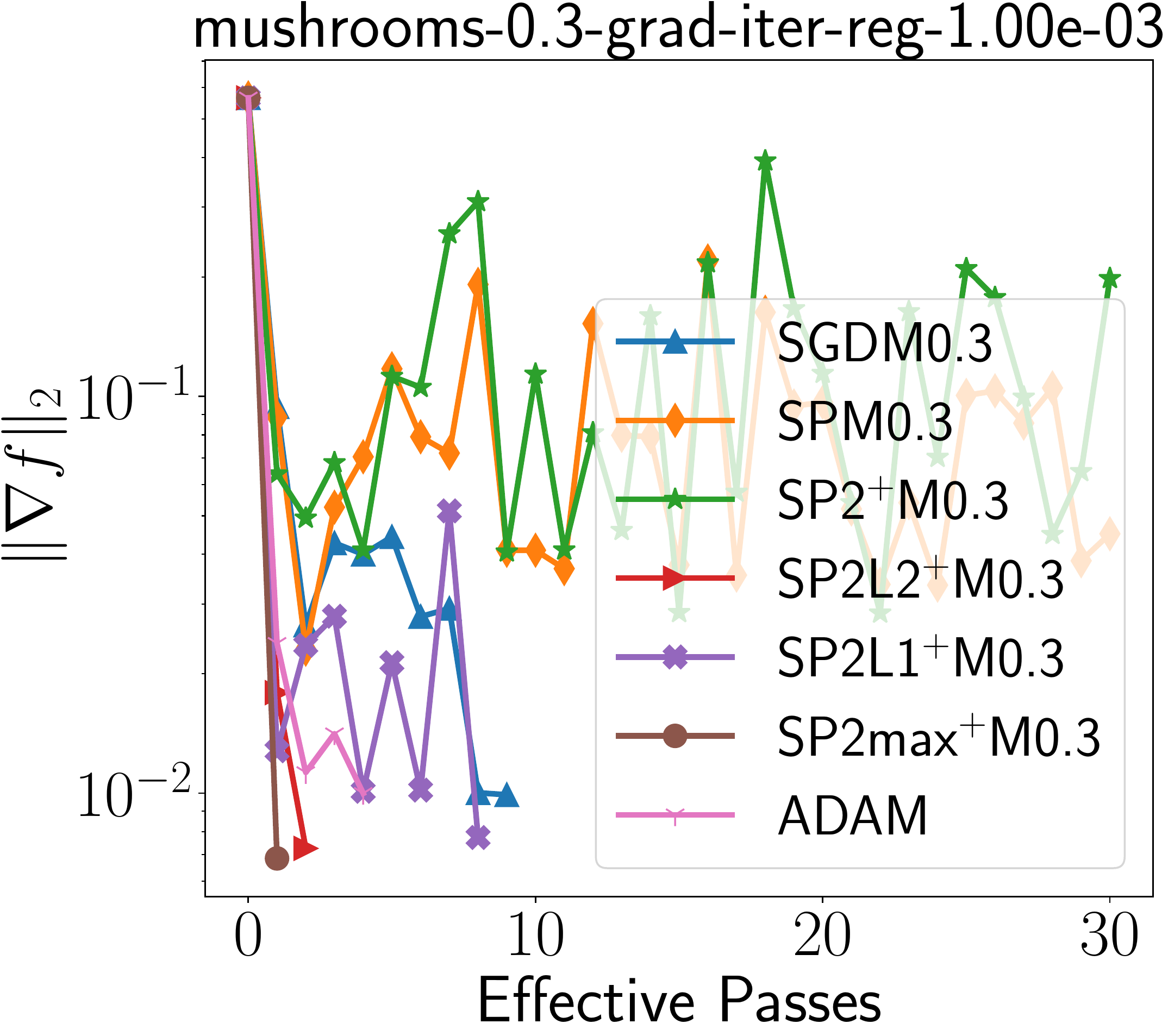}
\end{minipage}
\hfill
\begin{minipage}{0.32\linewidth}
\centering
\includegraphics[width=1.7in]{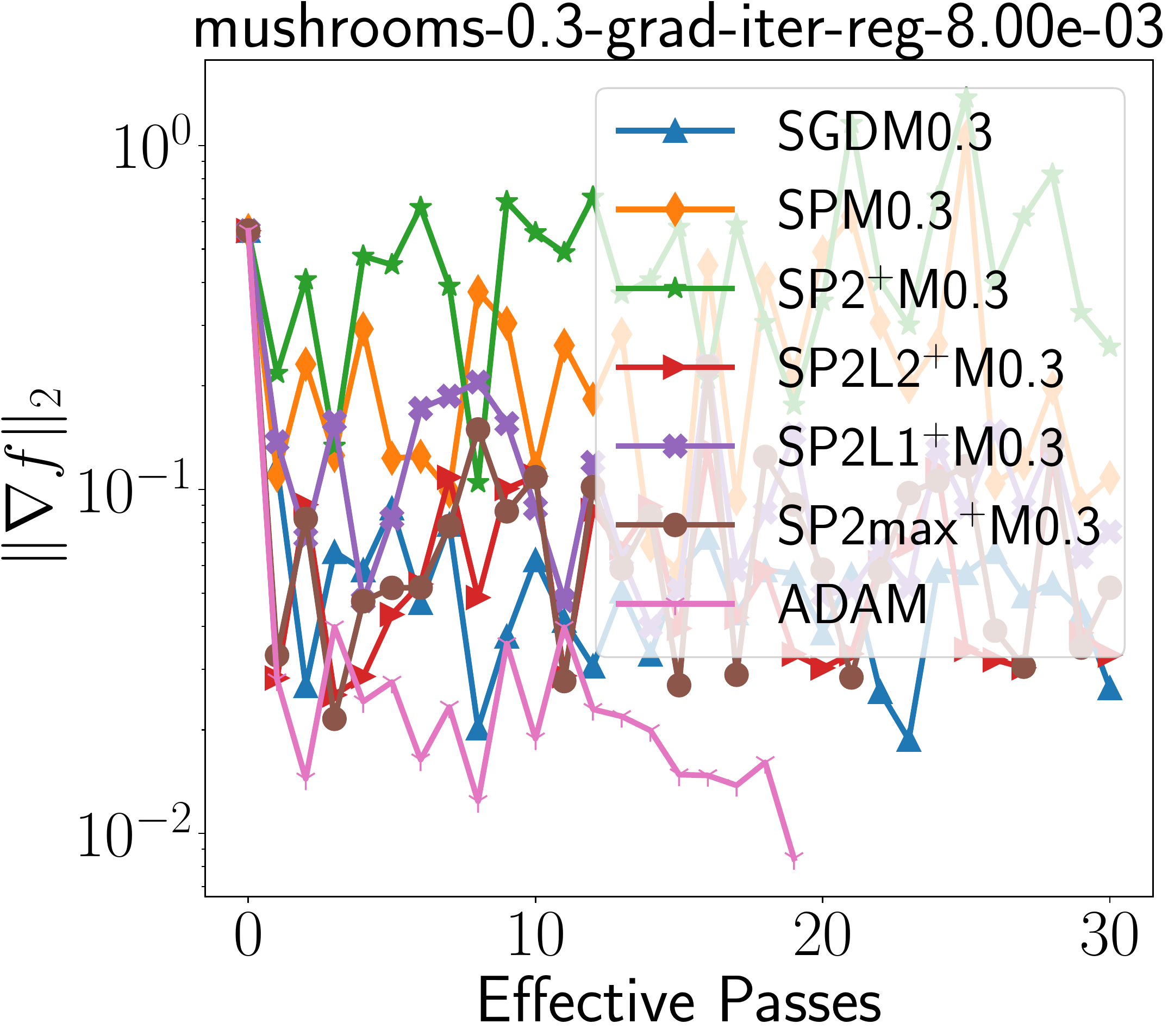}
\end{minipage}
\hfill
\begin{minipage}{0.32\linewidth}
\centering
\includegraphics[width=1.7in]{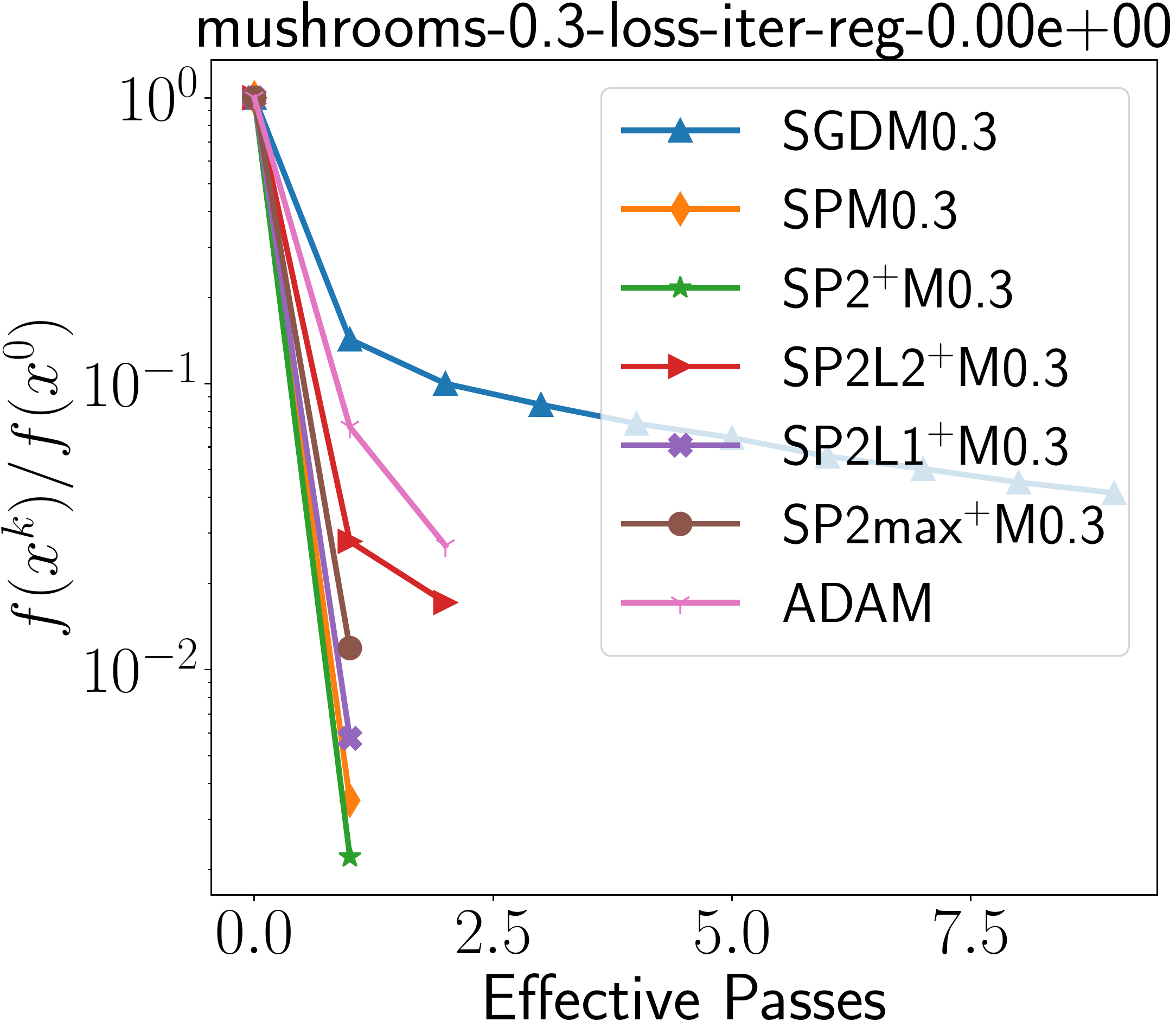}
\end{minipage}
\hfill
\begin{minipage}{0.32\linewidth}
\centering
\includegraphics[width=1.7in]{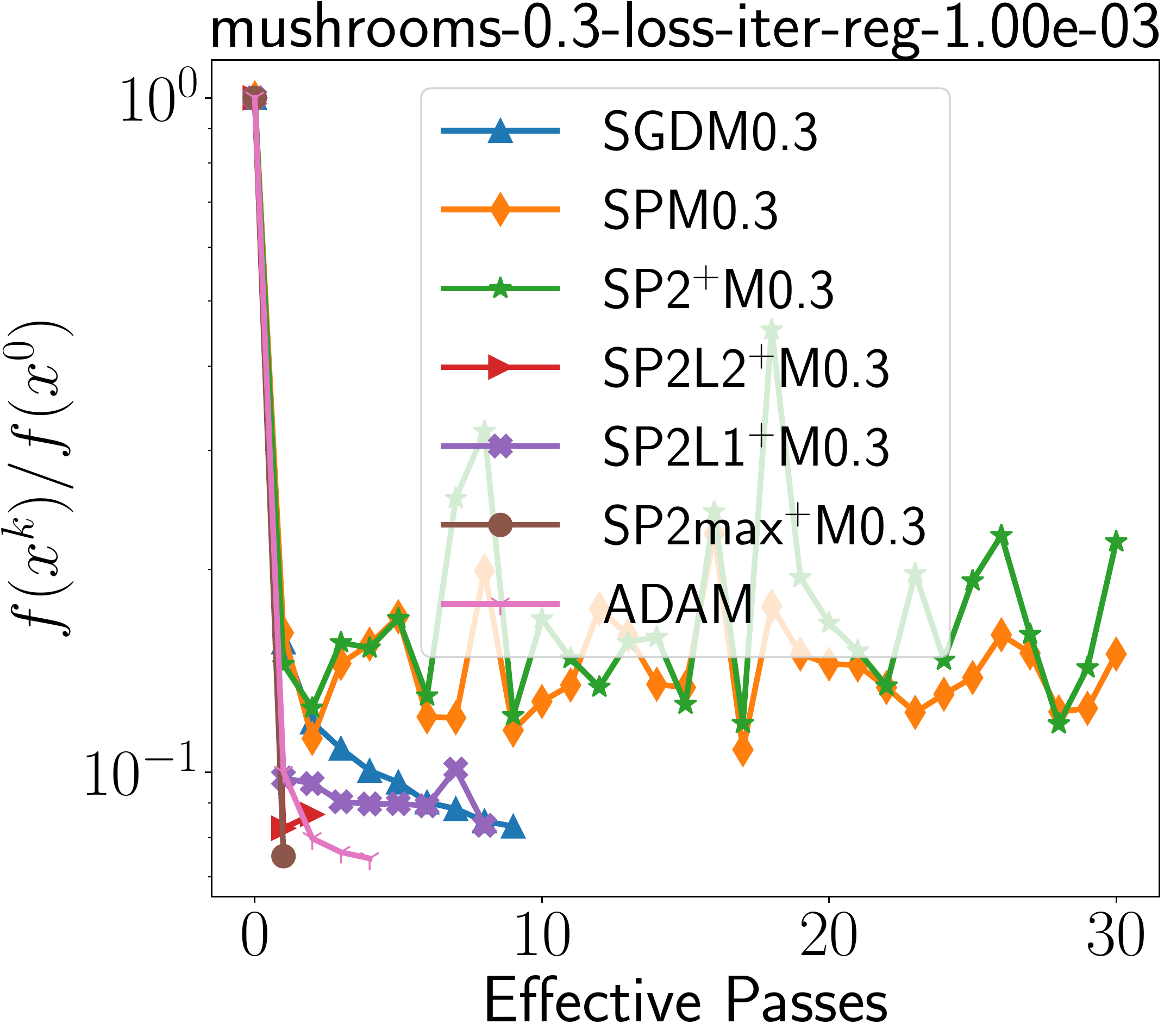}
\end{minipage}
\hfill
\begin{minipage}{0.32\linewidth}
\centering
\includegraphics[width=1.7in]{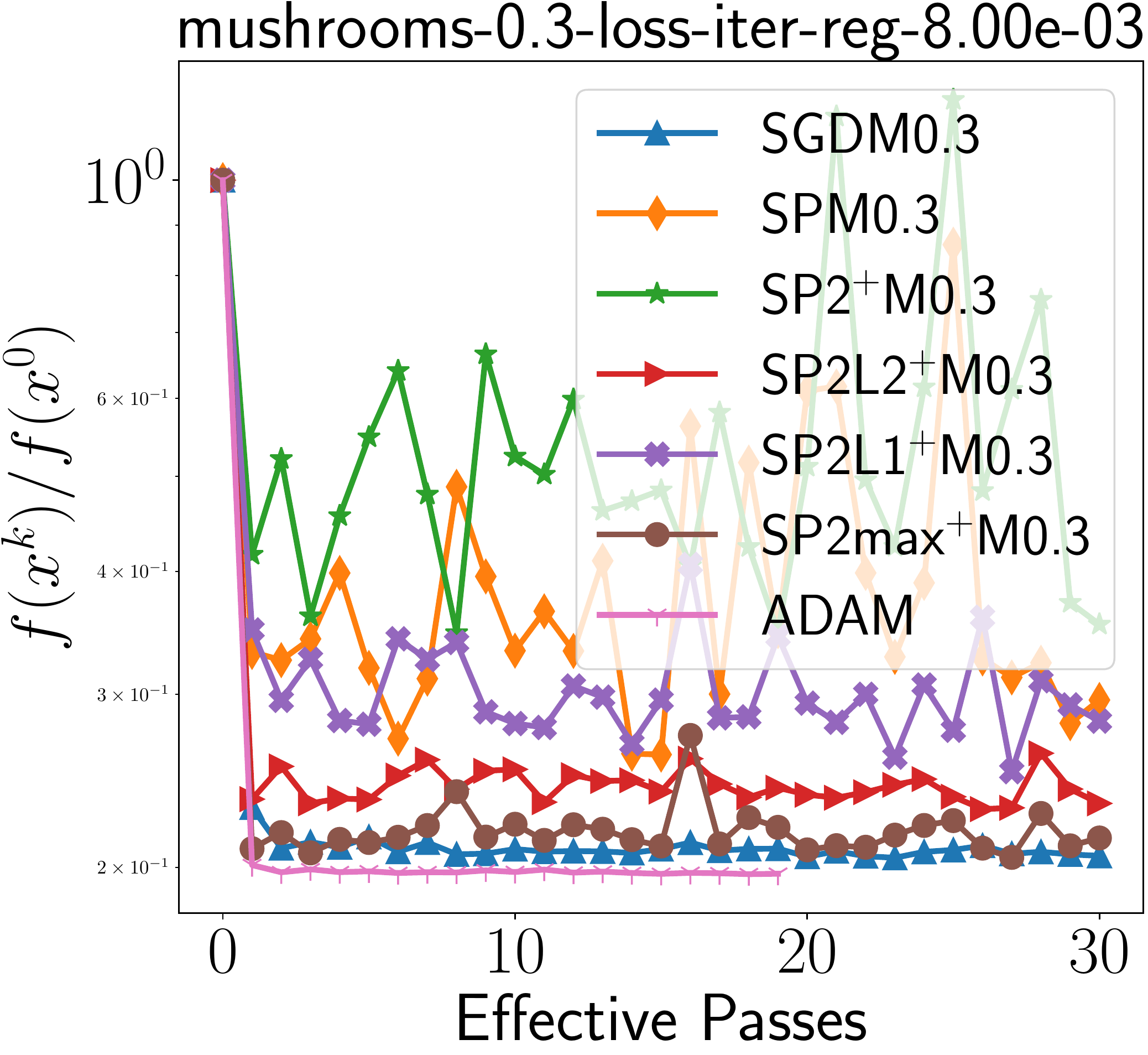}
\end{minipage}
\caption{Mushrooms: gradient norm and loss at each epoch with momentum being $0.3$.}
\label{fig:mush_Gradnorm_loss_M05}
\end{figure}

\begin{figure}[t]
\begin{minipage}{0.45\linewidth}
\centering
\includegraphics[width=1.7in]{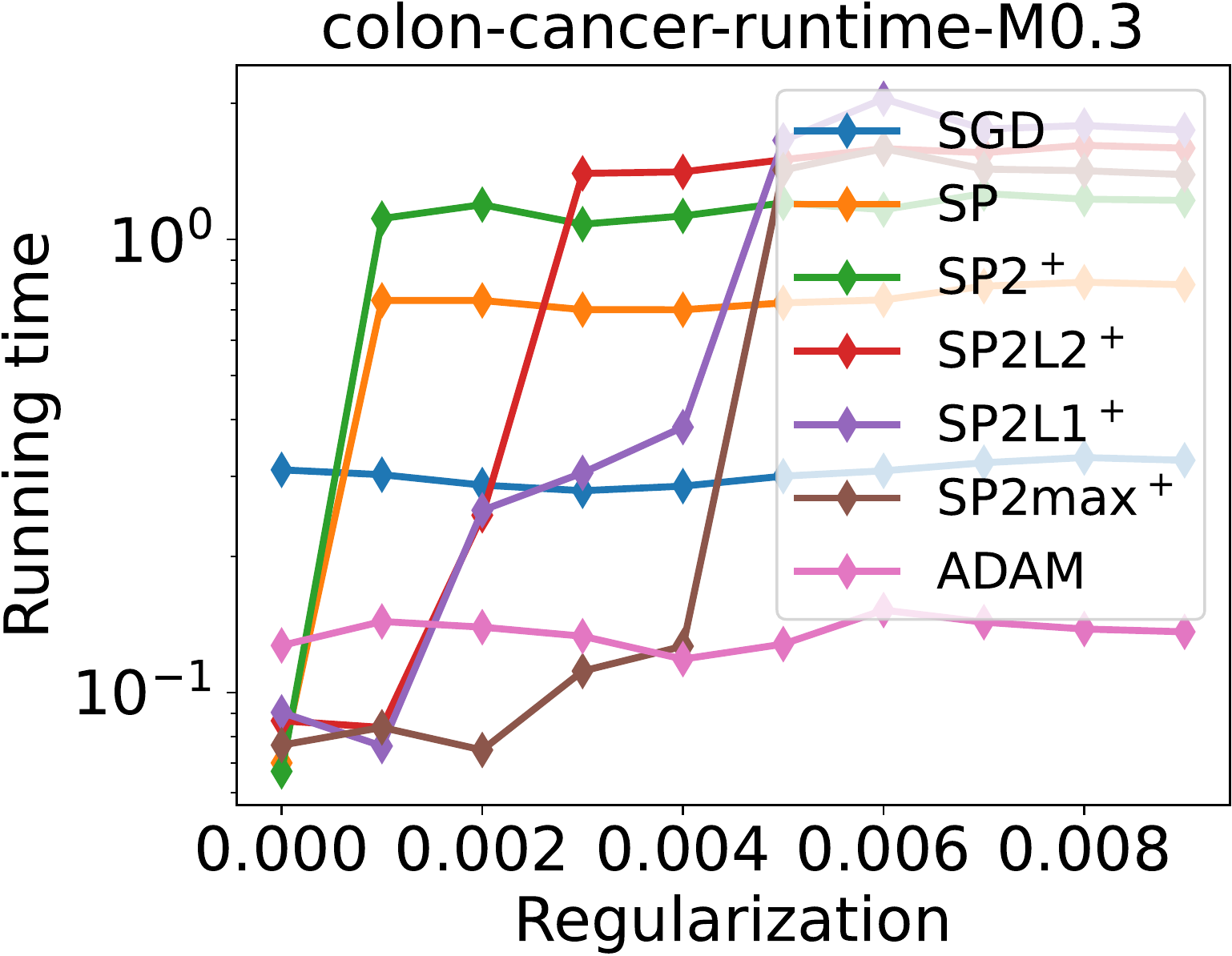}
\end{minipage}
\hfill
\begin{minipage}{0.5\linewidth}
\centering
\includegraphics[width=1.7in]{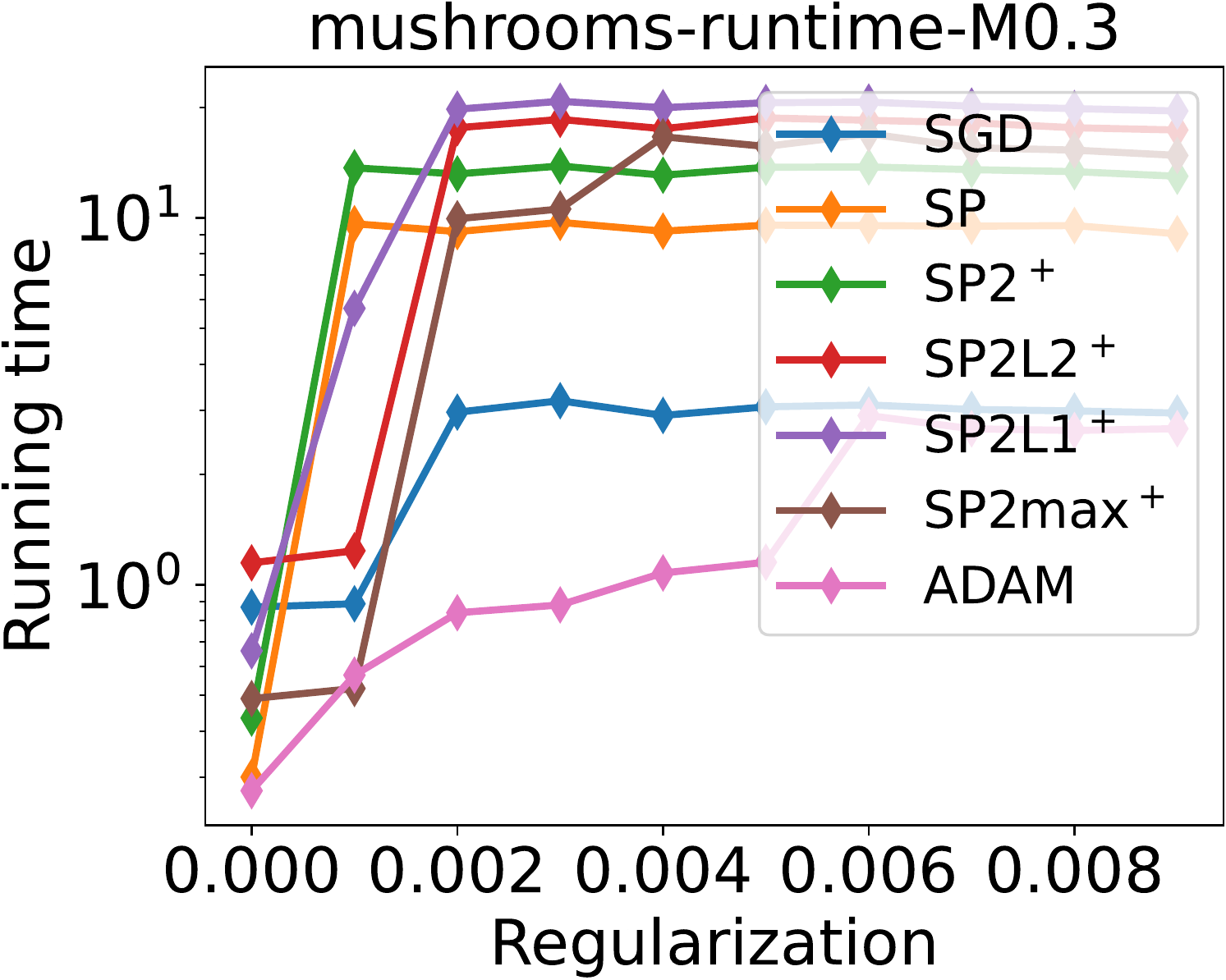}
\end{minipage}
\caption{Running time in seconds.}
\label{fig:runtime}
\end{figure}

\end{document}